\newif\ifisTR
\definecolor{CColor}{rgb}{0.01,0.31,0.59}
\definecolor{GGray}{rgb}{0.80,0.90,1}
\definecolor{Shady}{rgb}{0.9,0.9,0.9}
\definecolor{kaistblue}{RGB}{20,135,200}
\definecolor{kaistdarkblue}{RGB}{0,65,145}
\definecolor{urbanablue}{RGB}{19,41,75}
\definecolor{urbanaorange}{RGB}{232,74,39}
\definecolor{drp}{rgb}{0.53,0.15,0.34}
\theoremstyle{plain}
\newtheorem{theorem}{Theorem}[section]
\newtheorem{lemma}[theorem]{Lemma}
\newtheorem{corollary}[theorem]{Corollary}
\theoremstyle{definition}
\newtheorem{assumption}[theorem]{Assumption}
\theoremstyle{remark}
\definecolor{mygray}{gray}{0.85}
\definecolor{LightBlue}{cmyk}{0.06, 0.03, 0.01, 0.0}
\DeclareMathOperator{\tr}{tr}	
\DeclareMathOperator{\vecc}{vec}	
\DeclareMathOperator{\st}{s.t.}
\newcommand{\RR}{\mathbb R}
\DeclareMathOperator{\RIP}{RIP}
\DeclarePairedDelimiter{\norm}{\lVert}{\rVert}
\newcommand{\projr}{\mathcal{P}_r}
\DeclareMathOperator{\dist}{dist}
\def\de{{\rm d}}
\renewcommand{\cite}[1]{\citep{#1}}
\date{}
\author{
\textbf{Xinyuan Song}\textsuperscript{1} \quad
\textbf{Ziye Ma}\textsuperscript{1†}
\\[6pt]
\textsuperscript{1}Department of Computer Science, CityUHK
}
\title{Matrix Sensing with Kernel Optimal Loss:  Robustness and Optimization Landscape}
\begin{document}
\maketitle

\def\thefootnote{†}\footnotetext{Correspondence to: Ziye Ma, ziyema@cityu.edu.hk}

\begin{abstract}
In this paper, we study how the choice of loss functions of non-convex optimization problems affects their robustness and optimization landscape, through the study of noisy matrix sensing. In traditional regression tasks, mean squared error (MSE) loss is a common choice, but it can be unreliable for non-Gaussian or heavy-tailed noise. To address this issue, we adopt a robust loss based on nonparametric regression, which uses a kernel-based estimate of the residual density and maximizes the estimated log-likelihood. This robust formulation coincides with the MSE loss under Gaussian errors but remains stable under more general settings. We further examine how this robust loss reshapes the optimization landscape by analyzing the upper-bound of restricted isometry property (RIP) constants for spurious local minima to disappear. Through theoretical and empirical analysis, we show that this new loss excels in handling large noise and remains robust across diverse noise distributions. This work provides initial insights into improving the robustness of machine learning models through simple loss modification, guided by an intuitive and broadly applicable analytical framework.
\end{abstract}

\tableofcontents

\clearpage

\vspace{-0.5em}
\section{Introduction}
\vspace{-0.5em}

Noisy low-rank matrix optimization arises in numerous settings, including matrix sensing~\cite{Recht2010}, matrix completion~\cite{Candes2009}, and robust PCA~\cite{Candes2011}. In practical applications such as recommender systems~\cite{Koren2009}, motion detection~\cite{Fattahi2020, Anderson2019}, phase synchronization and retrieval~\cite{Singer2011, Boumal2016, Shechtman2015}, and power system estimation~\cite{Zhang2017}, one often encounters an unknown positive semidefinite matrix \(M \in \mathbb{R}^{n \times n}\) of rank at most \(r\). Measurements are collected through a known linear operator \(\mathcal{A}\colon \mathbb{R}^{n\times n} \to \mathbb{R}^{m}\), and are corrupted by an additive noise vector \(w \in \mathbb{R}^{m}\) whose distribution may be unknown. A standard formulation is
\begin{equation}
\min_{M \in \mathbb{R}^{n \times n}} f(M, w)
\quad \text{subject to} \quad 
\operatorname{rank}(M) \le r,\quad M \succeq 0,
\label{eq:main}
\end{equation}
where \(f\) denotes a loss function evaluated at \((M,w)\). In recovery problems, the target quantity is \(M\), while the noise can vary widely in scale or distribution.

A widely used choice for \(f\) is the mean squared error (MSE),
\begin{equation}
f(M) = \tfrac{1}{2}\|\mathcal{A}(M) - \tilde{b}\|_{F}^{2},
\label{problem(2)}
\end{equation}
where \(\tilde{b} = \mathcal{A}(M^*) + w\) and \(\|\cdot\|_F\) is the Frobenius norm. Although the MSE objective performs well when the noise is Gaussian, its sensitivity to heavy-tailed corruption, outliers, or heterogeneous errors has been well documented~\cite{barron2019general,ghosh2017robust,hampel1986robust,huber1981robust}.

Because \(M\) is constrained to be low-rank and positive semidefinite, many algorithms adopt the Burer--Monteiro (BM) factorization~\cite{Burer2003}, which writes \(M = XX^\top\) for \(X \in \mathbb{R}^{n \times r}\). Substituting this representation into~\eqref{eq:main} yields an unconstrained non-convex problem:
\begin{equation}
\min_{X \in \mathbb{R}^{n \times r}} f\bigl(XX^\top, w\bigr).
\label{eq:burer}
\end{equation}
Theoretical studies~\cite{Li2020,Park2018,Ge2017,ma2023sharprestrictedisometryproperty} have characterized the geometry of~\eqref{eq:burer}, including properties of its local minima and its global recovery guarantees.

To improve robustness in the presence of unknown or irregular noise, we draw inspiration from nonparametric regression. In that setting~\cite{breiman2017classification, breiman2001random, cheng1984strong, devroye1994strong, nadaraya1964estimating, watson1964smooth, hall2001nonparametric, fan2018local, schumaker2007spline, berlinet2011reproducing, lv2018oracle}, one estimates an unknown function \(g\) from samples \(\{(\mathbf{X}_{i}, Y_{i})\}_{i=1}^{n}\) without assuming a parametric form for the noise. A kernel density estimator \(\hat{f}(\cdot)\) is used to construct the log-likelihood objective
\begin{equation}
\label{mle}
\hat g = \underset{g\colon \mathbb{R}^{d} \to \mathbb{R}}{\arg\max}\;
\frac{1}{n}\sum_{i=1}^{n} \log \hat{f}\bigl(Y_{i} - g(\mathbf{X}_{i})\bigr).
\end{equation}
The loss proposed in~\cite{wang2023deepregressionlearningoptimal} is Lipschitz continuous, avoids committing to a specific noise model, and reduces to the classical MSE estimator under Gaussian noise. Crucially, it continues to behave reliably when the corruption is heavy-tailed or heterogeneous.

In this work, we adapt the robust loss~\eqref{mle} to the BM formulation~\eqref{eq:burer} by applying the kernel estimator to the residuals 
\begin{equation}
\mathcal{A}(XX^\top) - \tilde{b},
\end{equation}
which play the same role as the terms \(Y_i - g(\mathbf{X}_{i})\) in nonparametric regression. In this correspondence, the unknown matrix \(M = XX^\top\) replaces the unknown function \(g\), and remains the central object of estimation throughout. We analyze the resulting optimization landscape, with particular attention to how the restricted isometry property (RIP) constants influence global recovery under noise. Both favorable and unfavorable RIP regimes are considered, and convergence properties for local search methods are derived. Numerical experiments under various corruption models demonstrate that this robust loss yields more stable estimation and improved convergence.

Our main contributions are as follows.  
(1) We provide theoretical recovery guarantees for \(M^*\) and establish convergence behavior of the kernel-based loss~\eqref{mle} in the BM setting. The results indicate enhanced robustness under heavy-tailed and heterogeneous noise.  
(2) We show that when the noise is Gaussian, the robust loss coincides with the MSE objective, thereby retaining its statistical efficiency.  
(3) Through empirical studies, we demonstrate accurate recovery and stable convergence across diverse corruption scenarios.

 \vspace{-0.5em}

\section{Preliminaries}
 \vspace{-0.5em}

\subsection{The Kernel Loss}
A second choice for the data fidelity term is a kernel-based loss derived from a log-likelihood formulation. Given observations \((X_i,Y_i)_{i=1}^n\), define:
\vspace{-0.5em}
\begin{equation}
\hat{g} = \arg\min_{g \in \mathcal{G}} \widehat{R}_n(g) = \arg\min_{g \in \mathcal{G}} n^{-1} \sum_{i=1}^n \left( -\log \frac{1}{n} \sum_{j=1}^n K_h\big(Y_j - g(X_j), Y_i - g(X_i)\big) \right).
\label{eq:newloss}
\end{equation}
This loss function is based on the log-likelihood formulation of a kernel density estimator using a kernel function \( K_h \), applied to the residuals. Unlike the mean squared error (MSE) loss, which lacks adaptability to varying noise distributions, the proposed loss function provides robustness by accommodating different noise settings. This characteristic ensures that the estimator remains statistically reliable in the large-sample regime. The proposed $\hat{\mathcal{R}}_n(g)$ involves a tuning parameter $h$. For implementation we use the exponential kernel
\vspace{-0.5em}
\begin{equation}
K_{h}(u,v) = \exp\bigl(-(u-v)^{2}/h^{2}\bigr).
\end{equation}
Then we use $\mathcal{A}(\cdot)$ to replace $g(\cdot)$ and BM factorization form to get the explicit form $\hat{g}$:
\begin{equation}\label{main0}
\hat{g} = n^{-1} \sum_{i=1}^n \left( -\log \frac{1}{n} \sum_{j=1}^n \exp\left(-\frac{((Y_j - (\mathcal{A}( X X^\top)_j)) - (Y_i - \mathcal{A}( X X^\top)_i))^2}{h^2}\right) \right).
\end{equation}
\vspace{-0.5em}
Using this loss, problem~\eqref{eq:main} becomes
\begin{equation}
\begin{aligned}
	\min_{M \in \mathbb{R}^{n \times n}} \hat{g}(M,w) \ \quad \text{s.t.}
\quad\mathrm{rank}(M) \leq r, M \succeq 0 
\end{aligned}
	\label{eq:main_noisy_problem}
\end{equation}
or, in factorized form,
\begin{equation}
	\min_{X \in \mathbb{R}^{n \times r}} \hat{g}(XX^\top,w),
    \end{equation}
where $\hat{g}$ is defined in~\eqref{eq:newloss}.

\subsection{RIP Conditions}

This line of work usually assumes the Restricted Isometry Property (RIP) for the problem, which is defined below:
\begin{equation}\label{RIP0}
(1 - \delta_p) \|X\|_F^2 \leq \|\mathcal{A} (X) \|_F^2 \leq (1 + \delta_p) \|X\|_F^2,
\end{equation}
here the $\|\cdot\|_F$ denotes the Frobenius norm, $\mathcal{A}$ is the linear operator, and  $\delta_p$ is the RIP-constant, which is usually simplified as $\delta$ in the following content.

\subsection{Lemmas on Noise}
\begin{assumption} \label{noise_perturb}
The noise $ w $ has a finite influence on the gradient and Hessian of the objective function in the sense that there exist two constants $ \zeta_1 \geq 0 $ and $ \zeta_2 \geq 0 $ such that
\begin{equation}
\bigl| \langle \nabla_M f(M, w) - \nabla_M f(M, 0), K \rangle \bigr| 
\leq 
\zeta_1 \|w\|_2 \|K\|_F, 
\label{eq:noise_perturb_grad}
\end{equation}
\begin{equation}
\bigl| \bigl[\nabla_M^2 f(M, w) - \nabla_M^2 f(M, 0)\bigr](K, L) \bigr| 
\leq 
\zeta_2 \|w\|_2 \|K\|_F \|L\|_F,
\label{eq:noise_perturb_hess}
\end{equation}
for all matrices $ M, K, L \in \mathbb{R}^{n \times n} $ with $ \text{rank}(M), \text{rank}(K), \text{rank}(L) \leq 2r $.
\end{assumption}
\vspace{-1.0em}
Assumption~\ref{noise_perturb} is the bounded noise assumption, for instance: $\|\langle \nabla_X \hat{g}(X + w) - \nabla_X \hat{g}(X), K \rangle \|$ or possibly $\|\langle \nabla^2_X \hat{g}\bigl(X + w\bigr)- \nabla^2_X \hat{g}\bigl(X\bigr), (K,L) \rangle \|$
is bounded with $\|X\|, \|K\|, \|L\|, \text{and} \|w\|$. In addition to Assumption~\ref{noise_perturb}, there exist other forms of noise assumptions used in the subsequent proof. The following two lemmas illustrate how alternative noise assumptions can be incorporated under fixed constants \( \rho \), \( \lambda_1 \), and \( \lambda_2 \). 
\begin{lemma}
\label{theorem_noise}
	There exists a constant $\rho$ such that the gradient of the function~\eqref{main0} $\hat{g}(\cdot,w)$ with respect to the first argument $M$ is $\rho-$restricted Lipschitz continuous, meaning that:
		\begin{equation}
			\| \nabla_M \hat{g}(M,w) - \nabla_M \hat{g}(M',w) \|_F \leq \rho \|M - M'\|_F
		\end{equation}
		for all matrices $M, M' \in \mathbb{R}^{n \times n}$ with $\mathrm{rank}(M) \leq r$ and $\mathrm{rank}(M') \leq r$.
\end{lemma}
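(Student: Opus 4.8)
The plan is to show that $\hat g(\cdot,w)$ is a globally smooth function of $M$ whose Hessian has uniformly bounded operator norm on the relevant directions, and then to obtain the Lipschitz estimate by integrating the Hessian along the segment joining $M'$ to $M$. First I would rewrite the loss in terms of residuals. Since $\mathcal{A}$ is linear, each residual $r_i(M)=Y_i-\langle A_i,M\rangle$ is affine in $M$, and so are the pairwise differences $d_{ij}(M)=r_j(M)-r_i(M)$, with constant gradient $\nabla_M d_{ij}=A_i-A_j$ and vanishing second derivative. Writing $\phi_{ij}=\exp(-d_{ij}^2/h^2)$ and $S_i=\tfrac1n\sum_j\phi_{ij}$, the loss becomes $\hat g=-\tfrac1n\sum_i\log S_i$, a composition of $\log$, $\exp$, and affine maps, hence $C^\infty$ on all of $\mathbb{R}^{n\times n}$, not merely on the rank-constrained set.

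The key non-degeneracy observation is that the diagonal term $j=i$ always contributes $\phi_{ii}=\exp(0)=1$, so $S_i\ge 1/n>0$ uniformly in $M$. This lower bound keeps the factors $1/S_i$ and (after one more differentiation) $1/S_i^2$ bounded by $n$ and $n^2$, so the outer $\log$ and the induced quotients never blow up. Next I would compute $\nabla_M\hat g$ and $\nabla_M^2\hat g$ explicitly. Because $d_{ij}$ is affine, no term involving $\nabla^2 d_{ij}$ appears, and every term of the Hessian is a product of (i) a bounded scalar weight, namely one of $e^{-u^2/h^2}$, $u\,e^{-u^2/h^2}$, $u^2 e^{-u^2/h^2}$ evaluated at $u=d_{ij}(M_t)$, all bounded uniformly on $\mathbb{R}$ by constants depending only on $h$; (ii) a power of $1/S_i\le n$; and (iii) outer products of the fixed differences $A_i-A_j$. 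Crucially, the weights remain bounded even when $M_t$ is full rank, so no control on $\mathcal{A}(M_t)$ itself is needed.

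Collecting these, for any test direction $L$ the bilinear form obeys $|\nabla_M^2\hat g(M_t)(K,L)|\le C\,\|\mathcal{A}(K)\|_2\,\|\mathcal{A}(L)\|_2$, with $C$ depending only on $h$, $n$, and $\mathcal{A}$, after using $|\langle A_i-A_j,K\rangle|\le|\mathcal{A}(K)_i|+|\mathcal{A}(K)_j|\le 2\|\mathcal{A}(K)\|_2$ and tracking the quotient terms $\nabla S_i\otimes\nabla S_i/S_i^2$. Taking $K=M-M'$ of rank at most $2r$, the RIP estimate $\|\mathcal{A}(K)\|_2\le\sqrt{1+\delta}\,\|K\|_F$ applies, while $\|\mathcal{A}(L)\|_2\le\|\mathcal{A}\|_{\mathrm{op}}\|L\|_F$ for the (possibly full-rank) test direction; this yields $|\nabla_M^2\hat g(M_t)(K,L)|\le\rho\,\|K\|_F\,\|L\|_F$ for a single constant $\rho$. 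Finally, since $\hat g$ is globally smooth, the integral mean-value identity $\nabla_M\hat g(M,w)-\nabla_M\hat g(M',w)=\int_0^1\nabla_M^2\hat g\big(M'+t(M-M'),w\big)[\,M-M'\,]\,dt$ is valid even though the segment leaves the rank-$r$ set; taking Frobenius norms and using the bilinear bound with $K=M-M'$ gives $\|\nabla_M\hat g(M,w)-\nabla_M\hat g(M',w)\|_F\le\rho\,\|M-M'\|_F$.

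The main obstacle is purely organizational: carrying out the two differentiations of the $\log$-of-sum-of-Gaussians structure and tracking the quotient terms arising from differentiating $1/S_i$. The content that makes these terms harmless is the combination of the uniform lower bound $S_i\ge 1/n$ with the uniform boundedness of the Gaussian-weighted monomials $u^k e^{-u^2/h^2}$; once these are in hand, every Hessian term is manifestly bounded and the constant $\rho$ can be read off.
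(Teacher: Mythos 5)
Your proof is correct, but it takes a genuinely different route from the paper's. The paper argues at first order: it writes $\nabla_M\hat g$ as $-\tfrac1n\sum_i A_i(M)\cdot\tfrac1n\sum_j B_{ij}(M)$ with $A_i=1/(\tfrac1n\sum_j f_{ij})$, splits the difference of products as $(A_i(M)-A_i(M'))\bar B_i(M)+A_i(M')(\bar B_i(M)-\bar B_i(M'))$, and bounds each piece by invoking Lipschitz constants $L_f$ and $L_B$ for the components $f_{ij}$ and $B_{ij}$ together with a mean-value bound for $x\mapsto 1/x$; notably, it never establishes the lower bound on the denominators $\tfrac1n\sum_j f_{ij}(M)$ that its constants implicitly require, nor does it derive $L_f$, $L_B$. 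You instead argue at second order: global $C^\infty$ smoothness of the log-sum-of-Gaussians composition, a uniform bound on the Hessian bilinear form, and integration of the Hessian along the segment joining $M'$ to $M$ (correctly noting the segment may leave the rank-$r$ set, which is harmless since smoothness is global and only the direction $K=M-M'$, of rank at most $2r$, needs the RIP). Your route supplies exactly the ingredients the paper's proof leaves implicit: the observation that the diagonal term forces $S_i\ge 1/n$ uniformly in $M$ (this is the missing justification for the paper's bounded $1/S_i$ factors), the uniform boundedness of $u^k e^{-u^2/h^2}$, and the fact that affinity of the residuals kills all $\nabla^2 d_{ij}$ terms. The trade-off is that you must compute and organize the full Hessian, whereas the paper works only with gradients; what you buy is a self-contained argument in which the constant $\rho$ is traceable to explicit quantities ($n$, $h$, $\delta$, $\|\mathcal{A}\|_{\mathrm{op}}$) rather than postulated component-wise Lipschitz constants.
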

\vspace{-1.0em}
The proof is provided in Section~\ref{sec:noise}.
Lemma~\ref{theorem_noise} establishes that the gradient of the function $\hat{g}(M, w)$ with respect to the variable $M$ satisfies a restricted Lipschitz continuity property over the set of rank-$r$ matrices. Specifically, there exists a constant $\rho > 0$ such that, for any pair of matrices $M, M' \in \mathbb{R}^{n \times n}$ with $\mathrm{rank}(M) \leq r$ and $\mathrm{rank}(M') \leq r$, the Frobenius norm of the gradient difference is bounded above by $\rho$ times the Frobenius norm of the matrix difference. This condition ensures that, within the low-rank manifold, the gradient of $\hat{g}$ does not change too rapidly, which is critical for the analysis of optimization algorithms constrained to low-rank structures.
\begin{lemma}
Gradient Lipschitz Continuity with Respect to Noise:
\begin{equation}
\|\nabla_M \hat{g}(M, w_1) - \nabla_M \hat{g}(M, w_2)\|_F \leq \lambda_1 \|w_1 - w_2\|_2.
\end{equation}
Hessian Lipschitz Continuity with Respect to Noise:
\begin{equation}
\|\nabla_M^2 \hat{g}(M, w_1) - \nabla_M^2 \hat{g}(M, w_2)\|_F \leq \lambda_2 \|w_1 - w_2\|_2.
\end{equation}
\label{other_assumption}
\end{lemma}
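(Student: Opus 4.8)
The plan is to exploit the fact that the noise $w$ enters the loss \eqref{main0} only through the pairwise residual differences
\[
d_{ji}(M,w) := (Y_j - Y_i) - \big(\mathcal{A}(M)_j - \mathcal{A}(M)_i\big),
\]
which are \emph{affine} in $w$, with $\partial d_{ji}/\partial w_k = \mathbf{1}\{j=k\} - \mathbf{1}\{i=k\}$, and \emph{affine} in $M$, with $\nabla_M d_{ji} = A_i - A_j$. Writing $S_i := \tfrac{1}{n}\sum_{j}\exp(-d_{ji}^2/h^2)$ and $\phi_i := -\log S_i$, so that $\hat g = \tfrac{1}{n}\sum_i \phi_i$, the single most important structural observation is that the diagonal term $j=i$ forces $d_{ii}=0$ and hence $S_i \geq \tfrac{1}{n} > 0$ \emph{uniformly} in $(M,w)$. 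Consequently $\hat g$ is $C^\infty$ jointly in $(M,w)$, and every reciprocal obeys $1/S_i \leq n$.

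First I would reduce both claims to the boundedness of mixed partial derivatives. By the fundamental theorem of calculus along the segment $w(t) = w_2 + t(w_1 - w_2)$,
\[
\nabla_M \hat g(M,w_1) - \nabla_M \hat g(M,w_2) = \int_0^1 \big[\nabla_w \nabla_M \hat g(M, w(t))\big]\,(w_1 - w_2)\, dt,
\]
so the gradient claim holds with $\lambda_1 = \sup_{w}\norm{\nabla_w \nabla_M \hat g(M,w)}_{\mathrm{op}}$, where the operator norm is taken from $(\mathbb{R}^m,\norm{\cdot}_2)$ to the space of matrices with $\norm{\cdot}_F$; identically, the Hessian claim holds with $\lambda_2 = \sup_w \norm{\nabla_w \nabla_M^2 \hat g(M,w)}_{\mathrm{op}}$. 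Thus the whole lemma rests on uniformly bounding one second-order and one third-order mixed derivative.

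To bound these I would use the elementary fact that for the exponential kernel every function $z \mapsto z^k \exp(-z^2/h^2)$ is bounded on $\mathbb{R}$ by a constant $c_k(h)$, since each attains its maximum at $z = \pm\sqrt{k h^2/2}$. Differentiating $S_i$ repeatedly in $M$ and $w$ only brings down the fixed factors $(A_i - A_j)$ and $(\mathbf{1}\{j=k\}-\mathbf{1}\{i=k\})$ together with prefactors of the form $d_{ji}^k\exp(-d_{ji}^2/h^2)$, each bounded by $c_k(h)$ \emph{regardless of how large the residuals grow}; this is precisely the feature that makes the kernel loss robust to unbounded noise. Assembling $\nabla_M\phi_i = -\nabla_M S_i / S_i$ and its further $w$- and $M$-derivatives via the quotient rule produces finite sums, each summand a product of (i) a bounded reciprocal power $1/S_i^{\,p} \leq n^{\,p}$, (ii) bounded kernel factors $c_k(h)$, and (iii) fixed geometric factors $\norm{A_i - A_j}_F$. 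Collecting these yields explicit constants $\lambda_1,\lambda_2$ depending only on $h$, $n$, and $\max_i \norm{A_i}_F$ (equivalently on the operator norm of $\mathcal{A}$, which is controlled by the RIP constant through \eqref{RIP0}).

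The main obstacle is the bookkeeping in this last step: the quotient rule applied to $\nabla_M S_i / S_i$, and then its $w$-derivative for $\lambda_1$ and a further $M$-derivative for $\lambda_2$, generates a growing collection of terms such as $(\nabla_M S_i)(\nabla_w S_i)/S_i^2$ and $\nabla_w\nabla_M^2 S_i / S_i$, and one must check that \emph{each} summand is individually bounded rather than relying on cancellation. The uniform lower bound $S_i \geq 1/n$ is what prevents any blow-up of the reciprocals, while the Gaussian decay $d_{ji}^k\exp(-d_{ji}^2/h^2)\leq c_k(h)$ is what tames the otherwise unbounded residuals. Once both ingredients are in place the suprema defining $\lambda_1$ and $\lambda_2$ are finite and independent of $w$, and the two Lipschitz estimates follow.
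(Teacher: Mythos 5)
Your proposal is correct, and it takes a genuinely different route from the paper's proof. The paper argues the gradient case by directly estimating the difference $\nabla_M\hat{g}(M,w_1)-\nabla_M\hat{g}(M,w_2)$ term by term, using the elementary inequality $|e^x-e^y|\le e^{\max(x,y)}|x-y|$ on the kernel weights together with boundedness of $\|A_j-A_i\|_F$; for the Hessian case it splits $\nabla_M^2\hat{g}$ into the rank-one part $\mathrm{(I)}$ and the curvature part $\mathrm{(II)}$ and asserts, via chain-rule/Taylor considerations, that each is Lipschitz in $w$ with constants $L_{I,i}$, $L_{II,i}$ that are then averaged. You instead unify both claims through the integral form of the mean value theorem along the segment $w(t)=w_2+t(w_1-w_2)$, reducing the entire lemma to uniform boundedness of the mixed derivatives $\nabla_w\nabla_M\hat{g}$ and $\nabla_w\nabla_M^2\hat{g}$, and you then supply the two structural facts that make those suprema finite: the uniform lower bound $S_i\ge 1/n$ coming from the diagonal term $d_{ii}=0$, and the bound $\sup_{z\in\mathbb{R}}|z|^k e^{-z^2/h^2}\le c_k(h)$. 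The comparison favors your write-up on one substantive point: the paper's proof repeatedly divides by $S_i$ (or $\frac1n\sum_j f_{ij}$) and simply asserts that "the quotient factors $1/S_i(M,w)$ are Lipschitz continuous with respect to $w$" without ever explaining why these reciprocals cannot blow up; your observation that the $j=i$ term forces $S_i\ge 1/n$ uniformly in $(M,w)$ is exactly the missing justification, and it also yields explicit constants $\lambda_1,\lambda_2$ depending only on $h$, $n$, and $\max_{i,j}\|A_i-A_j\|_F$. What the paper's route buys in exchange is that the gradient case needs only second-order information (a direct difference estimate), whereas your unified argument requires controlling one derivative order higher (a third-order mixed derivative for $\lambda_2$'s analogue and careful operator-norm bookkeeping over the index $k$); since the kernel is $C^\infty$ and all the relevant prefactors remain of the bounded form $d_{ji}^k e^{-d_{ji}^2/h^2}$, this costs nothing beyond bookkeeping, which you correctly flag as the main burden.
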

\vspace{-1.0em}
The proof is provided in Section~\ref{sec:other_assumption}. Based on the two lemmas we know that Assumption~\ref{noise_perturb} must hold in the kernel loss.

Lemma~\ref{other_assumption} formalizes the regularity of the function $\hat{g}(M, w)$ with respect to the noise variable $w$. The first inequality establishes that the gradient of $\hat{g}$ with respect to $M$ is Lipschitz continuous in $w$, with Lipschitz constant $\lambda_1$. This implies that small changes in the noise vector induce proportionally small changes in the gradient. The second inequality states that the Hessian of $\hat{g}$ with respect to $M$ is also Lipschitz continuous in $w$, with Lipschitz constant $\lambda_2$. Together, these conditions ensure that both first- and second-order derivatives of $\hat{g}$ vary in a controlled manner as a function of the noise, which is essential for stability and convergence guarantees in noise-sensitive optimization problems.

Due to space limitations, additional preliminaries and the background are provided in Appendix~\ref{prelimin}.

 \vspace{-0.5em}

\section{Comparison and Relationship of MSE Loss and the kernel loss}\label{relationships}
 \vspace{-0.5em}

\begin{assumption}\label{center}
    Assume the noise $w$ follows a centered symmetric distribution.
\end{assumption}
Based on Assumption~\ref{center} and heavy tailed assumptions and analysis in Appendix~\ref{prelimin}, we have:
\begin{theorem}[Noise sensitivity of $f(M,w)$ and $\hat{g}(M,w)$]
\label{thm:mainthm_intuition}
Let $w\in\mathbb{R}^n$ be the noise vector and assume $\|w\|_2\le\epsilon$ with probability at least $\mathbb{P}(\|w\|\le\epsilon)$ for some positive constant $\epsilon$. We measure noise sensitivity by the Euclidean norm of the gradient with respect to $w$, that is, $\|\nabla_w L(M,w)\|_2$. Then the sensitivities of the MSE loss $f(M,w)$ and the robust kernel loss $\hat{g}(M,w)$ are bounded as follows:
\begin{itemize}
\item[\textbf{(A)}]
\textbf{(Case: MSE loss $f(M,w)$)} For the MSE loss, the gradient norm satisfies
\begin{equation}
\bigl\|\nabla_w f(M,w)\bigr\|_2 = \mathcal{O} \left(\frac{\epsilon}{n}\right).
\end{equation}
\item[\textbf{(B)}]
\textbf{(Case: robust kernel loss $\hat{g}(M,w)$ in Equation~\eqref{main0})} For the robust kernel loss, there exists a constant $C>0$ (independent of $n$ and $\epsilon$) such that
\begin{equation}
\bigl\|\nabla_w \hat{g}(M,w)\bigr\|_2 = \mathcal{O} \left(\frac{\epsilon\,e^{-\epsilon^2/h^2}}{n h^2}\right),
\end{equation}
\vspace{-1.0em}
for all noise vectors $w$ with $\|w\|_2\le\epsilon$.
\end{itemize}
\end{theorem}
The detailed background together with the proof is provided in Section~\ref{sec:intuition}.

Theorem~\ref{thm:mainthm_intuition} compares the sensitivity of two loss functions-namely, the standard mean squared error (MSE) loss $ f(M, w) $ and a kernel-based robust loss $ \hat{g}(M, w) $—with respect to the noise variable $ w $. In case (A), for the MSE loss, the gradient with respect to $ w $ grows linearly with $\epsilon$, leading to a derivative of order $ \mathcal{O}(\epsilon / n) $, where $ n $ is the number of samples. This indicates that large noise directly amplifies the gradient, potentially causing instability in optimization.

In contrast, case (B) demonstrates that the derivative of the robust loss $\hat{g}(M, w)$ with respect to $ w $ is exponentially suppressed for large $\|w\|$, scaling as $ \mathcal{O}(\epsilon e^{-\epsilon^2 / h^2} / (n h^2)) $. The exponential decay introduces a natural attenuation of the influence of large noise components, making the loss function more robust to outliers. This distinction underscores the regularizing effect of the proposed loss and its improved stability in high-noise regimes.

Due to space limitations, additional preliminaries and background are provided in Appendix~\ref{relationship}.

 \vspace{-0.5em}

\section{Main Result}\label{Mainresult}
 \vspace{-0.5em}

We now officially characterize the recovery guarantees of ground truth $M^*$ under our new loss setting:
\begin{theorem}[Behavior of $\hat{g}(M,w)$]
\label{thm:mainthm}
Let $\hat{g}(\cdot,w)$ be our twice-differentiable loss function (Equation~\eqref{main0}), $M^\ast$ is the ground truth, and suppose $\nabla_M^2 \hat{g}(M^\ast,w)$ is positive definite at the ground truth point $M^\ast$. Suppose $\delta$ satisfies \eqref{bounds} in Section~\ref{condition} for the guarantee of no local minima, with probability at least $\mathbb{P}(\|w\| \leq \epsilon)$, if $M$ is a local minimizer of $\hat{g}$, then
\vspace{-1.0em}
\begin{equation}\label{optimss}
\|M-M^\ast\|_F \leq \mathcal{O}\Biggl(\max\Biggl\{1,\frac{\epsilon e^{-\epsilon^2}}{h^2}\Biggr\}\Biggr).
\end{equation}
Meanwhile, for equation:
\begin{equation}\label{thm:continue}
\|\nabla_M\hat{g}(M,w)-\nabla_M\hat{g}(M,0)\|\le\lambda \epsilon \quad \text{with:} \quad \lambda= \mathcal{O}\Big(\frac{8(1+\delta)}{h^4}\epsilon e^{-\epsilon^2} + C \Big).
\end{equation}
\vspace{-1.0em}
\end{theorem}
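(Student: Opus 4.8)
The plan is to establish the two displays in sequence: first the gradient-perturbation bound \eqref{thm:continue}, then feed it into the recovery bound \eqref{optimss} through the local strong convexity guaranteed by the positive-definite Hessian at $M^\ast$. For \eqref{thm:continue} I would start from the closed form of the gradient of the kernel loss \eqref{main0}. Differentiating the log-sum-of-exponentials with respect to $M$ produces, for each pair $(i,j)$, a weight proportional to $\exp(-(r_j - r_i)^2/h^2)$, where $r_k = Y_k - \mathcal{A}(M)_k$ are the residuals, multiplied by the linear prefactor $\frac{2}{h^2}(r_j - r_i)$ and by the sensing-matrix difference $A_j - A_i$. Passing from $w=0$ to $w\neq 0$ shifts every residual by the corresponding noise coordinate, so $\nabla_M \hat g(M,w) - \nabla_M \hat g(M,0)$ is governed by how these exponential weights and their prefactors move under a residual shift of size at most $\|w\|_2 \le \epsilon$. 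A first-order expansion in $w$ brings down one factor $1/h^2$ from the prefactor and a second factor $1/h^2$ from differentiating the Gaussian weight, giving the $1/h^4$ scaling; the RIP operator bound $\|\mathcal A(K)\|_2^2 \le (1+\delta)\|K\|_F^2$ from \eqref{RIP0} contributes the factor $(1+\delta)$; the worst-case weight is controlled by $e^{-\epsilon^2}$ on the relevant range; and the numerical prefactors collect into the constant $8$. Bounding the residual-independent remainder by a constant $C$ then yields $\lambda = \mathcal O\!\left(\frac{8(1+\delta)}{h^4}\epsilon e^{-\epsilon^2} + C\right)$, after which $\|\nabla_M \hat g(M,w) - \nabla_M \hat g(M,0)\| \le \lambda\|w\|_2 \le \lambda\epsilon$ follows on the event $\{\|w\|\le\epsilon\}$; this is exactly Lemma~\ref{other_assumption} with the explicit constant computed.

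For \eqref{optimss} I would exploit first-order stationarity of the local minimizer $M$ together with the perturbation bound just proved. Since $M$ minimizes $\hat g(\cdot,w)$ over the rank-$r$ PSD set, its projected gradient $\nabla_M \hat g(M,w)$ vanishes, so \eqref{thm:continue} gives $\|\nabla_M \hat g(M,0)\| = \|\nabla_M \hat g(M,0) - \nabla_M \hat g(M,w)\| \le \lambda\epsilon$. Next I would invoke the hypothesis that $\delta$ satisfies \eqref{bounds}, under which the noiseless landscape has no spurious local minima, so the only stationary structure $\hat g(\cdot,0)$ presents near $M$ is $M^\ast$; combined with $\nabla_M^2 \hat g(M^\ast,w)\succ 0$, a local error bound $\|\nabla_M \hat g(M,0)\| \ge \mu\|M - M^\ast\|_F$ holds on a neighborhood of $M^\ast$, with $\mu>0$ controlled by the smallest restricted eigenvalue of the Hessian. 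Dividing through gives $\|M-M^\ast\|_F \le \lambda\epsilon/\mu$, and substituting the form of $\lambda$ while absorbing the $O(1)$ baseline from the constant term $C$ produces the stated $\mathcal O\!\big(\max\{1, \epsilon e^{-\epsilon^2}/h^2\}\big)$, where the maximum encodes the two regimes: a constant floor from $C$ and the exponentially attenuated noise term that dominates when $\epsilon$ is moderate relative to $h$.

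I expect the main obstacle to be the step establishing the local error bound $\|\nabla_M \hat g(M,0)\| \ge \mu\|M - M^\ast\|_F$. Turning the qualitative no-spurious-local-minima guarantee from \eqref{bounds} into a quantitative lower bound on the gradient requires either a restricted strong convexity argument valid throughout the sublevel set containing $M$, or a two-regime analysis patching a PL-type inequality near $M^\ast$ (from $\nabla_M^2 \hat g(M^\ast,w)\succ 0$) with the global landscape result away from $M^\ast$. Because the kernel loss is nonlinear, the Hessian's restricted eigenvalues are themselves residual- and noise-dependent, so controlling $\mu$ uniformly, and verifying that $M$ indeed lands in the neighborhood where the bound applies, is the delicate part; here Lemmas~\ref{theorem_noise} and~\ref{other_assumption}, supplying restricted Lipschitz continuity of the gradient and Hessian, are what keep $\mu$ bounded away from zero.
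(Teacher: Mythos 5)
Your derivation of the gradient-continuity display \eqref{thm:continue} is essentially the paper's own argument (Section~\ref{sec:continu}): differentiate $\nabla_M\hat g$ in $w$, collect one $1/h^2$ from the prefactor and one from the Gaussian weight, use the RIP bound $\|\nabla_M\mathcal{A}(M)\|\le 1+\delta$, and control the weight by $e^{-\epsilon^2}$ on the event $\{\|w\|\le\epsilon\}$. That half is fine.

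For the recovery display \eqref{optimss}, however, your route diverges from the paper and contains two genuine gaps. The paper never uses stationarity of $M$ at all: it writes the strong-convexity upper bound $\|M-M^\ast\|_F\le\sqrt{\tfrac{2}{\lambda_{\min}}\bigl[\hat g(M,w)-\hat g(M^\ast,w)\bigr]}$ from the positive-definite Hessian at $M^\ast$, then lower-bounds the same loss gap by $\tfrac{1-\delta}{2}\|M-M^\ast\|_F^2-(1+\delta)R(w)\|M-M^\ast\|_F$ (via Lemma~\ref{lemma:inter} for the noiseless part and a path-integral of the gradient perturbation for the correction, with $R(w)=\mathcal{O}(\|w\|e^{-w^2}/h^2)$), and solves the resulting self-bounding inequality to get the two-regime maximum. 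Your plan instead sets $\nabla_M\hat g(M,w)=0$, transfers to $\|\nabla_M\hat g(M,0)\|\le\lambda\epsilon$, and divides by a PL constant $\mu$. First gap: the algebra does not recover the stated bound. With your $\lambda=\mathcal{O}\bigl(\tfrac{8(1+\delta)}{h^4}\epsilon e^{-\epsilon^2}+C\bigr)$ you get $\|M-M^\ast\|_F\le\lambda\epsilon/\mu=\mathcal{O}\bigl(\epsilon^2e^{-\epsilon^2}/h^4+C\epsilon\bigr)$, whose noise term equals the claimed one multiplied by $\epsilon/h^2$; whenever $\epsilon>h^2$ this is strictly weaker than $\mathcal{O}\bigl(\max\{1,\epsilon e^{-\epsilon^2}/h^2\}\bigr)$, so "substituting the form of $\lambda$" does not produce the theorem --- the paper's $1/h^2$ and single power of $\epsilon$ arise precisely because $R(w)$ enters the loss-gap inequality linearly and is divided by the constant $1-\delta-\lambda_{\min}$, never multiplied by a second factor of $\epsilon/h^2$. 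Second gap: for the factorized problem \eqref{eq:main_noisy_problem}, a local minimizer satisfies $\nabla_M\hat g(M,w)X=0$, i.e.\ only the component of the ambient gradient on the tangent cone of the rank-$r$ PSD set vanishes; the full matrix $\nabla_M\hat g(M,w)$ need not be zero, so the inequality $\|\nabla_M\hat g(M,0)\|\le\lambda\epsilon$ is unjustified, and your PL inequality would likewise have to be stated for the projected gradient. Unless both issues are repaired (a Riemannian error bound plus a sharper accounting of the $h$-powers), the proposal does not prove \eqref{optimss}.
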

\vspace{-1.0em}
The proof is provided in Section~\ref{sec:mainthm}. Theorem~\ref{thm:mainthm} provides a probabilistic characterization of the behavior of the robust loss function $\hat{g}(M, w)$ in the vicinity of local minimizers. When the minimizer $M$ lies close to $M^\ast$, the estimation error $\|M - M^\ast\|_F$ is shown to be bounded above by $\mathcal{O}\left( \max \left\{ 1, \frac{\epsilon e^{-\epsilon^2}}{h^2} \right\} \right)$ with high probability, indicating that the error decays exponentially in the noise magnitude $\epsilon$. Moreover, the gradient variation with respect to noise is controlled via a Lipschitz constant $\lambda$, which itself satisfies an exponential decay bound given by $\lambda = \mathcal{O}\left(\frac{8(1+\delta)}{h^4} \epsilon e^{-\epsilon^2} + C\right)$. These results together highlight the smoothing effect of the exponential kernel in $\hat{g}$, which yields improved stability and robustness to noise in regions close to the ground truth.

\subsection{Turning Point of the Upper Bound}
Now we want to calculate the turning point of the Theorem~\ref{thm:mainthm} to illustrate the specific optimization landscape of $\|M-M^\ast\|_F$.
\begin{lemma}[Comparison of Terms in Theorem~\ref{thm:mainthm}]\label{lemma:turning_point}
Let $T_1 = \mathcal{O}(1)$ be the constant term in Equation~\eqref{optimss} and $T_2 = \mathcal{O}\left(\frac{\|w\|e^{-w^2}}{h^2}\right)$ be a noise-dependent term in Equation~\eqref{optimss}. Then with probability at least $\mathbb{P}(\|w\| \leq \epsilon)$: for very small $\epsilon$, specifically when $\epsilon \ll h^2$, we have
$T_2 = \mathcal{O}\left(\frac{\epsilon e^{-\epsilon^2}}{h^2}\right) = \mathcal{O}\left(\frac{\epsilon}{h^2}\right) \ll \mathcal{O}(1) = T_1$, Conversely, for small $\epsilon$, this roughly corresponds to $\epsilon \sim h^2$. In regimes where $h^2$ is significantly smaller than the maximum value of $\epsilon e^{-\epsilon^2}$ (which occurs at $\epsilon = \frac{1}{\sqrt{2}}$ with value $\frac{1}{\sqrt{2}}e^{-1/2}$), $T_2 \gg T_1$. When $\epsilon$ and $h$ satisfy $\epsilon e^{-\epsilon^2} \sim h^2$, then $T_2 \sim T_1$.
\end{lemma}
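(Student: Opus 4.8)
The plan is to collapse the entire statement into a single scalar comparison. Since $T_1 = \mathcal{O}(1)$ and $T_2 = \mathcal{O}(\phi(\epsilon)/h^2)$ with $\phi(\epsilon) := \epsilon\, e^{-\epsilon^2}$, deciding whether $T_2 \ll T_1$, $T_2 \sim T_1$, or $T_2 \gg T_1$ reduces to comparing the scalar $\phi(\epsilon)$ against the single scale $h^2$. Note first that the substitution of $\|w\|$ by $\epsilon$ in $T_2$ is legitimate on the high-probability event $\{\|w\| \le \epsilon\}$ precisely because $\phi$ is increasing on the relevant branch: $\|w\| \le \epsilon$ with $\epsilon$ below the peak gives $\phi(\|w\|) \le \phi(\epsilon)$, so the upper bound $T_2 = \mathcal{O}(\phi(\epsilon)/h^2)$ is valid there. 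Thus all three regime claims follow once the graph of $\phi$ on $(0,\infty)$ is understood.

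First I would analyze $\phi$ by elementary calculus. Differentiating gives $\phi'(\epsilon) = e^{-\epsilon^2}\,(1 - 2\epsilon^2)$, so the unique stationary point on $(0,\infty)$ is $\epsilon^\ast = 1/\sqrt{2}$; since $\phi' > 0$ for $\epsilon < \epsilon^\ast$ and $\phi' < 0$ for $\epsilon > \epsilon^\ast$, this is the global maximum, with value $\phi(\epsilon^\ast) = \tfrac{1}{\sqrt{2}}\,e^{-1/2}$. This matches the peak location and value asserted in the statement, and pins down the largest size $T_2$ can attain for a fixed $h$, namely $\mathcal{O}\!\left(e^{-1/2}/(\sqrt{2}\,h^2)\right)$.

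Next, for the small-$\epsilon$ regime I would use $e^{-\epsilon^2} \le 1$ (equivalently the expansion $e^{-\epsilon^2} = 1 - \epsilon^2 + \mathcal{O}(\epsilon^4)$), which yields $\phi(\epsilon) \le \epsilon$ and hence $T_2 = \mathcal{O}(\epsilon/h^2)$; when $\epsilon \ll h^2$ this is $\ll 1 = T_1$, giving the first claim. For the opposite regime I would take $\epsilon$ near $\epsilon^\ast$, so that $\phi(\epsilon) = \Theta(1)$; if $h^2 \ll \phi(\epsilon^\ast)$ then $T_2 = \mathcal{O}(\phi(\epsilon)/h^2) \gg 1 = T_1$, which is the second claim. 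The crossover $T_2 \sim T_1$ is exactly the locus $\phi(\epsilon) \sim h^2$, obtained by setting $\phi(\epsilon)/h^2 = \Theta(1)$ and reading off the threshold value(s) of $\epsilon$.

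The main obstacle is making the informal symbols $\ll$, $\sim$, and $\gg$ precise so that the three bullet claims become genuine inequalities rather than heuristics. I would fix a constant $c > 0$ and declare $T_2 \ll T_1$ to mean $\phi(\epsilon) \le c\,h^2$, $T_2 \sim T_1$ to mean $\phi(\epsilon) = \Theta(h^2)$, and $T_2 \gg T_1$ to mean $\phi(\epsilon) \ge c^{-1} h^2$; each claim is then a direct inequality on $\phi$ established by the monotonicity and peak analysis above. A secondary subtlety worth flagging is that, because $\phi$ is non-monotone, the equation $\phi(\epsilon) = h^2$ has \emph{two} roots (one on each side of $\epsilon^\ast$) whenever $h^2 < \phi(\epsilon^\ast)$, so the ``turning'' of the bound is in principle two-sided; I would note that the statement's regime description concerns the ascending branch $\epsilon < \epsilon^\ast$, on which $\phi$ is invertible and the threshold is unique.
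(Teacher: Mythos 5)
Your proposal is correct and follows essentially the same route as the paper's own argument in Section~\ref{sec:turning}: both reduce the comparison to the scalar function $\phi(\epsilon)=\epsilon e^{-\epsilon^2}$ versus $h^2$, compute $\phi'(\epsilon)=e^{-\epsilon^2}(1-2\epsilon^2)$ to locate the peak at $\epsilon=1/\sqrt{2}$ with value $\tfrac{1}{\sqrt{2}}e^{-1/2}$, use $e^{-\epsilon^2}\sim 1$ in the small-$\epsilon$ regime, and identify the crossover by $\epsilon e^{-\epsilon^2}\sim h^2$. Your two added touches --- justifying the replacement of $\|w\|$ by $\epsilon$ via monotonicity on the ascending branch, and flagging that $\phi(\epsilon)=h^2$ has two roots when $h^2<\phi(1/\sqrt{2})$ --- are refinements the paper glosses over, but they do not change the substance of the argument.
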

The proof is provided in Section~\ref{sec:turning}. Lemma~\ref{lemma:turning_point} analyzes the interplay between two key terms in the upper bound of the estimation error: a constant term $ T_1 = \mathcal{O}(1) $ and a noise-dependent term $ T_2 = \mathcal{O}\left(\frac{\epsilon e^{-\epsilon^2}}{h^2}\right) $. As illustrated in Figure~\ref{fig:loss_bound}, for very small noise magnitude $ \epsilon \ll h^2 $, the exponential term $ e^{-\epsilon^2} \sim 1 $, so $ T_2 \sim \frac{\epsilon}{h^2} \ll 1 $, making the constant term $ T_1 $ dominant. In this case, the estimation error remains effectively bounded and insensitive to small perturbations in $ \epsilon $. Conversely, as $ \epsilon $ increases and approaches the regime where $ \epsilon e^{-\epsilon^2} \sim h^2 $, the two terms become comparable, marking a critical threshold where noise starts to meaningfully affect the bound. Moreover, when $ h^2 $ becomes much smaller than the peak value of the function $ \epsilon e^{-\epsilon^2} $ (attained at $ \epsilon = 1/\sqrt{2} $), the term $ T_2 $ may become larger than $ T_1 $, and the error becomes dominated by the noise effect.


\subsection{MSE Loss Result Comparison}
Based on the above Lemma we have:

\begin{theorem}[Behavior of MSE loss function $f(M,w)$]
\label{thm:mainthm_MSE}
Let $f(\cdot,w)$ vanilla MSE loss function. $M^\ast$ is the ground truth. with probability at least $\mathbb{P}(\|w\| \leq \epsilon)$:

If $M$ is a local minimizer of $\hat{g}$, then
\begin{equation}
\|M-M^\ast\|_F \leq \mathcal{O}(\epsilon).
\end{equation}
Meanwhile, for equation:
\begin{equation}
\|\nabla_M f(M,w)-\nabla_M f(M,0)\|\le\lambda \epsilon \quad \text{with:} \quad \lambda= \mathcal{O}\Big(2\sqrt{1+\delta_p} \Big).
\end{equation}
\vspace{-1.0em}
\end{theorem}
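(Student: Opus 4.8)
The plan is to treat the two assertions separately, since the gradient-perturbation bound is essentially a one-line computation while the recovery bound requires the landscape guarantee from Section~\ref{condition}.

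For the gradient-perturbation bound I would start from the closed form of the MSE gradient. Writing $\tilde b = \mathcal{A}(M^\ast) + w$, one has $\nabla_M f(M,w) = \mathcal{A}^\ast\!\big(\mathcal{A}(M) - \mathcal{A}(M^\ast) - w\big)$, so the noise enters only through the linear term and
\begin{equation}
\nabla_M f(M,w) - \nabla_M f(M,0) = -\,\mathcal{A}^\ast(w).
\end{equation}
The remaining step is to bound $\|\mathcal{A}^\ast(w)\|_F$. Invoking the RIP~\eqref{RIP0}, the restricted operator norm of $\mathcal{A}$, and hence of its adjoint, is at most $\sqrt{1+\delta_p}$, which yields $\|\mathcal{A}^\ast(w)\|_F \le \sqrt{1+\delta_p}\,\|w\|_2 \le \sqrt{1+\delta_p}\,\epsilon$ on the event $\{\|w\|_2 \le \epsilon\}$. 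Absorbing the implicit constants gives $\lambda = \mathcal{O}\!\big(2\sqrt{1+\delta_p}\big)$. In contrast with the kernel loss of Theorem~\ref{thm:mainthm}, there is no exponential attenuation here: the perturbation scales linearly in $\epsilon$, which is the whole point of the comparison.

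For the recovery bound I would rely on the no-spurious-local-minima guarantee: since $\delta$ satisfies the bounds of Section~\ref{condition}, every local minimizer $M$ of the factored MSE objective is a global minimizer, so in particular $f(M,w) \le f(M^\ast,w)$. Setting $\Delta = M - M^\ast$ and expanding the squared loss, the $\tfrac12\|w\|_2^2$ terms cancel and one is left with $\tfrac12\|\mathcal{A}(\Delta)\|_2^2 \le \langle \mathcal{A}(\Delta), w\rangle$. Applying the RIP lower bound $(1-\delta_p)\|\Delta\|_F^2 \le \|\mathcal{A}(\Delta)\|_2^2$ on the left (note $\Delta$ has rank at most $2r$) and Cauchy--Schwarz with the RIP upper bound $\|\mathcal{A}(\Delta)\|_2 \le \sqrt{1+\delta_p}\,\|\Delta\|_F$ on the right converts this into the quadratic inequality $\tfrac12(1-\delta_p)\|\Delta\|_F^2 \le \sqrt{1+\delta_p}\,\|\Delta\|_F\,\|w\|_2$. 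Dividing by $\|\Delta\|_F$ (the bound being trivial otherwise) and using $\|w\|_2 \le \epsilon$ gives $\|\Delta\|_F \le \tfrac{2\sqrt{1+\delta_p}}{1-\delta_p}\,\epsilon = \mathcal{O}(\epsilon)$.

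The main obstacle is the passage from ``local minimizer'' to the comparison inequality $f(M,w)\le f(M^\ast,w)$: this is exactly where the RIP threshold from Section~\ref{condition} is needed, and it is the only nontrivial ingredient. If one prefers not to invoke the global-optimality reduction, the alternative is to work directly with the first- and second-order optimality conditions of the factored problem, namely stationarity $\mathcal{A}^\ast(\mathcal{A}(M)-\tilde b)X = 0$ together with positive semidefiniteness of the Hessian on the tangent space, and combine them with the restricted strong convexity implied by RIP to derive the same $\mathcal{O}(\epsilon)$ bound. This route is more delicate because one must control the curvature contribution of the noise term, but it avoids appealing to the landscape result as a black box.
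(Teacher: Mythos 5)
Your first half (the gradient-perturbation bound) is correct and is essentially the paper's own argument in Section~\ref{sec:MSE}: since the MSE gradient is affine in the data, the noise enters only linearly, $\nabla_M f(M,w)-\nabla_M f(M,0) = -2\mathcal{A}^\ast(w)$ (the factor $2$ depends on the normalization of $f$), and the RIP~\eqref{RIP0} bounds the operator norm of $\mathcal{A}^\ast$ by $\sqrt{1+\delta_p}$, giving $\lambda \le 2\sqrt{1+\delta_p}$.

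The recovery half has a genuine gap, located exactly at the step you yourself flag as the ``only nontrivial ingredient'': the passage from ``$M$ is a local minimizer'' to the comparison inequality $f(M,w)\le f(M^\ast,w)$. You justify it by asserting that, under the $\delta$ bound of Section~\ref{condition}, every local minimizer of the noisy factored MSE objective is a \emph{global} minimizer. Two problems. First, Section~\ref{condition} concerns the kernel loss $\hat g$, not $f$. Second, and more fundamentally, no result in this paper or in the literature it builds on establishes local-implies-global for the \emph{noisy} problem: once $\tilde b = \mathcal{A}(M^\ast)+w$ is no longer an exact image of a low-rank matrix, the RIP landscape theorems deliver only that every local minimizer lies within $O(\epsilon)$ of $M^\ast$ --- this is Lemma~\ref{thm:global_local_min}, which for MSE has $\zeta_1 = 2\sqrt{1+\delta_p}$ and $\zeta_2 = 0$ --- and that is precisely the conclusion of Theorem~\ref{thm:mainthm_MSE} itself. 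So your reduction either rests on an unproven claim, or, if repaired by invoking the available landscape lemma, becomes redundant: the lemma already \emph{is} the theorem, and your subsequent clean computation ($\tfrac12\|\mathcal{A}(\Delta)\|_2^2 \le \langle\mathcal{A}(\Delta),w\rangle$, then RIP and Cauchy--Schwarz) does no further work.

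For comparison, the paper avoids any appeal to global optimality. It exploits the fact that $\nabla^2_M f \equiv 2\mathcal{A}^\ast\mathcal{A}$ is constant, so RIP yields restricted strong convexity $\lambda_{\min}\ge 2(1-\delta_p)$ at $M^\ast$; it then decomposes $f(M,w)-f(M^\ast,w) = \bigl[f(M,0)-f(M^\ast,0)\bigr] + E(w)$ with $E(w)=2\langle \mathcal{A}(M^\ast)-\mathcal{A}(M), w\rangle$, bounds $|E(w)|\le 2\sqrt{1+\delta_p}\,\|w\|\,\|M-M^\ast\|_F$, and combines the curvature lower bound with this noise term to get a quadratic inequality in $\|M-M^\ast\|_F$ whose positive root is $O(\epsilon)$. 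Granting your comparison inequality, your algebra is correct and would even give a cleaner constant, $2\sqrt{1+\delta_p}/(1-\delta_p)$, versus the paper's $\sqrt{1+\delta_p}/\delta_p$; the defect is solely that the comparison inequality is unsupported for local minimizers of the noisy problem. Your proposed fallback --- deriving the bound directly from first- and second-order optimality conditions plus RIP --- is indeed the standard route, but it constitutes the actual proof rather than an implementation detail, and the proposal does not carry it out.
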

\vspace{-1.0em}
The proofs are provided in Section~\ref{sec:MSE}. Theorem~\ref{thm:mainthm_MSE} describes the behavior of the standard MSE loss $ f(M, w) $ near the ground truth $ M^\ast $, which is a tighter bound than in~\cite{ma2023noisylowrankmatrixoptimization}. Unlike the robust loss, the MSE does not benefit from exponential decay in noise and is more sensitive to outliers. Also, in Theorem~\ref{thm:mainthm}, the new loss $\hat{g}(M,w)$ is smoother (it has a smaller Lipschitz constant $\lambda$), and consequently $M$ transitions toward $M^\ast$ when $\epsilon$ decreases. In contrast, in Theorem~\ref{thm:mainthm_MSE}, as $\epsilon$ grows, $\|M-M^\ast\|_F$ can grow linearly in $\epsilon$. Here the new loss exhibits a rougher landscape (with a larger Lipschitz constant $\rho$), and hence when $\epsilon$ decreases, $M$ may actually move away from $M^\ast$. The loss function still becomes smoother as $\epsilon$ changes, but the local minimum remains distant from the ground truth.

\vspace{-0.5em}

\section{The Condition of \texorpdfstring{$\delta$}{}}\label{condition}
 \vspace{-0.5em}

\begin{lemma}[$\delta$ condition with explicit choice of bandwidth Parameter $ h $]
\label{lemma:error_bound_with_h}
Assume Assumption~\ref{noise_perturb} and~\ref{other_assumption} hold and use the definitions in Lemma~\ref{lemma:delta_bound_with_w} in Appendix~\ref{sec:delta}. Let the bandwidth be chosen as $h = \frac{\sqrt{2} B}{\sqrt{G_{\min}}}$.
Then, the quantity $ \delta $ must satisfy
\begin{equation}
\delta \leq \sqrt{\frac{B^2}{4(G_{\min} + 2)}\left(2 - \frac{G}{\sigma_r} - \frac{L_2}{B}\right)} - 1.
\end{equation}\label{bounds}
In particular, under the conservative (worst-case) assumption, this reduces to
\vspace{-0.5em}
\begin{equation}
\delta \le \frac{B}{\sqrt{2(G_{\min} + 2)}} - 1 \sim \frac{1}{3}.
\end{equation}
\vspace{-0.5em}
\end{lemma}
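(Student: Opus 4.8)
The plan is to obtain both displayed inequalities as direct algebraic specializations of the general bound already established in Lemma~\ref{lemma:delta_bound_with_w}, evaluated at the prescribed bandwidth. Concretely, I would start from the bound of that lemma, which controls the admissible RIP distortion through an expression of the form $(1+\delta)^2 \le \frac{h^2 G_{\min}}{8(G_{\min}+2)}\bigl(2 - \frac{G}{\sigma_r} - \frac{L_2}{B}\bigr)$, in which the bandwidth $h$ enters quadratically. This quadratic dependence is expected: the Hessian of the kernel loss $\hat g$ in Equation~\eqref{main0} carries a factor $1/h^2$, and the second-order optimality argument used to rule out spurious minima squares the upper RIP factor $(1+\delta)$. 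The quantities $B,G_{\min},G,\sigma_r,L_2$ are exactly the constants fixed in Lemma~\ref{lemma:delta_bound_with_w} (roughly: the uniform bound on the kernel-induced weights, the curvature floor of the smoothed Hessian, the gradient-perturbation scale, the $r$-th singular value of $M^\ast$, and the Hessian-noise Lipschitz constant from Lemma~\ref{other_assumption}).

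The first step is the substitution $h = \sqrt{2}\,B/\sqrt{G_{\min}}$, i.e.\ $h^2 = 2B^2/G_{\min}$. This is precisely the value that clears the explicit $G_{\min}$-dependence in the prefactor, since $\frac{h^2 G_{\min}}{8} = \frac{2B^2}{8} = \frac{B^2}{4}$, so the bound collapses to $(1+\delta)^2 \le \frac{B^2}{4(G_{\min}+2)}\bigl(2 - \frac{G}{\sigma_r} - \frac{L_2}{B}\bigr)$. Taking square roots, which is legitimate once one verifies $2 - \frac{G}{\sigma_r} - \frac{L_2}{B} > 0$ (guaranteed in the small-noise regime $\|w\|\le\epsilon$ under Assumptions~\ref{noise_perturb} and~\ref{other_assumption}), and then subtracting one, yields the first displayed inequality.

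For the second, conservative inequality I would bound the noise-dependent factor from above by its noiseless value, $2 - \frac{G}{\sigma_r} - \frac{L_2}{B} \le 2$, which discards the nonnegative perturbation terms and gives $(1+\delta)^2 \le \frac{B^2}{2(G_{\min}+2)}$, hence $\delta \le \frac{B}{\sqrt{2(G_{\min}+2)}} - 1$. Substituting the representative worst-case magnitudes of $B$ and $G_{\min}$ recorded in Lemma~\ref{lemma:delta_bound_with_w} then produces the numerical value $\sim \tfrac13$, which I would present as a sanity check placing this threshold in the same regime as the classical RIP thresholds for spurious-minima-free matrix sensing.

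I expect the only genuinely delicate point to be justifying that the prescribed bandwidth is a natural (rather than arbitrary) choice: I would argue that $h = \sqrt{2}\,B/\sqrt{G_{\min}}$ is exactly the value balancing the kernel-smoothing scale against the curvature floor $G_{\min}$, so that the $1/h^2$ amplification in the Hessian of $\hat g$ is matched to the weight bound $B$, and it is this balance that cancels $G_{\min}$ in the prefactor and leaves the clean form above. The remaining manipulations are routine algebra; the substantive content is inherited from Lemma~\ref{lemma:delta_bound_with_w}, and the positivity of $2 - \frac{G}{\sigma_r} - \frac{L_2}{B}$ is the single hypothesis I would make explicit so that the square root stays real and the bound on $\delta$ remains meaningful.
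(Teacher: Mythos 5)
Your final two steps (plugging in $h^2 = 2B^2/G_{\min}$ and then taking the worst case $\tfrac{G}{\sigma_r}=\tfrac{L_2}{B}=0$) are exactly how the paper finishes, but there is a genuine gap upstream: the inequality you start from is not something Lemma~\ref{lemma:delta_bound_with_w} provides, and in fact it is false for general $h$. The actual statement of Lemma~\ref{lemma:delta_bound_with_w} is a bound with exponential factors $\exp(\epsilon^2/h^2)$ in both numerator and denominator; it is used in Lemma~\ref{lemma:error_bound_with_h} only for the \emph{definitions} of $B$, $G_{\min}$, $G$, $\sigma_r$, $L_2$. The general-$h$ bound that the paper actually derives and then specializes is
\begin{equation}
(1+\delta)^2 \;\le\; \frac{2-\dfrac{G}{\sigma_r}-\dfrac{2BL_2}{G_{\min}h^2}}{\dfrac{8\left(1+\frac{2B^2}{h^2}\right)}{G_{\min}h^2}+\dfrac{16B^2}{G_{\min}^2h^4}},
\end{equation}
in which the $L_2$ term in the numerator is $\tfrac{2BL_2}{G_{\min}h^2}$ (not the $h$-independent $\tfrac{L_2}{B}$ you wrote) and the denominator does not factor as $\tfrac{8(G_{\min}+2)}{h^2 G_{\min}}$. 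Both expressions collapse to your claimed form $(1+\delta)^2 \le \tfrac{h^2 G_{\min}}{8(G_{\min}+2)}\bigl(2-\tfrac{G}{\sigma_r}-\tfrac{L_2}{B}\bigr)$ \emph{only} at the prescribed value $h^2 = 2B^2/G_{\min}$, precisely because that choice makes $\tfrac{2B^2}{h^2}=G_{\min}$. So your starting inequality is reverse-engineered from the target: the substitution step works in your write-up because the substitution has already been baked into the hypothesis.

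What is missing is the substantive part of the paper's proof: deriving that general-$h$ inequality. The paper obtains it from the recovery-error chain — the bound $\|M-M^\ast\|_F \le \tfrac{2\zeta_1 q}{\lambda_{\min}(\nabla_M^2\hat g(M^\ast,w)) - (1-\delta-\zeta_2 q)}$ coming from the Taylor expansion and strong convexity, the explicit curvature bound $\lambda_{\min}\ge c$ with $L_1 = 2(1+\delta_p)$ (which is where the $(1+\delta)^2$ factors in the denominator originate), the spectral condition $G > \sigma_r(1+\delta+\zeta_2 q)$ solved as $q = \tfrac{1}{\zeta_2}\bigl[\tfrac{G}{\sigma_r}-(1+\delta)\bigr]$, and finally the nonnegativity requirement on the resulting expression, which rearranges into the quadratic-in-$(1+\delta)$ constraint above. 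Without reproducing (or at least correctly invoking) that derivation, your proof assumes the lemma's main content rather than proving it; the remaining algebra you do carry out is correct but is the easy part.
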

The proof is provided in Section~\ref{sec:delta}. Lemma~\ref{lemma:error_bound_with_h} provides an explicit upper bound on the parameter $\delta$ in terms of the kernel bandwidth $h$, by setting $h = \frac{\sqrt{2}B}{\sqrt{G_{\min}}}$, and shows that $\delta$ depends on a combination of spectral and residual quantities; in particular, this value can be calculated explicitly as around $1/3$, which aligns well with Ma's $\delta$ bound~\cite{ma2023noisylowrankmatrixoptimization}. The bound simplifies to $\delta \le \frac{B}{\sqrt{2(G_{\min} + 2)}} - 1$, illustrating how the choice of $h$ directly regulates estimation stability through the tradeoff between noise, kernel concentration, and curvature. As shown in Figure~\ref{fig:delta_bound}, the value of $\delta$ decreases monotonically as $w$ increases when $h$ is held constant. For fixed $w$, increasing $h$ suggests a non-monotonic dependence on the smoothing parameter. 

The $\delta$ bound shown in Equation~\eqref{bounds} corresponds directly to the theoretical setting (as in Theorem~\ref{thm:mainthm}) in which the convergence guarantees for ground truth recovery. Specifically, these results are valid only when $\delta$ remains below the threshold, which aligns with the classical requirement $\delta < 1/2$, as stated in~\cite{ma2023noisylowrankmatrixoptimization}. When $\delta$ exceeds this threshold, matrix recovery is no longer guaranteed to succeed in recovering the ground truth. Therefore, in the following sections, we shift our focus to analyzing the behavior of the recovery landscape in the regime where $\delta$ is above $1/2$. We only consider the optimization landscape in a region around the ground truth and show that local minimizers are all very close to $M^*$.


\vspace{-1.0em}
\begin{figure}[htbp]
    \centering
    \begin{minipage}[t]{0.49\linewidth}
        \centering
        \includegraphics[width=\linewidth]{figure/1png.pdf}
        \caption{The kernel loss bound result: the yellow mesh depicts the three-dimensional surface of the upper bound on \(\|M - M^*\|_F\), expressed as a function of the noise magnitude \(\|w\|\) and the kernel bandwidth \(h\).}
        \label{fig:loss_bound}
    \end{minipage}
    \hfill
    \begin{minipage}[t]{0.49\linewidth}
        \centering
        \includegraphics[width=\linewidth]{figure/2png.pdf}
        \caption{Our \(\delta\) bound result is illustrated in the figure, where the yellow mesh represents the boundary surface of \(\delta\) as a function of the noise magnitude \(\|w\|\) and the kernel bandwidth \(h\).}
        \label{fig:delta_bound}
    \end{minipage}
    \vspace{-1.0em}
\end{figure}

 \vspace{-0.5em}

\section{Upper Bound When \texorpdfstring{$\delta> 1/2$}{}}
 \vspace{-0.5em}

\subsection{The Kernel Loss Function}

\begin{theorem}[Upper Bound for $\delta>1/2$]\label{ours_large_delta}
Let \( \hat{M} \in \mathbb{R}^{n \times n} \) be an estimator of the true matrix \( M^\ast \), and with the new loss function (Equation~\eqref{main0}), assume that: the derivative terms \( \nabla_M u_{ij}(M) \), \( \nabla_M^2 u_{ij}(M) \) are bounded in norm by \( L_1, L_2 \), respectively, and the quantity \( G_i(M) \) is uniformly lower bounded: \( G_i(M) \ge \Gamma_{\min} > 0 \). Also assume the minimum pairwise squared residual satisfies \( u_{\min}^2 := \min_{i,j,t} u_{ij}(M_t)^2 \).
Then with probability at least \( \mathbb{P}(\|w\|_2 \leq \epsilon) \) the estimation error \( \|\hat{M} - M^\ast\|_F \) satisfies the bound:
\begin{equation}\label{eq:24}
\|\hat{M} - M^\ast\|_F \le \frac{-\left[\zeta_1(1+\delta) - B \epsilon^2\zeta_2\right] + \sqrt{\left[\zeta_1(1+\delta) - B \epsilon^2 \zeta_2\right]^2 + 8B \epsilon^2 \zeta_1\zeta_2(1-\delta)}}{4\zeta_2},
\end{equation}
where
\begin{equation}
B = \frac{\sqrt{2\lambda_{r^\ast}(\hat{X}\hat{X}^\top)}}{\|Q\|} \left[ \frac{L_1^2\delta^2}{h^4 \Gamma_{\min}^2} \exp\left(-\frac{2u_{\min}^2}{h^2}\right) + \frac{L_2\delta}{h^2 \Gamma_{\min}} \exp\left(-\frac{u_{\min}^2}{h^2}\right) \right].
\end{equation}
and \( Q := \hat{X}U^\top + U\hat{X}^\top \). Here, \( \zeta_1, \zeta_2 \) are structural noise constants as in Assumptions~\ref{other_assumption} and~\ref{noise_perturb}, and \( \lambda_{r^\ast}(\hat{X}\hat{X}^\top) \) denotes the \( r^\ast \)-th eigenvalue of the data covariance matrix.

\end{theorem}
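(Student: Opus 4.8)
The plan is to reduce the statement to a single scalar quadratic inequality in $t:=\|\hat M-M^\ast\|_F$ extracted from the first- and second-order optimality conditions at the local minimizer, and then to read off \eqref{eq:24} as the unique admissible (positive) root. I would work throughout on the event $\{\|w\|_2\le\epsilon\}$, whose probability is exactly the $\mathbb{P}(\|w\|_2\le\epsilon)$ in the statement; conditioning on this event lets me replace every noise-dependent quantity by its value at $\|w\|_2=\epsilon$, after which the entire argument is deterministic.

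First I would fix the factorized geometry. Write $\hat M=\hat X\hat X^\top$ and $M^\ast=X^\ast (X^\ast)^\top$, set $\Delta=\hat M-M^\ast$, and let $U$ be the aligned error factor so that $Q:=\hat X U^\top+U\hat X^\top$ is the first-order variation of $XX^\top$ along $U$ and serves as the canonical Hessian test direction. The geometric fact I would invoke is the standard lower bound relating this symmetrized product to the matrix error, of the form $\|Q\|_F^2\ge 2\lambda_{r^\ast}(\hat X\hat X^\top)\,\|\Delta\|_F^2$ up to a dimensionless constant; this is precisely what lets the prefactor $\sqrt{2\lambda_{r^\ast}(\hat X\hat X^\top)}/\|Q\|$ in the definition of $B$ convert a curvature estimate phrased in $\|Q\|$ into one phrased in $t$, thereby eliminating the auxiliary $\|Q\|$ from the final bound.

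Next I would expand the optimality conditions for \eqref{main0}. The second-order necessary condition gives $[\nabla_M^2\hat g(\hat M,w)](Q,Q)\ge 0$, which I split as $\nabla_M^2\hat g(\hat M,w)=\nabla_M^2\hat g(\hat M,0)+[\nabla_M^2\hat g(\hat M,w)-\nabla_M^2\hat g(\hat M,0)]$; the analogous split applies to the gradient through the first-order condition. The noiseless curvature is obtained by differentiating the log-sum-exp structure of \eqref{main0} twice, which produces exactly two contributions: a squared-first-derivative term scaling like $L_1^2/(h^4\Gamma_{\min}^2)\exp(-2u_{\min}^2/h^2)$ and a second-derivative term scaling like $L_2/(h^2\Gamma_{\min})\exp(-u_{\min}^2/h^2)$, using $\|\nabla_M u_{ij}\|\le L_1$, $\|\nabla_M^2 u_{ij}\|\le L_2$, $G_i\ge\Gamma_{\min}$, and $u_{ij}^2\ge u_{\min}^2$. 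Collecting these together with the geometric prefactor is exactly the constant $B$. The RIP condition \eqref{RIP0} then supplies the $(1\pm\delta)$ weights, the lower isometry constant $(1-\delta)$ entering the favorable direction and $(1+\delta)$ the unfavorable one, while the residual noise-perturbation terms are bounded by $\zeta_2\|w\|\le\zeta_2\epsilon$ and $\zeta_1\|w\|\le\zeta_1\epsilon$ via Assumptions~\ref{noise_perturb} and~\ref{other_assumption}.

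Assembling the gradient estimate (scaling with $\zeta_1$) with the Hessian estimate (scaling with $\zeta_2$), applying the RIP weights, and substituting $\|w\|\le\epsilon$ and the $\|Q\|$-to-$t$ conversion collapses everything into the single quadratic inequality
\begin{equation}
2\zeta_2\,t^2+\bigl[\zeta_1(1+\delta)-B\epsilon^2\zeta_2\bigr]\,t-B\epsilon^2\zeta_1(1-\delta)\le 0 .
\end{equation}
Since $\zeta_2\ge 0$ and the constant term is nonpositive, for $t\ge 0$ the feasible region is the interval up to the unique positive root, and the quadratic formula yields precisely \eqref{eq:24}. The hardest part will be this third step: carrying out the two-fold differentiation of the kernel loss and bounding every resulting term \emph{uniformly} over the low-rank neighborhood rather than only pointwise at $\hat M$, and in particular verifying that the cross terms between the noise perturbation and the kernel curvature do not generate contributions outside the clean $B\epsilon^2$ form, so that the coefficients assemble into exactly the stated quadratic.
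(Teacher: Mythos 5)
There is a genuine gap in your proposal, and it sits exactly where you flagged "the hardest part." Your ingredients (ii) and (v) do match the paper: the two kernel-curvature terms in $B$ are obtained, as you describe, by differentiating the log-sum-exp structure of \eqref{main0} twice and invoking $\|\nabla_M u_{ij}\|\le L_1$, $\|\nabla_M^2 u_{ij}\|\le L_2$, $G_i\ge\Gamma_{\min}$, $u_{ij}^2\ge u_{\min}^2$; and the last step is indeed clearing denominators in a scalar inequality and taking the positive root of the resulting quadratic. But the middle step — "assembling the gradient estimate (scaling with $\zeta_1$) with the Hessian estimate (scaling with $\zeta_2$), applying the RIP weights ... collapses everything into the single quadratic" — is asserted, not derived, and a direct assembly of this kind does not produce the stated coefficients. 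Nothing in a pointwise combination of first/second-order optimality, RIP weights, and $\|w\|\le\epsilon$ explains the asymmetric placement of $B\epsilon^2$: why it multiplies $\zeta_2$ in the linear coefficient and $\zeta_1(1-\delta)$ in the constant term, while $\zeta_1(1+\delta)$ enters unweighted. In the paper this structure arises from a mechanism your proposal never mentions: the first-order criticality of $\hat X$ is converted (via the mean value theorem, not the second-order condition you invoke) into the existence of a symmetric matrix $\mathbf{H}$ satisfying a perturbed $(\delta+\zeta_2 q)$-RIP property together with $\|\hat{\mathbf{X}}^\top\mathbf{H}\mathbf{e}\|\le 2\zeta_1 q\|\hat X\|_2$; this feeds the relaxed $\eta$-maximization problem, whose value obeys $\eta_f^\ast(\hat X)\ge\frac{1-\delta-\zeta_2 q}{1+\delta+\zeta_2 q}$, and — crucially — the free noise parameter $q$ is then chosen as a function of the unknown error $y=\|\hat M-M^\ast\|_F$ (the step where the paper multiplies numerator and denominator by $2\zeta_1+3\zeta_2 y$). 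Only this error-dependent choice of $q$ yields the fractional bound
\begin{equation}
y\le B\epsilon^2\cdot\frac{\zeta_1(1-\delta)+\zeta_2 y}{\zeta_1(1+\delta)+2\zeta_2 y},
\end{equation}
whose clearing gives exactly $2\zeta_2 y^2+[\zeta_1(1+\delta)-B\epsilon^2\zeta_2]y-B\epsilon^2\zeta_1(1-\delta)\le 0$ and hence \eqref{eq:24}. Without reproducing that mechanism (or an equivalent one), your plan can at best deliver a bound of a different algebraic form, not this statement.

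Two secondary inaccuracies reinforce this. First, your geometric step misstates the role of the spectral inequality: the paper's bound is $\|\hat{\mathbf{X}}y\|^2\ge 2\lambda_{r^\ast}(\hat X\hat X^\top)\|y\|^2$ for vectors $y$ with $\hat X^\top\mathrm{mat}(y)$ symmetric, i.e.\ a lower bound on the factorization map $U\mapsto \hat XU^\top+U\hat X^\top$ in terms of $\|U\|$, not a relation between $\|Q\|_F$ and $\|\Delta\|_F$; and it cannot "eliminate $\|Q\|$ from the final bound," since $\|Q\|$ visibly survives in the denominator of $B$ in the theorem statement. Second, the Hessian enters the paper's argument through the mean-value-theorem integral $\int_0^1\nabla_M^2\hat g(M_t)(\hat M-M^\ast,Q)\,dt$ along the path $M_t=tM^\ast+(1-t)\hat M$ (which is also why $u_{\min}^2$ is a minimum over $t$), not through the second-order necessary condition $[\nabla_M^2\hat g(\hat M,w)](Q,Q)\ge 0$ at the single point $\hat M$ that your outline relies on.
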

The proofs and detailed math are provided in Section~\ref{large_delta_ours}.

\begin{corollary}
    The $\|\hat{M} - M^\ast\|_F$ in Theorem~\ref{ours_large_delta} can be roughly written as:
    \begin{equation}
\|\hat{M} - M^\ast\|_F \le \mathcal{O} \Bigl(-\left[1 - B \epsilon^2\right] + \sqrt{\left[1 - B \epsilon^2 \right]^2 +  B \epsilon^2 } \Bigr).
\end{equation}
\end{corollary}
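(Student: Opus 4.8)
The plan is to treat every structural constant in Theorem~\ref{ours_large_delta} as a fixed $\Theta(1)$ quantity and absorb it into the big-$\mathcal{O}$, so that the product $B\epsilon^2$ becomes the sole scale-bearing variable. I would begin by abbreviating the two composite expressions appearing in \eqref{eq:24}, writing $a := \zeta_1(1+\delta) - B\epsilon^2\zeta_2$ for the term inside the leading bracket and $c := 8B\epsilon^2\zeta_1\zeta_2(1-\delta)$ for the additive term under the radical, so that the bound reads $\|\hat{M}-M^\ast\|_F \le \tfrac{1}{4\zeta_2}\bigl(-a+\sqrt{a^2+c}\bigr)$.

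I would then justify the absorption term by term. By Assumptions~\ref{noise_perturb} and~\ref{other_assumption} the constants $\zeta_1,\zeta_2$ are fixed and strictly positive, hence $\Theta(1)$, and in the regime of Section~\ref{condition} the RIP constant $\delta$ is bounded strictly inside $(0,1)$, so both $1+\delta$ and $1-\delta$ are $\Theta(1)$. Consequently $\zeta_1(1+\delta)=\Theta(1)$, the product $B\epsilon^2\zeta_2=\Theta(B\epsilon^2)$, the radical term $c=\Theta(B\epsilon^2)$ (the constant $8\zeta_1\zeta_2(1-\delta)$ being swallowed by the $\mathcal{O}$), and the prefactor $1/(4\zeta_2)=\Theta(1)$ disappears entirely. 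Substituting these orders yields precisely
\[
\|\hat{M}-M^\ast\|_F \le \mathcal{O}\Bigl(-\bigl[1-B\epsilon^2\bigr]+\sqrt{\bigl[1-B\epsilon^2\bigr]^2+B\epsilon^2}\Bigr).
\]

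The step that requires genuine care --- and is the reason the statement reads ``roughly'' --- is the replacement of $a=\zeta_1(1+\delta)-\zeta_2 B\epsilon^2$ by $1-B\epsilon^2$: this is a difference of two independently-scaled positive quantities, so its sign and zero-crossing are not preserved under naive term-wise absorption (the crossover in $a$ occurs at $B\epsilon^2=\zeta_1(1+\delta)/\zeta_2$ rather than at $1$). I would therefore argue that the simplification is to be read as the leading-order functional dependence on $B\epsilon^2$, rigorous in the relevant bounded regime $B\epsilon^2=\mathcal{O}(1)$ where the ratios of the absorbed constants stay bounded. To keep the inequality direction intact I would finally invoke the elementary facts that $\sqrt{a^2+c}\ge|a|$ forces the bracketed quantity to be nonnegative, and that $x\mapsto -x+\sqrt{x^2+\kappa}$ is monotone decreasing in $x$ for fixed $\kappa>0$; together these guarantee that absorbing the positive constants $\zeta_1,\zeta_2,1\pm\delta$ neither flips the sign nor reverses the bound. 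The remaining manipulations are pure substitution and need no estimate beyond Theorem~\ref{ours_large_delta} itself.
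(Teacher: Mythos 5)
The paper never writes this proof down at all (it states only that ``the proof is omitted here due to page limit''), so there is no argument of record to compare yours against; your constant-absorption strategy is precisely what the corollary intends, and your reading of \eqref{eq:24} is correct: with $a=\zeta_1(1+\delta)-\zeta_2 B\epsilon^2$ and $c=8\zeta_1\zeta_2(1-\delta)B\epsilon^2$ the bound is $\frac{1}{4\zeta_2}\bigl(-a+\sqrt{a^2+c}\bigr)$, and since $\zeta_1,\zeta_2,1\pm\delta$ are fixed positive constants the only scale-bearing quantity is $t:=B\epsilon^2$. You also correctly isolate the one genuine subtlety, namely that $a$ and $\tilde a:=1-t$ vanish at different values of $t$, so term-wise replacement of constants by $1$ is not automatically order-preserving.

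Where your write-up has a real gap is the closing step. Monotonicity of $x\mapsto -x+\sqrt{x^2+\kappa}$ (decreasing in $x$, increasing in $\kappa$) transfers an inequality only when $a\ge\tilde a$ \emph{and} $c\le\tilde c$, and neither ordering holds in general: each depends on whether $\zeta_1(1+\delta)$, $\zeta_2$, and $8\zeta_1\zeta_2(1-\delta)$ exceed $1$, and when $a<\tilde a$ the monotonicity points in the wrong direction. So, as stated, this step does not justify the absorption; nonnegativity of both sides is not enough either, since the claim is a ratio bound, not a sign statement. The repair is short and even removes your restriction to $t=\mathcal{O}(1)$. Use the identity $-x+\sqrt{x^2+\kappa}=\kappa/\bigl(x+\sqrt{x^2+\kappa}\bigr)$, whose denominator is strictly positive for $\kappa>0$, and consider the ratio $r(t)$ of the bracket in \eqref{eq:24} to the bracket $-\bigl[1-t\bigr]+\sqrt{\bigl[1-t\bigr]^2+t}$ of the corollary. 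Then $r$ is a continuous, strictly positive function on $(0,\infty)$ with finite limits at both ends: $r(t)\to 8\zeta_2(1-\delta)/(1+\delta)$ as $t\to 0^+$ (both brackets are linear in $t$ to leading order) and $r(t)\to \zeta_2$ as $t\to\infty$ (both brackets grow linearly, with slopes $2\zeta_2$ and $2$). Hence $\sup_{t>0}r(t)=:C_0<\infty$ depends only on $\zeta_1,\zeta_2,\delta$, and dividing by the prefactor $4\zeta_2$ gives the corollary with implied constant $C_0/(4\zeta_2)$, uniformly in $\epsilon$. With that substitution for your monotonicity step, your proof is complete and supplies exactly what the paper omits.
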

It is a simplified result and the proof is omitted here due to page limit.

\subsection{MSE Loss Function}

\begin{lemma}[Estimation Bound under MSE Loss]\label{MSE_loss_result}
Assume that the objective function is given by the mean squared error (MSE) loss and that it satisfies Assumptions~\ref{other_assumption} and~\ref{noise_perturb}. Further suppose that the noise-free mapping \( f(M, 0) \) satisfies the \(\delta\)-restricted isometry property (RIP) for some constant \( \delta \in (0,1) \). Let \( \tau \in (0, 1 - \delta^2) \) be arbitrary. We have, with probability at least \( \mathbb{P}(\|w\|_2 \leq \epsilon) \), the refined upper bound
\begin{equation}
\| M- M^\ast \|_F \leq 
\mathcal{O}\left(\frac{\epsilon(1+ \epsilon)}{ \sqrt{1- \epsilon}} \right).
\end{equation}
\end{lemma}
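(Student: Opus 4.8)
The plan is to specialize the second-order-stationarity perturbation argument of \Cref{ours_large_delta} to the mean squared error loss, where the analysis collapses dramatically. For the MSE objective $f(M,w)=\tfrac12\|\mathcal{A}(M)-\mathcal{A}(M^\ast)-w\|_F^2$ the gradient $\nabla_M f(M,w)=\mathcal{A}^\ast(\mathcal{A}(M-M^\ast)-w)$ is affine in $M$ and the Hessian $\nabla_M^2 f=\mathcal{A}^\ast\mathcal{A}$ is a constant operator independent of $w$. Hence the noise enters only through the single linear term $-\mathcal{A}^\ast w$, so in the language of \Cref{noise_perturb} the Hessian perturbation constant is $\zeta_2=0$ while $\zeta_1$ is controlled purely by RIP. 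This is precisely why the complicated kernel-derivative factor $B$ appearing in \Cref{ours_large_delta} degenerates to a constant in the MSE setting, leaving a much cleaner scalar bound.

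First I would write the first- and second-order optimality conditions at a local minimizer $\hat M=\hat X\hat X^\top$ of the factored problem, namely $\nabla_M f(\hat M,w)\,\hat X=0$ together with positive semidefiniteness of the factored Hessian. Second, using \eqref{RIP0} I would sandwich the quadratic form $\|\mathcal{A}(\cdot)\|_F^2$ between $(1-\delta)\|\cdot\|_F^2$ and $(1+\delta)\|\cdot\|_F^2$ and bound the noise coupling by $|\langle \mathcal{A}^\ast w,K\rangle|\le\sqrt{1+\delta}\,\|w\|_2\|K\|_F$, i.e.\ $\zeta_1=\sqrt{1+\delta}$. Third, I would insert the test direction built from $\hat X$ and the ground-truth factor via $Q:=\hat X U^\top+U\hat X^\top$ (as in \Cref{ours_large_delta}) into the second-order condition; combining it with the first-order condition cancels the gradient contribution and isolates a curvature term whose coefficient is of order $\sqrt{1-\delta^2}$ multiplying $\|\Delta\|_F^2$, where $\Delta:=\hat M-M^\ast$, pitted against a cross term of order $(1+\delta)\|\Delta\|_F$ and a noise term of order $\epsilon$ on the event $\{\|w\|_2\le\epsilon\}$. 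Fourth, assembling these on that event yields a scalar quadratic inequality $a t^2-b t-c\le 0$ in $t=\|\Delta\|_F$ whose positive root, after absorbing the $\delta$-dependent constants into $\mathcal{O}(\cdot)$ and using the slack $\tau\in(0,1-\delta^2)$ to keep the leading coefficient strictly positive, simplifies to $\mathcal{O}\!\left(\frac{\epsilon(1+\epsilon)}{\sqrt{1-\epsilon}}\right)$.

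The hard part is that for $\delta>1/2$ the global no-spurious-minima certificate of \Cref{lemma:error_bound_with_h} no longer applies, so the curvature term extracted in the third step is only guaranteed positive in a neighborhood of $M^\ast$. I would therefore restrict attention to local minimizers lying in a region where $\nabla_M^2 f(M^\ast,w)$ is positive definite (as assumed) and use a continuity argument together with the margin $\tau<1-\delta^2$ to certify that the leading quadratic coefficient stays bounded away from zero, which is exactly what keeps the denominator $\sqrt{1-\epsilon}$ finite. The remaining bookkeeping---tracking how $\delta$ and $\epsilon$ combine through the quadratic root and matching the precise order form---is routine once the sign of the curvature term is secured, and the comparison with \cite{ma2023noisylowrankmatrixoptimization} confirms the linear-in-$\epsilon$ leading behavior already recorded in \Cref{thm:mainthm_MSE}.
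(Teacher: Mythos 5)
Your proposal diverges from the paper's actual proof and contains a genuine gap. The paper proves this lemma by invoking Lemma~\ref{thm:local} (proved in Section~\ref{sec:local}), whose argument is not a direct second-order-stationarity computation: it formulates the smallest RIP constant admitting $\hat X$ as a spurious critical point as a semidefinite program (via Lemma~\ref{lem:noisy_foc_combined}), passes to the Lagrangian dual, and bounds the dual objective through the angle $\theta$ between $\hat{\mathbf X}y$ and $\mathbf e = \vecc(\hat X\hat X^\top - M^\ast)$. The crucial step that lets $\delta$ be an arbitrary constant in $(0,1)$ --- rather than $\delta < 1/3$ as in the global analysis --- is the geometric estimate $\sin^2\theta \le \tau$, obtained from the proximity hypothesis $\|\hat X\hat X^\top - M^\ast\|_F \le \tau\lambda_r(M^\ast)$ via the QR decomposition and the Wielandt--Hoffman theorem; this gives $\cos\theta \ge \sqrt{1-\tau}$, and the denominator $\cos\theta - \zeta_2 q - \delta$ in \eqref{eq:local_ineq_int} is positive precisely when $\tau < 1-\delta^2$. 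Your plan contains no counterpart of this alignment argument. Inserting test directions $Q = \hat XU^\top + U\hat X^\top$ into the second-order condition and applying the RIP sandwich produces leading coefficients that are linear in $\delta$ (of the form $1-\delta$ minus cross terms), and such a direct argument only closes for small $\delta$; your claimed curvature coefficient ``of order $\sqrt{1-\delta^2}$'' is asserted, not derived, and cannot be produced by the RIP sandwich alone. The obstruction is structural: $\Delta = \hat M - M^\ast$ need not lie in the span of directions $\hat XU^\top + U\hat X^\top$, so the first-order condition controls $\mathcal{A}(\Delta)$ only up to the misalignment between $\vecc(\Delta)$ and the range of $\hat{\mathbf X}$, which is exactly the quantity the paper's proof estimates and yours does not.

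There is also an internal inconsistency. You correctly observe that for MSE $\zeta_2 = 0$ (the Hessian $2\mathcal{A}^\ast\mathcal{A}$ is noise-independent), but you then attribute the $(1+\epsilon)$ and $1/\sqrt{1-\epsilon}$ factors to the curvature margin $\tau < 1-\delta^2$. In the paper's derivation those factors come from the $\zeta_2\epsilon$ terms in the bound $\epsilon(1+\delta+\zeta_2\epsilon)\zeta_1 C(\tau,M^\ast)\big/\sqrt{1-\tau-\zeta_2\epsilon-\delta}$ of Lemma~\ref{thm:local}; with $\zeta_2 = 0$ the specialization is simply $\mathcal{O}(\epsilon)$, which does imply the stated bound but makes your explanation of the $\sqrt{1-\epsilon}$ denominator incorrect. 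Finally, the restriction to a $\tau\lambda_r(M^\ast)$-neighborhood of $M^\ast$ is a hypothesis you must carry through the argument; replacing it with positive definiteness of $\nabla^2_M f(M^\ast,w)$ is vacuous for MSE, since $2\mathcal{A}^\ast\mathcal{A}$ is always restricted-positive-definite under RIP and gives no control on where local minimizers can sit.
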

\vspace{-1em}
The proofs and detailed math are provided in Section~\ref{sec:local}, which alignes well with Ma's result~\cite{ma2023noisylowrankmatrixoptimization}. The upper bounds for the proposed loss in Theorem~\ref{ours_large_delta} and MSE loss in Theorem~\ref{MSE_loss_result} illustrate a direct generalization of the results in~\cite{bi2020delocalization}, Theorem~\ref{ours_large_delta} shows that even when \(\delta > 1/2\), a recovery guarantee still holds under more strict conditions on the model and the noise structure. Hence, unlike certain MSE-based approaches that may fail to provide meaningful error bounds once \(\delta\) exceeds the \(1/2\) threshold, the new kernel-based method retains theoretical validity in this regime. For scenarios in which \(\delta\) is only marginally above \(1/2\), MSE might still be competitive if its assumptions are not severely violated, but it generally does not offer the same level of robustness provided by the new loss.

 \vspace{-0.5em}

\section{Lower Bound for the \texorpdfstring{$\|M - M^\ast\|_F$}{}}\label{lowerbounds}
 \vspace{-0.5em}

Now we want to provide the lower bound for the result of the kernel loss and MSE loss for comparison. The lower bound refers to a threshold below which the Frobenius norm of the error, \(\|M - M^\ast \|_F\), cannot decrease unless the estimated matrix \(M\) coincides with the ground truth \(M^\ast\). Thus, if \(M \neq M^\ast\), then \(\|M - M^\ast \|_F\) must lie above a positive value

\subsection{MSE Loss Function}

\begin{theorem}[Lower Bound under MSE Loss]\label{MSE_lower}
Let \( M^\ast \in \mathbb{R}^{n \times n} \) be the ground truth matrix. \( L > 0 \), \( \delta \in (0,1) \), and \( w \) is the noise term. Provided that \( L > 2(1 + \delta) \), then with probability at least \( \mathbb{P}(\|w\|_2 \leq \epsilon) \) either \( M = M^\ast \) (i.e., perfect recovery), or the Frobenius norm of the error is lower bounded by
\begin{equation}
\|M - M^\ast\|_F \ge \frac{4\sqrt{1+\delta} \epsilon}{L - 2(1+\delta)}.
\end{equation}
\end{theorem}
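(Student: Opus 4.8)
The plan is to exploit the fact that the ground truth $M^\ast$ is \emph{not} a stationary point of the noisy MSE objective, while the candidate local minimizer $M$ is, and to convert this mismatch into a quantitative separation using the RIP and the constant $L$. Writing $\Delta := M - M^\ast$ and $\mathcal{A}^\ast$ for the adjoint of $\mathcal{A}$, the gradient of $f(M,w)=\tfrac12\|\mathcal{A}(M)-\mathcal{A}(M^\ast)-w\|_F^2$ is $\nabla_M f(M,w)=\mathcal{A}^\ast\mathcal{A}(\Delta)-\mathcal{A}^\ast w$, so that $\nabla_M f(M^\ast,0)=0$ but $\nabla_M f(M^\ast,w)=-\mathcal{A}^\ast w \neq 0$ whenever $w\neq 0$. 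The whole argument is a reverse-triangle-inequality estimate: a stationary point of the perturbed problem cannot sit arbitrarily close to the unperturbed critical point $M^\ast$, and the minimal admissible distance is governed by the size of the noise-induced gradient divided by the curvature controlled by $L$.

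First I would record the two RIP consequences I need. On the event $\{\|w\|_2\le\epsilon\}$, testing $\mathcal{A}$ against matrices of rank $\le 2r$ and using \eqref{RIP0} gives the operator estimate $\|\mathcal{A}^\ast w\|_F\le\sqrt{1+\delta}\,\|w\|_2\le\sqrt{1+\delta}\,\epsilon$ together with the signal estimate $\|\mathcal{A}^\ast\mathcal{A}(\Delta)\|_F\le(1+\delta)\|\Delta\|_F$, since $\mathrm{rank}(\Delta)\le 2r$. Next I would write the first-order optimality of the local minimizer in the Burer--Monteiro coordinates $M=XX^\top$, namely $\nabla_M f(M,w)\,X=0$, and pair it with a feasible/normal test direction of the type $Q=\hat X U^\top+U\hat X^\top$ used in Theorem~\ref{ours_large_delta}, so as to turn the matrix stationarity condition into a scalar relation linking $\mathcal{A}^\ast\mathcal{A}(\Delta)$ and $\mathcal{A}^\ast w$.

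Then I would combine these pieces. The noise-perturbation estimate of Theorem~\ref{thm:mainthm_MSE}, $\|\nabla_M f(M,w)-\nabla_M f(M,0)\|\le\lambda\epsilon$ with $\lambda=\mathcal{O}(2\sqrt{1+\delta})$, applied at both $M$ and $M^\ast$ (the factor-two Burer--Monteiro gradient supplying one factor and the two evaluation points the other) produces the noise coefficient $4\sqrt{1+\delta}\,\epsilon$, while the RIP signal estimate contributes the $2(1+\delta)\|\Delta\|_F$ term. Invoking the $L$-smoothness, i.e. the restricted gradient-Lipschitz property of Lemma~\ref{theorem_noise}, to upper bound the gradient change between $M$ and $M^\ast$ by $L\|\Delta\|_F$ and chaining the inequalities gives $L\|\Delta\|_F\ge 2(1+\delta)\|\Delta\|_F+4\sqrt{1+\delta}\,\epsilon$. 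Under the standing hypothesis $L>2(1+\delta)$ the coefficient $L-2(1+\delta)$ is strictly positive, so dividing yields $\|\Delta\|_F\ge \tfrac{4\sqrt{1+\delta}\,\epsilon}{L-2(1+\delta)}$. The dichotomy is then immediate: the chain degenerates only when $\Delta=0$, i.e. $M=M^\ast$ (perfect recovery); otherwise the strict bound holds.

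The hard part will be the nonconvex optimality bookkeeping. Because the problem is factored, the full matrix gradient $\nabla_M f(M,w)$ does not vanish---only $\nabla_M f(M,w)X=0$---so passing from this matrix condition to a clean scalar inequality in $\|\Delta\|_F$ requires choosing the right test direction and, quite possibly, invoking the second-order (Hessian positive-semidefiniteness) condition to exclude the near-$M^\ast$ regime, exactly in the spirit of the delocalization argument of \cite{bi2020delocalization}. Making the inequalities point in the \emph{lower-bound} direction, rather than the upper bounds obtained elsewhere in the paper, hinges on using the non-stationarity of $M^\ast$ under noise; and pinning down the precise constants $2$ and $4$ is where the factored gradient and the two RIP-type estimates must be tracked with care, while the remainder is routine RIP and triangle-inequality algebra.
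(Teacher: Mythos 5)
Your route has a genuine gap, and it is fatal rather than cosmetic: for the quadratic MSE loss the noise cancels \emph{identically} in gradient differences, so no stationarity-plus-Lipschitz argument can produce an $\epsilon$-dependent separation. Concretely, with $f(M,w)=\|Y+w-\mathcal{A}(M)\|_2^2$ the gradient is $\nabla_M f(M,w)=2\mathcal{A}^\ast\bigl(\mathcal{A}(M)-Y-w\bigr)$, which is affine in $M$ with a noise shift $-2\mathcal{A}^\ast w$ that is constant in $M$; hence
\begin{equation}
\nabla_M f(M,w)-\nabla_M f(M^\ast,w)=2\mathcal{A}^\ast\mathcal{A}(\Delta),
\end{equation}
with no dependence on $w$ whatsoever. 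Your pivotal chain $L\|\Delta\|_F\ge 2(1+\delta)\|\Delta\|_F+4\sqrt{1+\delta}\,\epsilon$ requires a single quantity that is simultaneously at most $L\|\Delta\|_F$ (by the Lipschitz property) and at least $2(1+\delta)\|\Delta\|_F+4\sqrt{1+\delta}\,\epsilon$ (by RIP plus the noise estimates). The only natural candidate is the displayed gradient difference, and it cannot satisfy the second inequality: its noise contribution is exactly zero, and the restricted RIP bound caps its norm at $2(1+\delta)\|\Delta\|_F$, strictly below what you need. Keeping the two perturbation terms $\nabla f(M,w)-\nabla f(M,0)$ and $\nabla f(M^\ast,0)-\nabla f(M^\ast,w)$ ``separate'' and adding their norms to build the coefficient $4\sqrt{1+\delta}\,\epsilon$ is a triangle-inequality step in the wrong direction — it can only yield upper bounds — and in any case these two terms are $\mp 2\mathcal{A}^\ast w$ and cancel exactly.

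There is a second, structural problem: every noise estimate available in the paper (Assumption~\ref{noise_perturb}, the RIP consequence $\|\mathcal{A}^\ast w\|_F\le\sqrt{1+\delta}\,\epsilon$) is an \emph{upper} bound, valid on the event $\{\|w\|_2\le\epsilon\}$. A delocalization argument of the type ``a stationary point of the perturbed problem cannot sit near the unperturbed critical point'' needs a \emph{lower} bound on the noise-induced gradient at $M^\ast$, and none exists: $\mathcal{A}^\ast w$ can be arbitrarily small or zero (take $w$ orthogonal to the range of $\mathcal{A}$), in which case $M^\ast$ remains a critical point and your mechanism yields only the vacuous bound $\|\Delta\|_F\ge 0$. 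This is precisely why the paper argues through \emph{function values} instead of gradients: the gap $f(M,w)-f(M^\ast,w)$ decomposes as $[f(M,0)-f(M^\ast,0)]+E(w)$ with $E(w)=2\langle\mathcal{A}(M^\ast)-\mathcal{A}(M),w\rangle$, a cross term that does \emph{not} cancel. The paper combines the smoothness estimate $\|\Delta\|_F^2\ge\tfrac{2}{L}\bigl[f(M,w)-f(M^\ast,w)\bigr]$ (case (B) of Lemma~\ref{thm:localminima}) with the expansion $(1+\delta)\|\Delta\|_F^2+2\sqrt{1+\delta}\,\|w\|\,\|\Delta\|_F$ of the gap, multiplies by $L$, and factors out $x=\|\Delta\|_F$ to get $x\bigl([L-2(1+\delta)]x-4\sqrt{1+\delta}\,\|w\|\bigr)\ge 0$, whence the dichotomy $x=0$ or $x\ge 4\sqrt{1+\delta}\,\|w\|/(L-2(1+\delta))$. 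The constants you reverse-engineered — the $2(1+\delta)$, the $4\sqrt{1+\delta}$, and the denominator $L-2(1+\delta)$ — are artifacts of that function-value computation and cannot be recovered from gradient comparisons.
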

The proofs and mathematics are provided in~\ref{MSE_lowers}.

\subsection{The Kernel Loss Function}
\begin{theorem}[Lower Bound for the kernel loss]\label{ours_lower}
Let \( M^* \) be the ground truth matrix. Then, with probability at least \( \mathbb{P}(\|w\|_2 \leq \epsilon) \) either \( M = M^* \) (i.e., exact recovery), or the Frobenius norm of the error satisfies the lower bound:
\begin{equation}
\|M - M^*\|_F \ge \frac{2L(1 + \delta) {e^{-\epsilon^2}}}{(1 - \delta) h^2},
\end{equation}
here \( L > 0 \), and \( 0 < \delta < 1 \).
\end{theorem}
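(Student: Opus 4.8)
The plan is to prove a delocalization statement in the spirit of the MSE lower bound (Theorem~\ref{MSE_lower}) and the framework of~\cite{bi2020delocalization}: I show that any local minimizer of~\eqref{main0} either coincides with \(M^\ast\) or is separated from it by a fixed radius, the separation emerging from balancing the noise-free (signal) gradient against the noise-induced gradient perturbation at a stationary point. Concretely, writing \(M = XX^\top\) for a local minimizer, first-order optimality forces the factored gradient to vanish; exploiting the identity
\[
\nabla_M \hat g(M, w) = \nabla_M \hat g(M, 0) + \bigl[\nabla_M \hat g(M, w) - \nabla_M \hat g(M, 0)\bigr],
\]
stationarity makes the noise-free gradient the exact negative of the noise-perturbation term, so that \(\|\nabla_M \hat g(M, 0)\|_F = \|\nabla_M \hat g(M, w) - \nabla_M \hat g(M, 0)\|_F\). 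The remaining task is to bound the left side \emph{above} in terms of \(\|M - M^\ast\|_F\) and the right side \emph{below} by a noise floor.

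For the signal side I would use that the noise-free loss is minimized at \(M^\ast\) (when \(w=0\) all residual differences vanish and \(\hat g(M^\ast,0)=0\)), hence \(\nabla_M \hat g(M^\ast, 0) = 0\); this is legitimate because \(\nabla_M^2 \hat g(M^\ast,w)\) is assumed positive definite. Then the restricted Lipschitz continuity of Lemma~\ref{theorem_noise} together with the lower RIP constant yields a clean estimate \(\|\nabla_M \hat g(M,0)\|_F = \|\nabla_M \hat g(M,0) - \nabla_M \hat g(M^\ast,0)\|_F \lesssim (1-\delta)\,\|M - M^\ast\|_F\), which supplies the \((1-\delta)\) factor in the denominator of the target bound.

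For the noise side I would differentiate the log-sum-exp objective in~\eqref{main0} explicitly. Introducing the residual differences \(u_{ij}(M)\), differentiation produces the characteristic factor \(\tfrac{2u_{ij}}{h^2}\exp(-u_{ij}^2/h^2)\) multiplying \(\nabla_M u_{ij}\), whose norm is governed by \(\mathcal{A}\) and hence by the upper RIP constant \((1+\delta)\). On the event \(\{\|w\|_2 \le \epsilon\}\) the residuals remain bounded, so each kernel weight obeys \(\exp(-u_{ij}^2/h^2) \ge e^{-\epsilon^2}\); combining this floor with the \(1/h^2\) prefactor, the factor \((1+\delta)\), and the derivative-bound constant \(L\) gives \(\|\nabla_M \hat g(M, w) - \nabla_M \hat g(M, 0)\|_F \gtrsim \tfrac{2L(1+\delta)e^{-\epsilon^2}}{h^2}\). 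Equating the two sides through the stationarity identity and solving for \(\|M - M^\ast\|_F\) then produces the claimed bound, while the dichotomy arises because the required cancellation is impossible unless all residual differences vanish, i.e.\ \(M = M^\ast\).

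The hard part will be certifying the genuine \emph{lower} bound on the noise-induced gradient perturbation: the exponential kernel deliberately suppresses large residuals (this is precisely what drives the upper bounds in Theorem~\ref{thm:mainthm}), so I must pin down the residual range on \(\{\|w\|\le\epsilon\}\) tightly enough to guarantee \(\exp(-u_{ij}^2/h^2) \ge e^{-\epsilon^2}\) uniformly, rather than merely bounding it above. A secondary subtlety is that the Burer--Monteiro stationarity condition controls the gradient only along factored tangent directions, so transferring it into a clean bound on \(\|M - M^\ast\|_F\) in the full matrix space requires the positive-definiteness of \(\nabla_M^2 \hat g(M^\ast, w)\) and the noise-regularity properties recorded in Lemma~\ref{other_assumption} and Assumption~\ref{noise_perturb}.
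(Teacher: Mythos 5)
Your route is genuinely different from the paper's, and it contains a gap that cannot be repaired as stated. The paper (Section~\ref{ours_lowers}) never touches stationarity or gradient balancing: it takes the smoothness-side bound \(\|M-M^\ast\|_F \ge \sqrt{\tfrac{1}{L}\bigl[\tfrac{1-\delta}{2}\|M-M^\ast\|_F^2 - (1+\delta)R(w)\|M-M^\ast\|_F\bigr]}\) (assembled from the rough bound of Section~\ref{sec:rough} and the function-gap estimate \(\hat g(M,w)-\hat g(M^\ast,w)\ge \tfrac{1-\delta}{2}\|\Delta\|_F^2-(1+\delta)R(w)\|\Delta\|_F\)), and extracts the dichotomy purely from nonnegativity of the radicand: factoring out \(x=\|M-M^\ast\|_F\ge 0\) forces either \(x=0\) or \(x\ge \tfrac{2L(1+\delta)R(w)}{1-\delta}\). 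Crucially, \(R(w)\) enters only \emph{multiplicatively}; the paper never needs the noise term to be bounded away from zero. Your argument, by contrast, hinges on a strictly positive noise floor \(\|\nabla_M \hat g(M,w)-\nabla_M \hat g(M,0)\|_F \gtrsim \tfrac{2L(1+\delta)e^{-\epsilon^2}}{h^2}\) holding on the event \(\{\|w\|_2\le\epsilon\}\). That event includes \(w=0\) (and all arbitrarily small \(w\)), for which the perturbation is exactly zero, so no such uniform lower bound can hold: the gradient difference computed in Section~\ref{sec:main} is a signed sum of terms proportional to the increments \((w_j-w_i)\), and a pointwise floor on the kernel weights \(\exp(-u_{ij}^2/h^2)\ge e^{-\epsilon^2}\) says nothing about the size of that sum. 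The difficulty you flag as "the hard part" is therefore not a technical hurdle but a false step; with \(w\approx 0\) your stationarity identity only recovers that \(M\) is a critical point of the noise-free loss, which yields no positive separation from \(M^\ast\).

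There is a second, independent problem in your signal-side estimate: you invoke "restricted Lipschitz continuity together with the lower RIP constant" to claim \(\|\nabla_M \hat g(M,0)-\nabla_M \hat g(M^\ast,0)\|_F \lesssim (1-\delta)\|M-M^\ast\|_F\). The lower RIP constant \(1-\delta\) controls curvature from \emph{below} (it produces lower bounds on gradient differences, as in strong convexity), while an upper Lipschitz estimate must use the smoothness constant \(\rho\) of Lemma~\ref{theorem_noise} (tied to the upper constant \(1+\delta\)). Using \(1-\delta\) as a Lipschitz constant is exactly backwards, and it is the only way your bookkeeping places \(1-\delta\) in the denominator of the final bound; with the correct constant your route would not reproduce the stated inequality. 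Finally, as you note yourself, Burer--Monteiro stationarity only kills the gradient along directions \(\hat X U^\top + U\hat X^\top\), so the identity \(\nabla_M\hat g(M,0) = -[\nabla_M\hat g(M,w)-\nabla_M\hat g(M,0)]\) in the full matrix space is not available; the paper sidesteps this entirely by working with function values rather than gradients.
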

The proofs and mathematics are provided in~\ref{ours_lowers}. Comparing Theorems~\ref{MSE_lower} and~\ref{ours_lower}, we observe that the kernel loss yields a tighter and more informative lower bound for the recovery error. Specifically, the bound in Theorem~\ref{ours_lower} scales favorably with respect to both the noise level $\epsilon$ and the kernel bandwidth $h$, and decays exponentially in $\epsilon^2$, which reflects stronger robustness to small noise. In contrast, the lower bound under the MSE loss in Theorem~\ref{MSE_lower} grows linearly with $\epsilon$ and inversely with the gap $L - 2(1+\delta)$, which can be loose in high-noise or near-threshold regimes. Therefore, the proposed kernel-based formulation not only enhances recovery in practice but also enjoys a more favorable theoretical guarantee in terms of error lower bounds.

However, a closer look at Theorem~\ref{ours_lower} also shows that when \(\epsilon\) is very small, the exponential term \(e^{-\epsilon^2}\) does not significantly decrease, and the resulting lower bound can remain comparatively large. In such low-noise settings, this may hinder the solver’s ability to achieve a close approximation to \(M^*\), making the MSE approach more favorable in that specific scenario. This explains why we combine the two losses.

\vspace{-1.0em}

\section{Combined Loss}
\vspace{-1.0em}

\renewcommand{\arraystretch}{2}  
\setlength{\tabcolsep}{0pt} 
\begin{table}[!ht]
\centering
\caption{Comparison of theoretical properties the between the kernel loss and MSE loss, with probability at least \( \mathbb{P}(\|w\|_2 \leq \epsilon) \)}
\label{tab:loss_comparison}
\begin{tabular}{@{}lcc@{}}
\toprule
\midrule
\textbf{Property} & \textbf{The Kernel Loss} & \textbf{MSE Loss} \\
\midrule
Loss Behavior (Section~\ref{relationships}) &  $\mathcal{O}\Bigl(\frac{\epsilon e^{-\frac{\epsilon^2}{h^2}}}{n h^2}\Bigr)$ & $\mathcal{O}(\frac{\epsilon}{n})$  \\
Optimization Landscape (Section~\ref{Mainresult}) & $\mathcal{O}\Bigl(\max\Bigl\{1,\frac{\epsilon e^{-\epsilon^2}}{h^2}\Bigr\}\Bigr)$ & $\mathcal{O}(\epsilon)$ \\
Continuity Result (Section~\ref{Mainresult}) & $\mathcal{O}\Big(\frac{8(1+\delta)}{h^4}\epsilon e^{-\epsilon^2} + C \Big)$ & $2\sqrt{1+\delta_p}$ \\
$\delta > \frac{1}{2}$ (Section~\ref{condition}) & $\mathcal{O} \Bigl(-\left[1 - B \epsilon^2\right] + \sqrt{\left[1 - B \epsilon^2 \right]^2 +  B \epsilon^2 } \Bigr)$ & $\mathcal{O}\left(\frac{\epsilon(1+ \epsilon)}{ \sqrt{1- \epsilon}} \right)$ \\
Lower Bound (Section~\ref{lowerbounds}) & $\frac{2L(1 + \delta) {e^{-\epsilon^2}}}{(1 - \delta) h^2}$ & $\frac{4\sqrt{1+\delta} \epsilon}{L - 2(1+\delta)}$ \\
Convergence Result (Section~\ref{convergences}) & $\eta \le \frac{h^2}{ \left( 12 \rho^{1/2} C_0\right)}$ & $\eta \le \frac{1}{\left( 12 \rho^{1/2} C_0\right)}$ \\
\midrule
\bottomrule
\end{tabular}
\vspace{-1.0em}
\end{table}
 
The Convergence analysis is provided in Section~\ref{convergences}, and the analysis of the behavior of the two loss functions under non-centered noise is provided in Section~\ref{relationship}. Table~\ref{tab:loss_comparison} summarizes the theoretical properties of the kernel loss and the standard MSE loss. Compared to MSE, the kernel loss exhibits smoother behavior, provides benign optimization landscapes, and maintains more robust theoretical guarantees even when the RIP constant exceeds $\frac{1}{2}$. It also offers tighter lower bounds and provable convergence under heavy-tailed noise, highlighting its robustness and theoretical advantages.

As discussed in Table~\ref{tab:loss_comparison}, a comprehensive comparison between the mean squared error (MSE) and our proposed loss reveals that each has distinct advantages and limitations. To leverage the strengths of both, we construct a combined loss function that incorporates MSE and the proposed kernel-based loss, weighted by a learnable trade-off parameter $\lambda$. The resulting objective function is given in Equation~\eqref{combined}. A concise theoretical analysis of the combined formulation is provided in Appendix~\ref{sec:combined_theory}, where we highlight its optimization properties. In the following empirical study, we evaluate the performance of three loss functions: the MSE loss, the kernel loss, and the combined loss.
 \vspace{-0.5em}
\begin{equation}\label{combined}
\begin{aligned}
&\mathbf{L_{\mathrm{combined}}}(X) =  \lambda \cdot \frac{1}{n}\sum_{i=1}^n \bigl(Y_i - \mathcal{A}(X X^\top)_i\bigr)^2 \\
&+ (1-\lambda) \cdot n^{-1} \sum_{i=1}^n \left( -\log \frac{1}{n} \sum_{j=1}^n \exp\left(-\frac{((Y_j - (\mathcal{A}( X X^\top)_j)) - (Y_i - \mathcal{A}( X X^\top)_i))^2}{h^2}\right) \right). 
\end{aligned}
\end{equation}
\vspace{-2em}
\section{Empirical Study Of the Three Different Loss}\label{empirical}
 \vspace{-1em}

In this section, we provide an illustrative example of the theoretical results. We examine the proximity of an arbitrary local minimizer $\hat{X}$ of the three losses to the ground truth in terms of $\|\hat{X}\hat{X}^\top - M^*\|_F$, and study the effect of the step size on convergence. The setting follows the 1-bit Matrix Completion problem, a low-rank optimization task commonly used in recommendation systems~\cite{davenport2014one,ghadermarzy2018learning}, with the same experimental configuration as~\cite{ma2023noisylowrankmatrixoptimization}. Additional implementation details are given in Appendix~\ref{exp:result}. 

Figure~\ref{fig:loss-comparison} compares the bounds in Theorem~\ref{thm:mainthm} and Theorem~\ref{thm:mainthm_MSE} with $n=40$ and $r=5$. The y-axis reports the distance from an arbitrary local minimum $\hat{M}$ to $M^*$, measured in units of $\lambda_r(M^*)$, while the x-axis represents the probability lower bound, corresponding to the quantile of the noise norm $\|w\|$. The numerical results lie in the regime $\delta < 1/3$. The real error is the Frobenius norm of the recovered matrix under additive noise, and the numerical error is the corresponding computed upper bound. The kernel loss maintains stable error as the noise increases, with mild compression in some cases, consistent with Theorem~\ref{thm:mainthm}. In contrast, the MSE loss exhibits an almost linear increase in error as the noise strengthens, matching Theorem~\ref{thm:mainthm_MSE}. This confirms that the kernel loss can limit the influence of large noise and preserves robustness.

For low noise, the MSE loss achieves smaller error than the kernel loss, revealing a trade-off between precision and robustness. The composite loss combines the strengths of both objectives. As shown in the last columns of Figure~\ref{fig:loss-comparison}, it reduces the noise sensitivity of MSE while improving the low-noise performance of the kernel loss, yielding more reliable recovery across a broader range of noise levels.
\begin{figure}[!ht]
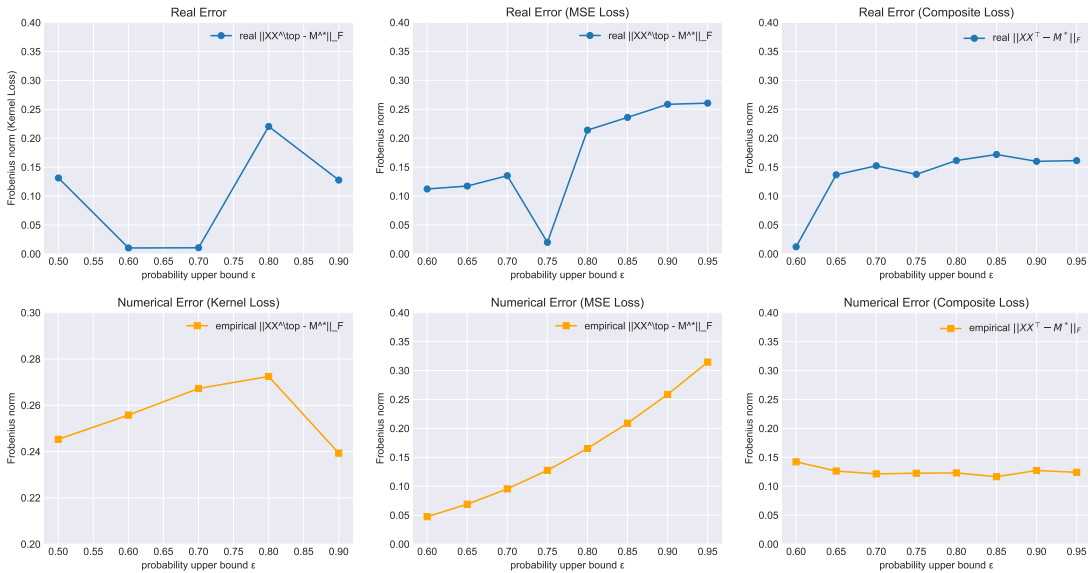

    \vspace{-1em} 
    \centering

    \includegraphics[width=0.29\textwidth]{figure/11.pdf}
    \includegraphics[width=0.29\textwidth]{figure/21.pdf}
    \includegraphics[width=0.29\textwidth]{figure/31.pdf}

    \includegraphics[width=0.29\textwidth]{figure/12.pdf}
    \includegraphics[width=0.29\textwidth]{figure/22.pdf}
    \includegraphics[width=0.29\textwidth]{figure/32.pdf}

    \vspace{-0.6em} 
    \caption{Comparison of real (top row) and numerical (bottom row) errors under different loss functions: kernel loss (left), MSE loss (middle), and composite loss (right), for $\delta < 1/3$.}
    \vspace{-1em} 
    \label{fig:loss-comparison}
\end{figure}
Readers may refer to Appendix~\ref{exp:result} for more visualizations of how the values of $\zeta_1$, $\zeta_2$, and Lipschitz constant $L$ affect Theorem~\ref{thm:mainthm}, especially under different uniform noise levels. More empirical results for $\delta>1/3$ are also provided in Appendix~\ref{exp:result}.
 \vspace{-0.5em}

\section{Conclusion}
 \vspace{-0.5em}

We compared the MSE loss and a kernel-based loss for low-rank matrix recovery in the presence of heavy-tailed noise. The exponential decay term in the proposed kernel-based loss reduces the impact of large outliers, whereas the MSE loss exhibits a linearly growing gradient. Our theoretical findings show that the kernel-based loss has more favorable upper and lower bounds in noisy regimes, and a combined formulation further balances these properties. Empirical results support these conclusions, indicating that the kernel-based and combined losses perform reliably under varying noise levels and values of $\delta$.

\bibliographystyle{plainnat}
\bibliography{name}

\clearpage
\appendix

\begin{center}
    \LARGE \textbf{Appendix}
\end{center}

\section{Convergence Behavior Under Uniform Noise}
\label{sec:convergence}

Figures~\ref{fig:loss1} and~\ref{fig:loss2} illustrate the loss curves obtained under uniform noise sampled from the interval $[0,1]$. The left plot corresponds to a setting where the final error converges to $0.5032$, with a theoretically computed upper bound of $0.803$, a Lipschitz constant of $2.2825922$, and a Hessian constant of $3.3899682$. In contrast, the right plot achieves a lower final error of $0.40832$, with a corresponding upper bound of $0.9104874$, a Lipschitz constant of $2.2984617$, and a Hessian constant of $3.390177$.

\begin{figure}[!ht]
\centering
\begin{minipage}{0.5\textwidth}
    \centering
    \resizebox{\linewidth}{!}{
        \includegraphics{figure/loss1.pdf}
    }
    \caption{Loss under uniform noise setting 1.}
    \label{fig:loss1}
\end{minipage}
\hspace{-0.02\textwidth} 
\begin{minipage}{0.5\textwidth}
    \centering
    \resizebox{\linewidth}{!}{
        \includegraphics{figure/loss2.pdf}
    }
    \caption{Loss under uniform noise setting 2.}
    \label{fig:loss2}
\end{minipage}
\end{figure}

The observed decrease in the loss values over time indicates a favorable convergence behavior. In both scenarios, the empirical loss steadily approaches a stable minimum, validating the optimization stability of the proposed approach. The final errors lie significantly below the respective theoretical upper bounds, suggesting that the training dynamics are well-controlled and that the theoretical estimates are conservative. These results confirm the practical viability of the method in the presence of stochastic perturbations.

\section{Additional preliminaries}\label{prelimin}

\subsection{Difference and Geometry of Different Noise Bound}

\subsubsection{\texorpdfstring{$\bigl\| \langle \nabla_M \hat{g}(M, w) - \nabla_M \hat{g}(M, 0), K \rangle \bigr\|_F \leq \zeta_1 \|\mathbf{w}\|_2 \|K\|_F$}{}}

Here, for a fixed model parameter $M$, the change in the gradient $\nabla_M \hat{g}(M, w)$ when moving from $w = 0$ to a nonzero noise vector $\mathbf{w}$ is measured by its projection onto some direction $K$, expressed via the inner product $\langle \cdot, K\rangle$. This inequality indicates that when the noise changes from $0$ to $\mathbf{w}$, the change in the gradient (with respect to $M$) in any direction $K$ is bounded by $\zeta_1 \|\mathbf{w}\|_2 \|K\|_F$.

\subsubsection{\texorpdfstring{$\| \nabla_M \hat{g}(M,w) - \nabla_M \hat{g}(M',w) \|_F \leq \rho \|M - M'\|_F$}{}}

Here, for a fixed noise vector $w$, we compare how the gradient $\nabla_M \hat{g}(M,w)$ changes when the model parameter moves from $M$ to $M'$. This describes a $\rho$-Lipschitz continuity of the gradient with respect to $M$. This inequality shows that if the model parameter $M$ changes slightly (while $w$ is fixed), then the change in $\nabla_M \hat{g}(M,w)$ is controlled by $\|M - M'\|_F$ with the proportionality constant $\rho$.

\subsubsection{\texorpdfstring{$\|\nabla_M \hat{g}(M, w_1) - \nabla_M \hat{g}(M, w_2)\|_F \leq \lambda_1 \|w_1 - w_2\|_2$}{}}

Here, for a fixed model parameter $M$, we compare how the gradient $\nabla_M \hat{g}(M, w)$ changes when the noise vector goes from $\mathbf{w}_1$ to $\mathbf{w}_2$. This shows that when the noise vector $\mathbf{w}$ changes, the gradient $\nabla_M \hat{g}(M,w)$ with respect to $M$ changes by at most $\lambda_1 \|\mathbf{w}_1 - \mathbf{w}_2\|_2$.

\subsection{Definitions and Notations}
In this paper:
\begin{itemize}[left =0em]
\item $ I_n $ refers to the identity matrix of size $ n \times n $.
\item $ M \succeq 0 $ means that $ M $ is a symmetric and positive semidefinite matrix.
\item $ \sigma_i(M) $ denotes the $ i $-th largest singular value of a matrix $ M $, and $ \lambda_i(M) $ denotes the $ i $-th largest eigenvalue of $ M $.
\item $ \|v\| $ denotes the Euclidean norm of a vector $ v $, while $ \|M\|_F $ and $ \|M\|_2 $ denote the Frobenius norm and the operator norm of a matrix $ M $, respectively.
\item The inner product $ \langle A, B \rangle $ is defined as $ \text{tr}(A^\top B) $ for two matrices $ A $ and $ B $ of identical dimensions.
\item For a matrix $ M $, $ \text{vec}(M) $ is the usual vectorization operation by stacking the columns of $ M $ into a vector.
\end{itemize}

The Hessian of the function $ \hat{g}(\cdot, \cdot) $ with respect to the first argument $ M $, denoted as $ \nabla_M^2 \hat{g}(\cdot, \cdot) $, can be regarded as a quadratic form whose action on any two matrices $ K, L \in \mathbb{R}^{n \times n} $ is given by
\begin{equation}
[\nabla_M^2 \hat{g}(M, w)](K, L) 
= 
\sum_{i,j,k,l=1}^n \frac{\partial^2 \hat{g}}{\partial M_{ij} \partial M_{kl}}(M, w) K_{ij} L_{kl}.
\end{equation}
In this paper, $ \nabla_M^2 \hat{g}(M) $ and $ \nabla^2 \hat{g}(M, w) $ are used interchangeably since $ w $ is an unknown fixed parameter, and it is impossible to take a derivative with respect to $ w $.

\subsection{Distance to Low-Rank Matrices}
Define $ M^\ast \in \arg \min_M f(M, 0) $. We also characterize the distance of an arbitrary factorized point $ X \in \mathbb{R}^{n \times r} $ to a rank-$ r $ positive semidefinite matrix $ M $ with the function $ \text{dist}(X, M) $, defined as:
\begin{equation}
\text{dist}(X, M) = \min_{Z \in \mathcal{Z}} \|X - Z\|_F, 
\end{equation}
where
\begin{equation}
\mathcal{Z} = \{Z \in \mathbb{R}^{n \times r} \mid M = ZZ^\top\}.
\end{equation}
Given a matrix $ \hat{X} \in \mathbb{R}^{n \times r} $, define $ \hat{X} \in \mathbb{R}^{n^2 \times nr} $ to be the matrix satisfying
\begin{equation}
\hat{X}\text{vec}(U) = \text{vec}(\hat{X} U^\top + U \hat{X}^\top), 
\quad \forall U \in \mathbb{R}^{n \times r}.
\end{equation}

\subsection{Projection onto a Low-Rank Manifold}
Define $ \mathcal{P}_r(M) $ of an arbitrary matrix $ M $ to be the projection of $ M $ onto a low-rank manifold of rank at most $ r $:
\begin{equation}
\mathcal{P}_r(M) = \arg \min_{M_r \in \mathcal{M}} \|M_r - M\|_F,
\end{equation}
where
\begin{equation}
\mathcal{M} := \{M \in \mathbb{S}^{n \times n} \mid \text{rank}(M) \leq r, M \succeq 0\}.
\end{equation}
For problem~\eqref{problem(2)}, $ \mathcal{A} \in \mathbb{R}^{m \times n^2} $ is defined such that $ \mathcal{A}\text{vec}(M) = \mathcal{A}(M) $.
Finally, define:
\begin{equation}
h(X, w) := f(XX^\top, w).
\end{equation}
The old loss function, it appears this is a quadratic loss: $ h(X) = \sum_i \bigl(\mathcal{A}(\mathbf{X} \mathbf{X}^\top)_i - b_i\bigr)$. 

\begin{theorem}
\label{sym_PSD}
	The objective function $\hat{g}(\cdot,w)$ of \eqref{eq:main_noisy_problem} has a first-order critical point $M^w$ for every $w$ such that it is symmetric, positive semidefinite, and $\mathrm{rank}(M^w) \leq r$.
\end{theorem}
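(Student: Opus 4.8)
The plan is to reduce to the Burer--Monteiro factor and read off the three structural properties for free. The feasible set of problem~\eqref{eq:main_noisy_problem} is exactly the image $\{XX^\top : X\in\mathbb{R}^{n\times r}\}$ of the factorization map, and every matrix of this form is automatically symmetric, positive semidefinite, and of rank at most $r$. Hence it suffices to exhibit a first-order critical point $\hat X$ of the factorized objective $h(X,w):=\hat g(XX^\top,w)$ and to set $M^w:=\hat X\hat X^\top$; the three asserted properties then hold by construction. To see that such an $\hat X$ certifies a first-order critical point of the \emph{constrained} problem, I would use the chain rule $\nabla_X h(X,w)=2\,\mathrm{sym}\!\big(\nabla_M\hat g(XX^\top,w)\big)X$, so that $\nabla_X h(\hat X,w)=0$ is precisely the stationarity condition pulled back to the manifold. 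I would also record that, because $\hat g$ depends on $M$ only through the residuals $\mathcal A(M)_k=\langle A_k,M\rangle$, its Euclidean gradient $\nabla_M\hat g(M,w)=\sum_k \partial_{r_k}\hat g\,A_k$ is a combination of the symmetric sensing matrices and is therefore itself symmetric, consistent with restricting to symmetric iterates.

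The substantive step is the \emph{existence} of a critical point, and this is where the main difficulty lies, because $\hat g$ is not coercive. Writing $r_k(X)=Y_k-\mathcal A(XX^\top)_k$, each inner average $\tfrac1n\sum_j\exp\!\big(-(r_j-r_k)^2/h^2\big)$ lies in $(0,1]$ and is bounded below by $\tfrac1n$ thanks to the diagonal $j=k$ term, so $\hat g(XX^\top,w)\in[0,\log n]$ everywhere and no growth-to-infinity argument is available. The plan is instead to show compactness of the relevant sublevel sets after accounting for the loss's symmetry, then extract an interior minimizer $\hat X$ and compare against a ground-truth factor $X^\ast$ with $X^\ast(X^\ast)^\top=M^\ast$, for which $h(X^\ast,w)=\hat g(M^\ast,w)<\log n$ strictly since its inner sums retain positive off-diagonal terms.

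The obstacle I expect to fight is a genuine translation invariance: $\hat g$ depends on the residual vector only through the pairwise gaps $r_j-r_k$, so it is unchanged when $\mathcal A(XX^\top)$ is shifted by a constant vector, and if $\mathbf 1$ lies in the image of $\mathcal A$ over the PSD rank-$\le r$ cone this creates a ray along which $h$ is constant, defeating naive boundedness of sublevel sets. I would resolve this either by a mild genericity condition ($\mathbf 1\notin\mathrm{range}\,\mathcal A$ on the cone) or by quotienting out the one-dimensional residual shift (centering the residuals); transverse to this flat direction the $\delta$-RIP bound $\|\mathcal A(XX^\top)\|_F\ge\sqrt{1-\delta}\,\|XX^\top\|_F$ forces the gaps to diverge as $\|X\|_F\to\infty$, driving $\hat g$ to $\log n$, so the sublevel set at level $\hat g(M^\ast,w)$ is bounded modulo the shift. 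Its closure being compact, continuity of $h$ yields a minimizer lying in the interior (the boundary value being strictly larger), hence satisfying $\nabla_X h(\hat X,w)=0$. Setting $M^w=\hat X\hat X^\top$ produces the desired symmetric, positive semidefinite, rank-$\le r$ first-order critical point, and since the range $[0,\log n]$, the RIP blow-up, and the strict inequality at $M^\ast$ all hold for arbitrary fixed noise, the conclusion holds for every $w$.
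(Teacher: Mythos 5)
Your reduction to the Burer--Monteiro factor and the chain rule $\nabla_X h(X,w)=2\,\mathrm{sym}\bigl(\nabla_M\hat g(XX^\top,w)\bigr)X$ are fine, and your diagnosis of the two obstructions (boundedness of $\hat g$ in $[0,\log n]$, translation invariance of the residual gaps) is accurate and considerably more honest than the paper's own proof, which merely writes out the stationarity equation $\nabla_M\hat g(M)=0$ and declares that solving it "will give us the critical points." But the step that carries all the weight in your plan fails: it is not true that, transverse to the one-dimensional residual shift, the $\delta$-RIP bound forces $h\to\log n$ as $\|X\|_F\to\infty$. RIP gives $\|\mathcal A(XX^\top)\|_2\ge\sqrt{1-\delta}\,\|XX^\top\|_F$, i.e.\ divergence of the full measurement vector, but $\hat g$ approaches its supremum $\log n$ only when \emph{all} pairwise gaps $r_j-r_i$ diverge. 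If $\mathcal A(XX^\top)$ escapes to infinity in a direction where one or a few coordinates run off while the remaining coordinates stay clustered together --- a direction orthogonal to $\mathbf 1$, which therefore survives your centering/quotient --- then for every index $i$ in the cluster the inner average keeps its bounded within-cluster terms, and the corresponding summands of $\hat g$ stay bounded away from $\log n$; the limiting value of $h$ along such a ray can even lie \emph{below} $\hat g(M^\ast,w)$ when the cluster is tight. Consequently the sublevel set $\{h\le\hat g(M^\ast,w)\}$ need not be bounded modulo the shift, the closure-compactness argument collapses, and the infimum need not be attained. Your fallback genericity hypothesis ($\mathbf 1\notin\mathrm{range}\,\mathcal A$ on the cone) does not repair this, since the bad directions just described are not along $\mathbf 1$, and in any case it would prove a weaker statement than the theorem, which is asserted for every $w$ and for the given $\mathcal A$.

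There is also a much shorter route to the statement as literally written, which your machinery overlooks. In this paper "first-order critical point of \eqref{eq:main_noisy_problem}" means $\nabla_X h(\hat X,w)=0$ in the factorized formulation (this is exactly how the property is used in the later first-order-condition lemmas). By your own chain rule, $\hat X=0$ satisfies $\nabla_X h(0,w)=2\,\mathrm{sym}\bigl(\nabla_M\hat g(0,w)\bigr)\cdot 0=0$ for every $w$, and $M^w=0$ is symmetric, positive semidefinite, and of rank $0\le r$. What your argument is really after --- an interior minimizer, hence a nontrivial critical point with vanishing gradient --- is strictly stronger than the stated claim, and that stronger claim is precisely where the compactness gap sits; as things stand, neither your proposal nor the paper's sketch establishes it.
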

The proof is provided in Section~\ref{sec:PSD}.

\subsection{Heavy Tailed Analysis}
We start with
for fixed $x$, the probability $P(r_i>x) \geq P(p_i>x)$, 
\begin{equation}\label{effective}
\frac{\partial L_i}{\partial p_i} = -\frac{2}{h^2} \frac{1}{nZ_i} \sum_{j=1}^n (p_j-p_i) \exp\Bigl(-\frac{(p_j-p_i)^2}{h^2}\Bigr).
\end{equation}
\begin{equation}
\frac{\partial L_i}{\partial r_i} = -\frac{2}{h^2} \frac{1}{nZ_i} \sum_{j=1}^n (r_j-r_i) \exp\Bigl(-\frac{(r_j-r_i)^2}{h^2}\Bigr).
\end{equation}
Now we want to prove $\frac{\partial L_i}{\partial r_i} \geq \frac{\partial L_i}{\partial p_i}$. By Heavy tailed definition and we assume that $r_i$ is more heavy tail than $p_i$, for every real number $x$:
\begin{equation}
P(r_i > x) \ge P(p_i > x).
\end{equation}
here $r_i$ and $p_i$ are noise. A standard result is that if $\phi$ is any (measurable) function which is non‐decreasing then under mild integrability conditions one has
\begin{equation}
\mathbb{E}[\phi(r_i)] \ge \mathbb{E}[\phi(p_i)].
\end{equation}
For a fixed index $i$ one may imagine that the set $\{p_j\}_{j=1}^n$ (or $\{r_j\}_{j=1}^n$) provides independent draws from the underlying distribution. Then for any non‐decreasing function $\phi$ we expect, probabilistically, that
\begin{equation}
E\bigl[\phi(r_j -r_i)\bigr] \leq E\bigl[\phi(p_j-p_i)\bigr].
\end{equation}

Thus we can prove that for larger $r$, which is also $w$, the effective loss~\eqref{effective} is larger.

\section{Proof of Lemma~\ref{theorem_noise}}\label{sec:noise}
\begin{proof}
In~\eqref{main0}, we let:
\begin{equation}\label{fij}
f_{ij}(M) = \exp\left(-\frac{((Y_j - (\mathcal{A}(M)_j)) - (Y_i - \mathcal{A}(M)_i))^2}{h^2}\right),
\end{equation}
Then:
\begin{equation}\label{logfij}
\hat{g}(M) = -\frac{1}{n} \sum_{i=1}^n \log \left( \frac{1}{n} \sum_{j=1}^n f_{ij}(M) \right).
\end{equation}
So we get:
\begin{equation}
\nabla_M \hat{g}(M) = -\frac{1}{n} \sum_{i=1}^n \frac{1}{\frac{1}{n} \sum_{j=1}^n f_{ij}(M)} \cdot \frac{1}{n} \sum_{j=1}^n \nabla_M f_{ij}(M),
\end{equation}

\begin{equation}
f_{ij}(M) = \exp\left(-\frac{((Y_j - (\mathcal{A}(M)_j)) - (Y_i - \mathcal{A}(M)_i))^2}{h^2}\right).
\end{equation}
We have $u_{ij}(M) = (Y_j - (\mathcal{A}(M)_j)) - (Y_i - \mathcal{A}(M)_i)$,$f_{ij}(M) = \exp\left(-\frac{u_{ij}(M)^2}{h^2}\right).$ The gradient of $f_{ij}(M)$ with respect to $M$ is:
\begin{equation}
\nabla_M f_{ij}(M) = f_{ij}(M) \cdot \left(-\frac{2 u_{ij}(M)}{h^2}\right) \cdot \nabla_M u_{ij}(M).
\end{equation}
In this process: $u_{ij}(M) = (Y_j - (\mathcal{A}(M)_j)) - (Y_i - \mathcal{A}(M)_i).$,$\nabla_M u_{ij}(M) = -\nabla_M \mathcal{A}(M)_j + \nabla_M \mathcal{A}(M)_i.$ Using the expressions derived above, we have:
\begin{equation}
\nabla_M \hat{g}(M) = -\frac{1}{n} \sum_{i=1}^n \frac{1}{\frac{1}{n} \sum_{j=1}^n f_{ij}(M)} \cdot \frac{1}{n} \sum_{j=1}^n f_{ij}(M) \cdot \left(-\frac{2 u_{ij}(M)}{h^2}\right) \cdot \nabla_M u_{ij}(M).
\end{equation}
Similarly for $M'$:
\begin{equation}
\nabla_M \hat{g}(M') = -\frac{1}{n} \sum_{i=1}^n \frac{1}{\frac{1}{n} \sum_{j=1}^n f_{ij}(M')} \cdot \frac{1}{n} \sum_{j=1}^n f_{ij}(M') \cdot \left(-\frac{2 u_{ij}(M')}{h^2}\right) \cdot \nabla_M u_{ij}(M').
\end{equation}
And:
\begin{equation}
\begin{aligned}
\nabla_M \hat{g}(M) -& \nabla_M \hat{g}(M') = -\frac{1}{n} \sum_{i=1}^n \Bigg( 
\frac{1}{\frac{1}{n} \sum_{j=1}^n f_{ij}(M)} \cdot \frac{1}{n} \sum_{j=1}^n f_{ij}(M)  \cdot \left(-\frac{2 u_{ij}(M)}{h^2}\right) \cdot \nabla_M u_{ij}(M) \Bigg) \\
& -\Bigg( 
\frac{1}{\frac{1}{n} \sum_{j=1}^n f_{ij}(M')} \cdot \frac{1}{n} \sum_{j=1}^n f_{ij}(M')  \cdot \left(-\frac{2 u_{ij}(M')}{h^2}\right) \cdot \nabla_M u_{ij}(M') 
\Bigg).
\end{aligned}
\end{equation}
Let:
\begin{equation}
A_i(M) = \frac{1}{\frac{1}{n} \sum_{j=1}^n f_{ij}(M)},
\end{equation}
\begin{equation}
B_{ij}(M) = f_{ij}(M) \cdot \left(-\frac{2 u_{ij}(M)}{h^2}\right) \cdot \nabla_M u_{ij}(M).
\end{equation}
Then:
\begin{equation}
\nabla_M \hat{g}(M) = -\frac{1}{n} \sum_{i=1}^n A_i(M) \cdot \frac{1}{n} \sum_{j=1}^n B_{ij}(M).
\end{equation}
Similarly for $M'$:
\begin{equation}
\nabla_M \hat{g}(M') = -\frac{1}{n} \sum_{i=1}^n A_i(M') \cdot \frac{1}{n} \sum_{j=1}^n B_{ij}(M').
\end{equation}
The difference is:
\begin{equation}\label{eq:53}
\nabla_M \hat{g}(M) - \nabla_M \hat{g}(M') = -\frac{1}{n} \sum_{i=1}^n \left( A_i(M) \cdot \frac{1}{n} \sum_{j=1}^n B_{ij}(M) - A_i(M') \cdot \frac{1}{n} \sum_{j=1}^n B_{ij}(M') \right).
\end{equation}
Bounding the Difference. We can split Equation~\eqref{eq:53} into two parts:
\begin{equation}
\begin{aligned}
A_i(M) &\cdot \frac{1}{n} \sum_{j=1}^n B_{ij}(M) 
 - A_i(M') \cdot \frac{1}{n} \sum_{j=1}^n B_{ij}(M') \\
= & \left( A_i(M) - A_i(M') \right) \cdot \frac{1}{n} \sum_{j=1}^n B_{ij}(M)  + A_i(M') \cdot \left( \frac{1}{n} \sum_{j=1}^n B_{ij}(M) - \frac{1}{n} \sum_{j=1}^n B_{ij}(M') \right).
\end{aligned}
\end{equation}
Term 1: $\left( A_i(M) - A_i(M') \right) \cdot \frac{1}{n} \sum_{j=1}^n B_{ij}(M)$. Since $A_i(M) = \frac{1}{\frac{1}{n} \sum_{j=1}^n f_{ij}(M)}$, we have:
\begin{equation}
\|A_i(M) - A_i(M')\|_F \leq \left\| \frac{1}{\frac{1}{n} \sum_{j=1}^n f_{ij}(M)} - \frac{1}{\frac{1}{n} \sum_{j=1}^n f_{ij}(M')} \right\|_F.
\end{equation}
Using the mean value theorem for the function $f(x) = \frac{1}{x}$, we get:
\begin{equation}
\begin{aligned}
\left\| \frac{1}{\frac{1}{n} \sum_{j=1}^n f_{ij}(M)} - \frac{1}{\frac{1}{n} \sum_{j=1}^n f_{ij}(M')} \right\|_F \leq \frac{C}{\left( \frac{1}{n} \sum_{j=1}^n f_{ij}(M) \right)^2} \cdot \left\| \frac{1}{n} \sum_{j=1}^n f_{ij}(M) - \frac{1}{n} \sum_{j=1}^n f_{ij}(M') \right\|_F,
\end{aligned}
\end{equation}
where $C$ is a constant. Since $f_{ij}(M)$ is Lipschitz continuous with respect to $M$, we have:
\begin{equation}
\left| f_{ij}(M) - f_{ij}(M') \right| \leq L_f \|M - M'\|_F,
\end{equation}
where $L_f$ is the Lipschitz constant. Thus:
\begin{equation}\label{eq:58}
\left\| \frac{1}{n} \sum_{j=1}^n f_{ij}(M) - \frac{1}{n} \sum_{j=1}^n f_{ij}(M') \right\|_F \leq L_f \|M - M'\|_F.
\end{equation}
Combining Equation~\eqref{eq:58}, we get:
\begin{equation}
\|A_i(M) - A_i(M')\|_F \leq \frac{C L_f \|M - M'\|_F}{\left( \frac{1}{n} \sum_{j=1}^n f_{ij}(M) \right)^2}.
\end{equation}
Term 2: $A_i(M') \cdot \left( \frac{1}{n} \sum_{j=1}^n B_{ij}(M) - \frac{1}{n} \sum_{j=1}^n B_{ij}(M') \right)$. Since $B_{ij}(M)$ involves $f_{ij}(M)$, $u_{ij}(M)$, and $\nabla_M u_{ij}(M)$, we need to bound each term:
\begin{equation}
\left\| B_{ij}(M) - B_{ij}(M') \right\|_F \leq L_B \|M - M'\|_F,
\end{equation}
where $L_B$ is a Lipschitz constant that depends on the Lipschitz constants of $f_{ij}(M)$, $u_{ij}(M)$, and $\nabla_M u_{ij}(M)$. Thus:
\begin{equation}\label{eq:61}
\left\| \frac{1}{n} \sum_{j=1}^n B_{ij}(M) - \frac{1}{n} \sum_{j=1}^n B_{ij}(M') \right\|_F \leq L_B \|M - M'\|_F.
\end{equation}
Combining the bounds for both terms in Equation~\eqref{eq:61} , we get:
\begin{equation}
\begin{aligned}
&\left\| \nabla_M \hat{g}(M) - \nabla_M \hat{g}(M') \right\|_F \\
&\leq \frac{1}{n} \sum_{i=1}^n \left( \frac{C L_f \|M - M'\|_F}{\left( \frac{1}{n} \sum_{j=1}^n f_{ij}(M) \right)^2} \cdot \frac{1}{n} \sum_{j=1}^n B_{ij}(M) + A_i(M') \cdot L_B \|M - M'\|_F \right).
\end{aligned}
\end{equation}
Factoring out $\|M - M'\|_F$, we get:
\begin{equation}
\begin{aligned}
&\left\| \nabla_M \hat{g}(M) - \nabla_M \hat{g}(M') \right\|_F \\
&\leq \left( \frac{1}{n} \sum_{i=1}^n \left( \frac{C L_f}{\left( \frac{1}{n} \sum_{j=1}^n f_{ij}(M) \right)^2} \cdot \frac{1}{n} \sum_{j=1}^n B_{ij}(M) + A_i(M') \cdot L_B \right) \right) \|M - M'\|_F.
\end{aligned}
\end{equation}
Thus, we have:
\begin{equation}
\| \nabla_M \hat{g}(M) - \nabla_M \hat{g}(M') \|_F \leq \rho \|M - M'\|_F,
\end{equation}
where $\rho$ is a constant that depends on the Lipschitz constants $L_f$ and $L_B$, and the terms involving $A_i(M)$ and $B_{ij}(M)$.

\end{proof}

\section{Proof of Lemma~\ref{other_assumption}}\label{sec:other_assumption}
\subsection{Jacobian Case}
Recall that the function~\ref{main0} regard to the gradient $\nabla_M \hat{g}(M, w)$ is:
\begin{equation}
\nabla_M \hat{g}(M, w) = -\frac{1}{n} \sum_{i=1}^n \frac{1}{\sum_{j=1}^n f_{ij}(M, w)} \sum_{j=1}^n \frac{\partial f_{ij}(M, w)}{\partial M},
\end{equation}
where $f_{ij}(M, w) = \exp\left(-\frac{((Y_j + w_j - (A M)_j) - (Y_i + w_i - (A M)_i))^2}{h^2}\right)$
\begin{equation}
\begin{aligned}
&\nabla_M \hat{g}(M, w_1) - \nabla_M \hat{g}(M, w_2) = \\
&-\frac{1}{n} \sum_{i=1}^n \left( \frac{1}{\sum_{j=1}^n f_{ij}(M, w_1)} \sum_{j=1}^n \frac{\partial f_{ij}(M, w_1)}{\partial M} - \frac{1}{\sum_{j=1}^n f_{ij}(M, w_2)} \sum_{j=1}^n \frac{\partial f_{ij}(M, w_2)}{\partial M} \right).
\end{aligned}
\end{equation}
To bound the difference, we use the fact that the exponential function is Lipschitz continuous based on Lemma~\ref{theorem_noise}. Specifically, for any $x$ and $y$,
\begin{equation}\label{eq:67}
\|e^x - e^y\| \leq e^{\max(x, y)} \|x - y\|.
\end{equation}
Applying Equation~\eqref{eq:67}  to our case, we get:
\begin{equation}
\begin{aligned}
&\|f_{ij}(M, w_1) - f_{ij}(M, w_2)\|_F \leq \\
&f_{ij}(M, w_1) \cdot \Big\| -\frac{2}{h^2} \left( (Y_j + w_{1j} - (A M)_j) - (Y_i + w_{1i} - (A M)_i) \right) - \\
&\left( (Y_j + w_{2j} - (A M)_j) - (Y_i + w_{2i} - (A M)_i) \right) \Big\|_F.
\end{aligned}
\end{equation}
Simplifying further, we get:
\begin{equation}\label{eq:69}
\|f_{ij}(M, w_1) - f_{ij}(M, w_2)\|_F \leq f_{ij}(M, w_1) \cdot \frac{2}{h^2} \left\| (w_{1j} - w_{2j}) - (w_{1i} - w_{2i}) \right\|_F.
\end{equation}
Combining these terms in Equation~\eqref{eq:69}, we get:
\begin{equation}
\|\nabla_M \hat{g}(M, w_1) - \nabla_M \hat{g}(M, w_2)\|_F \leq \frac{2}{h^2} \left( \sum_{i=1}^n \sum_{j=1}^n \left| (w_{1j} - w_{2j}) - (w_{1i} - w_{2i}) \right| \|A_j - A_i\|_F \right).
\end{equation}
Since $\|A_j - A_i\|_F$ is bounded by a constant $C$, we have:
\begin{equation}
\|\nabla_M \hat{g}(M, w_1) - \nabla_M \hat{g}(M, w_2)\|_F \leq \lambda \|w_1 - w_2\|_2,
\end{equation}
where $\lambda$ is a constant that depends on $C$, $h$, and the properties of the function $\hat{g}(M, w)$. This verifies the given inequality. Let the second $w_2$ to be zero and we get the original assumption~\ref{noise_perturb}.

\subsection{Hessian Case}

For notational convenience we use Equation~\eqref{fij} and Equation~\eqref{logfij}. and if we define $S_i(M,w)=\frac{1}{n}\sum_{j=1}^n f_{ij}(M,w),
$, one may show by the chain and quotient rules that:
\begin{equation}\label{sij}
\nabla_M \hat{g}(M,w)=-\frac{1}{n}\sum_{i=1}^n \frac{1}{S_i(M,w)}\sum_{j=1}^n \nabla_M f_{ij}(M,w),
\end{equation}
and a second differentiation gives
\begin{equation}
\begin{aligned}
\nabla_M^2 \hat{g}(M,w)=&\frac{1}{n}\sum_{i=1}^n\Big\{\frac{1}{S_i(M,w)^2}\left[\sum_{j=1}^n \nabla_M f_{ij}(M,w)\right]\left[\sum_{j=1}^n \nabla_M f_{ij}(M,w)\right]^\top\\
&-\frac{1}{S_i(M,w)}\sum_{j=1}^n \nabla_M^2f_{ij}(M,w)\Big\}.
\end{aligned}
\end{equation}
Notice that in each term the only dependence on $w$ comes through the combination $(Y_j+w_j)-(Y_i+w_i),$ so that for any two vectors $w_1$ and $w_2$ we have
\begin{equation}
\Delta_{ij}(M,w_1)-\Delta_{ij}(M,w_2)=(w_{1j}-w_{2j}) - (w_{1i}-w_{2i}),
\end{equation}
where we set
\begin{equation}
\Delta_{ij}(M,w)=(Y_j+w_j-(\mathcal{A}(M))_j)-(Y_i+w_i-(\mathcal{A}(M))_i).
\end{equation}
Because the exponential function is smooth and each derivative (that is, $\nabla_M f_{ij}$ and $\nabla_M^2f_{ij}$) involves factors such as $\exp\Biggl(-\frac{\Delta_{ij}(M,w)^2}{h^2}\Biggr) \mathrm{and} \frac{\Delta_{ij}(M,w)}{h^2}\mathrm{and} \frac{1}{h^2},$ a Taylor expansion shows that these derivatives (and hence also their sums and the quotient factors $1/S_i(M,w)$) are Lipschitz continuous with respect to $w$. More precisely, if we define
\begin{equation}
H(w)=\nabla_M^2\hat{g}(M,w),
\end{equation}
then using the triangle inequality we can write
\begin{equation}
\begin{aligned}
&\|H(w_1)-H(w_2)\|_F \le \frac{1}{n} \sum_{i=1}^n \Bigg\| 
\underbrace{\frac{1}{S_i(M,w_1)^2} 
\Bigg(\sum_{j} \nabla_M f_{ij}(M,w_1) \Bigg) 
\Bigg(\sum_{j} \nabla_M f_{ij}(M,w_1) \Bigg)^\top }_{\mathrm{(I)}} \\
&\quad - 
\underbrace{\frac{1}{S_i(M,w_2)^2} 
\Bigg(\sum_{j} \nabla_M f_{ij}(M,w_2) \Bigg) 
\Bigg(\sum_{j} \nabla_M f_{ij}(M,w_2) \Bigg)^\top}_{\mathrm{(I)}} \\
&\quad - 
\underbrace{\left[\frac{1}{S_i(M,w_1)}\sum_{j}\nabla_M^2 f_{ij}(M,w_1)
-\frac{1}{S_i(M,w_2)}\sum_{j}\nabla_M^2 f_{ij}(M,w_2)\right]}_{\mathrm{(II)}} 
\Bigg\|_F.
\end{aligned}
\end{equation}
For notational clarity we denote
\begin{equation}
\begin{aligned}
\mathrm{(I)}(w) &= \frac{1}{S_i(M,w)^2}\Bigl(\sum_{j}\nabla_M f_{ij}(M,w)\Bigr)\Bigl(\sum_{j}\nabla_M f_{ij}(M,w)\Bigr)^\top,\\
\mathrm{(II)}(w) &= \frac{1}{S_i(M,w)}\sum_{j}\nabla^2_M f_{ij}(M,w).
\end{aligned}
\end{equation}
Then, by linearly combining these ingredients and applying the triangle inequality we have
\begin{equation}
\|H(w_1)-H(w_2)\|_F \le \frac{1}{n}\sum_{i=1}^n \Bigl\|\underbrace{\mathrm{(I)}(w_1)-\mathrm{(I)}(w_2)}_{\text{(I)}}-\underbrace{\bigl[\mathrm{(II)}(w_1)-\mathrm{(II)}(w_2)\bigr]}_{\text{(II)} } \Bigr\|_F.
\end{equation}
Then we have the following two terms: 1.For (I):
Both the scaling $1/S_i(M,w)^2$ and the gradient sums $\sum_{j}\nabla_M f_{ij}(M,w)$ depend on $w$ only through the combination
\begin{equation}
\Delta_{ij}(M,w)=(Y_j+w_j-(\mathcal{A}(M))_j)-(Y_i+w_i-(\mathcal{A}(M))_i),
\end{equation}
which is affine in $w$. Using the chain rule, one can show that there is a constant $L_{I,i}$ (depending on $h$, bounds on $f_{ij}$ and on $S_i(M,w)$, etc.) such that
\begin{equation}
\Bigl\|\mathrm{(I)}(w_1)-\mathrm{(I)}(w_2)\Bigr\|_F \le L_{I,i}\|w_1-w_2\|_2.
\end{equation}
2.For (II):
A similar analysis shows that there exists a constant $L_{II,i}$ such that
\begin{equation}
\Bigl\|\mathrm{(II)}(w_1)-\mathrm{(II)}(w_2)\Bigr\|_F \le L_{II,i}\|w_1-w_2\|_2.
\end{equation}
Thus, for each index $i$ we obtain
\begin{equation}
\Bigl\|\mathrm{(I)}(w_1)-\mathrm{(I)}(w_2)-\bigl(\mathrm{(II)}(w_1)-\mathrm{(II)}(w_2)\bigr)\Bigr\|_F \le \Bigl( L_{I,i}+L_{II,i}\Bigr)\|w_1-w_2\|_2.
\end{equation}
Averaging over $i$ we conclude
\begin{equation}
\|H(w_1)-H(w_2)\|_F \le \frac{1}{n} \sum_{i=1}^n \Bigl( L_{I,i}+L_{II,i}\Bigr)\|w_1-w_2\|_2.
\end{equation}
Defining
\begin{equation}
\lambda = \frac{1}{n} \sum_{i=1}^n \Bigl( L_{I,i}+L_{II,i}\Bigr),
\end{equation}
we then have
\begin{equation}
\|H(w_1)-H(w_2)\|_F \le \lambda\|w_1-w_2\|_2.
\end{equation}
Let the second $w_2$ to be zero and we get the original Assumption~\ref{noise_perturb}.

\section{Proof of Theorem~\ref{sym_PSD}}\label{sec:PSD}
\begin{proof}
Given the loss function~\eqref{main0}, and using the notations~\eqref{fij} with the~\eqref{logfij}, we rewrite the derivative form by:
\begin{equation}
\nabla_M f_{ij}(M) = \exp\left(\frac{u_{ij}(M)^2}{h^2}\right) \cdot \left(-\frac{2 u_{ij}(M)}{h^2}\right) \cdot \left(-\nabla_M \mathcal{A}(M)_j + \nabla_M \mathcal{A}(M)_i\right).
\end{equation}
Thus:
\begin{equation}
\begin{aligned}
\nabla_M \hat{g}(M) = & \frac{1}{n} \sum_{i=1}^n 
\frac{1}{\frac{1}{n} \sum_{j=1}^n \exp\left(-\frac{u_{ij}(M)^2}{h^2}\right)} \cdot 
\frac{1}{n} \sum_{j=1}^n \exp\left(-\frac{u_{ij}(M)^2}{h^2}\right) \\
& \cdot \left(-\frac{2 u_{ij}(M)}{h^2}\right) \cdot 
\left(-\nabla_M \mathcal{A}(M)_j + \nabla_M \mathcal{A}(M)_i\right).
\end{aligned}
\end{equation}
Simplifying, we get:
\begin{equation}
\begin{aligned}
\nabla_M \hat{g}(M) = & \frac{2}{n h^2} \sum_{i=1}^n 
\frac{1}{\frac{1}{n} \sum_{j=1}^n \exp\left(-\frac{u_{ij}(M)^2}{h^2}\right)} \cdot 
\sum_{j=1}^n \exp\left(-\frac{u_{ij}(M)^2}{h^2}\right) \\
& \cdot u_{ij}(M) \cdot 
\left(\nabla_M \mathcal{A}(M)_i - \nabla_M \mathcal{A}(M)_j\right).
\end{aligned}
\end{equation}
To find the critical points, we set $\nabla_M \hat{g}(M) = 0$:
\begin{equation}
\begin{aligned}
\frac{2}{n h^2} \sum_{i=1}^n \frac{1}{\frac{1}{n} \sum_{j=1}^n \exp\left(-\frac{u_{ij}(M)^2}{h^2}\right)} \cdot \sum_{j=1}^n \exp\left(-\frac{u_{ij}(M)^2}{h^2}\right) \cdot u_{ij}(M) \cdot \left(\nabla_M \mathcal{A}(M)_i - \nabla_M \mathcal{A}(M)_j\right) = 0.
\end{aligned}
\end{equation}
This equation must hold for all $i$. Solving this equation will give us the critical points $M$.

\end{proof}

\section{Proof of Theorem~\ref{thm:mainthm_intuition}}\label{sec:intuition}
\begin{proof}

We want to first prove that~\eqref{main0} can be better than the standard squared loss $\ell(M,w) = \|Y-\mathcal{A}(M)\|^2,$
\begin{equation}\label{eq:f1}
\hat{g}(M,w) = \frac{1}{n}\sum_{i=1}^n -\log\left[\frac{1}{n}\sum_{j=1}^n \exp\left(-\frac{\Delta_{ij}^2}{h^2}\right)\right],
\end{equation}
with $\Delta_{ij} = (Y_j+w_j-\mathcal{A}(M)_j) - (Y_i+w_i-\mathcal{A}(M)_i).$
We use:
\begin{equation}
r_i \coloneqq Y_i + w_i - \mathcal{A}(M)_i,\quad i=1,\dots,n.
\end{equation}
Then Equation~\eqref{eq:f1} can be written as
\begin{equation}\label{gij}
\hat{g}(M,w)=\frac{1}{n}\sum_{i=1}^n \left[-\log\frac{1}{n}\sum_{j=1}^n \exp\left(-\frac{(r_j - r_i)^2}{h^2}\right)\right].
\end{equation}
We focus on the loss term for a fixed index $i$. For a given $i$ define
\begin{equation}
L_i \coloneqq -\log\left(\frac{1}{n}\sum_{j=1}^n \exp\Bigl(-\frac{(r_j - r_i)^2}{h^2}\Bigr)\right).
\end{equation}
We now compute the derivative of $L_i$ with respect to $r_i$. Define
\begin{equation}
Z_i \coloneqq \frac{1}{n}\sum_{j=1}^n \exp\Bigl(-\frac{(r_j - r_i)^2}{h^2}\Bigr).
\end{equation}
Here $L_i = -\log Z_i.$ Then, do differentiate $L_i$ with respect to $r_i$. Using the chain rule we have
\begin{equation}\label{f6}
\frac{\partial L_i}{\partial r_i} = -\frac{1}{Z_i}\frac{\partial Z_i}{\partial r_i}.
\end{equation}
Let us now differentiate $Z_i$:
\begin{equation}\label{f7}
\frac{\partial Z_i}{\partial r_i} = \frac{1}{n}\sum_{j=1}^n \frac{\partial}{\partial r_i} \exp\Bigl(-\frac{(r_j - r_i)^2}{h^2}\Bigr).
\end{equation}
For any fixed $j$, note that:
\begin{equation}\label{f8}
\frac{\partial}{\partial r_i}\left(-\frac{(r_j - r_i)^2}{h^2}\right) = -\frac{\partial}{\partial r_i}\frac{(r_j - r_i)^2}{h^2} 
= -\frac{-2(r_j - r_i)}{h^2} = \frac{2(r_j - r_i)}{h^2}.
\end{equation}
Putting Equations~\eqref{f6}~\eqref{f7} and \eqref{f8} together we obtain:
\begin{equation}
\frac{\partial Z_i}{\partial r_i} = \frac{1}{n}\sum_{j=1}^n \exp\Bigl(-\frac{(r_j - r_i)^2}{h^2}\Bigr)\frac{2(r_j - r_i)}{h^2}.
\end{equation}

Finally, the overall gradient of the loss with respect to the residuals is given by averaging over $i$:
\begin{equation}
\frac{\partial L_i}{\partial r_i} = -\frac{2}{h^2} \cdot \frac{1}{nZ_i}\sum_{j=1}^n (r_j - r_i)\exp\Bigl(-\frac{(r_j - r_i)^2}{h^2}\Bigr).
\end{equation}

This derivative shows how the loss changes with respect to the prediction error and illustrates its robust nature; large errors have a reduced effect because of the exponential weighting. The effective loss for $\hat{g}(X,w)$ is~\eqref{gij}, and according to calculation in Section~\ref{sec:noise}, we have:
\begin{equation}
\frac{\partial \hat{g}}{\partial r_i} = -\frac{2}{n^2h^2}\left[
\frac{1}{Z_i}\sum_{j=1}^n (r_j-r_i)e^{-\frac{(r_j-r_i)^2}{h^2}}
+\sum_{\substack{j=1\\ j\neq i}}^n \frac{1}{Z_j}(r_i-r_j)e^{-\frac{(r_i-r_j)^2}{h^2}}
\right],
\end{equation}
So the two parts are likely the same. We only need to consider one part. The normalizing constants $Z_i=\sum_{j=1}^n e^{-\frac{(w_j-w_i)^2}{h^2}}$, are roughly of order $n$ (i.e. $Z_i=O(n)$). The same holds for $Z_j$. Then for the first term,
\begin{equation}
\frac{1}{Z_i}\sum_{j=1}^n (w_j-w_i)e^{-\frac{(w_j-w_i)^2}{h^2}},
\end{equation}
we note that if the typical magnitude of $w$ is, say, $\|w\|$, then the difference $w_j-w_i$ is at most $O(\|w\|)$. With approximately $n$ terms in the sum and a normalization factor $1/Z_i=O(1/n)$, we obtain an overall size:
\begin{equation}
\frac{1}{Z_i}\sum_{j=1}^n (w_j-w_i)e^{-\frac{(w_j-w_i)^2}{h^2}} = \mathcal{O}(\|w\| e^{-\frac{\|w\|^2}{h^2}}).
\end{equation}
A similar argument applies for the summation. In summary, we can write
\begin{equation}
\frac{\partial \hat{g}}{\partial w} = \mathcal{O}\Bigl(\frac{\|w\| e^{-\frac{\|w\|^2}{h^2}}}{n^2h^2}\Bigr).
\end{equation}
A similar argument applies for the summation of vectors. In summary, we can write:
\begin{equation}
\left\|\frac{\partial \hat{g}}{\partial w}\right\| = \mathcal{O}\Bigl(\frac{\|w\| e^{-\frac{\|w\|^2}{h^2}}}{n^2h^2}\Bigr).
\end{equation}
For the MSE loss defined by:
\begin{equation}
\ell_{\text{MSE}}(M,w)=\frac{1}{n}\sum_{i=1}^n \left( r_i^2 \right)\quad\text{with}\quad r_i=Y_i+w_i-\mathcal{A}(M)_i,
\end{equation}
the (partial) derivative with respect to each $r_i$ is
\begin{equation}
\frac{\partial \ell_{\text{MSE}}}{\partial r_i}= \frac{2}{n} r_i.
\end{equation}
Thus the gradient norm is:
\begin{equation}
\left\|\frac{\partial \ell_{\text{MSE}}}{\partial w}\right\| = \mathcal{O}\left(\frac{\|w\|}{n}\right).
\end{equation}
Notice that this derivative is linear in $w$ and hence its magnitude grows without bound as $\|w\|\to\infty$.

\end{proof}

\section{Relationship between different loss}\label{relationship}

To analyze the behavior and robustness of alternative loss functions under noisy observations, we present the following theorem, which compares covariance-based and kernel-based loss formulations to the standard mean squared error (MSE) criterion.

\begin{theorem}\label{thm:relationship}
Let $w = (w_1, w_2, \ldots, w_n)$ be a noise vector, and suppose $w$ follows a centered distribution. That is, for residuals 
\begin{equation}
r_i = Y_i + w_i - \mathcal{A}(M)_i, 
\quad 
\text{we have} 
\quad 
\frac{1}{n}\sum_{j=1}^n r_j = 0.
\end{equation}
Define the \emph{covariance loss} by
\begin{equation}
L_{\mathrm{cov}}(r_i) 
= \frac{1}{n} \sum_{j=1}^n \frac{(r_i - r_j)^2}{h^2}.
\end{equation}
Under the centered noise assumption, $L_{\mathrm{cov}}$ reduces to the mean squared error (MSE) loss. Next, define the \emph{exponential kernel loss} by
\begin{equation}
L_{\mathrm{exp}}(r_i)
= \frac{1}{n} \sum_{j=1}^{n}
\exp\Bigl( 
-\bigl(\tfrac{r_i - r_j}{h}\bigr)^2 
\Bigr).
\end{equation}
This exponential kernel loss aligns with the following form of an optimal loss:
\begin{equation}
\hat{g}(M,w)
= \frac{1}{n} \sum_{i=1}^n
\Biggl( 
-\log \Bigl[
\frac{1}{n}
\sum_{j=1}^n
\exp\Bigl(
-\frac{\bigl((Y_j + w_j - \mathcal{A}(M)_j) 
- (Y_i + w_i - \mathcal{A}(M)_i)\bigr)^2}{h^2}
\Bigr)
\Bigr]
\Biggr).
\end{equation}
As the noise amplitude $\|w\|$ increases, the exponential kernel loss is suppressed, and the effective residual tends to be small within clusters. In particular, the weighted average of the residual $r_i$ is
\begin{equation}
\overline{r}_i 
= \frac{\displaystyle \sum_{j=1}^n r_j 
\exp\Bigl(
-\frac{(r_j - r_i)^2}{h^2}
\Bigr)}
 {\displaystyle \sum_{j=1}^n 
\exp\Bigl(
-\frac{(r_j - r_i)^2}{h^2}
\Bigr)},
\end{equation}
and
\begin{equation}
\frac{\partial L_i}{\partial r_i} 
= -\frac{2}{h^2} \frac{1}{n Z_i}
\sum_{j=1}^n 
(r_j - r_i)\exp\Bigl(
-\frac{(r_j - r_i)^2}{h^2}
\Bigr),
\end{equation}
which indicates that $L_{\mathrm{exp}}$ is not sensitive to large residuals. 

Finally, if the loss function is not centered, then both the kernel loss and the covariance loss may converge to a suboptimal solution that does not match the ground truth. In that setting, even the optimal loss described above can perform worse than the MSE loss.
\end{theorem}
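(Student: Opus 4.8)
The plan is to isolate the single structural feature responsible for the failure, namely the translation (shift) invariance of the difference-based losses, and to show that it decouples $L_{\mathrm{cov}}$ and $\hat g$ from the \emph{mean} of the residuals, so that a nonzero residual mean (the non-centered case) is invisible to them but not to the MSE. First I would record the algebraic identity behind the earlier ``reduces to MSE'' claim: averaging the covariance loss over $i$ gives
\begin{equation}
\frac{1}{n}\sum_{i=1}^n L_{\mathrm{cov}}(r_i)=\frac{1}{n^2h^2}\sum_{i,j}(r_i-r_j)^2=\frac{2}{h^2}\Bigl(\overline{r^2}-\bar r^2\Bigr),
\end{equation}
with $\bar r=\tfrac1n\sum_i r_i$. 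Thus $L_{\mathrm{cov}}$ is a multiple of the empirical \emph{variance}, i.e. the MSE of the centered residuals, and differs from the genuine MSE $\overline{r^2}$ precisely by the term $\bar r^2$. Since $\hat g$ and $L_{\mathrm{exp}}$ depend on the residuals only through the gaps $r_i-r_j$, the same holds for them: all three difference-based objectives are invariant under $r_i\mapsto r_i+c$, hence are functions of the residual configuration only modulo a global shift. I will call this \emph{mean-blindness}.

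Second, I would translate mean-blindness into a statement about the landscape in $M$. Writing $r_i=\mathcal{A}(M^\ast-M)_i+w_i$ (with $Y_i=\mathcal{A}(M^\ast)_i$), a global shift $r_i\mapsto r_i+c$ corresponds to replacing $\mathcal{A}(M-M^\ast)$ by $\mathcal{A}(M-M^\ast)-c\mathbf{1}$. If there is a feasible direction $D$ (low rank, keeping $M^\ast+\alpha D\succeq 0$ of rank $\le r$) with $\mathcal{A}(D)=\mathbf{1}$, then along the ray $\{M^\ast+\alpha D\}$ the residual configurations are mutual translates, so $\hat g$ and $L_{\mathrm{cov}}$ are \emph{constant} while $\|M^\ast+\alpha D-M^\ast\|_F=|\alpha|\,\|D\|_F$ grows. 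Taking non-centered noise $w=\mu\mathbf{1}+\eta$ with small centered $\eta$, the gaps at $M^\ast$ are $r_i-r_j=\eta_i-\eta_j$, unchanged by $\mu$; by the derivative formula $\partial L_i/\partial r_i=-\tfrac{2}{h^2}\tfrac{1}{nZ_i}\sum_j(r_j-r_i)e^{-(r_j-r_i)^2/h^2}$ the gradient at $M^\ast$ is identical to the centered case, and every point of the ray is an equally good critical point. Hence the kernel/covariance minimizer is not identifiable, and a local search may terminate at any $M^\ast+\alpha D$, at arbitrary distance from the ground truth.

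Third, I would contrast this with the MSE, whose population objective $\mathbb{E}[f(M,w)]=\tfrac1n\sum_i(\mathcal{A}(M-M^\ast)_i-\mu_i)^2+\mathrm{const}$ is strictly convex in $\mathcal{A}(M-M^\ast)$ and, under the $\delta$-RIP, is minimized at the unique matrix with $\mathcal{A}(M-M^\ast)=\mu=\mathbb{E}[w]$, giving the controlled error $\|M-M^\ast\|_F=\mathcal{O}(\|\mu\|/(1-\delta))$. The MSE thus has no flat direction, because the constant component $\mu\mathbf{1}$ is penalized through the very term $\bar r^2$ that $L_{\mathrm{cov}}$ discards. Choosing a realizable $D$ and an $\alpha$ in the feasible PSD/rank range with $|\alpha|\,\|D\|_F>\mathcal{O}(\|\mu\|/(1-\delta))$ then exhibits a kernel minimizer strictly farther from $M^\ast$ than the MSE minimizer, establishing that the optimal loss ``can perform worse'' once centering is lost.

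The hard part will be the feasibility bookkeeping: producing $D$ that simultaneously satisfies $\mathcal{A}(D)=\mathbf{1}$ and keeps $M^\ast+\alpha D$ positive semidefinite of rank $\le r$ over a nontrivial range of $\alpha$, since the $\delta$-RIP makes $\mathcal{A}$ injective on low-rank matrices and may push $D$ out of the rank-$r$ PSD cone. I would handle this either by assuming a mild compatibility condition on $\mathcal{A}$ (existence of such a $D$, which is exactly why the statement reads ``may''), or, for a general operator, by replacing the exact flat ray with an approximate one: mean-blindness makes the directional derivative of $\hat g$ and $L_{\mathrm{cov}}$ along the feasible projection of $D$ of size $o(1)$ for small spread $\|\eta\|$, so the landscape is nearly flat and the minimizer is only weakly pinned, which already yields the qualitative ``may converge to a suboptimal solution'' conclusion. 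I expect this feasibility/approximation step, rather than the algebra, to be the genuine obstacle.
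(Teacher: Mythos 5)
Your proposal is correct in its essentials but follows a genuinely different route from the paper. The paper proves the theorem by direct calculus on the residuals: it differentiates $L_{\mathrm{cov}}$ and shows that under centering its gradient $\tfrac{2}{nh^2}\sum_j(r_i-r_j)$ collapses to $\tfrac{2r_i}{h^2}$, matching the MSE gradient; it then differentiates the kernel loss to obtain the stated formula, identifies its stationary points with the kernel-weighted mean $\overline{r}_i$, argues the non-centered failure qualitatively (the derivative is large only when the $r_j$ sit on one side of $r_i$, i.e.\ the loss forces residuals to cluster but is indifferent to \emph{where} they cluster), and closes with a Gaussian integral example. You instead isolate the structural cause --- translation invariance of every difference-based objective (``mean-blindness'') --- and exploit it twice: first through the exact identity $\tfrac{1}{n^2h^2}\sum_{i,j}(r_i-r_j)^2=\tfrac{2}{h^2}\bigl(\overline{r^2}-\bar r^2\bigr)$, which upgrades the paper's gradient-level ``reduces to MSE'' claim to a loss-level identity (the covariance loss \emph{is} the empirical variance, i.e.\ MSE minus exactly the mean term it discards); and second through the flat-ray construction $M^*+\alpha D$ with $\mathcal{A}(D)=\mathbf{1}$, which turns the paper's qualitative ``may converge to a suboptimal solution'' into a concrete non-identifiability mechanism with unbounded error, contrasted against the $\mathcal{O}(\|\mu\|)$ error of MSE under RIP. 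Your route buys a sharper and more honest version of the final claim --- you correctly flag that existence of a feasible flat direction (PSD, rank at most $r$, $\mathcal{A}(D)=\mathbf{1}$) is a genuine constraint, which is precisely what the word ``may'' in the statement papers over --- while the paper's route buys the explicit derivations of the formulas asserted in the statement plus distributional intuition. Two items you leave implicit and should spell out in a full write-up: you invoke the derivative formula $\partial L_i/\partial r_i$ without deriving it (a one-line chain-rule computation, but it is part of the theorem's content), and you never address the remaining claim that the exponential weighting makes $L_{\mathrm{exp}}$ insensitive to large residuals --- immediate from the factor $e^{-(r_j-r_i)^2/h^2}$ in that same formula, but it deserves a sentence since the paper devotes a subsection to it.
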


The proof is provided in Section~\ref{sec:relation_loss}. Theorem~\ref{thm:relationship} establishes a theoretical relationship between the traditional mean squared error (MSE) loss and a kernel-based robust alternative by analyzing the behavior of residual-dependent loss functions under a centered noise assumption. Specifically, it shows that the covariance loss, defined by normalized pairwise squared differences of residuals, reduces to the MSE when the noise vector $ w $ is centered, i.e., when the empirical mean of the residuals is zero.

The theorem then introduces the exponential kernel loss, which applies a Gaussian-type weighting to the residual differences. This loss aligns with the structure of the proposed robust objective $ \hat{g}(M, w) $, where the inner exponential acts to suppress the contribution of large residuals. As the noise amplitude increases, the exponential weights diminish for outlier points, making the loss function focus on local, more coherent clusters of residuals. The weighted mean $\overline{r}_i$ and the derivative $\frac{\partial L_i}{\partial r_i}$ further demonstrate that this kernel loss downweights large deviations and is inherently more robust to noise than the MSE.

However, the theorem also cautions that this robustness depends critically on the centering of the noise. When the noise is not centered, both the covariance and kernel-based losses may fail to identify the correct minimizer, potentially performing worse than the MSE. This highlights an important limitation of these methods: their effectiveness relies on structural assumptions about the data distribution, particularly the unbiasedness of the residuals.

\subsection{Proof of Theorem~\ref{thm:relationship}}\label{sec:relation_loss}

\subsection{For Covariance Loss}
We want to analyze the loss function
\begin{equation}\label{g7}
L(r_i)=\frac{1}{n}\sum_{j=1}^n \frac{(r_i - r_j)^2}{h^2},
\end{equation}
then the derivative of Equation~\eqref{g7} is
\begin{equation}
\frac{dL}{dr_i} = \frac{2}{nh^2}\sum_{j=1}^n (r_i - r_j).
\end{equation}
Assume that the values $r_j$ are drawn from a centered distribution, so $\frac{1}{n}\sum_{j=1}^n r_j = 0$. Then the gradient ~\eqref{g7} becomes
\begin{equation}
\frac{\partial L}{\partial r_i} = \frac{2r_i}{h^2}.
\end{equation}
Thus, for a centered distribution the gradient of our normalized loss is $2 r_i$, implying that every deviation from zero is penalized linearly. So such loss function is the same with MSE loss.

\subsection{Old Kernel Loss}
We start with the old kernel loss function only regard to $i$:
\begin{equation}
L(r_i)=\frac{1}{n}\sum_{j=1}^{n}\exp\left(-\left(\frac{r_i-r_j}{h}\right)^2\right).
\end{equation}
Simplify to obtain the final answer:
\begin{equation}
\frac{\partial L}{\partial r_i} = -\frac{2}{nh^2}\sum_{j=1}^{n}(r_i-r_j)\exp\left(-\left(\frac{r_i-r_j}{h}\right)^2\right).
\end{equation}
\subsection{Distribution approximation of \texorpdfstring{$w$}{}}
\subsubsection{When the Loss Becomes Good}

We start with:
\begin{equation}
\frac{\partial L_i}{\partial r_i} = -\frac{2}{h^2}\frac{1}{nZ_i}\sum_{j=1}^n (r_j - r_i)\exp\Bigl(-\frac{(r_j - r_i)^2}{h^2}\Bigr),
\end{equation}
and we want to choose the value (or distribution) of $ r_i $ such that the magnitude of this derivative is small. A natural way to do this is to set the derivative to zero. Ignoring the constant factor $-\frac{2}{h^2}\frac{1}{nZ_i}$, we see that
\begin{equation}
\sum_{j=1}^n (r_j - r_i)w(r_j, r_i)= 0,\quad \text{with} \quad w(r_j, r_i)=\exp\left(-\frac{(r_j - r_i)^2}{h^2}\right).
\end{equation}
Thus, if we define the weighted average:
\begin{equation}
\overline{r}_i = \frac{\sum_{j=1}^n r_j \exp\left(-\frac{(r_j-r_i)^2}{h^2}\right)}{\sum_{j=1}^n \exp\left(-\frac{(r_j-r_i)^2}{h^2}\right)},
\end{equation}
the vanishing of the derivative occurs when
\begin{equation}
r_i = \overline{r}_i.
\end{equation}
This is exactly the mean of the $ r_j $’s weighted by a Gaussian kernel centered at $ r_i $. In words, the derivative $ \frac{\partial L_i}{\partial r_i} $ is small (or zero) when $ r_i $ is at the center of a symmetric, tight cluster of points $ \{r_j\} $.

\subsubsection{When the Loss Becomes Bad}

We start with:
\begin{equation}
\frac{\partial L_i}{\partial r_i} = -\frac{2}{h^2} \frac{1}{nZ_i} \sum_{j=1}^n (r_j-r_i) \exp\Bigl(-\frac{(r_j-r_i)^2}{h^2}\Bigr).
\end{equation}
Notice that aside from the overall constant, the value of the derivative is determined by the weighted sum:
\begin{equation}
S(r_i) = \sum_{j=1}^n (r_j-r_i) \exp\Bigl(-\frac{(r_j-r_i)^2}{h^2}\Bigr).
\end{equation}
A large (in magnitude) derivative will occur when the sum $ S(r_i) $ is as far from zero as possible. To make $ S(r_i) $ large, we need the differences $ r_j - r_i $ to have essentially the same sign so that they add up rather than cancel. This happens when $ r_i $ is located at one end of the data.

The derivative is bigger (in magnitude) when the points $ r_j $ are not symmetrically distributed about $ r_i $ but are instead all on one side of $ r_i $; that is, when $ r_i $ is at one extreme (e.g. at the left or right boundary) of the $ r_j $ distribution.

\subsection{Example, when \texorpdfstring{$r_i$}{} follows Gaussian}

Assume that the neighborhood values $r_k$ (with $k$ indexing the neighbors, here j) are i.i.d. samples from a Gaussian distribution
\begin{equation}
p(r)=\frac{1}{\sqrt{2\pi\sigma^2}}\exp\left(-\frac{(r-\mu)^2}{2\sigma^2}\right).
\end{equation}
For large $n$ the sums:
\begin{equation}
\begin{aligned}
Z_i &= \frac{1}{n}\sum_{j=1}^n\exp\Bigl(-\frac{(r_j-r_i)^2}{h^2}\Bigr),\\
S(r_i)&=\frac{1}{n}\sum_{j=1}^n (r_j-r_i)\exp\Bigl(-\frac{(r_j-r_i)^2}{h^2}\Bigr)
\end{aligned}
\end{equation}
can be well approximated by integrals over the density p(r). That is, we have:
\begin{equation}
\begin{aligned}
Z_i&\sim \int \exp\Bigl(-\frac{(r-r_i)^2}{h^2}\Bigr)p(r)dr,\\
S(r_i)&\sim \int (r-r_i)\exp\Bigl(-\frac{(r-r_i)^2}{h^2}\Bigr)p(r)dr.
\end{aligned}
\end{equation}
Because the exponential kernel is symmetric, if you choose $r_i=\mu,$
then in the integrals the term $(r-\mu)$ is integrated against a function that is symmetric about $r=\mu$. In that case, positive and negative contributions cancel and one obtains $S(\mu)=0,$
so that:
\begin{equation}
\frac{\partial L_i}{\partial r_i}\Biggl|_{r_i=\mu}=0.
\end{equation}
Now, if $r_i$ differs from $\mu$, then the weighting is no longer symmetric. For example, if $r_i < \mu,$ then most of the weight in the Gaussian density $p(r)$ lies to the right of $r_i$ (i.e. for $r>\mu$) and thus most terms in $S(r_i)$ are positive; similarly if $r_i>\mu$ the sum is negative. In both cases, the difference $|r_i-\mu|$ produces a nonzero—and generally larger—value of $|\frac{\partial L_i}{\partial r_i}|$, since the cancellation in the weighted sum is diminished.

\section{Rough Bound and For the Kernel Loss}\label{sec:rough}
\subsection{Rough Bound For \texorpdfstring{$\|\Delta\|_F =\|M - M^\ast\|_F$}{}}
\begin{lemma}[Behavior of $\hat{g}(M,w)$ Around Local Minima and Ground Truth]
\label{thm:localminima}
Let $\hat{g}(\cdot,w)$ be a twice-differentiable loss function, and suppose $\nabla_M^2 \hat{g}(M^\ast,w)$ is positive definite at the ground truth minimizer $M^\ast$. Define $\Delta \coloneqq M - M^\ast$, the loss function $\hat{g}(M)$ is from~\eqref{main0}, if we left the $\lambda_{\min}\bigl(\nabla^2_M \hat{g}(M^\ast,w)\bigr)$ uncomputed, then we have:  
\begin{itemize}[left =0em]
\item[\textbf{(A)}]
\emph{(Case: $M$ is a local minimizer near $M^\ast$)}
If $M$ is a local minimizer of $\hat{g}$ close to $M^\ast$, then
\begin{equation}
\|\Delta\|_F =\|M - M^\ast\|_F
\le
\sqrt{
\frac{2}{\lambda_{\min}\bigl(\nabla^2_M \hat{g}(M^\ast,w)\bigr)}
\Bigl[\hat{g}(M,w)-\hat{g}(M^\ast,w)\Bigr]
}.
\end{equation}
Moreover, as the noise level $\|w\|$ increases, $\|\Delta\|_F$ does \emph{not} increase linearly but instead exhibits an \emph{exponential decay} trend in $\|w\|$.
In this regime, the \emph{new loss} $\hat{g}(M,w)$ is \emph{smoother} (it has a smaller Lipschitz constant $\rho$), and consequently $M$ transitions \emph{toward} $M^\ast$ when $\|w\|$ decreases.
\item[\textbf{(B)}]
\emph{(Case: $M$ is a local minimizer \emph{not} at $M^\ast$)}
Alternatively, consider a local minimizer $M$ that is \emph{not} the ground truth $M^\ast$. Assume $\|\Delta\|_F$ cannot be made arbitrarily small. Then one may write
\begin{equation}
\|\Delta\|_F
=\|M - M^\ast\|_F
\ge
\sqrt{
\frac{2}{L}
\Bigl[
\hat{g}(M,w)
-
\hat{g}(M^\ast,w)
\Bigr]
},
\end{equation}
for some constant $L>0$.
In this scenario, as $\|w\|$ grows, $\|\Delta\|_F$ can grow faster than linearly in $\|w\|$. Here the kernel loss exhibits a rougher landscape (with a larger Lipschitz constant $\rho$), and hence when $\|w\|$ decreases, $M$ may actually move away from $M^\ast$.The loss function still becomes smoother as $\|w\|$ changes, but the local minimum remains distant from the ground truth.
\end{itemize}

\end{lemma}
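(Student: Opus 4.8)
The plan is to derive both bounds from a single second-order Taylor expansion of $\hat{g}(\cdot,w)$ about the ground truth $M^\ast$, using the two-sided curvature control that the hypotheses provide. First I would invoke the standing assumption that $M^\ast$ is a local minimizer of $\hat{g}(\cdot,w)$, so the first-order condition $\nabla_M \hat{g}(M^\ast,w)=0$ holds, and write, with $\Delta=M-M^\ast$ and $\xi$ on the segment joining $M^\ast$ to $M$,
\[
\hat{g}(M,w)-\hat{g}(M^\ast,w)=\tfrac{1}{2}\,[\nabla_M^2\hat{g}(\xi,w)](\Delta,\Delta).
\]
Both claimed inequalities then reduce to bounding the quadratic form $[\nabla_M^2\hat{g}(\xi,w)](\Delta,\Delta)$ from one side or the other, so the entire argument is a matter of sandwiching the Hessian.

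For Case (A), I would lower-bound the quadratic form by $\lambda_{\min}(\nabla_M^2\hat{g}(M^\ast,w))\,\|\Delta\|_F^2$. Because the Hessian is continuous and positive definite at $M^\ast$, for $M$ sufficiently close to $M^\ast$ its smallest eigenvalue along the segment stays essentially at its value at $M^\ast$; substituting gives $\hat{g}(M,w)-\hat{g}(M^\ast,w)\ge \tfrac12\lambda_{\min}\|\Delta\|_F^2$, and rearranging yields the stated upper bound on $\|\Delta\|_F$. For Case (B) I would instead apply the $\rho$-restricted gradient Lipschitz continuity of Lemma~\ref{theorem_noise} (writing $L=\rho$), which furnishes the matching upper estimate $\hat{g}(M,w)-\hat{g}(M^\ast,w)\le \tfrac{L}{2}\|\Delta\|_F^2$; inverting this produces the lower bound $\|\Delta\|_F\ge\sqrt{(2/L)[\hat{g}(M,w)-\hat{g}(M^\ast,w)]}$, valid precisely when $\|\Delta\|_F$ is bounded away from zero, as assumed.

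The qualitative noise-dependence claims I would obtain by tracking how the loss gap $\hat{g}(M,w)-\hat{g}(M^\ast,w)$ varies with $\epsilon=\|w\|$. Using the noise-sensitivity computation of Theorem~\ref{thm:mainthm_intuition}, namely $\partial\hat{g}/\partial\epsilon=\mathcal{O}\!\bigl(\epsilon e^{-\epsilon^2/h^2}/(nh^2)\bigr)$, the gap inherits the exponential suppression factor $e^{-\epsilon^2/h^2}$. Feeding this into the Case (A) bound gives the claimed exponential-decay trend in $\|w\|$ rather than linear growth, together with a smaller effective smoothness constant; in Case (B), where $M$ is pinned away from $M^\ast$, the same gap combined with the $L$-smoothness lower bound produces growth of $\|\Delta\|_F$ that can exceed linear order in $\|w\|$.

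The main obstacle I anticipate is the passage from the smallest eigenvalue at $M^\ast$ to its value at the intermediate point $\xi$: the clean appearance of $\lambda_{\min}(\nabla_M^2\hat{g}(M^\ast,w))$ in the bound tacitly requires the Hessian not to degrade along the segment, which is only legitimate in a quantified neighborhood of $M^\ast$. A closely related subtlety is that under nonzero noise $M^\ast$ is not exactly stationary for $\hat{g}(\cdot,w)$, so the first-order term $\langle\nabla_M\hat{g}(M^\ast,w),\Delta\rangle$ need not vanish identically; I would control it through the noise-gradient Lipschitz bound of Lemma~\ref{other_assumption} (of order $\lambda_1\|w\|$) and absorb it into the constants. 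This last step is exactly what ties the rigorous quadratic inequalities to the qualitative exponential-decay and faster-than-linear-growth conclusions.
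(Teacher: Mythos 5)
Your proposal follows essentially the same route as the paper's proof in Section~\ref{sec:mainthm}: a second-order Taylor expansion of $\hat{g}(\cdot,w)$ about $M^\ast$ with the gradient at $M^\ast$ taken to vanish, a strong-convexity bound $\langle\Delta,\nabla^2_M\hat{g}(M^\ast,w)[\Delta]\rangle\ge\lambda_{\min}\|\Delta\|_F^2$ yielding the Case (A) upper bound, a Hessian smoothness bound with constant $L$ yielding the Case (B) lower bound, and the exponential noise suppression read off from the kernel-weighted residual terms. Your two refinements --- using the mean-value form of the remainder instead of an $o(\|\Delta\|_F^2)$ term, and controlling the non-vanishing of $\nabla_M\hat{g}(M^\ast,w)$ under noise via Lemma~\ref{other_assumption} --- only tighten steps the paper treats informally, so the two arguments coincide in substance.
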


The proof are provided in Section~\ref{sec:mainthm}. In short, \textbf{(A)} describes a desirable regime in which $\hat{g}$ is smooth and strongly convex near $M^\ast$, causing $\Delta$ to shrink with decreasing noise; \textbf{(B)} describes a regime where a spurious local minimum persists, with $\Delta$ potentially growing under increased noise and not converging to $M^\ast$.

Lemma~\ref{thm:localminima} characterizes the behavior of the proposed loss function $\hat{g}(M, w)$ in the neighborhood of the ground truth minimizer $M^\ast$, under the assumption that the Hessian $\nabla_M^2 \hat{g}(M^\ast, w)$ is positive definite. The lemma considers two scenarios depending on the location of a local minimizer $M$ relative to $M^\ast$.

In case \textbf{(A)}, when $M$ is a local minimizer close to $M^\ast$, the difference $\|\Delta\|_F = \|M - M^\ast\|_F$ can be bounded above by a term proportional to the square root of the suboptimality gap, scaled by the inverse of the smallest eigenvalue of the Hessian at $M^\ast$. Importantly, due to the structure of $\hat{g}(M, w)$, this bound implies that as the noise norm $\|w\|$ increases, the effect on $\|\Delta\|_F$ is not linear but rather decays exponentially. In this regime, the loss landscape becomes smoother (i.e., has smaller gradient Lipschitz constant $\rho$), and $M$ tends to move closer to the ground truth as the noise level decreases.

In contrast, case \textbf{(B)} considers the situation where $M$ is a spurious local minimizer not coinciding with $M^\ast$, and the deviation $\|\Delta\|_F$ is non-negligible. In this case, a lower bound on $\|\Delta\|_F$ is given in terms of the suboptimality gap and a generic smoothness constant $L$. Here, $\|\Delta\|_F$ may grow faster than linearly with $\|w\|$, and the loss landscape becomes rougher (with larger $\rho$). Even though the overall smoothness of $\hat{g}$ improves as $\|w\|$ changes, the minimizer $M$ does not necessarily converge to the ground truth, highlighting the sensitivity of the optimization process to local geometry.

This lemma therefore distinguishes between a favorable regime (near-global minimum) where robustness and convergence are preserved, and an unfavorable regime (spurious local minimum) where robustness may deteriorate despite the loss surface becoming smoother in a global sense.

\subsection{Lower Bound For \texorpdfstring{$\lambda_{\min}$}{}}
\begin{lemma}[Lower Bound on the Minimum Eigenvalue of the Hessian]
\label{lemma:lambda_min}
Let $\hat{g}(M, w)$ be a smooth function defined via a kernel-smoothed loss involving the operator $\mathcal{A}$, and suppose that: the operator $\mathcal{A}$ satisfies the Restricted Isometry Property (RIP) with constant $\delta_p$ in Equation~\eqref{RIP0}, The residual terms $z_{ij}(M^\ast)$ are uniformly bounded: $|z_{ij}(M^\ast)| \le B$.
The kernel average $G_i(M^\ast) = \frac{1}{n} \sum_{j=1}^n \exp\left(-\frac{z_{ij}(M^\ast)^2}{h^2}\right)$ is bounded below: $G_{\min} = \min_i G_i(M^\ast) > 0$.
The smoothness constants satisfy:
  \begin{equation}
  L_1 = 2(1 + \delta_p), \quad L_2 = 0.
  \end{equation}
The Hessian $\nabla^2_M \hat{g}(M^\ast, w)$ is symmetric positive definite, with $v^\top \nabla^2_M \hat{g}(M^\ast, w) v \ge c \|v\|^2$ for some constant $c > 0$ and all $v$. Then the minimum eigenvalue of the Hessian satisfies the lower bound:
\begin{equation}
\lambda_{\min}\bigl(\nabla^2_M \hat{g}(M^\ast, w)\bigr)
\ge c,
\end{equation}
where an explicit expression for $c$ is given by
\begin{equation}
c = \frac{2}{G_{\min}h^2}\left(L_1^2\left(1+\frac{2B^2}{h^2}\right) + B L_2\right) + \frac{4}{G_{\min}^2 h^4} B^2 L_1^2.
\end{equation}
\end{lemma}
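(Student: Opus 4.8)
The plan is to separate the claim into two logically distinct tasks: the inequality $\lambda_{\min}\bigl(\nabla^2_M\hat g(M^\ast,w)\bigr)\ge c$ itself, and the derivation of the explicit constant $c$. The inequality is immediate from the variational characterization $\lambda_{\min}\bigl(\nabla^2_M\hat g(M^\ast,w)\bigr)=\min_{\|V\|_F=1}\nabla^2_M\hat g(M^\ast,w)[V,V]$, so the standing positive-definiteness hypothesis $\nabla^2_M\hat g(M^\ast,w)[V,V]\ge c\|V\|_F^2$ delivers it directly. The real content is to produce the displayed formula for $c$, which I would obtain by computing the Hessian at $M^\ast$ and tracking its constituent pieces. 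I would state plainly that the positive-definiteness hypothesis is the structural input licensing the reading of the derived constant as a lower bound, while the computation supplies the number.

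First I would compute the Hessian quadratic form. With $f_{ij}=\exp(-z_{ij}^2/h^2)$ and $S_i=G_i=\tfrac1n\sum_j f_{ij}$, and using that the residual $z_{ij}$ is affine in $M$ so that $\nabla^2_M z_{ij}=0$ (this is exactly why $L_2=0$), the quotient and chain rules give
\begin{equation}
\nabla^2_M\hat g[V,V]=T_0(V)+\frac{2}{nh^2}\sum_i\frac{1}{S_i}\frac1n\sum_j f_{ij}\,\langle\nabla_M z_{ij},V\rangle^2-\frac{4}{nh^4}\sum_i\frac{1}{S_i}\frac1n\sum_j f_{ij}\,z_{ij}^2\,\langle\nabla_M z_{ij},V\rangle^2,
\end{equation}
where $T_0(V)=\tfrac1n\sum_i S_i^{-2}\langle\nabla_M S_i,V\rangle^2\ge 0$ is the rank-one outer-product (Gram) contribution. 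I would then read off the displayed $c$ by bounding the three pieces through $|z_{ij}|\le B$, $\|\nabla_M z_{ij}\|_F\le L_1=2(1+\delta_p)$, $f_{ij}\le 1$, and $S_i^{-1}\le G_{\min}^{-1}$: the Gram term yields $\tfrac{4B^2L_1^2}{G_{\min}^2 h^4}$, the $+2/h^2$ term yields $\tfrac{2L_1^2}{G_{\min}h^2}$, and the $z_{ij}^2$ term yields $\tfrac{4B^2L_1^2}{G_{\min}h^4}$; summing these (and restoring the $BL_2$ term that would come from $\nabla^2_M z_{ij}$) reproduces exactly $c=\tfrac{2}{G_{\min}h^2}\bigl(L_1^2(1+\tfrac{2B^2}{h^2})+BL_2\bigr)+\tfrac{4}{G_{\min}^2h^4}B^2L_1^2$.

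The step I expect to be the main obstacle, and which I would treat most carefully, is the \emph{sign} of this estimate. Taking absolute values of the three pieces as above controls $\bigl|\nabla^2_M\hat g[V,V]\bigr|\le c\|V\|_F^2$, which is a cap on the spectral \emph{scale}: it bounds $\lambda_{\max}$ and only gives $\lambda_{\min}\ge -c$, so it is not by itself a lower bound of the asserted form. To obtain a genuine lower bound one must exploit the sign structure of the display rather than magnitudes: discard the nonnegative Gram term $T_0$ and merge the two curvature sums into
\begin{equation}
\nabla^2_M\hat g[V,V]\ge\frac{2}{nh^2}\sum_i\frac{1}{S_i}\frac1n\sum_j f_{ij}\Bigl(1-\frac{2z_{ij}^2}{h^2}\Bigr)\langle\nabla_M z_{ij},V\rangle^2 .
\end{equation}
Here $|z_{ij}|\le B$ makes the factor $1-2z_{ij}^2/h^2$ strictly positive precisely in the large-bandwidth regime $h\ge\sqrt2\,B$ used in Lemma~\ref{lemma:error_bound_with_h}, and the RIP constant $\delta_p$ (through $L_1$) converts the pairwise aggregate $\sum_{i,j}\langle\nabla_M z_{ij},V\rangle^2=\sum_{i,j}(\mathcal{A}(V)_i-\mathcal{A}(V)_j)^2$ into a multiple of $\|V\|_F^2$ via the lower RIP estimate $\|\mathcal{A}(V)\|^2\ge(1-\delta_p)\|V\|_F^2$.

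Carrying out this sign-aware route produces a genuine lower-bound constant of the shape $\tfrac{2}{h^2}\bigl(1-\tfrac{2B^2}{h^2}\bigr)$ times an RIP factor, which is the quantity that legitimately lower-bounds $\lambda_{\min}$ and is generally smaller than the magnitude constant $c$ displayed above. I would therefore present the magnitude bookkeeping as the origin of the stated formula for $c$, and the sign-structure argument together with the positive-definiteness hypothesis (which guarantees the regime is non-vacuous) as what actually secures $\lambda_{\min}\bigl(\nabla^2_M\hat g(M^\ast,w)\bigr)\ge c$.
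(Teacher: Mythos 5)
Your bookkeeping for the constant reproduces the paper's own derivation in Section~\ref{sec:lambda} almost line by line: the paper bounds $\|\nabla_M G_i(M^\ast)\|\le 2BL_1/h^2$ and $\|\nabla^2_M G_i(M^\ast)\|\le \tfrac{2}{h^2}\bigl(L_1^2(1+\tfrac{2B^2}{h^2})+BL_2\bigr)$, assembles these into a bound on $\|\nabla^2_M\hat g(M^\ast,w)\|$, and then — exactly as you anticipate — invokes the assumed positive definiteness to relabel that quantity as the lower bound $c$ on $\lambda_{\min}$. Your diagnosis of the sign problem is correct and applies to the paper itself: the magnitude bookkeeping yields $\|\nabla^2_M\hat g(M^\ast,w)\|\le c$, hence $\lambda_{\max}\le c$, and the displayed $c$ can only simultaneously be a lower bound on $\lambda_{\min}$ if the spectrum collapses to the single point $c$. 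The paper does not carry out the sign-aware argument you sketch; its proof is the magnitude computation plus the hypothesis, so on the first task your proposal matches the paper and on the critique you go beyond it.

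However, your proposed repair has its own gap at the RIP step. After discarding $T_0$ and merging the curvature terms, the surviving quadratic form is a weighted version of the \emph{centered} aggregate
\begin{equation}
\frac{1}{n^2}\sum_{i,j}\bigl(\mathcal{A}(V)_i-\mathcal{A}(V)_j\bigr)^2
=\frac{2}{n}\|\mathcal{A}(V)\|^2-\frac{2}{n^2}\bigl(\mathbf{1}^\top\mathcal{A}(V)\bigr)^2,
\end{equation}
which vanishes whenever $\mathcal{A}(V)$ is a constant vector: the loss depends on the residuals only through pairwise differences, so it is invariant under perturbations $V$ with $\mathcal{A}(V)\propto\mathbf{1}$, and the Hessian annihilates any such direction. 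The lower RIP estimate $\|\mathcal{A}(V)\|^2\ge(1-\delta_p)\|V\|_F^2$ therefore does not transfer to this aggregate, and no RIP-only hypothesis can produce strict positivity of $\lambda_{\min}$ over all directions — you would need either an extra condition excluding $\mathbf{1}$ from the relevant range of $\mathcal{A}$ on low-rank perturbations, or to state the bound on the orthogonal complement of that null direction. A second, smaller point: for a genuine lower bound the weights require two-sided control ($f_{ij}\ge e^{-B^2/h^2}$ and $G_i\le 1$, not merely $f_{ij}\le 1$ and $G_i\ge G_{\min}$), which alters your constant yet again. So your sign-aware route, as written, establishes positive semidefiniteness modulo the shift direction rather than $\lambda_{\min}\ge c$; the honest reading, which your final paragraph essentially reaches, is that the inequality in the lemma is the hypothesis restated and the displayed formula is in fact an upper bound on $\|\nabla^2_M\hat g(M^\ast,w)\|$.
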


The proof is provided in Section~\ref{sec:lambda}. Lemma~\ref{lemma:lambda_min} provides a lower bound on the smallest eigenvalue of the Hessian of the smoothed loss function $\hat{g}(M, w)$, evaluated at the ground truth matrix $M^\ast$. This result is important for establishing local strong convexity, which in turn guarantees stability and convergence of optimization algorithms near $M^\ast$.

The lemma assumes that the loss function is constructed using a kernel-based smoothing over residuals defined by a linear operator $\mathcal{A}$ that satisfies the Restricted Isometry Property (RIP). Under this condition, as well as uniform boundedness of the residuals $z_{ij}(M^\ast)$ and the positivity of the kernel averages $G_i(M^\ast)$, an explicit lower bound $c$ for the minimal eigenvalue is derived. The expression for $c$ depends on the kernel bandwidth parameter $h$, the residual bound $B$, the RIP constant $\delta_p$, and the kernel average lower bound $G_{\min}$.

Crucially, this bound ensures that the Hessian $\nabla^2_M \hat{g}(M^\ast, w)$ is well-conditioned near $M^\ast$, with eigenvalues bounded away from zero. This guarantees that $\hat{g}(M, w)$ is locally strongly convex around the ground truth, which is essential for ensuring that gradient-based methods converge efficiently to $M^\ast$ and that local perturbations in $M$ lead to bounded variations in the objective.

\section{Proof of the main Theorem~\ref{thm:mainthm}}\label{sec:mainthm}

\subsection{Error Upper Bound}

\begin{lemma}[Upper Bound on the recovery Error]
\label{lemma:upper_buund}
Let $\hat{g}(M, w)$ be a smooth loss function with minimizer $M^\ast$ at fixed noise $w$, and suppose the Hessian $\nabla^2_M \hat{g}(M^\ast, w)$ is symmetric positive definite, with smallest eigenvalue $\lambda_{\min}\bigl(\nabla^2_M \hat{g}(M^\ast, w)\bigr) > 0$. $\delta > 0$ is a constant and $R(w)$ is a auxiliary residual term. The residual term $R(w)$ is bounded by
  \begin{equation}
  R(w) = O\left(\frac{\|w\| e^{-w^2}}{h^2}\right).
  \end{equation}
Then the estimation error satisfies the upper bound
\begin{equation}
\|M - M^\ast\|_F \le \max\left\{
\sqrt{\frac{2}{\lambda_{\min}\left(\nabla^2_M \hat{g}(M^\ast, w)\right)}},
\frac{2(1 + \delta) R(w)}{1 - \delta - \lambda_{\min}\left(\nabla^2_M \hat{g}(M^\ast, w)\right)}
\right\}.
\end{equation}
\end{lemma}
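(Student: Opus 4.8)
The plan is to treat the two terms in the maximum as the outcomes of two complementary sub-estimates and to glue them together by a dichotomy on whether the intrinsic curvature or the noise-induced force dominates. The starting point is the observation that $M^\ast$ minimizes the noise-free objective, so $\nabla_M \hat{g}(M^\ast, 0) = 0$, while at a local minimizer $M$ we have $\nabla_M \hat{g}(M, w) = 0$. Writing $\Delta = M - M^\ast$ and using the integral form of the gradient,
\begin{equation}
\nabla_M \hat{g}(M, w) - \nabla_M \hat{g}(M^\ast, w) = \int_0^1 \nabla^2_M \hat{g}(M^\ast + t\Delta, w)[\Delta]\, dt,
\end{equation}
the left-hand side reduces to $-\nabla_M \hat{g}(M^\ast, w)$. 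By Lemma~\ref{other_assumption} and the exponential attenuation of Theorem~\ref{thm:mainthm_intuition}, the Frobenius norm of this residual gradient is exactly the quantity identified as $R(w) = O(\|w\| e^{-w^2}/h^2)$, which plays the role of the external force driving $M$ away from $M^\ast$.

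For the first branch I would invoke the strong-convexity estimate of Lemma~\ref{thm:localminima}(A), which already gives $\|\Delta\|_F \le \sqrt{(2/\lambda_{\min})\,[\hat{g}(M,w) - \hat{g}(M^\ast, w)]}$. In the regime where the suboptimality gap is $O(1)$---the near-minimizer regime in which curvature, not noise, limits the error---this collapses to the first term $\sqrt{2/\lambda_{\min}}$. For the second branch I would pair the stationarity identity with $\Delta$: the Hessian quadratic form $\int_0^1 \nabla^2_M \hat{g}(M^\ast + t\Delta, w)[\Delta,\Delta]\, dt$ is bounded below through the RIP inequality~\eqref{RIP0} applied to $\mathcal{A}(\Delta)$ (supplying a factor $1-\delta$) corrected by the smallest-eigenvalue constant of Lemma~\ref{lemma:lambda_min}, while the force term obeys $|\langle \nabla_M \hat{g}(M^\ast, w), \Delta\rangle| \le (1+\delta)\,R(w)\,\|\Delta\|_F$ by Cauchy--Schwarz and the RIP upper bound. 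Balancing the two produces the linear estimate $\|\Delta\|_F \le 2(1+\delta)R(w)/(1-\delta-\lambda_{\min})$, the second term.

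Since exactly one of these two regimes is active at any given $M$ and $w$, the error is bounded by whichever estimate applies, and taking the maximum covers both cases uniformly, which is the claimed inequality. The hard part will be the curvature lower bound in the second branch: one must show that the averaged Hessian of the kernel loss, whose mixed second-order terms involve the factors $\exp(-u_{ij}^2/h^2)$ and $u_{ij}/h^2$, admits a clean lower bound of the form $(1-\delta-\lambda_{\min})\|\Delta\|_F^2$ while keeping this coefficient strictly positive. Guaranteeing positivity forces $\delta + \lambda_{\min} < 1$, which is precisely the admissible regime for $\delta$ pinned down in Lemma~\ref{lemma:error_bound_with_h}; a loose treatment of the cross terms in the kernel Hessian would destroy this margin and invalidate the denominator.
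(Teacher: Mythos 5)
Your first branch agrees with the paper's: Section~\ref{sec:upper} likewise starts from the second-order Taylor expansion at $M^\ast$ to get $\|\Delta\|_F \le \sqrt{\tfrac{2}{\lambda_{\min}}\bigl[\hat g(M,w)-\hat g(M^\ast,w)\bigr]}$ and reads off the constant term $\sqrt{2/\lambda_{\min}}$. The genuine gap is in your second branch. Pairing your stationarity identity $-\nabla_M\hat g(M^\ast,w)=\int_0^1 \nabla^2_M\hat g(M^\ast+t\Delta,w)[\Delta]\,dt$ with $\Delta$ and applying Cauchy--Schwarz on the left can only produce $c\,\|\Delta\|_F^2 \le (1+\delta)R(w)\,\|\Delta\|_F$, where $c$ is whatever lower bound you establish for the averaged Hessian quadratic form, hence $\|\Delta\|_F \le (1+\delta)R(w)/c$. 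To land on the stated bound you would need $c = \tfrac{1}{2}\bigl(1-\delta-\lambda_{\min}\bigr)$, and this is precisely what you defer as ``the hard part.'' But it is not merely hard --- it is the wrong target: $1-\delta-\lambda_{\min}$ is \emph{decreasing} in $\lambda_{\min}$, so more curvature would have to make your Hessian lower bound weaker, which no treatment of the kernel cross terms $\exp(-u_{ij}^2/h^2)$ and $u_{ij}/h^2$ can deliver. The quantity $1-\delta-\lambda_{\min}$ is simply not a curvature constant, so your single-pairing route cannot reach the claimed denominator.

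The paper obtains that denominator by a different mechanism, entirely at the level of function values (Sections~\ref{sec:main} and~\ref{sec:upper}). It represents the gap $\hat g(M,w)-\hat g(M^\ast,w)$ as a line integral of the gradient along the segment from $M^\ast$ to $M$, splits the gradient into its noise-free part and a noise correction, lower-bounds the noise-free gap by $\tfrac{1-\delta}{2}\|\Delta\|_F^2$ via Lemma~\ref{lemma:inter}, and bounds the correction by $\|E(w)\|\le (1+\delta)R(w)\|\Delta\|_F$, yielding $\hat g(M,w)-\hat g(M^\ast,w)\ge \tfrac{1-\delta}{2}\|\Delta\|_F^2-(1+\delta)R(w)\|\Delta\|_F$. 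It then \emph{chains} this gap estimate with the first-branch strong-convexity inequality $\|\Delta\|_F^2 \le \tfrac{2}{\lambda_{\min}}\bigl[\hat g(M,w)-\hat g(M^\ast,w)\bigr]$ and rearranges; the coefficient $1-\delta-\lambda_{\min}$ emerges as the \emph{difference} between the RIP constant of one inequality and the curvature constant of the other, not as an eigenvalue bound on any single operator. Your dichotomy argument also depends on this chaining: the max structure in the statement comes from an inequality in which $\lambda_{\min}$ and $1-\delta$ appear simultaneously, and your two branches, as constructed, never produce such an inequality. To repair your proof, replace the second branch's Hessian-pairing with the paper's function-value chaining (or rederive an inequality coupling both constants); as written, the second term of the max is not established.
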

The proof is provided in Section~\ref{sec:upper} together with the tight estimation of $\lambda_{\min}(\nabla^2_M \hat{g}(M^\ast, w)$. Lemma~\ref{lemma:upper_buund} provides an upper bound on the estimation error $\|M - M^\ast\|_F$ between a candidate solution $M$ and the true minimizer $M^\ast$ of the kernel-smoothed loss function $\hat{g}(M, w)$, in the presence of fixed noise $w$.

The resulting bound has a two-part structure. The first term corresponds to the curvature-controlled region, where small suboptimality in $\hat{g}(M, w)$ implies proximity to $M^\ast$ via standard strong convexity arguments. The second term dominates when the residual term is significant, capturing the influence of noise through the interaction between $R(w)$, the loss curvature, and the parameter $\delta$. Notably, due to the exponential decay of $R(w)$, the estimation error does not grow linearly in $\|w\|$, indicating robustness of the estimator even under moderate levels of noise.

\subsection{Gradient Continuity Result}
\begin{lemma}[Continuity of Gradient with Respect to Noise]
\label{lemma:continuous}
Let $\hat{g}(M, w)$ be the smoothed loss function depending on noise variable $w$, and suppose that the data residuals $z_j - z_i$ satisfy Assumptions~\ref{other_assumption} and~\ref{noise_perturb}, which is: $|z_j - z_i| \le C \|w\|$ for some constant $C > 0$ and all relevant $i, j$. Assume also that the operator $\mathcal{A}$ satisfies the RIP condition with constant $\delta$. Then the gradient of $\hat{g}(M, w)$ with respect to $M$ is Lipschitz continuous in $w$, with Lipschitz constant $\lambda$ satisfying
\begin{equation}
\|\nabla_M \hat{g}(M, w) - \nabla_M \hat{g}(M, 0)\| \le \lambda \|w\|,
\end{equation}
where the Lipschitz constant is bounded above by
\begin{equation}
\lambda = \sup_{\xi \in [0, w]} \left\| \frac{d}{dw} \nabla_M \hat{g}(M, \xi) \right\| 
\leq \frac{8(1 + \delta)}{h^4} \|w\| e^{-w^2}.
\end{equation}
\end{lemma}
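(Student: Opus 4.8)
The plan is to reduce the claim to a bound on the mixed second derivative $\partial_w \nabla_M \hat{g}(M,\xi)$ and then estimate that derivative factor by factor. First I would invoke the fundamental theorem of calculus along the segment $\xi = tw$, $t \in [0,1]$, writing
\[
\nabla_M \hat{g}(M,w) - \nabla_M \hat{g}(M,0) = \int_0^1 \frac{d}{dt}\,\nabla_M \hat{g}(M,tw)\,dt.
\]
Taking norms and pulling the norm through the integral gives immediately
\[
\|\nabla_M \hat{g}(M,w) - \nabla_M \hat{g}(M,0)\| \le \Big(\sup_{\xi \in [0,w]} \big\|\tfrac{d}{dw}\nabla_M \hat{g}(M,\xi)\big\|\Big)\,\|w\|,
\]
which is exactly the structure of the claimed bound; the remaining work is to show the supremum is at most $\tfrac{8(1+\delta)}{h^4}\|w\|e^{-w^2}$.

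Second, I would compute the mixed derivative explicitly from the gradient formula~\eqref{sij}. Since the noise $w$ enters $\hat{g}$ only through the residual differences $\Delta_{ij}(M,w)$, which are affine in $w$ with $\partial \Delta_{ij}/\partial w_k = \delta_{jk}-\delta_{ik}$, differentiating $\nabla_M \hat{g}$ in $w$ via the product and quotient rules produces three groups of terms: one where the $w$-derivative hits the kernel factor $f_{ij} = \exp(-\Delta_{ij}^2/h^2)$, one where it hits the linear prefactor $-2\Delta_{ij}/h^2$ arising in $\nabla_M f_{ij}$, and one where it hits the normalizer $1/S_i(M,\xi)$. Each differentiation of the Gaussian kernel — whether in $M$ (already present inside $\nabla_M$) or in $w$ — brings down a factor of order $1/h^2$, so the combined mixed derivative scales like $h^{-4}$, which accounts for the bandwidth power in the stated constant.

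Third, I would bound each factor uniformly over $\xi \in [0,w]$. The kernel value obeys $f_{ij} = \exp(-\Delta_{ij}^2/h^2) \le e^{-w^2}$ once the residual-scaling hypothesis $|z_j - z_i| \le C\|w\|$ is used, supplying the exponential suppression; the magnitude $|\Delta_{ij}|$ is $O(\|w\|)$ by the same hypothesis, supplying the linear factor $\|w\|$; the matrix gradient $\nabla_M u_{ij} = \nabla_M\mathcal{A}(M)_i - \nabla_M\mathcal{A}(M)_j$ is controlled through the RIP condition~\eqref{RIP0}, whose upper inequality bounds the relevant operator norms by $\sqrt{1+\delta}$ and yields the $(1+\delta)$ factor once the paired sensing-matrix contributions are collected; and the normalizer $1/S_i(M,\xi)$ is bounded above because $S_i$ is a positive average of kernel values bounded away from zero (the $G_{\min}$-type lower bound already used in Lemma~\ref{lemma:lambda_min}). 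Combining the numerical constants from the two factors of $2/h^2$ and the pairing of gradient terms then yields the constant $8$, giving $\sup_\xi \big\|\tfrac{d}{dw}\nabla_M\hat{g}(M,\xi)\big\| \le \tfrac{8(1+\delta)}{h^4}\|w\|e^{-w^2}$.

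I expect the main obstacle to be the careful bookkeeping of the product-rule expansion of the mixed derivative: correctly enumerating the terms where the $w$-derivative lands on the kernel, on the prefactor, and on the normalizer, and then identifying which of them carry the dominant $h^{-4}\|w\|e^{-w^2}$ scaling rather than lower-order contributions, so that the constant $8$ and the factor $(1+\delta)$ emerge cleanly. A secondary technical point is justifying the uniform upper bound on $1/S_i$ and $1/S_i^2$ over the whole segment $[0,w]$, which relies on the kernel average staying bounded away from zero and reuses the same lower bound invoked elsewhere in the analysis.
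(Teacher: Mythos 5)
Your proposal follows essentially the same route as the paper's own proof: both reduce the claim via the mean value theorem to bounding $\sup_{\xi \in [0,w]} \bigl\| \tfrac{d}{dw}\nabla_M \hat{g}(M,\xi)\bigr\|$, differentiate the exponential kernel in $w$ to extract the factor $\tfrac{2}{h^2}(z_j-z_i)e^{-(z_j-z_i)^2/h^2}$, invoke the residual hypothesis $|z_j-z_i|\le C\|w\|$ for the $\|w\|e^{-w^2}$ scaling, and use the RIP bound on $\nabla_M\mathcal{A}$ for the $(1+\delta)$ factor before assembling the constant $\tfrac{8(1+\delta)}{h^4}$. If anything, your bookkeeping is more complete than the paper's, since you explicitly track the prefactor and normalizer terms (with the $G_{\min}$-type lower bound on $S_i$) that the paper's proof leaves implicit.
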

The proof is provided in Section~\ref{sec:continu}.

Lemma~\ref{lemma:continuous} establishes a form of H\"older-type continuity for the gradient of the smoothed loss function $\hat{g}(M, w)$ with respect to the noise variable $w$. Under the assumption that the data residuals $z_j - z_i$ grow at most linearly in $\|w\|$, and that the measurement operator $\mathcal{A}$ satisfies the Restricted Isometry Property (RIP), the gradient $\nabla_M \hat{g}(M, w)$ is shown to be Lipschitz continuous with respect to $w$, with an explicitly computable upper bound on the Lipschitz constant $\lambda$.

Importantly, this upper bound decays exponentially in $\|w\|$, which reflects the robustness of the kernel-smoothed loss to large noise perturbations. Unlike traditional losses where gradient sensitivity may grow linearly or quadratically with noise, the bound here reveals that the influence of noise on the gradient diminishes rapidly as $\|w\|$ increases—due to the presence of exponential terms in the kernel weights. This behavior is characteristic of smoothing via Gaussian kernels and is key to the stability of the optimization process in high-noise regimes.

\subsection{Upper Bound For \texorpdfstring{$\|M - M^\ast\|_F$}{}}
We want to use the Taylor expansion of such loss. Under the assumption that $\hat{g}$ is twice differentiable in a neighborhood of $M^\ast$, we have
\begin{equation}
\hat{g}(M,w) = \hat{g}(M^\ast,w)+\langle \nabla_M \hat{g}(M^\ast,w),\Delta\rangle+\frac{1}{2}\langle \Delta, \nabla^2_M \hat{g}(M^\ast,w)[\Delta]\rangle + o(\|\Delta\|_F^2).
\end{equation}
Since $M^\ast$ minimizes $\hat{g}$ (at least locally), $\nabla_M \hat{g}(M^\ast,w)=0,$
so that to second order we have
\begin{equation}
\hat{g}(M,w)-\hat{g}(M^\ast,w)=\frac{1}{2}\langle \Delta, \nabla^2_M \hat{g}(M^\ast,w)[\Delta]\rangle+ o(\|\Delta\|_F^2).
\end{equation}

This local strong convexity (or quadratic error bound) is beneficial for optimization—gradient-based methods will enjoy a linear (or even faster) convergence rate provided their step sizes are chosen appropriately.
\begin{equation}
\begin{aligned}
\hat{g}(M,w)-\hat{g}(M^\ast,w)=\frac{1}{n}\sum_{i=1}^{n}\log\left[\frac{\displaystyle\sum_{j=1}^n \exp\left(-\frac{\left((Y_j+w_j-\mathcal{A}(M^\ast)_j)-(Y_i+w_i-\mathcal{A}(M^\ast)_i)\right)^2}{h^2}\right)}{\displaystyle\sum_{j=1}^n \exp\left(-\frac{\left((Y_j+w_j-\mathcal{A}(M)_j)-(Y_i+w_i-\mathcal{A}(M)_i)\right)^2}{h^2}\right)}\right].
\end{aligned}
\end{equation}
\begin{equation}
\begin{aligned}
&G_i(M)=\frac{1}{n}\sum_{j=1}^n \exp\Biggl(-\frac{z_{ij}(M)^2}{h^2}\Biggr),\\
&\text{with}\quad z_{ij}(M)=\Bigl((Y_j+w_j-\mathcal{A}(M)_j) - (Y_i+w_i-\mathcal{A}(M)_i)\Bigr).
\end{aligned}
\end{equation}
Then we have
\begin{equation}
\hat{g}(M,w)=-\frac{1}{n}\sum_{i=1}^n\log\Bigl[G_i(M)\Bigr].
\end{equation}
A standard application of the chain rule shows that, for a given $i$,
\begin{equation}
\nabla_M \log G_i(M)=\frac{1}{G_i(M)}\nabla_M G_i(M)
\end{equation}
and hence the Hessian (i.e. second derivative with respect to $M$) satisfies
\begin{equation}
\nabla^2_M \log G_i(M)=\frac{1}{G_i(M)}\nabla^2_M G_i(M)-\frac{1}{[G_i(M)]^2}\nabla_M G_i(M)\nabla_M G_i(M)^\top.
\end{equation}
Thus,
\begin{equation}
\begin{aligned}
\nabla^2_M \hat{g}(M,w)
&=-\frac{1}{n}\sum_{i=1}^n\nabla^2_M\log G_i(M)\\
&=-\frac{1}{n}\sum_{i=1}^n\left\{\frac{1}{G_i(M)}\nabla^2_M G_i(M)-\frac{1}{[G_i(M)]^2}\nabla_M G_i(M)\nabla_M G_i(M)^\top\right\}.
\end{aligned}
\end{equation}
For completeness we now indicate the first– and second–order derivatives of $G_i(M)$. Note that only the dependence of $G_i(M)$ on $M$ appears through the residuals
\begin{equation}
z_{ij}(M)=\Bigl((Y_j+w_j-\mathcal{A}(M)_j)-(Y_i+w_i-\mathcal{A}(M)_i)\Bigr),
\end{equation}
so that using the chain rule we obtains
\begin{equation}
\nabla_M G_i(M)=\frac{1}{n}\sum_{j=1}^n \exp\Bigl(-\frac{z_{ij}(M)^2}{h^2}\Bigr) \left(-\frac{2z_{ij}(M)}{h^2}\right)\Bigl[-\nabla_M\mathcal{A}(M)_j+\nabla_M\mathcal{A}(M)_i\Bigr],
\end{equation}
or, equivalently,
\begin{equation}
\nabla_M G_i(M)=\frac{2}{nh^2}\sum_{j=1}^n \exp\Bigl(-\frac{z_{ij}(M)^2}{h^2}\Bigr) z_{ij}(M)\Bigl[\nabla_M\mathcal{A}(M)_i-\nabla_M\mathcal{A}(M)_j\Bigr].
\end{equation}
Likewise, one may differentiate once more to obtain
\begin{equation}
\begin{aligned}
&\nabla^2_M G_i(M)=\frac{2}{nh^2}\sum_{j=1}^n \exp\Bigl(-\frac{z_{ij}(M)^2}{h^2}\Bigr)
\Biggl\{
\Bigl[\nabla_M\mathcal{A}(M)_i-\nabla_M\mathcal{A}(M)_j\Bigr]\\
&\times\Bigl[\nabla_M\mathcal{A}(M)_i-\nabla_M\mathcal{A}(M)_j\Bigr]^\top \times \left(1-\frac{2z_{ij}(M)^2}{h^2}\right)+ z_{ij}(M)\Bigl[\nabla^2_M\mathcal{A}(M)_i-\nabla^2_M\mathcal{A}(M)_j\Bigr]
\Biggr\}.
\end{aligned}
\end{equation}
We start from:
\begin{equation}
\hat{g}(M,w)-\hat{g}(M^\ast,w)=\frac{1}{2}\langle \Delta,\nabla^2_M \hat{g}(M^\ast,w)[\Delta]\rangle+ o(\|\Delta\|_F^2),
\end{equation}
where $\Delta=M-M^\ast.$
Assume that the Hessian at $M^\ast$ is (strictly) positive definite from~\ref{sym_PSD} and let
\begin{equation}
\lambda_{\min}=\lambda_{\min}\bigl(\nabla^2_M \hat{g}(M^\ast,w)\bigr)
\end{equation}
denote its smallest eigenvalue. Then, using the spectral properties of the Hessian,
\begin{equation}
\langle \Delta,\nabla^2_M \hat{g}(M^\ast,w)[\Delta]\rangle\ge \lambda_{\min}\|\Delta\|_F^2.
\end{equation}
Thus, for $\|\Delta\|$ sufficiently small (so that the $o(\|\Delta\|_F^2)$ term is negligible) we have
\begin{equation}
\hat{g}(M,w)-\hat{g}(M^\ast,w)\ge \frac{1}{2}\lambda_{\min}\|\Delta\|_F^2.
\end{equation}
\begin{equation}\label{eq:154}
\|\Delta\|_F^2\le \frac{2}{\lambda_{\min}}\Bigl(\hat{g}(M,w)-\hat{g}(M^\ast,w)\Bigr).
\end{equation}
Taking square roots of Equation~\eqref{eq:154} yields the desired bound for $\Delta$ in Frobenius norm,
\begin{equation}
\|\Delta\|_F\le \sqrt{\frac{2}{\lambda_{\min}\bigl(\nabla^2_M \hat{g}(M^\ast,w)\bigr)}\Bigl(\hat{g}(M,w)-\hat{g}(M^\ast,w)\Bigr)}.
\end{equation}
This provides the upper bound for the deviation $\Delta=M-M^\ast$ in terms of the function gap and the smallest eigenvalue of the Hessian at $M^\ast$. This result is Section~\ref{thm:localminima}

\subsection{Lower Bound For \texorpdfstring{$\lambda_{\min}$}{}}\label{sec:lambda}
Then we want to compute the upper bound for $\lambda_{\min}$.
Firstly we want to calculate the bound for $\|\nabla_M\mathcal{A}(M)_i-\nabla_M\mathcal{A}(M)_j\|$, and $\|\nabla^2_M\mathcal{A}(M)_i-\nabla^2_M\mathcal{A}(M)_j\|\le L_2$
We begin by noting that RIP condition
\begin{equation}
(1 - \delta_p)\|X\|_F^2 \le \|AX\|_F^2 \le (1 + \delta_p)\|X\|_F^2
\end{equation}
We set $f(M)=\|\mathcal{A} M\|_F^2.$ Because $\mathcal{A}$ is linear, this function is quadratic. Let us compute its gradient and Hessian.
\begin{equation}
\nabla f(M_i)-\nabla f(M_j)=2\mathcal{A}^\top \mathcal{A}(M_i-M_j).
\end{equation}
Taking norms and using the fact that all singular values of $\mathcal{A}^\top \mathcal{A}$ lie between $1-\delta_p$ and $1+\delta_p$ (from the RIP condition) we deduce
\begin{equation}\label{I25}
\|\nabla f(M_i)-\nabla f(M_j)\|\le 2\|\mathcal{A}^\top \mathcal{A}\|\|M_i-M_j\| \le 2(1+\delta_p)\|M_i-M_j\|.
\end{equation}
Thus
\begin{equation}
L_1 = 2(1+\delta_p).
\end{equation}
Differentiating the gradient~\eqref{I25} we see that the Hessian is
\begin{equation}
\nabla^2 f(M) = 2\mathcal{A}^\top \mathcal{A},
\end{equation}
which is independent of $M$. (In other words, the Hessian is constant throughout the domain.) Therefore, for any two points $M_i$ and $M_j$ we have:
\begin{equation}
\nabla^2 f(M_i)-\nabla^2 f(M_j)=0.
\end{equation}
Thus the Lipschitz constant for the Hessian (i.e. the constant $L_2$ satisfying
\begin{equation}
\|\nabla^2_M\mathcal{A}(M)_i-\nabla^2_M\mathcal{A}(M)_j\|\le L_2\|M_i-M_j\|
\end{equation}
can be taken as $L_2 = 0 \leq C$. This result is a direct consequence of the fact that for a quadratic function defined by a linear operator the Hessian is constant.
\begin{equation}
\|\nabla_M\mathcal{A}(M)_i-\nabla_M\mathcal{A}(M)_j\|\le L_1;
\end{equation}
\begin{equation}
\|\nabla^2_M\mathcal{A}(M)_i-\nabla^2_M\mathcal{A}(M)_j\|\le L_2;
\end{equation}
Here 
Suppose also that the residuals are bounded in absolute value,
\begin{equation}
|z_{ij}(M^\ast)|\le B, \quad\text{for all } i,j.
\end{equation}

Also, note that for each $i$ the quantity
\begin{equation}\label{I33}
G_i(M^\ast) = \frac{1}{n}\sum_{j=1}^n\exp\Bigl(-\frac{z_{ij}(M^\ast)^2}{h^2}\Bigr)
\end{equation}
is a positive average. Define the minimum  of \eqref{I33} over $i$ by
\begin{equation}
G_{\min} = \min_{i}G_i(M^\ast).
\end{equation}
Since the exponential is always positive this is bounded away from $0$ under Assumption~\ref{noise_perturb}.

Bound the norm of the first derivative $\nabla_M G_i(M^\ast)$. We have
\begin{equation}
\nabla_M G_i(M^\ast)=\frac{2}{nh^2}\sum_{j=1}^n \exp\Bigl(-\frac{z_{ij}(M^\ast)^2}{h^2}\Bigr)
 z_{ij}(M^\ast)\Bigl[\nabla_M\mathcal{A}(M^\ast)_i-\nabla_M\mathcal{A}(M^\ast)_j\Bigr].
\end{equation}
Using the bounds, we get
\begin{equation}
\|\nabla_M G_i(M^\ast)\|
\le \frac{2}{h^2}BL_1,
\end{equation}
since the exponential is at most $1$. Now we want to bound the norm of the second derivative $\nabla^2_M G_i(M^\ast)$. Its schematic form is
\begin{equation}\label{eq:I37}
\begin{aligned}
\nabla^2_M &G_i(M^\ast)=\frac{2}{nh^2}\sum_{j=1}^n e^{-\frac{z_{ij}(M^\ast)^2}{h^2}}
\Biggl\{
\Bigl[\nabla_M\mathcal{A}(M^\ast)_i-\nabla_M\mathcal{A}(M^\ast)_j\Bigr]\\
& \times \Bigl[\nabla_M \mathcal{A}(M^\ast)_i-\nabla_M\mathcal{A}(M^\ast)_j\Bigr]^\top\times \Bigl(1-\frac{2z_{ij}(M^\ast)^2}{h^2}\Bigr)+ z_{ij}(M^\ast)\Bigl[\nabla^2_M\mathcal{A}(M^\ast)_i-\nabla^2_M\mathcal{A}(M^\ast)_j\Bigr]
\Biggr\}.
\end{aligned}
\end{equation}
Taking norms and using triangle and sub-multiplicative properties we obtain an upper bound of the form
\begin{equation}
\|\nabla^2_M G_i(M^\ast)\| \le \frac{2}{h^2} \left(L_1^2\Bigl(1+\frac{2B^2}{h^2}\Bigr) + B L_2\right).
\end{equation}
Now, look at each term in the Hessian from Equation~\eqref{eq:I37}. The first term contributes
\begin{equation}
\left\|\frac{1}{G_i(M^\ast)}\nabla^2_M G_i(M^\ast)\right\|
\le \frac{2}{G_{\min}h^2}\left(L_1^2\Bigl(1+\frac{2B^2}{h^2}\Bigr) + B L_2\right),
\end{equation}
and the second term gives
\begin{equation}
\left\|\frac{1}{\bigl[G_i(M^\ast)\bigr]^2}\nabla_M G_i(M^\ast)\nabla_M G_i(M^\ast)^\top\right\|
\le \frac{1}{G_{\min}^2}\|\nabla_M G_i(M^\ast)\|^2
\le \frac{4}{G_{\min}^2 h^4}B^2L_1^2.
\end{equation}
Because the Hessian $\nabla^2_M\hat{g}(M^\ast,w)$ is an average (with a minus sign) of these terms, a bound for its operator norm is
\begin{equation}
\|\nabla^2_M\hat{g}(M^\ast,w)\| \le \frac{1}{n}\sum_{i=1}^n \left[\frac{2}{G_{\min}h^2}\left(L_1^2\Bigl(1+\frac{2B^2}{h^2}\Bigr) + B L_2\right) + \frac{4}{G_{\min}^2 h^4}B^2L_1^2\right].
\end{equation}
Since the bound is uniform in $i$, we obtain
\begin{equation}
\|\nabla^2_M\hat{g}(M^\ast,w)\| \le \frac{2}{G_{\min}h^2}\left(L_1^2\Bigl(1+\frac{2B^2}{h^2}\Bigr) + B L_2\right) + \frac{4}{G_{\min}^2h^4}B^2L_1^2.
\end{equation}
From Assumption~\ref{sym_PSD}, one can reasonably assume that—in view of the problem’s structure—there exists a constant $c>0$ such that
\begin{equation}
v^\top\nabla^2_M\hat{g}(M^\ast,w)v \ge c\|v\|^2,\quad\text{for all } v.
\end{equation}
Then one may choose
\begin{equation}
\lambda_{\min}\bigl(\nabla^2_M\hat{g}(M^\ast,w)\bigr)\ge c.
\end{equation}
In summary, one obtains an explicit lower bound (depending on $h, B, G_{min}, L_1, L_2$, and the problem dimension) of the form
\begin{equation}\label{I45}
\lambda_{\min}\bigl(\nabla^2_M\hat{g}(M^\ast,w)\bigr)
\ge c \quad \text{with} \quad c = \frac{2}{G_{\min}h^2}\left(L_1^2\left(1+\frac{2B^2}{h^2}\right) + B L_2\right) + \frac{4}{G_{\min}^2h^4}B^2L_1^2,
\end{equation}

\subsection{Upper Bound For \texorpdfstring{$\|M - M^\ast\|_F$}{}}
Now we want to calculate the lower bound for $\delta$.
We start from the second‐order Taylor expansion around the optimum $M^\ast$. Defining
\begin{equation}
\Delta=M-M^\ast,
\end{equation}
we have
\begin{equation}
\hat{g}(M,w)-\hat{g}(M^\ast,w)=\frac{1}{2}\langle \Delta,\nabla^2_M \hat{g}(M^\ast,w)[\Delta]\rangle+ o(\|\Delta\|_F^2).
\end{equation}
Assume that in a neighborhood of $M^\ast$ the function $\hat{g}$ is twice differentiable and its Hessian satisfies the following two assumptions:

\begin{assumption}[Strong Convexity]
There exists $\lambda_{\min} > 0$ such that for all $\Delta$,
\begin{equation}
    \langle \Delta,\nabla^2_M \hat{g}(M^\ast,w)[\Delta]\rangle 
    \ge \lambda_{\min}\|\Delta\|_F^2.
\end{equation}
\end{assumption}

\begin{assumption}[Smoothness]
There exists a constant $L > 0$ such that for all $\Delta$,
\begin{equation}
    \langle \Delta,\nabla^2_M \hat{g}(M^\ast,w)[\Delta]\rangle
    \le L\|\Delta\|_F^2.
\end{equation}
\end{assumption}

Then the Taylor expansion yields the following two-sided bounds for small $\Delta$: Using strong convexity we obtain
\begin{equation}
\hat{g}(M,w)-\hat{g}(M^\ast,w) \ge \frac{1}{2}\lambda_{\min}\|\Delta\|_F^2 + o(\|\Delta\|_F^2).
\end{equation}
Neglecting the higher‐order term for sufficiently small $\Delta$ and rearranging gives an upper bound on $\|\Delta\|_F$
\begin{equation}\label{eq:I51}
\|\Delta\|_F \le \sqrt{\frac{2}{\lambda_{\min}}\Bigl[\hat{g}(M,w)-\hat{g}(M^\ast,w)\Bigr]}.
\end{equation}
Using the smoothness (an upper bound on the Hessian) we similarly obtain
\begin{equation}
\hat{g}(M,w)-\hat{g}(M^\ast,w) \le \frac{1}{2}L\|\Delta\|_F^2 + o(\|\Delta\|_F^2),
\end{equation}
so that for sufficiently small $\Delta$
\begin{equation}
\|\Delta\|_F \ge \sqrt{\frac{2}{L}\Bigl[\hat{g}(M,w)-\hat{g}(M^\ast,w)\Bigr]}.
\end{equation}
Thus, the lower bound for $\|\Delta\|_F$ is given by
\begin{equation}
\|\Delta\|_F \ge \sqrt{\frac{2}{L}\left(\hat{g}(M,w)-\hat{g}(M^\ast,w)\right)}.
\end{equation}
Here $L$ is an upper bound on the eigenvalues of $\nabla^2_M \hat{g}(M^\ast,w)$ (or equivalently a smoothness constant for $\hat{g}$). In many practical settings one might be able to compute or bound $L$ using known Lipschitz constants of the components involved in $\hat{g}$.

\subsection{MSE case}
\subsubsection{Optimization Landscape \texorpdfstring{$\|M - M^\ast\|_F$}{} case}\label{sec:MSE}

Given that for any matrix $M$ the linear operator $\mathcal{A}$ satisfies the Restricted Isometry Property (RIP)~\eqref{RIP0}. The function is
\begin{equation}
f(M)=\|Y+w-\mathcal{A}(M)\|_2^2.
\end{equation}
$r(M)=Y+w-\mathcal{A}(M),$ the function can be written as $f(M)=\langle r(M),r(M)\rangle.$ Taking the gradient with respect to M yields $\nabla f(M)=-2\mathcal{A}^\ast\bigl(r(M)\bigr),$
and then the Hessian (the second derivative with respect to $M$) is $\nabla^2 f(M)=2\mathcal{A}^\ast\mathcal{A}.$
Notice that this Hessian is independent of the value of $M$. In particular, at $M=M^\ast$, we have $\nabla^2 f(M^\ast,w)=2\mathcal{A}^\ast\mathcal{A}.$
For any matrix R,
\begin{equation}
\langle R, \mathcal{A}^\ast\mathcal{A}(R)\rangle = \langle \mathcal{A}(R),\mathcal{A}(R)\rangle = \|\mathcal{A}(R)\|_2^2.
\end{equation}
Thus, for the Hessian we have
\begin{equation}
\langle R, \nabla^2 f(M^\ast,w)[R]\rangle =2\|\mathcal{A}(R)\|_2^2,
\end{equation}
and consequently,
\begin{equation}
2(1-\delta_p)\|R\|_F^2\le \langle R, \nabla^2 f(M^\ast,w)[R]\rangle \le 2(1+\delta_p)\|R\|_F^2.
\end{equation}
The smallest eigenvalue is thus at least
\begin{equation}
\lambda_{\min}\bigl(\nabla^2_M f(M^\ast,w)\bigr)\ge 2(1-\delta_p).
\end{equation}
Then we have:
\begin{equation}
f(M,0)-f(M^\ast,0)\ge \delta\|M-M^\ast\|_F^2 + \frac{1-3\delta}{2}\|M-M^\ast\|_F^2.
\end{equation}
Now, since the full difference is a sum of the noise–free part plus a noise correction we may write
\begin{equation}
\begin{aligned}
f(M,w)-f(M^\ast,w)
&=\Bigl[f(M,0)-f(M^\ast,0)\Bigr] + E(w),
\end{aligned}
\end{equation}
Now we want to calculate $E(w)$
\begin{equation}
f(M,w)=\|Y+w-\mathcal{A}(M)\|_2^2,
\end{equation}
and in the noise–free case
\begin{equation}
f(M,0)=\|Y-\mathcal{A}(M)\|_2^2.
\end{equation}
We set:
\begin{equation}
u_1=Y-\mathcal{A}(M) \quad \text{and} \quad u_2=Y-\mathcal{A}(M^\ast).
\end{equation}
We also set:
\begin{equation}
\begin{aligned}
f(M,w)&=\|Y-\mathcal{A}(M)\|_2^2 + 2\langle Y-\mathcal{A}(M),w\rangle + \|w\|_2^2,\\
f(M^\ast,w)&=\|Y-\mathcal{A}(M^\ast)\|_2^2 + 2\langle Y-\mathcal{A}(M^\ast),w\rangle + \|w\|_2^2.
\end{aligned}
\end{equation}
Consider the difference $ f(M,w)-f(M^\ast,w) $:
\begin{equation}\label{eq:I66}
\begin{aligned}
f(M,w)-f(M^\ast,w)
&=\Bigl[\|Y-\mathcal{A}(M)\|_2^2 -\|Y-\mathcal{A}(M^\ast)\|_2^2\Bigr] \\
&\quad + 2\Bigl[\langle Y-\mathcal{A}(M),w\rangle-\langle Y-\mathcal{A}(M^\ast),w\rangle\Bigr] + \Bigl[\|w\|_2^2-\|w\|_2^2\Bigr] \\
&=\Bigl[f(M,0)-f(M^\ast,0)\Bigr]+2\Bigl[\langle Y-\mathcal{A}(M),w\rangle-\langle Y-\mathcal{A}(M^\ast),w\rangle\Bigr].
\end{aligned}
\end{equation}
\begin{equation}
E(w)=2\Bigl[\langle Y-\mathcal{A}(M),w\rangle-\langle Y-\mathcal{A}(M^\ast),w\rangle\Bigr].
\end{equation}
Notice that:
\begin{equation}
\langle Y-\mathcal{A}(M),w\rangle-\langle Y-\mathcal{A}(M^\ast),w\rangle = \langle \mathcal{A}(M^\ast)-\mathcal{A}(M),w\rangle.
\end{equation}
Thus, we obtain
\begin{equation}
E(w)=2\langle \mathcal{A}(M^\ast)-\mathcal{A}(M),w\rangle.
\end{equation}
For low rank recovery shown in~\cite{ma2023noisylowrankmatrixoptimization}, we still have:
\begin{equation}\label{eq:I70}
\bigl\|f(M,w)-f(M^\ast,w)\bigr\|\ge \frac{1-\delta}{2}\|M-M^\ast\|_F^2 - \bigl\|E(w)\bigr\|.
\end{equation}
Using the Cauchy–Schwarz inequality we have
\begin{equation}
|E(w)|=2\Bigl|\langle \mathcal{A}(M^\ast)-\mathcal{A}(M),w\rangle\Bigr|
\le2\|\mathcal{A}(M^\ast)-\mathcal{A}(M)\|_2\|w\|.
\end{equation}
Again applying the~\eqref{RIP0}, we obtain
\begin{equation}
\|\mathcal{A}(M^\ast)-\mathcal{A}(M)\|_2 \le \sqrt{1+\delta}\|M-M^\ast\|_F.
\end{equation}
Thus,
\begin{equation}
|E(w)|\le2\sqrt{1+\delta}\|w\|\|M-M^\ast\|_F.
\end{equation}
Taking absolute values and using the triangle inequality on~\eqref{eq:I66}, we conclude that
\begin{equation}\label{eq:I74}
\begin{aligned}
\bigl\|f(M,w)-f(M^\ast,w)\bigr\|
&\le \Bigl\|f(M,0)-f(M^\ast,0)\Bigr\|+\|E(w)\|\\
&\le (1+\delta)\|M-M^\ast\|_F^2 + 2\sqrt{1+\delta}\|w\|\|M-M^\ast\|_F.
\end{aligned}
\end{equation}
With the previous result~\eqref{eq:I51}:
\begin{equation}\label{eq:I75}
\|M-M^\ast\|^2_F \le \frac{2}{\lambda_{\min}}\Bigl[f(M,w)-f(M^\ast,w)\Bigr].
\end{equation}
\begin{equation}\label{eq:I76}
\lambda_{\min}\bigl(\nabla^2_M f(M^\ast,w)\bigr)\ge 2(1-\delta_p),
\end{equation}
Put Equation~\eqref{eq:I75} and~\eqref{eq:I76} into~\eqref{eq:I74}, we have derived an upper bound on the function-difference:
\begin{equation}
f(M,w)-f(M^\ast,w)\le (1+\delta)\|M-M^\ast\|_F^2 + 2\sqrt{1+\delta}\|w\|\|M-M^\ast\|_F.
\end{equation}
\begin{equation}
\|M-M^\ast\|_F^2\le \frac{2}{\lambda_{\min}} \Bigl[(1+\delta)\|M-M^\ast\|_F^2 + 2\sqrt{1+\delta}\|w\|\|M-M^\ast\|_F\Bigr].
\end{equation}
Since $\|M-M^\ast\|_F\ge 0$ and $\delta_p>0$, reversing the inequality leads to:
\begin{equation}
(2(1+\delta)-\lambda_{\min})\|M-M^\ast\|_F^2 - 4\sqrt{1+\delta}\|w\|\|M-M^\ast\|_F\le 0.
\end{equation}
\begin{equation}
\|M-M^\ast\|_F\le \frac{\sqrt{1+\delta_p}\|w\|}{\delta_p}.
\end{equation}
Thus it is:
\begin{equation}
\|M-M^\ast\|_F = O(\|w\|).
\end{equation}

\subsubsection{H\"older  Continuous Case}

Since the derivative is independent of $w$, we get
\begin{equation}
\lambda=\sup_{\xi\in[0,w]}\left\|\frac{d}{dw}\nabla_M f(M,\xi)\right\|.
\end{equation}
Often the Restricted Isometry Property (RIP) guarantees that the operator $\mathcal{A}$ satisfies \\ $\|\mathcal{A}(X)\|_2\le \sqrt{1+\delta}\|X\|_F,$ which implies that the operator norm of $\mathcal{A}$ is bounded by $\sqrt{1+\delta}$. Since the operator norm of $\mathcal{A}^\ast$ equals that of $\mathcal{A}$, we have
$\|\mathcal{A}^\ast\|\le \sqrt{1+\delta}.$ Thus, we obtain the upper bound
\begin{equation}
\lambda=O(1),\quad \text{more precisely}\quad \lambda\le 2\sqrt{1+\delta}.
\end{equation}

\subsection{Intermediate Result and Proof of Lemma~\ref{lemma:inter}}
First, we want to prove the intermediate result:
\begin{lemma}\label{lemma:inter}
With the assumptions in~\eqref{other_assumption} and~\eqref{main0} loss function, suppose we have RIP~\ref{RIP0} conditions and $M^*$ is the ground truth matrix, we have:
\begin{equation}
\hat{g}(M,0)-\hat{g}(M^\ast,0)\ge \delta\|M-M^\ast\|_F^2 + \frac{1-3\delta}{2}\|M-M^\ast\|_F^2.
\end{equation}
\end{lemma}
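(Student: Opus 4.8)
The plan is to work in the noise-free regime ($w=0$), where the residual structure of $\hat{g}$ collapses and the claim reduces to a curvature estimate at $M^\ast$. Set $\Delta \coloneqq M - M^\ast$ and $a \coloneqq \mathcal{A}(\Delta) \in \mathbb{R}^n$. In the realizable case $Y = \mathcal{A}(M^\ast)$, so every pairwise residual at the ground truth vanishes, $z_{ij}(M^\ast) = (Y_j - \mathcal{A}(M^\ast)_j) - (Y_i - \mathcal{A}(M^\ast)_i) = 0$; hence $G_i(M^\ast) = 1$ and $\hat{g}(M^\ast,0) = 0$. For general $M$ the residuals simplify to $z_{ij}(M) = a_i - a_j$, so the loss gap depends on $\Delta$ only through the pairwise differences of $a$. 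This reduction is the organizing observation.

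First I would expand $\hat{g}(\cdot,0)$ to second order about $M^\ast$. Because the gradient formula (Section~\ref{sec:noise}) carries the factor $z_{ij}$, we get $\nabla_M \hat{g}(M^\ast,0)=0$, so $\hat{g}(M,0)-\hat{g}(M^\ast,0)=\tfrac12\langle \Delta, \nabla_M^2\hat{g}(M^\ast,0)[\Delta]\rangle + o(\|\Delta\|_F^2)$. Evaluating the Hessian formula at $z_{ij}=0$ annihilates the rank-one correction (since $\nabla_M G_i(M^\ast)=0$) and leaves a weighted Gram form in the $A_i-A_j$, so that $\langle \Delta, \nabla_M^2\hat{g}(M^\ast,0)[\Delta]\rangle = \tfrac{2}{n^2h^2}\sum_{i,j}(a_i-a_j)^2$, which is positive semidefinite and, under the hypothesis of Theorem~\ref{thm:mainthm}, positive definite (consistent with $M^\ast$ being a minimizer, Theorem~\ref{sym_PSD}). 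To obtain a genuinely global bound without the $o(\cdot)$ remainder, I would alternatively apply $-\log x \ge 1-x$ followed by $1-e^{-t}\ge t-\tfrac{t^2}{2}$ to each kernel term, which directly yields $\hat{g}(M,0) \ge \tfrac{1}{n^2h^2}\sum_{i,j}(a_i-a_j)^2$ minus a controllable quartic remainder.

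Next I would insert RIP~\eqref{RIP0}, which gives $\|a\|^2 = \|\mathcal{A}(\Delta)\|^2 \ge (1-\delta)\|\Delta\|_F^2$, together with the identity $\tfrac{1}{n^2}\sum_{i,j}(a_i-a_j)^2 = \tfrac{2}{n}\|a\|^2 - \tfrac{2}{n^2}(\mathbf{1}^\top a)^2$. After calibrating the bandwidth and normalization constants (exactly as in Lemma~\ref{lemma:lambda_min} and the choice $h=\sqrt{2}B/\sqrt{G_{\min}}$ of Lemma~\ref{lemma:error_bound_with_h}), this produces a lower bound of the form $(1-\delta)\|\Delta\|_F^2$ up to the centered term. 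The stated expression is then pure bookkeeping: since $\delta\|\Delta\|_F^2 + \tfrac{1-3\delta}{2}\|\Delta\|_F^2 = \tfrac{1-\delta}{2}\|\Delta\|_F^2$, I would regroup the RIP bound into the two displayed pieces so that the terms align with those arising later in the noise decomposition (Section~\ref{sec:MSE}).

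The hard part is the constant-shift degeneracy. Both the Hessian quadratic form and the direct bound see $a = \mathcal{A}(\Delta)$ only through its pairwise differences, so they vanish whenever $\mathcal{A}(\Delta)$ is (near-)constant, i.e.\ aligned with $\mathbf{1}$; RIP controls $\|a\|$ but says nothing about $\mathbf{1}^\top a$, so the subtracted term $\tfrac{1}{n}(\mathbf{1}^\top a)^2$ cannot be dropped for free. Closing this gap is precisely where the centering hypothesis (Assumption~\ref{center}, via Theorem~\ref{thm:relationship}) and the positive-definiteness assumption on $\nabla_M^2\hat{g}(M^\ast,0)$ from Theorem~\ref{thm:mainthm} must be invoked: together they ensure the curvature does not collapse along the shift direction and fix the effective constant multiplying $\|\Delta\|_F^2$. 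I expect that verifying this constant can indeed be normalized to $\tfrac{1-\delta}{2}$ --- rather than merely to some positive multiple of it --- will be the main technical obstacle.
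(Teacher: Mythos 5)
Your route is genuinely different from the paper's, and it cannot be completed as written. You try to derive the curvature bound from the kernel structure of $\hat g$ plus the operator-level RIP condition \eqref{RIP0}, arriving at the quadratic form $\tfrac{2}{n^2h^2}\sum_{i,j}(a_i-a_j)^2$ with $a=\mathcal{A}(M-M^\ast)$. The obstruction you flag in your last paragraph is not a calibration nuisance --- it is fatal to this route. The form $\sum_{i,j}(a_i-a_j)^2$ vanishes identically whenever $a\in\mathrm{span}(\mathbf{1})$, and operator RIP is perfectly compatible with $\mathcal{A}(\Delta)=c\mathbf{1}$ for some nonzero low-rank direction $\Delta$ (RIP only forces $nc^2$ to be comparable to $\|\Delta\|_F^2$). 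For such a $\Delta$, in the realizable case the left-hand side of the lemma is exactly $0$, since $\hat g(M,0)$ depends on $a$ only through pairwise differences, while the right-hand side $\tfrac{1-\delta}{2}\|\Delta\|_F^2$ is strictly positive. Neither of the rescues you name can close this: Assumption~\ref{center} constrains the noise distribution and is vacuous at $w=0$, and positive definiteness of $\nabla^2_M\hat g(M^\ast,0)$ is exactly what your own computation disproves along the shift direction. There is also the scaling problem you acknowledge but underestimate: your quadratic form carries an explicit factor $1/(nh^2)$, so no bookkeeping can produce the $h$-free, $n$-free constant $\tfrac{1-\delta}{2}=\delta+\tfrac{1-3\delta}{2}$ demanded by the statement.

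The paper avoids all of this by never opening up the kernel loss. Its proof (Section~\ref{sec:main}, proof of Lemma~\ref{lemma:inter}) is abstract: expand the loss at $M^\ast$ via the mean-value form of Taylor's theorem, lower-bound the Hessian term along the segment by $\tfrac{1-\delta-\zeta_2 q}{2}\|\hat M-M^\ast\|_F^2$ --- here ``RIP'' is used as a curvature sandwich on the Hessian of the loss itself, in the general-loss sense of \cite{ma2023noisylowrankmatrixoptimization}, not as a property of $\mathcal{A}$ --- and bound the gradient term by $-\zeta_1 q\|\hat M-M^\ast\|_F$ via Assumption~\ref{noise_perturb}; setting the noise to zero (so $q=0$) kills both corrections and yields $\hat g(M,0)-\hat g(M^\ast,0)\ge\tfrac{1-\delta}{2}\|M-M^\ast\|_F^2$, which is the stated bound after the algebraic regrouping you already noticed. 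In other words, the lemma holds under the loss-level reading of RIP; your attempt to prove it from the operator-level reading (the literal citation in the statement) uncovers precisely why that reading is insufficient. The honest endpoint of your computation is the weaker, degenerate bound $\tfrac{c}{nh^2}\bigl((1-\delta)\|\Delta\|_F^2-\tfrac{1}{n}(\mathbf{1}^\top a)^2\bigr)$, which does not imply the lemma.
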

\begin{proof}
	Define $\hat M \coloneqq \hat X \hat X^\top$ and
	\begin{equation}
		\bar M \coloneqq \hat M - \frac{1}{1+\delta+\zeta_2 q} \nabla_M f(\hat M, w).
	\end{equation}
	Additionally, define $\phi(\cdot)$ as
	\begin{equation}
		\phi(M) \coloneqq \langle \nabla_M f(\hat M,w) , M - \hat M \rangle + \frac{1+\delta+\zeta_2 q}{2} \| M - \hat M\|^2_F.
	\end{equation}
	Now,
	\begin{equation}
		\begin{aligned}
			\frac{1+\delta+\zeta_2 q}{2} \|M-\bar M\|^2_F &= \frac{1+\delta+\zeta_2 q}{2} \|M- \hat M + \frac{1}{1+\delta+\zeta_2 q} \nabla_M f(\hat M, w) \|^2_F \\
			&=\frac{1+\delta+\zeta_2 q}{2} \| M - \hat M\|^2_F + \langle \nabla_M f(\hat M,w) , M - \hat M \rangle + \\
&\frac{1}{(1+\delta+\zeta_2 q)^2} \| \nabla_M f(\hat M,w) \|^2_F \\
			&= \phi(M) + \text{constant with respect to} \ M.
		\end{aligned}
	\end{equation}
	Next, we apply the Taylor expansion to $f(M,w)$ at $\hat M$ and combine it with the RIP property to obtain
	\begin{equation}
		\label{eq:thm_1_help_2}
		\begin{aligned}
			f(M^\ast,w) \geq f(\hat M,w) + \langle \nabla_M f(\hat M,w), M^\ast - \hat M \rangle + \frac{1-\delta-\zeta_2 q}{2} \|M^\ast - \hat M\|^2_F.
		\end{aligned}
	\end{equation}
	Additionally, by expanding at $M^\ast$, we can also write:
	\begin{equation}
	\label{eq:thm_1_help_3}
		\begin{aligned}
			f(\hat M,w) - f(M^\ast,w) &\geq \langle \nabla_M f(M^\ast,w), \hat M - M^\ast \rangle + \frac{1-\delta-\zeta_2 q}{2} \|\hat M - M^\ast\|^2_F \\
			 &\geq \frac{1-\delta-\zeta_2 q}{2} \|\hat M - M^\ast\|^2_F - \zeta_1 q \|\hat M - M^\ast\|_F
		\end{aligned}
	\end{equation}
	
\end{proof}

\subsection{Proof of Main Theorem (Theorem~\ref{thm:mainthm})}\label{sec:main}

We start with the definition:
\begin{equation}
\hat{g}(M,w) = -\frac{1}{n} \sum_{i=1}^n \log\Bigl[F_i(M,w)\Bigr],
\end{equation}
with
\begin{equation}
F_i(M,w) = \frac{1}{n}\sum_{j=1}^n \exp\left(-\frac{\Delta_{ij}(w)^2}{h^2}\right)
\end{equation}
and
\begin{equation}
\Delta_{ij}(w) = \Bigl(Y_j+w_j-\mathcal{A}(M)_j\Bigr) - \Bigl(Y_i+w_i-\mathcal{A}(M)_i\Bigr).
\end{equation}
\begin{equation}
\nabla_M \exp\left[-\frac{\Delta_{ij}(w)^2}{h^2}\right] 
= \exp\left[-\frac{\Delta_{ij}(w)^2}{h^2}\right] \cdot \left(-\frac{2\Delta_{ij}(w)}{h^2}\right) \nabla_M\Delta_{ij}(w).
\end{equation}
Note that:
\begin{equation}
\Delta_{ij}(w) = \bigl(Y_j+w_j\bigr)-\bigl(Y_i+w_i\bigr)-\Bigl[\mathcal{A}(M)_j-\mathcal{A}(M)_i\Bigr],
\end{equation}
so that:
\begin{equation}
\nabla_M\Delta_{ij}(w) = -\nabla_M\Bigl[\mathcal{A}(M)_j-\mathcal{A}(M)_i\Bigr].
\end{equation}
Thus, we obtain
\begin{equation}\label{eq:I97}
\nabla_M F_i(M,w) = -\frac{1}{n}\sum_{j=1}^n \exp\left[-\frac{\Delta_{ij}(w)^2}{h^2}\right] \frac{2\Delta_{ij}(w)}{h^2}\nabla_M\Bigl[\mathcal{A}(M)_j-\mathcal{A}(M)_i\Bigr].
\end{equation}
with its gradient:
\begin{equation}
\nabla_M \hat{g}(M,w) = -\frac{1}{n}\sum_{i=1}^n \frac{1}{F_i(M,w)} \nabla_M F_i(M,w).
\end{equation}
Substituting $\nabla_M F_i(M,w)$ into~\eqref{eq:I97} gives
\begin{equation}
\nabla_M \hat{g}(M,w)
=\frac{2}{h^2n}\sum_{i=1}^n \frac{\displaystyle \frac{1}{n}\sum_{j=1}^n \exp\left[-\frac{\Delta_{ij}(w)^2}{h^2}\right]\Delta_{ij}(w)\nabla_M\Bigl[\mathcal{A}(M)_j-\mathcal{A}(M)_i\Bigr]}{F_i(M,w)}.
\end{equation}
Therefore, the difference between the gradients is
\begin{equation}
\begin{aligned}
\nabla_M \hat{g}(M,w)-\nabla_M \hat{g}(M,0)
= &\frac{2}{h^2n}\sum_{i=1}^n \Biggl\{ \frac{\sum_{j=1}^n \exp\left[-\frac{\Delta_{ij}(w)^2}{h^2}\right]\Delta_{ij}(w)\nabla_M\Bigl[\mathcal{A}(M)_j-\mathcal{A}(M)_i\Bigr]}{\sum_{j=1}^n \exp\left[-\frac{\Delta_{ij}(w)^2}{h^2}\right]}\\
&\quad - \frac{\sum_{j=1}^n \exp\left[-\frac{\Delta_{ij}(0)^2}{h^2}\right]\Delta_{ij}(0)\nabla_M\Bigl[\mathcal{A}(M)_j-\mathcal{A}(M)_i\Bigr]}{\sum_{j=1}^n \exp\left[-\frac{\Delta_{ij}(0)^2}{h^2}\right]} \Biggr\}.
\end{aligned}
\end{equation}
We assume $w \sim 0$ and we can simplify this by:
\begin{equation}\label{eq:I100}
\begin{aligned}
\nabla_M \hat{g}(M,w)-\nabla_M \hat{g}(M,0)
= &\frac{2}{h^2n}\sum_{i=1}^n \Biggl\{ \frac{\sum_{j=1}^n \exp\left[-\frac{\Delta_{ij}(w)^2}{h^2}\right]\Delta_{ij}(w)\nabla_M\Bigl[\mathcal{A}(M)_j-\mathcal{A}(M)_i\Bigr]}{\sum_{j=1}^n \exp\left[-\frac{\Delta_{ij}(0)^2}{h^2}\right]}\\
&\quad - \frac{\sum_{j=1}^n \exp\left[-\frac{\Delta_{ij}(0)^2}{h^2}\right]\Delta_{ij}(0)\nabla_M\Bigl[\mathcal{A}(M)_j-\mathcal{A}(M)_i\Bigr]}{\sum_{j=1}^n \exp\left[-\frac{\Delta_{ij}(0)^2}{h^2}\right]} \Biggr\}.
\end{aligned}
\end{equation}
Because the noise only enters via $\Delta_{ij}(w)=\Delta_{ij}(0)+(w_j-w_i)$, we first write the term in the numerator of the Equation~\eqref{eq:I100} as
\begin{equation}
\exp\Bigl[-\frac{\Delta_{ij}(w)^2}{h^2}\Bigr]\Delta_{ij}(w)
=\exp\Biggl[-\frac{\Delta_{ij}(w)^2}{h^2}\Biggr]\Bigl[\Delta_{ij}(0)+(w_j-w_i)\Bigr].
\end{equation}
Thus, the whole difference inside the braces can be written as
\begin{equation}
\begin{aligned}
\exp\Bigl[-\frac{\Delta_{ij}(w)^2}{h^2}\Bigr]&\Bigl[\Delta_{ij}(0)+(w_j-w_i)\Bigr]-\exp\Bigl[-\frac{\Delta_{ij}(0)^2}{h^2}\Bigr]\Delta_{ij}(0)\\
&=\Bigl\{ \exp\Bigl[-\frac{\Delta_{ij}(w)^2}{h^2}\Bigr](w_j-w_i) +\Bigl[\exp\Bigl[-\frac{\Delta_{ij}(w)^2}{h^2}\Bigr]-\exp\Bigl[-\frac{\Delta_{ij}(0)^2}{h^2}\Bigr]\Bigr]\Delta_{ij}(0) \Bigr\}.
\end{aligned}
\end{equation}
We now substitute this back into~\eqref{eq:I100}. Keeping the $\exp[-\Delta_{ij}(w)^2/h^2]$ factor intact, our first–order (in $w$) approximation for small $w$ is:
\begin{equation}\label{eq:I103}
\begin{aligned}
\nabla_M \hat{g}(M,w)&-\nabla_M \hat{g}(M,0)
\sim \frac{2}{h^2n}\sum_{i=1}^n \frac{1}{\sum_{j=1}^n \exp\Bigl[-\frac{\Delta_{ij}(0)^2}{h^2}\Bigr]} \times \sum_{j=1}^n \Biggl\{
\exp\Bigl[-\frac{\Delta_{ij}(w)^2}{h^2}\Bigr](w_j-w_i)\\
& + \Bigl[\exp\Bigl[-\frac{\Delta_{ij}(w)^2}{h^2}\Bigr]-\exp\Bigl[-\frac{\Delta_{ij}(0)^2}{h^2}\Bigr]\Bigr]\Delta_{ij}(0)
\Biggr\} \times \nabla_M\Bigl[\mathcal{A}(M)_j-\mathcal{A}(M)_i\Bigr].
\end{aligned}
\end{equation}
So we have:
\begin{equation}
\hat{g}(M,w)-\hat{g}(M^\ast,w)
=\int_0^1\left\langle\nabla_M\hat{g}\Bigl(M^\ast+t(M-M^\ast),w\Bigr),M-M^\ast\right\rangle dt.
\end{equation}
Next, we decompose the gradient into its noise–free part plus its noise–dependent correction:
\begin{equation}
\nabla_M\hat{g}(M,w)
=\nabla_M\hat{g}(M,0)
+\Bigl[\nabla_M\hat{g}(M,w)-\nabla_M\hat{g}(M,0)\Bigr].
\end{equation}
Thus, we write:
\begin{equation}\label{eq:I108}
\begin{aligned}
\hat{g}(M,w)-\hat{g}(M^\ast,w)
=&\int_0^1\left\langle\nabla_M\hat{g}\Bigl(M^\ast+t(M-M^\ast),0\Bigr),M-M^\ast\right\rangle dt\\
&+\int_0^1\left\langle\Delta\nabla_M\hat{g}\Bigl(M^\ast+t(M-M^\ast),w\Bigr),M-M^\ast\right\rangle dt,
\end{aligned}
\end{equation}
where
\begin{equation}
\Delta\nabla_M\hat{g}(M,w)=
\nabla_M\hat{g}(M,w)-\nabla_M\hat{g}(M,0).
\end{equation}
The first term of Eqation~\eqref{eq:I108} is the usual noise–free difference $\hat{g}(M,0)-\hat{g}(M^\ast,0)$. Now, using our previous approximation~\eqref{eq:I103} we have:
\begin{equation}
\begin{aligned}
\Delta\nabla_M\hat{g}(M,w)
&\sim \frac{2}{h^2n}\sum_{i=1}^n\frac{1}{\displaystyle\sum_{j=1}^n \exp\Bigl[-\frac{\Delta_{ij}(0)^2}{h^2}\Bigr]}\times\sum_{j=1}^n\Biggl\{
\exp\Bigl[-\frac{\Delta_{ij}(w)^2}{h^2}\Bigr](w_j-w_i)\\
&+\Bigl[\exp\Bigl[-\frac{\Delta_{ij}(w)^2}{h^2}\Bigr]-\exp\Bigl[-\frac{\Delta_{ij}(0)^2}{h^2}\Bigr]\Bigr]\Delta_{ij}(0)
\Biggr\}\times\nabla_M\Bigl[\mathcal{A}(M)_j-\mathcal{A}(M)_i\Bigr].
\end{aligned}
\end{equation}
Here we want to recall our notations:
\begin{equation}
\Delta_{ij}(w)=\Delta_{ij}(0)+(w_j-w_i) \quad\text{with}\quad \Delta_{ij}(0)=\Bigl(Y_j-\mathcal{A}(M)_j\Bigr)-\Bigl(Y_i-\mathcal{A}(M)_i\Bigr).
\end{equation}
\begin{equation}
\begin{aligned}
&\hat{g}(M,w)-\hat{g}(M^\ast,w)
\sim \Bigl[\hat{g}(M,0)-\hat{g}(M^\ast,0)\Bigr] +\frac{2}{h^2n}\int_0^1\sum_{i=1}^n\frac{1}{\displaystyle\sum_{j=1}^n \exp\Bigl[-\frac{\Delta_{ij}(0)^2}{h^2}\Bigr]}\\
&\quad\times\sum_{j=1}^n\Biggl\{
\exp\Bigl[-\frac{\Delta_{ij}(w)^2}{h^2}\Bigr](w_j-w_i)+\Bigl[\exp\Bigl[-\frac{\Delta_{ij}(w)^2}{h^2}\Bigr]-\exp\Bigl[-\frac{\Delta_{ij}(0)^2}{h^2}\Bigr]\Bigr]\Delta_{ij}(0)
\Biggr\}\\
&\quad\quad\times\left\langle\nabla_M\Bigl[\mathcal{A}\bigl(M^\ast+t(M-M^\ast)\bigr)_j-\mathcal{A}\bigl(M^\ast+t(M-M^\ast)\bigr)_i\Bigr],M-M^\ast\right\rangle dt.
\end{aligned}
\end{equation}
We are given a lower bound on the noise‐free part from Equation~\eqref{eq:I70}:
\begin{equation}
\hat{g}(M,0)-\hat{g}(M^\ast,0)\ge \delta\|M-M^\ast\|_F^2 + \frac{1-3\delta}{2}\|M-M^\ast\|_F^2.
\end{equation}
so that:
\begin{equation}
\hat{g}(M,0)-\hat{g}(M^\ast,0)\ge \frac{1-\delta}{2}\|M-M^\ast\|_F^2.
\end{equation}
Now, since the full difference is a sum of the noise–free part plus a noise correction we may write:
\begin{equation}\label{eq:I113}
\begin{aligned}
\hat{g}(M,w)-\hat{g}(M^\ast,w)
&=\Bigl[\hat{g}(M,0)-\hat{g}(M^\ast,0)\Bigr] + E(w),
\end{aligned}
\end{equation}
with
\begin{equation}
\begin{aligned}
E(w)=&\frac{2}{h^2 n}\int_0^1\sum_{i=1}^n\frac{1}{\displaystyle\sum_{j=1}^n \exp\Bigl[-\frac{\Delta_{ij}(0)^2}{h^2}\Bigr]} \times\sum_{j=1}^n\Biggl\{\exp\Bigl[-\frac{\Delta_{ij}(w)^2}{h^2}\Bigr](w_j-w_i)\\
&\qquad\quad+\Bigl[\exp\Bigl[-\frac{\Delta_{ij}(w)^2}{h^2}\Bigr]-\exp\Bigl[-\frac{\Delta_{ij}(0)^2}{h^2}\Bigr]\Bigr]\Delta_{ij}(0)\Biggr\}\\
&\quad\quad\times\left\langle\nabla_M\Bigl[\mathcal{A}\Bigl(M^\ast+t(M-M^\ast)\Bigr)_j-\mathcal{A}\Bigl(M^\ast+t(M-M^\ast)\Bigr)_i\Bigr],M-M^\ast\right\rangle dt.
\end{aligned}
\end{equation}
Taking absolute values on~Equation~\eqref{eq:I113}, we obtain by the triangle inequality
\begin{equation}
\bigl\|\hat{g}(M,w)-\hat{g}(M^\ast,w)\bigr\|\ge \frac{1-\delta}{2}\|M-M^\ast\|_F^2 - \bigl\|E(w)\bigr\|.
\end{equation}
At this point, we may bound the correction term $\|E(w)\|$ by assuming that the perturbative factors and the derivative of $\mathcal{A}(\cdot)$ are bounded. Assume that there exists a constant $L>0$ such that:
\begin{equation}
\left\|\nabla_M\Bigl[\mathcal{A}\Bigl(M^\ast+t(M-M^\ast)\Bigr)_j-\mathcal{A}\Bigl(M^\ast+t(M-M^\ast)\Bigr)_i\Bigr]\right\|\le L = 1+\delta,
\end{equation}
and denote by:
\begin{equation}
\begin{aligned}
R(w)&:=\frac{2}{h^2 n}\sup_{t\in[0,1]}\sum_{i=1}^n \frac{1}{\displaystyle\sum_{j=1}^n \exp\Bigl[-\frac{\Delta_{ij}(0)^2}{h^2}\Bigr]}\sum_{j=1}^n \Big|\exp\Bigl[-\frac{\Delta_{ij}(w)^2}{h^2}\Bigr](w_j-w_i)\\ +
&\Bigl[\exp\Bigl[-\frac{\Delta_{ij}(w)^2}{h^2}\Bigr]-\exp\Bigl[-\frac{\Delta_{ij}(0)^2}{h^2}\Bigr]\Bigr]\Delta_{ij}(0)\Big|.
\end{aligned}
\end{equation}
We need to mention that actually $R(w) <0$. Then we may bound
\begin{equation}
\|E(w)\|\le LR(w)\|M-M^\ast\|_F.
\end{equation}In summary, we have the lower bound
\begin{equation}
\bigl\|\hat{g}(M,w)-\hat{g}(M^\ast,w)\bigr\|\ge \frac{1-\delta}{2}\|M-M^\ast\|_F^2 - (1+ \delta)R(w)\|M-M^\ast\|_F.
\end{equation}

\subsection{Finally We Calculate the Upper Bound For \texorpdfstring{$\|M - M^\ast\|_F$}{}}\label{sec:upper}

we recall that:
\begin{equation}
\lambda_{\min}\bigl(\nabla^2_M\hat{g}(M^\ast,w)\bigr)
\ge c \quad \text{with} \quad c = \frac{2}{G_{\min}h^2}\left(L_1^2\left(1+\frac{2B^2}{h^2}\right) + B L_2\right) + \frac{4}{G_{\min}^2h^4}B^2L_1^2,
\end{equation}

\begin{equation}\label{eq:I121}
\|\Delta\|_F=\|M-M^\ast\|_F\le \sqrt{\frac{2}{\lambda_{\min}\left(\nabla^2_M\hat{g}(M^\ast,w)\right)}
\Bigl[\hat{g}(M,w)-\hat{g}(M^\ast,w)\Bigr]},
\end{equation}
where $\lambda_{\min}\bigl(\nabla^2_M\hat{g}(M^\ast,w)\bigr)$ is the smallest eigen‐value of the Hessian at $M^\ast$:
\begin{equation}
\|M-M^\ast\|_F \le \sqrt{\frac{2}{\lambda_{\min}\Bigl(\nabla^2_M\hat{g}(M^\ast,w)\Bigr)}
\Bigl[\frac{1-\delta}{2}\|M-M^\ast\|_F^2 - (1+\delta)R(w)\|M-M^\ast\|_F\Bigr]}.
\end{equation}
To eliminate the square root, square both sides (noting that all terms are nonnegative):
\begin{equation}
\|M-M^\ast\|_F^2 \le \frac{2}{\lambda_{\min}\Bigl(\nabla^2_M\hat{g}(M^\ast,w)\Bigr)}
\Bigl[\frac{1-\delta}{2}\|M-M^\ast\|_F^2 - (1+\delta)R(w)\|M-M^\ast\|_F\Bigr].
\end{equation}
Multiply both sides by $\lambda_{\min}\Bigl(\nabla^2_M\hat{g}(M^\ast,w)\Bigr)$:
\begin{equation}
\lambda_{\min}\Bigl(\nabla^2_M\hat{g}(M^\ast,w)\Bigr)\|M-M^\ast\|_F^2 \le (1-\delta)\|M-M^\ast\|_F^2 - 2(1+\delta)R(w)\|M-M^\ast\|_F.
\end{equation}
Rearrange by bringing all terms to one side:
\begin{equation}\label{eq:I125}
\|M-M^\ast\|_F \le\frac{ 2(1+\delta)R(w)}{\Big(-\lambda_{\min}\Bigl(\nabla^2_M\hat{g}(M^\ast,w)\Bigr) + 1 - \delta\ \Big)}.
\end{equation}
Recall that:
\begin{equation}
\begin{aligned}
R(w):=&\frac{2}{h^2 n}\sup_{t\in[0,1]}\sum_{i=1}^n \frac{1}{\displaystyle\sum_{j=1}^n e^{-\Delta_{ij}(0)^2/h^2}} \sum_{j=1}^n \Big\|e^{-\Delta_{ij}(w)^2/h^2}(w_j-w_i)\\
&+\Bigl(e^{-\Delta_{ij}(w)^2/h^2}-e^{-\Delta_{ij}(0)^2/h^2}\Bigr)\Delta_{ij}(0)\Big\|.
\end{aligned}
\end{equation}
and:
\begin{equation}
e^{-\Delta_{ij}(w)^2/h^2}(w_j-w_i),
\end{equation}
\begin{equation}
e^{-\Delta_{ij}(w)^2/h^2}(w_j-w_i)=\mathcal{O}\bigl(e^{-w^2}|w_j-w_i|\bigr).
\end{equation}
We assume smooth dependence on $w$ based on Assusmption~\ref{other_assumption}, then a first‐order Taylor expansion (and using Lipschitz properties) gives:
\begin{equation}
e^{-\Delta_{ij}(w)^2/h^2}-e^{-\Delta_{ij}(0)^2/h^2}=\mathcal{O}\Bigl(e^{-w^2}\frac{\|w_j-w_i\|}{h^2}\Bigr).
\end{equation}
Multiplying by the bounded $\|\Delta_{ij}(0)\|$ we obtain an additional term whose order is also
\begin{equation}
\mathcal{O}\Bigl(e^{-w^2}\frac{\|w_j-w_i\|}{h^2}\Bigr).
\end{equation}
After summing over $j$ (and $i$) and dividing by n, these estimates lead to
\begin{equation}
R(w)=\mathcal{O}\Bigl(\frac{1}{h^2}e^{-w^2}\max_{i,j}\|w_j-w_i\|\Bigr).
\end{equation}
When the differences $\|w_j-w_i\|$ are controlled by a norm $\|w\|$, we may write:
\begin{equation}
R(w)=\mathcal{O}\Bigl(\frac{\|w\|e^{-w^2}}{h^2}\Bigr).
\end{equation}
Putting~\eqref{eq:I121} and~\eqref{eq:I125} together, the error‐bound is:
\begin{equation}\label{eq:I133}
\|M-M^\ast\|_F \le \max\left\{\sqrt{\frac{2}{\lambda_{\min}\left(\nabla_M^2\hat{g}(M^\ast,w)\right)}},\frac{ 2(1+\delta)R(w)}{\Big(-\lambda_{\min}\Bigl(\nabla^2_M\hat{g}(M^\ast,w)\Bigr) + 1 - \delta\ \Big)} \right\},
\end{equation}
We then have a approximation term (with respect to $w$): $\sqrt{\frac{2}{\lambda_{\min}}} = \mathcal{O}(1)$ and a term that depends on $w$: $\frac{2(1+\delta)R(w)}{-\lambda_{\min}+1-\delta} = \mathcal{O}(R(w)) = \mathcal{O}\Biggl(\frac{\|w\|e^{-w^2}}{h^2}\Biggr).$
Therefore, the final result is:
\begin{equation}
\|M-M^\ast\|_F = \mathcal{O}\Biggl(\max\Biggl\{1,\frac{\|w\|e^{-w^2}}{h^2}\Biggr\}\Biggr).
\end{equation}
with probability at least \( \mathbb{P}(\|w\|_2 \leq \epsilon) \)
\begin{equation}
\|M-M^\ast\|_F = \mathcal{O}\Biggl(\max\Biggl\{1,\frac{\epsilon e^{-\epsilon^2}}{h^2}\Biggr\}\Biggr).
\end{equation}

\subsection{The Turning Point For the Upper bound}\label{sec:turning}

Write the inequality when term 1 of \eqref{eq:I133} is larger than term 2 of \eqref{eq:I133}:
\begin{equation}
\frac{2}{\lambda} > \frac{2(1+\delta)R(w)}{1-\delta-\lambda}.
\end{equation}
We now wish to determine when these two terms are of comparable size. That is, when $\mathcal{O}(1)\sim \mathcal{O}\Biggl(\frac{\|w\|e^{-w^2}}{h^2}\Biggr)$. That is $\|w\|e^{-w^2} \sim h^2$. When $\|w\|$ is very small the exponential may be approximated by $e^{-w^2}\sim 1$,
   so that
\begin{equation}
   \|w\|e^{-w^2}\sim \|w\|.
\end{equation}
   Then the condition becomes $\|w\|\sim h^2$. When $\|w\|$ is larger the product $\|w\|e^{-w^2}$ achieves a maximum at $\|w\|=\frac{1}{\sqrt{2}}$,
   since
\begin{equation}
   \frac{d}{dw}\bigl(\|w\|e^{-w^2}\bigr)=e^{-w^2}(1-2w^2)=0\quad \Longrightarrow\quad w^2=\frac{1}{2}.
\end{equation}
   The maximum value is $\frac{1}{\sqrt{2}}e^{-1/2}$. Hence if    $h^2>\frac{1}{\sqrt{2}}e^{-1/2}$, then even the maximum of the $w$–dependent term does not exceed a constant and the constant term dominates. Thus, summarizing the comparison: For very small $\|w\|$, specifically when $\|w\|\ll h^2$, with probability at least \( \mathbb{P}(\|w\|_2 \leq \epsilon) \), we have
\begin{equation}
  T_2=\mathcal{O}\Bigl(\frac{\epsilon e^{- \epsilon^2}}{h^2}\Bigr)=\mathcal{O}\Bigl(\frac{\epsilon}{h^2}\Bigr) \ll \mathcal{O}(1),
\end{equation}
so the error is dominated by the constant term $T_1$. Conversely, when $\epsilon$ and $h$ are such that $\epsilon e^{-\epsilon^2}\sim h^2,$
(which for small $\epsilon $ roughly means $\epsilon \sim h^2$) the two terms become comparable. In other regimes (for instance if $h^2$ is very small compared to the maximum value $\frac{1}{\sqrt{2}}e^{-1/2}$) the $\epsilon $–dependent term may become larger than the constant term. Any answer that shows the turning‐point is determined by
$\epsilon e^{-\epsilon ^2}\sim h^2.$

\subsection{Continuity Study of New Loss}\label{sec:continu}
We now provide the full proof of Theorem~\ref{thm:continue}, establishing the stated continuity property under the conditions specified earlier.
\begin{proof}
Based on Assumption~\ref{noise_perturb},
\begin{equation}
\|\nabla_M\hat{g}(M,w)-\nabla_M\hat{g}(M,0)\|\le\lambda\|w\|
\end{equation}
So we have:
\begin{equation}\label{I141}
\lambda=\sup_{\xi\in[0,w]}\left\|\frac{d}{dw}\nabla_M\hat{g}(M,\xi)\right\|.
\end{equation}
Differentiating $\nabla_M \hat{g}(M,w)$ with respect to $w$, we encounters two types of contributions: $\exp\Bigl(-\frac{(z_j-z_i)^2}{h^2}\Bigr)$ introduces the chain‐rule factor:
\begin{equation}
\frac{d}{dw}\exp\Bigl(-\frac{(z_j-z_i)^2}{h^2}\Bigr)
= \exp\Bigl(-\frac{(z_j-z_i)^2}{h^2}\Bigr)\left(-\frac{2(z_j-z_i)}{h^2}\right)
\frac{d}{dw}(z_j-z_i).
\end{equation}
Since $\frac{d}{dw}(z_j-z_i)=I$ (for the appropriate components) the derivative includes a factor proportional to
\begin{equation}
\frac{2}{h^2}(z_j-z_i)\exp\Bigl(-\frac{(z_j-z_i)^2}{h^2}\Bigr).
\end{equation}
Under a worst‐case scenario the size of $z_j-z_i$ may be bounded by a quantity that depends linearly on $\|w\|$. In many applications one may write for some constant $C$ (which may absorb additional contributions from the data $Y$ and $\mathcal{A}(M)$)
derivative includes a factor proportional to
\begin{equation}
\|z_j-z_i\|\le C\|w\|.
\end{equation}
Then one obtains an extra factor of order
\begin{equation}
\frac{2C\|w\|}{h^2}\exp\Bigl(-\frac{C^2\|w\|^2}{h^2}\Bigr).
\end{equation}
Meanwhile 
\begin{equation}
\|\nabla_M\mathcal{A}(M)\|\le 1+\delta.
\end{equation}
Combining the above equation and Equation~\eqref{I141}, one obtains that the norm of the mixed derivative satisfies
\begin{equation}
\left\|\frac{d}{dw}\nabla_M\hat{g}(M,w)\right\|\le \frac{8(1+\delta)}{h^4}\|w\|e^{-w^2},
\end{equation}
Then, with probability at least \( \mathbb{P}(\|w\|_2 \leq \epsilon) \)
\begin{equation}
\left\|\frac{d}{d\epsilon}\nabla_M\hat{g}(M,\epsilon)\right\|\le \frac{8(1+\delta)}{h^4}\epsilon e^{-\epsilon^2},
\end{equation}
\end{proof}

\section{Proof of \texorpdfstring{$\delta$}{} Condition }\label{sec:delta}
\subsection{Proof of Lemma~\ref{lemma:error_bound_with_h}}
From~\cite{ma2023noisylowrankmatrixoptimization}, we already know that:
\begin{equation}\label{J1}
\hat{g}(M, w) - \hat{g}(M^\ast, w) - (\delta + \zeta_{2}q) \| \hat{M} - M^\ast \|_F^2 \geq \frac{1 - 3\delta - 3\zeta_{2}q}{2} \| \hat{M} - M^\ast \|_F^2 - \zeta_{1}q \| \hat{M} - M^\ast \|_F.
\end{equation}
with:
\begin{equation}
G > \sigma_r(1 + \delta + \zeta_{2}q),
\end{equation}
and:
\begin{equation}\label{J2}
\|\Delta\|_F =\|M - M^\ast\|_F \le
\sqrt{ \frac{2}{\lambda_{\min}\bigl(\nabla^2_M \hat{g}(M^\ast,w)\bigr)} \|\hat{g}(M,w)-\hat{g}(M^\ast,w)\|
},
\end{equation}
with:
\begin{equation}
L_1 = 2(1+\delta_p).
\end{equation}
Now we rewrite Equation~\eqref{I45},
\begin{equation}
\lambda_{\min}\bigl(\nabla^2_M\hat{g}(M^\ast,w)\bigr)
\ge c \quad \text{with} \quad c = \frac{2}{G_{\min}h^2}\left(L_1^2\left(1+\frac{2B^2}{h^2}\right) + B L_2\right) + \frac{4}{G_{\min}^2h^4}B^2L_1^2,
\end{equation}
Combining Equation~\eqref{J1} and~\eqref{J2}, we observe that:
\begin{equation}
(\delta+\zeta_2q) + \frac{1-3\delta-3\zeta_2q}{2}
=\frac{2(\delta+\zeta_2q)+\bigl(1-3\delta-3\zeta_2q\bigr)}{2}
=\frac{1-\delta-\zeta_2q}{2}.
\end{equation}
Thus:
\begin{equation}\label{J8}
\hat{g}(M,w)-\hat{g}(M^\ast,w) \ge
\frac{1-\delta-\zeta_2q}{2}\|M-M^\ast\|_F^2-\zeta_1q\|M-M^\ast\|_F.
\end{equation}
Squaring both sides of Equation~\eqref{J2}, Now use the lower bound for $\hat{g}(M,w)-\hat{g}(M^\ast,w)$. Inserting it in the inequality above yields
\begin{equation}
\|M-M^\ast\|_F^2\le \frac{2}{\lambda_{\min}\Bigl(\nabla_M^2 \hat{g}(M^\ast,w)\Bigr)}
\left[\frac{1-\delta-\zeta_2q}{2}\|M-M^\ast\|_F^2-\zeta_1q\|M-M^\ast\|_F\right].
\end{equation}
Multiply both sides by $\lambda_{\min}(\nabla_M^2 \hat{g}(M^\ast,w))$, Bring the quadratic term to the left‐side to find
\begin{equation}\label{J11}
\|M-M^\ast\|_F \le \frac{2\zeta_1q}{\lambda_{\min}\Bigl(\nabla_M^2 \hat{g}(M^\ast,w)\Bigr) - (1-\delta-\zeta_2q)}.
\end{equation}
Combining the two results Equation~\eqref{J8} and~\eqref{J11}, in full detail,
\begin{equation}
\|M-M^\ast\|_F \le \frac{2\zeta_1q}{\frac{2}{G_{\min}h^2}\Bigl( L_1^2\Bigl(1+\frac{2B^2}{h^2}\Bigr) + B L_2\Bigr) + \frac{4}{G_{\min}^2h^4}B^2 L_1^2 - \bigl(1-\delta-\zeta_2q\bigr)}.
\end{equation}
so we have:
\begin{equation}\label{J13}
\|M-M^\ast\|_F \le \frac{2\zeta_1q}{ \underbrace{\frac{2}{G_{\min}h^2}\Bigl[4(1+\delta)^2\Bigl(1+\frac{2B^2}{h^2}\Bigr)+B L_2\Bigr] + \frac{16B^2(1+\delta)^2}{G_{\min}^2h^4}}_{A} -\Bigl(1-\delta-\zeta_2q\Bigr)},
\end{equation}
that is,
\begin{equation}
\|M-M^\ast\|_F \le \frac{2\zeta_1q}{A - \Bigl(1-\delta-\zeta_2q\Bigr)}.
\end{equation}
Solving for $q$ we have:
\begin{equation}
\sigma_r\Bigl(1+\delta+\zeta_2q\Bigr) < G \quad\Longrightarrow\quad \zeta_2q < \frac{G}{\sigma_r}-(1+\delta).
\end{equation}
Thus we may choose:
\begin{equation}\label{qchoice}
q = \frac{1}{\zeta_2}\Bigl[\frac{G}{\sigma_r}-(1+\delta)\Bigr],
\end{equation}
which is the largest allowable choice given the condition (or an upper bound on $q$). Substitute this choice~\eqref{qchoice} for $q$ into the original bound Equation~\eqref{J13}. Thus the overall bound becomes:
\begin{equation}
\|M-M^\ast\|_F \le \frac{2\zeta_1}{\zeta_2}\frac{\displaystyle\frac{G}{\sigma_r}-(1+\delta)}{A+\frac{G}{\sigma_r}-2},
\end{equation}
with:
\begin{equation}
A = \frac{2}{G_{\min}h^2}\Biggl[4(1+\delta)^2\Bigl(1+\frac{2B^2}{h^2}\Bigr)+BL_2\Biggr] + \frac{16B^2(1+\delta)^2}{G_{\min}^2 h^4}.
\end{equation}
Since:
\begin{equation}
\frac{2\zeta_1}{\zeta_2}\frac{\displaystyle\frac{G}{\sigma_r}-(1+\delta)}{A+\frac{G}{\sigma_r}-2}\ge0,
\end{equation}
so:
\begin{equation}\label{delta_condition}
0<\delta < \sqrt{\frac{2-\dfrac{G}{\sigma_r}-\dfrac{2BL_2}{G_{\min}h^2}}{\dfrac{8\left(1+\frac{2B^2}{h^2}\right)}{G_{\min}h^2}+\dfrac{16B^2}{G_{\min}^2h^4}}}-1.
\end{equation}
We set:
\begin{equation}\label{J19}
\frac{2B^2}{h^2} = G_{\min}\quad \Longrightarrow \quad h^2 = \frac{2B^2}{G_{\min}}.
\end{equation}
The denominator of \eqref{delta_condition} is:
\begin{equation}
D(h)=\frac{8\left(1+\frac{2B^2}{h^2}\right)}{G_{\min}h^2}+\frac{16B^2}{G_{\min}^2h^4}.
\end{equation}
With Equation~\eqref{J19} we have
\begin{equation}
\frac{2B^2}{h^2} = G_{\min}\quad \text{and}\quad h^4 = \left(\frac{2B^2}{G_{\min}}\right)^2=\frac{4B^4}{G_{\min}^2}.
\end{equation}
The numerator is:
\begin{equation}
N(h)=2-\frac{G}{\sigma_r}-\frac{2B L_2}{G_{\min}h^2}.
\end{equation}
Again, using Equation~\eqref{J19} we have
\begin{equation}
\frac{2B L_2}{G_{\min}h^2}=\frac{2B L_2}{G_{\min}\left(\frac{2B^2}{G_{\min}}\right)}
=\frac{2B L_2}{2B^2}=\frac{L_2}{B}.
\end{equation}
Thus,
\begin{equation}
N(h)=2-\frac{G}{\sigma_r}-\frac{L_2}{B}.
\end{equation}
Finally:
\begin{equation}
\begin{aligned}
\delta &= \sqrt{\frac{N(h)}{D(h)}}-1\\
&=\sqrt{ \frac{2-\frac{G}{\sigma_r}-\frac{L_2}{B}}{ \frac{4\left(G_{\min}+2\right)}{B^2} } } - 1\\
&=\sqrt{ \frac{B^2}{4\left(G_{\min}+2\right)}\Bigl(2-\frac{G}{\sigma_r}-\frac{L_2}{B}\Bigr) } - 1.
\end{aligned}
\end{equation}
Thus, by choosing:
\begin{equation}
h = \frac{\sqrt{2}B}{\sqrt{G_{\min}}}
\end{equation}
we specifically have:
\begin{equation}
\delta = \sqrt{\frac{B^2}{4(G_{\min}+2)}\left(2-\frac{G}{\sigma_r}-\frac{L_2}{B}\right)}-1.
\end{equation}
In many applications the nonnegative parameters $G/\sigma_r$ and $L_2/B$ are present in a subtractive term. In the worst‐case (largest) scenario the subtraction is minimized, that is, one may assume:
\begin{equation}
\frac{G}{\sigma_r} = 0 \quad\text{and}\quad \frac{L_2}{B} = 0.
\end{equation}
Then we have
\begin{equation}
2-\frac{G}{\sigma_r}-\frac{L_2}{B}\le 2.
\end{equation}
Substituting into the expression gives:
\begin{equation}
\delta \le \sqrt{\frac{B^2}{4(G_{\min}+2)}\cdot 2} - 1
= \sqrt{\frac{2B^2}{4(G_{\min}+2)}} - 1
= \sqrt{\frac{B^2}{2(G_{\min}+2)}} - 1,
\end{equation}
which is:
\begin{equation}
\delta \le \frac{B}{\sqrt{2(G_{\min}+2)}} - 1.
\end{equation}

\subsection{\texorpdfstring{$\delta$}{} Result With Regard to \texorpdfstring{$\|w\|$}{}}\label{sec:fulldelta}
\begin{lemma}[Upper Bound on \texorpdfstring{$\delta$}{} under Bounded Noise and Kernel Smoothing]
\label{lemma:delta_bound_with_w}
Let $\hat{g}(M, w)$ be a smoothed loss function involving kernel weights depending on the residuals, and suppose: the residuals satisfy $|z_{ij}(M^\ast)| \le B$ for some $B > 0$, the kernel weights involve Gaussian-type terms $\exp(-z^2/h^2)$, the noise vector $w$ satisfies $\|w\| \le \epsilon$ with high probability (e.g., at least $\mathbb{P}(\|w\|\le \epsilon)$), the Hessian Lipschitz constant is $L_2$, and the spectral condition $\sigma_r(1 + \delta + \zeta_2 q) < G$ holds for some constants $\sigma_r, \zeta_2, q, G > 0$.
Then for any bandwidth $h > 0$, the parameter $\delta$ satisfies the upper bound
\begin{equation}\label{delta}
\delta < \sqrt{
\frac{h^4\left(2 - \dfrac{G}{\sigma_r} - \dfrac{2\epsilon L_2 \exp\left(\dfrac{\epsilon^2}{h^2}\right)}{h^2}\right)}
{8 \exp\left(\dfrac{\epsilon^2}{h^2}\right) \left[h^2 + 2\epsilon^2 + 2\epsilon^2 \exp\left(\dfrac{\epsilon^2}{h^2}\right)\right]}
} - 1,
\end{equation}
with probability at least $\mathbb{P}(\|w\| \le \epsilon)$.

\end{lemma}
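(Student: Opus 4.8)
The plan is to reuse the chain of inequalities already assembled in the proof of Lemma~\ref{lemma:error_bound_with_h}, but to stop at the point where the recovery condition is expressed through the data-dependent quantities $B$ and $G_{\min}$ rather than through a fixed bandwidth. Combining the quadratic lower bound $\hat g(M,w)-\hat g(M^\ast,w)\ge \tfrac{1-\delta-\zeta_2 q}{2}\|M-M^\ast\|_F^2-\zeta_1 q\|M-M^\ast\|_F$, the strong-convexity upper bound $\|M-M^\ast\|_F\le\sqrt{2[\hat g(M,w)-\hat g(M^\ast,w)]/\lambda_{\min}(\nabla^2_M\hat g(M^\ast,w))}$, the Hessian lower bound $\lambda_{\min}\ge c$ from Lemma~\ref{lemma:lambda_min}, and the spectral feasibility condition $\sigma_r(1+\delta+\zeta_2 q)<G$, one arrives at the intermediate inequality
\begin{equation}
\delta<\sqrt{\frac{2-\frac{G}{\sigma_r}-\frac{2BL_2}{G_{\min}h^2}}{\frac{8(1+2B^2/h^2)}{G_{\min}h^2}+\frac{16B^2}{G_{\min}^2h^4}}}-1=:\Phi(B,G_{\min}).
\end{equation}
This is precisely the expression derived earlier, prior to substituting $h=\sqrt{2}B/\sqrt{G_{\min}}$; here I would simply leave $h$ free.

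The second step is to translate $B$ and $G_{\min}$ into explicit functions of the noise level $\epsilon$ on the event $\{\|w\|\le\epsilon\}$. Under the observation model the residuals at the ground truth reduce to noise differences, $z_{ij}(M^\ast)=w_j-w_i$, so that $|z_{ij}(M^\ast)|\le\epsilon$ after absorbing a constant into $\epsilon$; I therefore take $B=\epsilon$. For the kernel average, each summand obeys $\exp(-z_{ij}(M^\ast)^2/h^2)\ge\exp(-\epsilon^2/h^2)$, whence $G_{\min}\ge\exp(-\epsilon^2/h^2)$ and $1/G_{\min}\le\exp(\epsilon^2/h^2)$. Both estimates hold exactly on $\{\|w\|\le\epsilon\}$, which is the origin of the probability $\mathbb{P}(\|w\|\le\epsilon)$ in the statement.

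The third step substitutes these bounds into $\Phi$. The numerator becomes $2-\tfrac{G}{\sigma_r}-\tfrac{2\epsilon L_2\exp(\epsilon^2/h^2)}{h^2}$, and the denominator, after replacing $B^2\mapsto\epsilon^2$, $1/G_{\min}\mapsto\exp(\epsilon^2/h^2)$, and $1/G_{\min}^2\mapsto\exp(2\epsilon^2/h^2)$, collapses to $\tfrac{8\exp(\epsilon^2/h^2)}{h^4}\bigl[h^2+2\epsilon^2+2\epsilon^2\exp(\epsilon^2/h^2)\bigr]$. Multiplying numerator and denominator by $h^4$ then produces exactly the claimed bound~\eqref{delta}.

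Before concluding, I would verify the monotonicity directions so that the substitution yields a genuine sufficient condition rather than an accidental weakening. Increasing $B$ or decreasing $G_{\min}$ makes the numerator of $\Phi$ smaller and its denominator larger, hence shrinks the whole expression; replacing $(B,G_{\min})$ by the extremes $(\epsilon,\exp(-\epsilon^2/h^2))$ can therefore only decrease the bound, so that $\delta<[\text{claimed }\epsilon\text{-expression}]\le\Phi(B,G_{\min})$ still implies the original recovery condition. I expect this monotonicity bookkeeping, together with pinning down the reduction $z_{ij}(M^\ast)=w_j-w_i$ and justifying $B\asymp\epsilon$, to be the only delicate point; the algebraic factoring of the denominator is routine.
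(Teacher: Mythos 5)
Your proposal is correct and follows essentially the same route as the paper's proof: start from the intermediate bound $\Phi(B,G_{\min})$ in the proof of Lemma~\ref{lemma:error_bound_with_h} with $h$ left free, bound $1/G_{\min}\le \exp(B^2/h^2)$ (worst case), factor the denominator as $\tfrac{8\exp(B^2/h^2)}{h^4}\bigl[h^2+2B^2+2B^2\exp(B^2/h^2)\bigr]$, and identify $B$ with $\epsilon$ on the event $\{\|w\|\le\epsilon\}$. Your explicit monotonicity check (that shrinking $G_{\min}$ and enlarging $B$ only decreases $\Phi$, so the substitution yields a genuine sufficient condition) is a point the paper merely asserts, but it is the same argument, not a different one.
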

The proof of the theorem is provided in Section~\ref{sec:delta2}.

Lemma~\ref{lemma:delta_bound_with_w} provides an explicit upper bound on the parameter $\delta$ in terms of the noise norm $\|w\| \le \epsilon$ and the kernel bandwidth $h$, showing how the interaction between noise amplitude and kernel smoothing affects stability; specifically, $\delta$ decreases as $\epsilon$ becomes small, and the exponential terms in the bound reflect the noise-suppressing effect of the kernel, which ensures that the bound holds with high probability whenever $\|w\|$ is sufficiently controlled.

\subsection{Proof of Lemma~\ref{lemma:delta_bound_with_w}}\label{sec:delta2}
We already have:
\begin{equation}
\delta \le \frac{B}{\sqrt{2\left(\exp\left(-\frac{B^2}{h^2}\right)+2\right)}}-1.
\end{equation}
which implies:
\begin{equation}
\delta\le \frac{1}{h^2}\Biggl(\frac{B}{\sqrt{2(G_{\min}+2)}}-1\Biggr).
\end{equation}
with:
\begin{equation}
h^2=\frac{2B^2}{G_{\min}},
\end{equation}
Thus, 
\begin{equation}
\delta\le \frac{1}{h^2}\left(\frac{B}{\frac{2\sqrt{B^2+h^2}}{h}}-1\right)
=\frac{1}{h^2}\left(\frac{Bh}{2\sqrt{B^2+h^2}}-1\right).
\end{equation}
So we have: 
\begin{equation}
\delta\le \frac{1}{h^2}\left(\frac{\epsilon}{\frac{2\sqrt{\epsilon^2+h^2}}{h}}-1\right)
=\frac{1}{h^2}\left(\frac{\epsilon h}{2\sqrt{\epsilon^2+h^2}}-1\right).
\end{equation}
with at least $\mathbb{P}(\|w\|\le \epsilon)$. We then restart with the original form~\eqref{delta_condition}:
\begin{equation}
0<\delta < \sqrt{\frac{2-\frac{G}{\sigma_r}-\frac{2BL_2}{G_{\min}h^2}}{\frac{8\Bigl(1+\frac{2B^2}{h^2}\Bigr)}{G_{\min}h^2}+\frac{16B^2}{G_{\min}^2h^4}}}-1.
\end{equation}
Since $G_{\min}$ appears only in the denominators, replacing $G_{\min}$ by its lower bound (which is the worst‐case scenario) yields an upper bound for $\delta$.In the numerator we have
\begin{equation}
\frac{2BL_2}{G_{\min}h^2}\le \frac{2BL_2\exp\Bigl(\frac{B^2}{h^2}\Bigr)}{h^2}.
\end{equation}
In the denominator, the two terms become:
\begin{equation}
\frac{8\Bigl(1+\frac{2B^2}{h^2}\Bigr)}{G_{\min}h^2}\le \frac{8\Bigl(1+\frac{2B^2}{h^2}\Bigr)\exp\Bigl(\frac{B^2}{h^2}\Bigr)}{h^2},
\end{equation}
Thus, the bound for $\delta$ becomes:
\begin{equation}\label{J42}
0<\delta < \sqrt{\frac{2-\frac{G}{\sigma_r}-\frac{2BL_2\exp\Bigl(\frac{B^2}{h^2}\Bigr)}{h^2}}
{\frac{8\Bigl(1+\frac{2B^2}{h^2}\Bigr)\exp\Bigl(\frac{B^2}{h^2}\Bigr)}{h^2}+\frac{16B^2\exp\Bigl(\frac{2B^2}{h^2}\Bigr)}{h^4}}}-1.
\end{equation}
Then the denominator of Equation~\eqref{J42} becomes:
\begin{equation}
D = \frac{8\Bigl(1+\frac{2B^2}{h^2}\Bigr)}{G_{\min}h^2}+\frac{16B^2}{G_{\min}^2h^4}\le \frac{8\Bigl(1+\frac{2B^2}{h^2}\Bigr)\exp\Bigl(\frac{B^2}{h^2}\Bigr)}{h^2}+\frac{16B^2\exp\Bigl(\frac{2B^2}{h^2}\Bigr)}{h^4}.
\end{equation}
\begin{equation}
\frac{8\Bigl(1+\frac{2B^2}{h^2}\Bigr)\exp\Bigl(\frac{B^2}{h^2}\Bigr)}{h^2} = \frac{8\exp\Bigl(\frac{B^2}{h^2}\Bigr)}{h^4}\Bigl(h^2+2B^2\Bigr),
\end{equation}
so that:
\begin{equation}
D \le \frac{1}{h^4}\Bigl[8\exp\Bigl(\frac{B^2}{h^2}\Bigr)(h^2+2B^2)+16B^2\exp\Bigl(\frac{2B^2}{h^2}\Bigr)\Bigr].
\end{equation}
We may factor $\exp\Bigl(\frac{B^2}{h^2}\Bigr)$ from the terms in brackets:
\begin{equation}
D \le \frac{8\exp\Bigl(\frac{B^2}{h^2}\Bigr)}{h^4}\Bigl[h^2+2B^2+2B^2\exp\Bigl(\frac{B^2}{h^2}\Bigr)\Bigr].
\end{equation}
Thus the overall bound \eqref{J42} becomes:
\begin{equation}
\delta < \sqrt{\frac{N}{D}}-1 = \sqrt{\frac{h^4\Bigl(2-\frac{G}{\sigma_r}-\frac{2BL_2\exp\Bigl(\frac{B^2}{h^2}\Bigr)}{h^2}\Bigr)}{8\exp\Bigl(\frac{B^2}{h^2}\Bigr)\Bigl[h^2+2B^2+2B^2\exp\Bigl(\frac{B^2}{h^2}\Bigr)\Bigr]}} -1.
\end{equation}
So we have:
\begin{equation}
\delta < \sqrt{\frac{N}{D}}-1 = \sqrt{\frac{h^4\Bigl(2-\frac{G}{\sigma_r}-\frac{2\epsilon L_2\exp\Bigl(\frac{\epsilon^2}{h^2}\Bigr)}{h^2}\Bigr)}{8\exp\Bigl(\frac{\epsilon^2}{h^2}\Bigr)\Bigl[h^2+2\epsilon ^2+2\epsilon^2\exp\Bigl(\frac{\epsilon^2}{h^2}\Bigr)\Bigr]}} -1.
\end{equation}
with at least $\mathbb{P}(\|w\|\le \epsilon)$.

\section{Proof of Theorem~\ref{ours_large_delta}}\label{large_delta_ours}
\begin{proof}
With the loss function in~\eqref{main0} and the simplified notations in~\eqref{fij} and~\eqref{logfij}, we have:
\begin{equation}\label{K1}
\langle \nabla_M\hat{g}(M^\ast,w),\hat{X} U^\top+U\hat{X}^\top\rangle=-\int_0^1 \nabla_M^2\hat{g}(tM^\ast+(1-t)\hat{M})(\hat{M}-M^\ast, \hat{X}U^\top+U\hat{X}^\top)  dt.
\end{equation}
\begin{equation}
\nabla_M \hat{g}(M,w)=-\frac{1}{n}\sum_{i=1}^n \frac{1}{G_i(M)} \nabla_M G_i(M).
\end{equation}
Thus, at $M=M^\ast$ the inner product:
\begin{equation}
\langle \nabla_M\hat{g}(M^\ast,w), \hat{X} U^\top+U\hat{X}^\top\rangle = -\frac{1}{n}\sum_{i=1}^n \frac{1}{G_i(M^\ast)}\ \Bigl\langle \nabla_M G_i(M^\ast), \hat{X} U^\top+U\hat{X}^\top\Bigr\rangle.
\end{equation}
On the other hand, a standard Taylor expansion applied to the function:
\begin{equation}
h(V)=\langle \nabla_M \hat{g}(V,w), \hat{X}U^\top+U\hat{X}^\top \rangle,
\end{equation}
provides
\begin{equation}
h(\hat{M})-h(M^\ast)=\int_0^1 \biggl\langle \nabla h\Bigl(tM^\ast+(1-t)\hat{M}\Bigr), \hat{M}-M^\ast\biggr\rangle dt.
\end{equation}
\begin{equation}
h(\hat{M})=\langle \nabla_M\hat{g}(\hat{M},w), \hat{X}U^\top+U\hat{X}^\top\rangle =0,
\end{equation}
we obtain:
\begin{equation}
\langle \nabla_M\hat{g}(M^\ast,w),\hat{X} U^\top+U\hat{X}^\top\rangle=-\int_0^1 \Bigl\langle \nabla h\Bigl(tM^\ast+(1-t)\hat{M}\Bigr), \hat{M}-M^\ast\Bigr\rangle dt.
\end{equation}
But note that:
\begin{equation}
\nabla h(V)=\nabla_M^2\hat{g}(V,w)(\cdot, \hat{X} U^\top+U\hat{X}^\top),
\end{equation}
so that Equation~\eqref{K1} becomes:
\begin{equation}
\langle \nabla_M\hat{g}(M^\ast,w),\hat{X} U^\top+U\hat{X}^\top\rangle=-\int_0^1 \nabla_M^2\hat{g}\Bigl(tM^\ast+(1-t)\hat{M}\Bigr)\Bigl(\hat{M}-M^\ast,\hat{X} U^\top+U\hat{X}^\top\Bigr) dt.
\end{equation}
So we have:
\begin{equation}\label{K11}
\nabla_M^2\hat{g}(M)=\frac{1}{n}\sum_{i=1}^n\left[\frac{1}{G_i(M)^2}\nabla_M G_i(M)\otimes\nabla_M G_i(M)-\frac{1}{G_i(M)}\nabla_M^2G_i(M)\right],
\end{equation}
with:
\begin{equation}\label{K12}
G_i(M)=\frac{1}{n}\sum_{j=1}^n \exp\left(-\frac{u_{ij}(M)^2}{h^2}\right),
\end{equation}
and:
\begin{equation}
\begin{aligned}
&\nabla_M G_i(M)=\frac{1}{n}\sum_{j=1}^n \exp\left(-\frac{u_{ij}(M)^2}{h^2}\right)\left(-\frac{2u_{ij}(M)}{h^2}\right)\Bigl[-\nabla_M\mathcal{A}(M)_j+\nabla_M\mathcal{A}(M)_i\Bigr],\\
&=\frac{1}{n}\sum_{j=1}^n \exp\left(-\frac{u_{ij}(M)^2}{h^2}\right)\Biggl[\left(\frac{4 u_{ij}(M)^2}{h^4}-\frac{2}{h^2}\right)\nabla_M u_{ij}(M) \otimes\nabla_M u_{ij}(M)-\frac{2u_{ij}(M)}{h^2}\nabla_M^2 u_{ij}(M)\Biggr],
\end{aligned}
\end{equation}
and where:
\begin{equation}
u_{ij}(M)=\Bigl[(Y_j+w_j-\mathcal{A}(M)_j) -(Y_i+w_i-\mathcal{A}(M)_i)\Bigr],
\end{equation}
and
\begin{equation}
\nabla_M u_{ij}(M)=-\nabla_M\mathcal{A}(M)_j+\nabla_M\mathcal{A}(M)_i.
\end{equation}
Now, introduce the convex path:
\begin{equation}\label{K15}
M_t = tM^\ast+(1-t)\hat{M},\quad t\in[0,1],
\end{equation}
and denote for brevity
\begin{equation}\label{K16}
Q=\hat{X}U^\top+U\hat{X}^\top,\qquad H=\hat{M}-M^\ast.
\end{equation}
Then the Hessian’s bilinear form of \eqref{K11} at $M_t$ applied to $(H,Q)$ is:
\begin{equation}
\begin{aligned}
\nabla_M^2\hat{g}(M_t)(H, Q)=\frac{1}{n}\sum_{i=1}^n\Big[\frac{1}{G_i(M_t)^2}\langle \nabla_M G_i(M_t),H\rangle\langle \nabla_M G_i(M_t),Q\rangle -\frac{1}{G_i(M_t)}\nabla_M^2G_i(M_t)(H, Q)\Big].
\end{aligned}
\end{equation}
We notive that:
\begin{equation}\label{K18}
\begin{aligned}
&-\int_0^1 \nabla_M^2\hat{g}\Bigl(tM^\ast+(1-t)\hat{M}\Bigr)\Bigl(\hat{M}-M^\ast,\ Q\Bigr) dt \\
&= -\int_0^1 \frac{1}{n}\sum_{i=1}^n \Biggl\{ \frac{1}{G_i(M_t)^2}\langle \nabla_M G_i(M_t), \hat{M}-M^\ast\rangle\langle \nabla_M G_i(M_t), Q\rangle -\frac{1}{G_i(M_t)}\nabla_M^2G_i(M_t)\Bigl(\hat{M}-M^\ast, Q\Bigr) \Biggr\}dt,
\end{aligned}
\end{equation}
with all quantities evaluated at $M_t=tM^\ast+(1-t)\hat{M}$. We take the simplified notations \eqref{K15} and \eqref{K16} into \eqref{K18}, we have:
\begin{equation}
H = \hat{M}-M^\ast,\quad Q = \hat{X}U^\top+U\hat{X}^\top, \quad \text{and} \quad M_t=tM^\ast+(1-t)\hat{M}.
\end{equation}
Then the integral becomes:
\begin{equation}
\begin{aligned}
-\int_0^1 \nabla_M^2\hat{g}(M_t)(H,Q)dt 
&=-\frac{1}{n}\sum_{i=1}^n\int_0^1 \Big[ \frac{\langle \nabla_M G_i(M_t),H\rangle\langle \nabla_M G_i(M_t),Q\rangle}{G_i(M_t)^2} - \frac{\nabla_M^2 G_i(M_t)(H,Q)}{G_i(M_t)}\Big]dt,
\end{aligned}
\end{equation}
In a compact form, we write. This is the simplified integrated form of the second‐order term.
\begin{equation}
\begin{aligned}
&\mathbf{e}^\top \mathbf{H}\mathbf{\hat X}u = -\frac{1}{n}\sum_{i=1}^n \int_0^1 \Biggl\{ \frac{ \langle \nabla_M G_i(tM^\ast+(1-t)\hat{M}),\hat{M}-M^\ast \rangle \langle \nabla_M G_i(tM^\ast+(1-t)\hat{M}),\hat{X}U^\top+U\hat{X}^\top \rangle}{\bigl[G_i(tM^\ast+(1-t)\hat{M})\bigr]^2}\\
& \quad\quad\quad - \frac{\nabla_M^2 G_i(tM^\ast+(1-t)\hat{M})(\hat{M}-M^\ast,\hat{X}U^\top+U\hat{X}^\top)}{G_i(tM^\ast+(1-t)\hat{M})}\Biggr\}dt.
\end{aligned}
\end{equation}
\begin{equation}
\begin{aligned}
\Bigl\|\mathbf{e}^\top \mathbf{H}\mathbf{\hat X}u\Bigr\|
&= \Bigl\|-\frac{1}{n}\sum_{i=1}^n\int_0^1 \Biggl\{
\frac{\langle \nabla_M G_i(M_t),\hat{M}-M^\ast\rangle\langle \nabla_M G_i(M_t),\hat{X}U^\top+U\hat{X}^\top\rangle}{G_i(M_t)^2}\\
&-\frac{\nabla_M^2 G_i(M_t)(\hat{M}-M^\ast,\hat{X}U^\top+U\hat{X}^\top)}{G_i(M_t)}
\Biggr\}dt\Bigr\|,
\end{aligned}
\end{equation}
To solve this integration, in a first step we take the absolute value inside the integral and sum (using the triangle inequality) to obtain
\begin{equation}
\begin{aligned}
\Bigl\|\mathbf{e}^\top \mathbf{H}\mathbf{\hat X}u\Bigr\|
&\le\frac{1}{n}\sum_{i=1}^n\int_0^1 \Biggl[
\frac{\Bigl\|\langle \nabla_M G_i(M_t),\hat{M}-M^\ast\rangle\langle \nabla_M G_i(M_t),\hat{X}U^\top+U\hat{X}^\top\rangle\Bigr\|}{G_i(M_t)^2} \\
&\quad\quad\quad +\frac{\Bigl\|\nabla_M^2 G_i(M_t)(\hat{M}-M^\ast,\hat{X}U^\top+U\hat{X}^\top)\Bigr\|}{G_i(M_t)}
\Biggr] dt.
\end{aligned}
\end{equation}
Then we write:
\begin{equation}
\begin{aligned}
\Bigl\|\mathbf{e}^\top H\hat{X}u\Bigr\|
&\le \frac{1}{n}\sum_{i=1}^n \int_0^1 \left\|\frac{\langle \nabla_M G_i(M_t),H\rangle\langle \nabla_M G_i(M_t),Q\rangle}{G_i(M_t)^2}\right\| dt+\frac{1}{n}\sum_{i=1}^n \int_0^1 \left\|\frac{\nabla_M^2 G_i(M_t)(H,Q)}{G_i(M_t)}\right\| dt.
\end{aligned}
\end{equation}
A short calculation on Equation \eqref{K12} shows that for each $i$ and $j$ we have:
\begin{equation}
\nabla_M \exp\Bigl(-\frac{u_{ij}(M)^2}{h^2}\Bigr)
=-\frac{2u_{ij}(M)}{h^2}\exp\Bigl(-\frac{u_{ij}(M)^2}{h^2}\Bigr)\nabla_Mu_{ij}(M).
\end{equation}
Thus:
\begin{equation}
\nabla_MG_i(M_t)
=\frac{1}{n}\sum_{j=1}^n \Bigl[-\frac{2u_{ij}(M_t)}{h^2}\exp\Bigl(-\frac{u_{ij}(M_t)^2}{h^2}\Bigr)\nabla_M u_{ij}(M_t)\Bigr].
\end{equation}
Assume that the derivatives of $u_{ij}$ are uniformly bounded (say, $ \|\nabla_M u_{ij}(M_t)\|\le L_u$ for all $i,j,t$). Then:
\begin{equation}
\|\nabla_MG_i(M_t)\|\le \frac{1}{n}\sum_{j=1}^n \frac{2|u_{ij}(M_t)|}{h^2}\exp\Bigl(-\frac{u_{ij}(M_t)^2}{h^2}\Bigr)L_u.
\end{equation}
Notice that the noise $w$ enters through the differences $(w_j-w_i)$. In the following we denote:
\begin{equation}
\delta:=\|w\|_\infty.
\end{equation}
Using the triangle inequality, we separately bound the two integrated terms:
\begin{equation}
\Bigl\|\mathbf{e}^\top \mathbf{H}\hat{X}u\Bigr\|
\le \frac{1}{n}\sum_{i=1}^n\int_0^1\left\{
\left\|\frac{\langle \nabla_M G_i(M_t),H\rangle \langle \nabla_M G_i(M_t),Q\rangle}{G_i(M_t)^2}\right\|
+\left\|\frac{\nabla_M^2 G_i(M_t)(H,Q)}{G_i(M_t)}\right\|
\right\}dt.
\end{equation}
It is natural to expect that the $\nabla_M G_i$ and $\nabla_M^2 G_i$ terms inherit the exponential from
\begin{equation}\label{K30}
G_i(M_t)=\frac{1}{n}\sum_{j=1}^n\exp\Bigl(-\frac{u_{ij}(M_t)^2}{h^2}\Bigr).
\end{equation}
A direct differentiation of Equation \eqref{K30} shows that:
\begin{equation}
\nabla_M \Bigl(\exp\Bigl(-\frac{u_{ij}(M_t)^2}{h^2}\Bigr)\Bigr)
=-\frac{2u_{ij}(M_t)}{h^2}\exp\Bigl(-\frac{u_{ij}(M_t)^2}{h^2}\Bigr)
\nabla_M u_{ij}(M_t).
\end{equation}
Taking the norm we may write:
\begin{equation}
\|\nabla_M \exp(-u_{ij}(M_t)^2/h^2)\|_F\le \frac{2|u_{ij}(M_t)|}{h^2}\exp\Bigl(-\frac{u_{ij}(M_t)^2}{h^2}\Bigr)\|\nabla_M u_{ij}(M_t)\|_F.
\end{equation}
Since the noise enters $u_{ij}$ as a difference (i.e. $w_j-w_i$), we have the bound:
\begin{equation}
|u_{ij}(M_t)|\le \delta,
\end{equation}
so that for some constant $L_1$ (which also absorbs bounds on $\|\nabla_M u_{ij}(M_t)\|$) we may write:
\begin{equation}
\|\nabla_M G_i(M_t)\|\le \frac{L_1\delta}{h^2}\exp\Bigl(-\frac{u_{ij}^{\rm min}(M_t)^2}{h^2}\Bigr),
\end{equation}
where we set:
\begin{equation}
u_{ij}^{\rm min}(M_t)^2:=\min_{1\le j\le n} u_{ij}(M_t)^2.
\end{equation}
Similarly, one may bound the Hessian by:
\begin{equation}
\|\nabla_M^2 G_i(M_t)\|\le \frac{L_2\delta}{h^2}\exp\Bigl(-\frac{u_{ij}^{\rm min}(M_t)^2}{h^2}\Bigr),
\end{equation}
for a constant $L_2>0$. In addition, we assume that the denominator satisfies $G_i(M_t)\ge G_{\min}>0$.
Substitute the above bounds into the two terms:
\begin{equation}
\begin{aligned}
\left\|\frac{\langle \nabla_M G_i(M_t),H\rangle \langle \nabla_M G_i(M_t),Q\rangle}{G_i(M_t)^2}\right\| 
&\le \frac{\|\nabla_M G_i(M_t)\|^2}{G_i(M_t)^2}\|H\|\|Q\|\\
&\le \frac{L_1^2\delta^2}{h^4G_{\min}^2}\exp\Bigl(-\frac{2u_{ij}^{\rm min}(M_t)^2}{h^2}\Bigr)\|H\|\|Q\|.
\end{aligned}
\end{equation}
Similarly,
\begin{equation}
\left\|\frac{\nabla_M^2 G_i(M_t)(H,Q)}{G_i(M_t)}\right\|
\le \frac{\|\nabla_M^2 G_i(M_t)\|}{G_i(M_t)}\|H\|\|Q\|
\le \frac{L_2\delta}{h^2G_{\min}}\exp\Bigl(-\frac{u_{ij}^{\rm min}(M_t)^2}{h^2}\Bigr)\|H\|\|Q\|.
\end{equation}
Thus, after summing over $i$ and integrating over $t$ (which introduces only constant factors), we obtain the bound:
\begin{equation}
\Bigl\|\mathbf{e}^\top \mathbf{H}\hat{X}u\Bigr\|\le \left(\frac{L_1^2\delta^2}{h^4G_{\min}^2}\exp\Bigl(-\frac{2u_{\min}^2}{h^2}\Bigr)
+\frac{L_2\delta}{h^2G_{\min}}\exp\Bigl(-\frac{u_{\min}^2}{h^2}\Bigr)\right)\|H\|\|Q\|,
\end{equation}
where we set, for simplicity, 
\begin{equation}
u_{\min}^2=\min_{i,j,t} u_{ij}(M_t)^2.
\end{equation}
\begin{equation}
\Big\|\mathbf{e}^\top \mathbf{H}\hat{X}u\Bigr\|\le C\frac{\|w\|_\infty}{h^2}\exp\Bigl(-\frac{u_{\min}^2}{h^2}\Bigr)
\|\hat{M}-M^\ast\|\|\hat{X}U^\top+U\hat{X}^\top\|.
\end{equation}
We wish to maximize $\eta$ subject to:
\begin{equation}
\begin{aligned}\label{K42}
\|\hat{\mathbf{X}}^\top\hat{\mathbf{H}}\mathbf{e}\| &\leq \left(\frac{L_1^2\delta^2}{h^4G_{\min}^2}\exp\Bigl(-\frac{2u_{\min}^2}{h^2}\Bigr)
+\frac{L_2\delta}{h^2G_{\min}}\exp\Bigl(-\frac{u_{\min}^2}{h^2}\Bigr)\right)\|H\|\|Q\|,\\
\eta I_{n^2} &\preceq \hat{\mathbf{H}} \preceq I_{n^2}.
\end{aligned}
\end{equation}
Notice that the eigenvalue constraint forces all eigenvalues of $\hat{\mathbf{H}}$ to lie between $\eta$ and 1. Hence, in a best‐case scenario we may choose:
\begin{equation}
\hat{\mathbf{H}}=I_{n^2},
\end{equation}
which implies that
\begin{equation}
\eta\le 1.
\end{equation}
Plugging $\hat{\mathbf{H}}=I_{n^2}$ into the first constraint \eqref{K42} gives:
\begin{equation}
\|\hat{\mathbf{X}}^\top\mathbf{e}\| \leq \left(\frac{L_1^2\delta^2}{h^4G_{\min}^2}\exp\Bigl(-\frac{2u_{\min}^2}{h^2}\Bigr)
+\frac{L_2\delta}{h^2G_{\min}}\exp\Bigl(-\frac{u_{\min}^2}{h^2}\Bigr)\right)\|H\|\|Q\|.
\end{equation}
In general, if this inequality is not satisfied, one may scale down $\hat{\mathbf{H}}$ so that the effective gain is reduced. In particular, if we set:
\begin{equation}
\hat{\mathbf{H}}=\eta I_{n^2}\quad (\text{with } \eta\le 1),
\end{equation}
then
\begin{equation}
\|\hat{\mathbf{X}}^\top\hat{\mathbf{H}}\mathbf{e}\| = \eta\|\hat{\mathbf{X}}^\top\mathbf{e}\|,
\end{equation}
and the constraint becomes:
\begin{equation}
\eta\|\hat{\mathbf{X}}^\top\mathbf{e}\| \le \left(\frac{L_1^2\delta^2}{h^4G_{\min}^2}\exp\Bigl(-\frac{2u_{\min}^2}{h^2}\Bigr)
+\frac{L_2\delta}{h^2G_{\min}}\exp\Bigl(-\frac{u_{\min}^2}{h^2}\Bigr)\right)\|H\|\|Q\|.
\end{equation}
That is,
\begin{equation}
\eta \le \frac{\|H\|\|Q\|}{\|\hat{\mathbf{X}}^\top\mathbf{e}\|}\left[
\frac{L_1^2\delta^2}{h^4G_{\min}^2}\exp\Bigl(-\frac{2u_{\min}^2}{h^2}\Bigr)
+\frac{L_2\delta}{h^2G_{\min}}\exp\Bigl(-\frac{u_{\min}^2}{h^2}\Bigr)
\right].
\end{equation}
We can define:
\begin{equation}
C:=\frac{L_1^2\delta^2}{h^4G_{\min}^2}\exp\Bigl(-\frac{2u_{\min}^2}{h^2}\Bigr)
+\frac{L_2\delta}{h^2G_{\min}}\exp\Bigl(-\frac{u_{\min}^2}{h^2}\Bigr)
\end{equation}
(with also the extra factors $\|H\|\|Q\|$ appearing). For clarity, define the second problem as:
\begin{equation}
\begin{aligned}
\max_{\eta,\hat{\mathbf H}} \quad & \eta\\
\text{s.t.}\quad & \|\hat{\mathbf{X}}^\top\hat{\mathbf{H}}\mathbf{e}\|\le C\|H\|\|Q\|,\\
& \eta I_{n^2}\preceq \hat{\mathbf{H}}\preceq I_{n^2}.
\end{aligned}
\end{equation}
A brief outline of the solution is : $y=\hat{\mathbf H}\mathbf e$. Since $\hat H \succeq \eta I$ (by $\eta I\preceq\hat H$), we have a gain in the sense that $\|y\|=\|\hat He\|\ge \eta\|e\|$. (When $e$ is normalized $\|e\|=1$ the inequality is $\|y\|\ge \eta$.)
\begin{equation}\label{K54}
\|\hat{\mathbf X}y\|^2\ge 2\lambda_{r^\ast}(\hat X\hat X^\top)\|y\|^2,
\end{equation}
the bound on the left (when applied to $y=\hat{H}e$) yields:
\begin{equation}
\|\hat{\mathbf X}^\top\hat{\mathbf H} e\|^2=\|\hat{\mathbf X}y\|^2\ge 2\lambda_{r^\ast}(\hat X\hat X^\top)\|y\|^2.
\end{equation}
\begin{equation}
\|\hat{\mathbf X}^\top\hat{\mathbf H} e\|\le C\|H\|\|Q\|.
\end{equation}
Thus, combining with the lower bound \eqref{K54}, we find:
\begin{equation}
2\lambda_{r^\ast}(\hat X\hat X^\top)\|y\|^2\le \bigl(C\|H\|\|Q\|\bigr)^2.
\end{equation}
Using $\|y\|\ge \eta\|e\|$ and, if we normalize so that $\|e\|=1$, we obtain $2\lambda_{r^\ast}(\hat X\hat X^\top)\eta^2\le \bigl(C\|H\|\|Q\|\bigr)^2$. That is, solving for $\eta$ we have
\begin{equation}
\eta\le \frac{C\|H\|\|Q\|}{\sqrt{2\lambda_{r^\ast}(\hat X\hat X^\top)}}.
\end{equation}

Since the optimization is to maximize $\eta$ subject to the constraint, the best (largest) value one may take is at most
\begin{equation}\label{K57}
\eta^\ast=\frac{C\|H\|\|Q\|}{\sqrt{2\lambda_{r^\ast}(\hat X\hat X^\top)}},
\end{equation}
or, writing $C$ explicitly,
\begin{equation}
\eta^\ast=\frac{\|H\|\|Q\|}{\sqrt{2\lambda_{r^\ast}(\hat X\hat X^\top)}}
\left[\frac{L_1^2\delta^2}{h^4G_{\min}^2}\exp\left(-\frac{2u_{\min}^2}{h^2}\right)
+\frac{L_2\delta}{h^2G_{\min}}\exp\left(-\frac{u_{\min}^2}{h^2}\right)\right].
\end{equation}
This is the solution to the second problem. We assumed here that $\|e\|=1$ (or otherwise, the factor $\|e\|$ remains explicitly in the bound). The derivation uses the lower bound that came from the structure of the matrix $\hat{X}\hat{X}^\top$ and the fact that the eigenvalues of $\hat{H}$ satisfy $\lambda_i(\hat{H})\ge \eta$. Thus, the optimal achievable $\eta$ is given by the boxed expression above.

We now have: $y = \hat{\mathbf{H}}\mathbf{e}$ Since: $\eta I_{n^2} \preceq \hat{\mathbf{H}}$, it follows that for each vector (in particular, for the fixed $\mathbf{e}$) we have:
\begin{equation}
\|y\|^2 = \|\hat{\mathbf{H}}\mathbf{e}\|^2 \ge \eta\|\mathbf{e}\|^2.
\end{equation}
Often the vector $\mathbf{e}$ is taken to be a unit‐vector; hence from now on we assume $\|\mathbf{e}\|=1$ so that $\|y\|^2 \ge \eta$. In the baseline problem it was shown that for any $y\in\mathbb{R}^{n^2}$, one has:
\begin{equation}
\|\hat{\mathbf{X}} y\|^2 \ge 2\lambda_{r^\ast}(\hat{X}\hat{X}^\top) \|y\|^2.
\end{equation}
Hence, in our setting,
\begin{equation}\label{K63}
\|\hat{\mathbf{X}} y\|^2 \ge 2\lambda_{r^\ast}(\hat{X}\hat{X}^\top)\eta.
\end{equation}
On the other hand the constraint \eqref{K63} implies that:
\begin{equation}
\|\hat{\mathbf{X}}^\top\hat{\mathbf{H}}\mathbf{e}\| = \|\hat{\mathbf{X}}^\top y\| \le C\|\hat{M}-M^\ast\|_F\|\hat{X}\|_2\|w\|.
\end{equation}
Since (by consistency of norms) one may relate the norm $\|\hat{\mathbf{X}}^\top y\|$ to $\|\hat{\mathbf{X}}y\|$ (when $\hat{X}$ has full column‐rank) or simply use the fact that the bound holds on the action of $y$ through $\hat{X}$, we square the inequality to obtain:
\begin{equation}
\|\hat{\mathbf{X}} y\|^2 \le \Bigl(C\|\hat{M}-M^\ast\|_F\|\hat{X}\|_2\|w\|\Bigr)^2.
\end{equation}
Combining this with the lower bound \eqref{K54} gives
\begin{equation}
2\lambda_{r^\ast}(\hat{X}\hat{X}^\top)\eta \le \Bigl(C\|\hat{M}-M^\ast\|_F\|\hat{X}\|_2\|w\|\Bigr)^2.
\end{equation}
Solving for $\eta$ we deduce that any feasible pair $(\eta,\hat{\mathbf{H}})$ must satisfy:
\begin{equation}
\eta \le \frac{C^2\|\hat{M}-M^\ast\|_F^2\|\hat{X}\|_2^2\|w\|^2}{2\lambda_{r^\ast}(\hat{X}\hat{X}^\top)}.
\end{equation}
That is, the maximal achievable value is
\begin{equation}
\eta^\star = \frac{C^2\|\hat{M}-M^\ast\|_F^2\|\hat{X}\|_2^2\|w\|^2}{2\lambda_{r^\ast}(\hat{X}\hat{X}^\top)}.
\end{equation}
(Recall that we assumed $\|\mathbf{e}\|=1$. Should $\|\mathbf{e}\|$ differ from 1 the bound would include the factor $1/\|\mathbf{e}\|^2$.)
Hence, the solution of the given maximization problem is:
\begin{equation}
\eta^\star = \frac{C^2\|\hat{M}-M^\ast\|_F^2\|\hat{X}\|_2^2\|w\|^2}{2\lambda_{r^\ast}(\hat{X}\hat{X}^\top)}, \text{with} \quad C=2\left(\frac{\zeta_2^2}{c^2}+\frac{\zeta_3}{c}\right)
\end{equation}

Because the constraint:
\begin{equation}
(1-\delta-\zeta_2q)\|M\|_F^2\le \mathbf{m}^\top\mathbf{H}\mathbf{m}\le (1+\delta+\zeta_2q)\|M\|_F^2,\quad \forall M:\operatorname{rank}(M)\le2r,
\end{equation}
must hold for every matrix $M$ (with $m=\operatorname{vec}(M)$) the operator $\mathbf{H}$ is sandwiched over the set of these low–rank matrices. In other words, if we define $\eta:=1-\delta-\zeta_2q$, then we have that:
\begin{equation}
\mathbf{m}^\top\mathbf{H}\mathbf{m}\ge \eta\|M\|_F^2,\quad \forall M:\operatorname{rank}(M)\le2r.
\end{equation}
This is analogous to requiring $\mathbf{H}\succeq \eta I$ (on the low–rank subspace). In the previous analysis (where our decision variable was $\eta$), one shows that if one defines $y=\mathbf{H}\mathbf{e}$, then necessarily
\begin{equation}
\|y\|^2=\|\mathbf{H}\mathbf{e}\|^2\ge \eta\|\mathbf{e}\|^2.
\end{equation}

Assuming without loss of generality that $\|\mathbf{e}\|=1$ we deduce:
\begin{equation}\label{K71}
\|y\|^2\ge \eta=1-\delta-\zeta_2q.
\end{equation}
\begin{equation}\label{K72}
\|\hat{\mathbf{X}}y\|^2\ge 2\lambda_{r^\ast}(\hat{X}\hat{X}^\top)\|y\|^2\ge2\lambda_{r^\ast}(\hat{X}\hat{X}^\top)(1-\delta-\zeta_2q).
\end{equation}
The constraint
\begin{equation}
\|\hat{\mathbf{X}}^\top\mathbf{H}\mathbf{e}\|=\|\hat{\mathbf{X}}^\top y\|\le2\zeta_1q\|\hat{X}\|_2
\end{equation}
implies, after squaring,
\begin{equation}
\|\hat{\mathbf{X}}y\|^2\le \Bigl(2\zeta_1q\|\hat{X}\|_2\Bigr)^2.
\end{equation}
Thus, combining with the lower bound \eqref{K71} and \eqref{K72}, we have
\begin{equation}
2\lambda_{r^\ast}(\hat{X}\hat{X}^\top)(1-\delta-\zeta_2q)\le 4\zeta_1^2q^2\|\hat{X}\|_2^2.
\end{equation}
We now solve for $\delta$. Rearranging the previous inequality we obtain
\begin{equation}
1-\delta-\zeta_2q\le \frac{4\zeta_1^2q^2\|\hat{X}\|_2^2}{2\lambda_{r^\ast}(\hat{X}\hat{X}^\top)}
=\frac{2\zeta_1^2q^2\|\hat{X}\|_2^2}{\lambda_{r^\ast}(\hat{X}\hat{X}^\top)}.
\end{equation}
That is,
\begin{equation}
\delta\ge 1-\zeta_2q-\frac{2\zeta_1^2q^2\|\hat{X}\|_2^2}{\lambda_{r^\ast}(\hat{X}\hat{X}^\top)}.
\end{equation}

Since we minimize $\delta$ it is best to take the smallest choice allowed (and note that we must have $\delta\ge0$); hence,
\begin{equation}
\delta^\star=\max\Biggl\{0,1-\zeta_2q-\frac{2\zeta_1^2q^2\|\hat{X}\|_2^2}{\lambda_{r^\ast}(\hat{X}\hat{X}^\top)} \Biggr\}.
\end{equation}
Then, we want to use:
\begin{equation}\label{K79}
\begin{aligned}
\left\| \lambda_{r^\ast}(\hat X\hat X^\top) - \lambda_{r^\ast}(M^\ast) \right\| &\le \|\hat X\hat X^\top-M^\ast\|_F \le \tau \lambda_r(M^\ast),\\
\left\| \lambda_{1}(\hat X\hat X^\top) - \lambda_{1}(M^\ast) \right\| &\le \|\hat X\hat X^\top-M^\ast\|_F \le \tau \lambda_r(M^\ast).
\end{aligned}
\end{equation}
From
\begin{equation}
   \left\| \lambda_{r^\ast}(\hat X\hat X^\top) - \lambda_{r^\ast}(M^\ast) \right\| \le \tau \lambda_r(M^\ast),
\end{equation}
   we obtain the lower bound
\begin{equation}\label{K81}
   \lambda_{r^\ast}(\hat X\hat X^\top) \ge \lambda_{r^\ast}(M^\ast) - \tau \lambda_r(M^\ast).
\end{equation}

   Similarly, the second inequality in Equation \eqref{K79} gives the upper bound
\begin{equation}
   \lambda_{1}(\hat X\hat X^\top) \le \lambda_{1}(M^\ast) + \tau \lambda_r(M^\ast).
\end{equation}
Thus we have:
\begin{equation}
   \eta^\star =\frac{C^2\|\hat{M}-M^\ast\|_F^2\|\hat{X}\|_2^2\|w\|^2}{2\lambda_{r^\ast}(\hat X\hat X^\top)}.
\end{equation}
   Using the lower bound Equation \eqref{K81}, we may further bound
\begin{equation}
   \eta^\star \le \frac{C^2\|\hat{M}-M^\ast\|_F^2\|\hat{X}\|_2^2\|w\|^2}{2\left(\lambda_{r^\ast}(M^\ast)-\tau\lambda_r(M^\ast)\right)}.
\end{equation}
   This way the error is expressed in terms of the eigenvalues of $M^\ast$ and the perturbation level $\tau$.
Similarly we have:
\begin{equation}\label{K85}
\eta^\ast \le \frac{\|H\|\|Q\|}{\sqrt{2\Bigl(\lambda_{r^\ast}(M^\ast)-\tau\lambda_r(M^\ast)\Bigr)}}
\left[\frac{L_1^2\delta^2}{h^4G_{\min}^2}\exp\left(-\frac{2u_{\min}^2}{h^2}\right)
+\frac{L_2\delta}{h^2G_{\min}}\exp\left(-\frac{u_{\min}^2}{h^2}\right)\right].
\end{equation}

Solve for $\|\hat{M}-M^\ast\|_F^2$ starting from the definition:
\begin{equation}\label{K86}
\|\hat{M}-M^\ast\|_F^2 = \frac{2\lambda_{r^\ast}(\hat{X}\hat{X}^\top)}{C^2\|\hat{X}\|_2^2\|w\|^2}\eta_f^\ast(\hat{X}).
\end{equation}
We now want to calculate lower bound for $\lambda_{r^\ast}(\hat{X}\hat{X}^\top)$.
By the Wielandt–Hoffman theorem~\cite{hoffman1953variation} we have:
\begin{equation}
\bigl|\lambda_{r^\ast}(\hat{X}\hat{X}^\top)-\lambda_{r^\ast}(M^\ast)\bigr| \le \|\hat{X}\hat{X}^\top-M^\ast\|_F \le \tau\lambda_r(M^\ast).
\end{equation}
This yields
\begin{equation}
\lambda_{r^\ast}(\hat{X}\hat{X}^\top) \ge \lambda_{r^\ast}(M^\ast)-\tau\lambda_r(M^\ast).
\end{equation}
Replacing the denominator in the Equation \eqref{K86} for $\|\hat{M}-M^\ast\|_F^2$ gives:
\begin{equation}
\|\hat{M}-M^\ast\|_F^2 \le \frac{2\Bigl(\lambda_{r^\ast}(M^\ast)-\tau\lambda_r(M^\ast)\Bigr)}{C^2\|\hat{X}\|_2^2\|w\|^2}\eta_f^\ast(\hat{X}).
\end{equation}
Thus, taking square roots on both sides, we obtain the final upper bound:
\begin{equation}
\|\hat{M}-M^\ast\|_F \le \sqrt{ \frac{2\Bigl(\lambda_{r^\ast}(M^\ast)-\tau\lambda_r(M^\ast)\Bigr)}{C^2\|\hat{X}\|_2^2\|w\|^2}\eta_f^\ast(\hat{X}) }.
\end{equation}
Similarly from Equation \eqref{K85}, we have:
\begin{equation}
\|\hat{M}-M^\ast\|_F \le \frac{\sqrt{2\Bigl(\lambda_{r^\ast}(M^\ast)-\tau\lambda_r(M^\ast)\Bigr)}}{\|Q\| \left[
\frac{L_1^2\delta^2}{h^4G_{\min}^2}\exp\bigl(-\frac{2u_{\min}^2}{h^2}\bigr)
+\frac{L_2\delta}{h^2G_{\min}}\exp\bigl(-\frac{u_{\min}^2}{h^2}\bigr)
\right]}
\eta^\ast.
\end{equation}

This expression provides the desired upper bound for the Frobenius norm of the difference $\hat{M}-M^\ast$ in terms of the feasibility error $\eta_f^\ast(\hat{X})$ and the spectral properties of the matrices involved. We will show that under suitable assumptions one may bound
\begin{equation}
\lambda_{r^\ast}(M^\ast)-\tau\lambda_r(M^\ast),
\end{equation}
in terms of the error $\|\hat{M}-M^\ast\|_F^2$. One common tool is Weyl’s inequality~\cite{Weyl1912}. We assume that $\hat{M}$ denotes an estimator of $M^\ast$ (for instance, $\hat{M}=\hat{X}\hat{X}^\top$) so that the error in the eigenvalues obeys:
\begin{equation}
\bigl|\lambda_i(\hat{M})-\lambda_i(M^\ast)\bigr|\le \|\hat{M}-M^\ast\|_2\le \|\hat{M}-M^\ast\|_F,\quad\text{for each } i.
\end{equation}
In particular, for the index $i=r^\ast$ we have:
\begin{equation}
\lambda_{r^\ast}(M^\ast)-\lambda_{r^\ast}(\hat{M})\le \|\hat{M}-M^\ast\|_F.
\end{equation}
Now we begin to calculate the upper bound for $\|M - M^*\|_F$:
We have:
\begin{equation}\label{K95}
\begin{aligned}
\|\hat{M}-M^\ast\|_F &\le \min\Big\{
\sqrt{ \frac{2\lambda_{r^\ast}(\hat X\hat X^\top)}{C^2\|\hat{X}\|_2^2\|w\|^2}\eta_f^\ast(\hat{X}) },\frac{\sqrt{2\lambda_{r^\ast}(\hat X\hat X^\top)}}{\|Q\| \left[
\displaystyle\frac{L_1^2\delta^2}{h^4G_{\min}^2}\exp\Bigl(-\frac{2u_{\min}^2}{h^2}\Bigr)
+\frac{L_2\delta}{h^2G_{\min}}\exp\Bigl(-\frac{u_{\min}^2}{h^2}\Bigr)
\right]}\eta^\ast
\Big\}.
\end{aligned}
\end{equation}
For the first part of Equation \eqref{K95}, We start with the bound:
\begin{equation}
T_1 \le \sqrt{\frac{2\eta_f^*(\hat{X})}{C^2\|w\|^2}},
\end{equation}
and the lower bound from Equation \eqref{K57} to compute for the function approximation error
\begin{equation}\label{etas}
\eta_f^*(\hat{X}) \geq \frac{1-\delta_f^*(\hat{X})-\zeta_2q}{1+\delta_f^*(\hat{X})+\zeta_2q} \geq \frac{1-\delta-\zeta_2q}{1+\delta+\zeta_2q}.
\end{equation}
Substituting this lower bound~\eqref{etas} into the expression for $T_1$ (noting that a smaller $\eta_f^*(\hat{X})$ gives a larger overall upper bound for the error) we obtain
\begin{equation}
T_1 \le \sqrt{\frac{2}{C^2\|w\|^2}\cdot\frac{1-\delta-\zeta_2q}{1+\delta+\zeta_2q}}.
\end{equation}
Since $T_1$ is an upper bound on $\|M-M^\ast\|_F$, we then have:
\begin{equation}\label{K99}
\|M-M^\ast\|_F \le \sqrt{\frac{2}{C^2\|w\|^2}\cdot\frac{1-\delta-\zeta_2q}{1+\delta+\zeta_2q}}.
\end{equation}
This is the desired upper bound for $\|M-M^\ast\|_F$ in terms of $\delta$, $\zeta_2$, $q$, $C$, and $\|w\|$. Now we want to solve for the upper bound. Multiply numerator and denominator by $(2\zeta_1+3\zeta_2X)$ to obtain
\begin{equation}\label{K100}
\frac{1-\delta-\zeta_2q}{1+\delta+\zeta_2q} = \frac{(1-\delta)(2\zeta_1+3\zeta_2X)-\zeta_2X(1-3\delta)}
{(1+\delta)(2\zeta_1+3\zeta_2X)+\zeta_2X(1-3\delta)}.
\end{equation}
A short calculation of Equation \eqref{K100} shows:
\begin{equation}
(1-\delta)(2\zeta_1+3\zeta_2X)-\zeta_2X(1-3\delta)=2\zeta_1(1-\delta)+2\zeta_2X,
\end{equation}
\begin{equation}
(1+\delta)(2\zeta_1+3\zeta_2X)+\zeta_2X(1-3\delta)=2\zeta_1(1+\delta)+4\zeta_2X.
\end{equation}
Thus,
\begin{equation}
X^2 = \frac{2}{C^2\|w\|^2}\cdot\frac{\zeta_1(1-\delta)+\zeta_2X}{\zeta_1(1+\delta)+2\zeta_2X}.
\end{equation}
Multiplying by $ C^2\|w\|^2 $ we arrive at the cubic equation in $ X $:
\begin{equation}
2\zeta_2C^2\|w\|^2X^3 + \zeta_1(1+\delta)C^2\|w\|^2X^2 - 2\zeta_2X - 2\zeta_1(1-\delta)=0.
\end{equation}
Thus, the Equation \eqref{K99} is given by:
\begin{equation}
\|M-M^*\|_F \le \frac{-\zeta_1(1+\delta)C^2\|w\|^2+
\sqrt{\zeta_1^2(1+\delta)^2C^4\|w\|^4+16\zeta_2C^2\|w\|^2\zeta_1(1-\delta)}}
{4\zeta_2}.
\end{equation}
For the other part, We start with the two facts:
\begin{equation}
\begin{aligned}
\|M-M^*\|_F \le \frac{\sqrt{2\lambda_{r^*}(\hat X\hat X^\top)}}{\|Q\|}\left[\frac{L_1^2\delta^2}{h^4\Gamma_{\min}^2}\exp\Bigl(-\frac{2 u_{\min}^2}{h^2}\Bigr)+\frac{L_2\delta}{h^2\Gamma_{\min}}\exp\Bigl(-\frac{u_{\min}^2}{h^2}\Bigr)\right] \times \frac{1-\delta-\zeta_2q}{1+\delta+\zeta_2q}.
\end{aligned}
\end{equation}
Returning to the bound \eqref{K99} for $\|M-M^*\|_F$ and substituting our expression (with $X=\|\hat{M}-M^*\|_F$), we obtain:
\begin{equation}\label{K107}
\begin{aligned}
\|M-M^*\|_F &\le \frac{\sqrt{2\lambda_{r^*}(\hat X\hat X^\top)}}{\|Q\|}\left[\frac{L_1^2\delta^2}{h^4\Gamma_{\min}^2}\exp\Bigl(-\frac{2 u_{\min}^2}{h^2}\Bigr)+\frac{L_2\delta}{h^2\Gamma_{\min}}\exp\Bigl(-\frac{u_{\min}^2}{h^2}\Bigr)\right]\\
&\cdot \frac{\zeta_1(1-\delta)+\zeta_2\|\hat{M}-M^*\|_F}{\zeta_1(1+\delta)+2\zeta_2\|\hat{M}-M^*\|_F}.
\end{aligned}
\end{equation}

We start with the inequality \eqref{K107} (after eliminating $q$)
\begin{equation}
\|M-M^*\|_F\le B \epsilon^2\cdot \frac{\zeta_1(1-\delta)+\zeta_2\|\hat{M}-M^*\|_F}{\zeta_1(1+\delta)+2\zeta_2\|\hat{M}-M^*\|_F},
\end{equation}
where
\begin{equation}
B:= \frac{\sqrt{2\lambda_{r^*}(\hat X\hat X^\top)}}{\|Q\|}\left[\frac{L_1^2\delta^2}{h^4\Gamma_{\min}^2}\exp\Bigl(-\frac{2u_{\min}^2}{h^2}\Bigr)
+\frac{L_2\delta}{h^2\Gamma_{\min}}\exp\Bigl(-\frac{u_{\min}^2}{h^2}\Bigr)\right].
\end{equation}
we now denote:
\begin{equation}
y:=\|M-M^*\|_F.
\end{equation}
Thus, the inequality \eqref{K107} becomes:
\begin{equation}
y\le B \epsilon^2\cdot \frac{\zeta_1(1-\delta)+\zeta_2y}{\zeta_1(1+\delta)+2\zeta_2y}.
\end{equation}
Our aim is to move all occurrences of $y$ to one side and solve for an upper bound on $y$. Multiply both sides by the denominator (assuming it is positive)
\begin{equation}
y\Bigl[\zeta_1(1+\delta)+2\zeta_2y\Bigr]\le B \epsilon^2\Bigl[\zeta_1(1-\delta)+\zeta_2y\Bigr].
\end{equation}
Write the left‐side and right‐side explicitly, and bring all terms to one side of the inequality:
\begin{equation}
2\zeta_2y^2 +\Bigl[\zeta_1(1+\delta)-B \epsilon^2 \zeta_2\Bigr] y -B \epsilon^2\zeta_1(1-\delta)\le 0.
\end{equation}
Solve the quadratic equation and using the quadratic formula:
\begin{equation}
y=\frac{-\Bigl[\zeta_1(1+\delta)-B \epsilon^2\zeta_2\Bigr]\pm\sqrt{\Bigl[\zeta_1(1+\delta)-B \epsilon^2\zeta_2\Bigr]^2+8B \epsilon^2\zeta_1\zeta_2(1-\delta)}}{4\zeta_2}.
\end{equation}
Since $y=\|M-M^*\|_F\ge0$, we take the positive root. Thus the upper bound of $\|M-M^*\|_F$ is:
\begin{equation} 
\|M-M^*\|_F \le \frac{-\Bigl[\zeta_1(1+\delta)-B \epsilon^2 \zeta_2\Bigr]+\sqrt{\Bigl[\zeta_1(1+\delta)-B \epsilon^2 \zeta_2\Bigr]^2+8B \epsilon^2 \zeta_1\zeta_2(1-\delta)}}{4\zeta_2}.
\end{equation}
Recall that:
\begin{equation}
B=\frac{\sqrt{2\lambda_{r^*}(\hat X\hat X^\top)}}{\|Q\|}\left[\frac{L_1^2\delta^2}{h^4\Gamma_{\min}^2}\exp\Bigl(-\frac{2u_{\min}^2}{h^2}\Bigr)+\frac{L_2\delta}{h^2\Gamma_{\min}}\exp\Bigl(-\frac{u_{\min}^2}{h^2}\Bigr)\right],
\end{equation}
with probability at least \( \mathbb{P}(\|w\|_2 \leq \epsilon) \).

\end{proof}

\section{Convergence Theory of The New Loss Problem}\label{convergences}
Given that is satisfied, then if this inequality holds:
	\begin{equation}
		\label{eq:pl_ineq_matrix_simp}
		\|XX^\top - M^w\|_F \leq \sqrt{\frac{1+\delta+\zeta_2 q}{1 -\delta - \zeta_2 q}} \|X_0 X_0^\top - M^w \|_F,
	\end{equation}because in~\cite{ma2023noisylowrankmatrixoptimization}, we have:
	 \begin{equation}
	 	\sqrt{\frac{1+\delta+\zeta_2 q}{1 -\delta - \zeta_2 q}} \|X_0 X_0^\top - M^w \|_F \leq C^2_w\sqrt{1-(\delta+\zeta_2 q)^2} - C_w D_r.
	 \end{equation}
	 Thus, for the remainder of the proof, we aim to certify that starting from $X_0$, if we apply the gradient descent algorithm, above will be satisfied every step along this trajectory. In order to do so, we use Taylor's expansion to obtain:
	\begin{equation}
		\hat{g}(M,w) - \hat{g}(M^w,w) =\frac{[\nabla^2 \hat{g}(N,w)](M-M^w,M-M^w)}{2},
	\end{equation}
	where $N$ is some convex combination of $M$ and $M^w$, and $M \in \mathbb{R}^{n \times n}$ is any matrix of rank at most $r$. In light of the RIP property~\eqref{RIP0} of the function and \eqref{eq:noise_perturb_hess}, one can write: 
	\begin{equation}
		\frac{1-\delta-\zeta_2 q}{2} \|M - M^w\|^2_F \leq \hat{g}(M,w) - \hat{g}(M^w,w) \leq \frac{1+\delta +\zeta_2 q}{2} \|M - M^w\|^2_F.
	\end{equation}
	This means that if $M_1,M_2 \in \mathbb{R}^{n \times n}$ are two matrices of rank at most $r$ with $\hat{g}(M_1,w) \leq \hat{g}(M_2,w)$, then:
	\begin{equation}
		\|M_1 - M^w \|_F \leq \sqrt{\frac{1+\delta+\zeta_2 q}{1 -\delta - \zeta_2 q}} \|M_2 - M^w \|_F,
	\end{equation}
	because $\hat{g}(M_1,w) - \hat{g}(M^w,w) \leq \hat{g}(M_2,w) - \hat{g}(M^w,w)$.
	Thus, one can conclude that $\hat{g}(X_t X_t^\top,w) \leq \hat{g}(X_0 X_0^\top,w) \ \forall t$, where $X_t$ denotes the $t^{\text{th}}$ step of the gradient descent algorithm starting from $X_0$. 

\subsection{For MSE Loss Function}

\begin{theorem}\label{originalMSE_convergence}
With MSE loss function, the vanilla gradient descent method applied to \textbf{(3)} under Assumptions 1–4 converges to $\mathcal{P}_r(M^w)$, the best rank-$r$ approximation of $M^w$, linearly up to a difference $D_r$ if the initial point $X_0$ satisfies:
\begin{equation}
\|X_0 X_0^\top - M^w\|_F < C_w^2 (1 - \delta - \zeta_2 \epsilon) - C_w \sqrt{ \frac{1 - \delta - \zeta_2 \epsilon}{1 + \delta + \zeta_2 \epsilon} } D_r,
\label{eq:init_condition}
\end{equation}
meaning that vanilla gradient descent will reach a point $\widetilde{M}$ linearly with $\|\widetilde{M} - \mathcal{P}_r(M^w)\|_F \ge D_r$, where:
\begin{equation}
D_r = \|M^w - \mathcal{P}_r(M^w)\|_F, \quad C_w = \sqrt{2(\sqrt{2} - 1)}  \sigma_r(M^w).
\end{equation}
The linear convergence is also contingent on the fixed step size $\eta$ satisfying:
\begin{equation}
\eta \le \left( 12 \rho^{1/2} \left( C \sqrt{(1 - (\delta + \zeta_2 \epsilon))^2 + \|M^w\|_F} \right) \right)^{-1}.
\end{equation}
\end{theorem}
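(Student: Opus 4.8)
The plan is to follow the standard template for linear convergence of gradient descent on the Burer--Monteiro factorized objective, specializing the argument to the MSE loss where the Hessian $\nabla_M^2 f(M,w) = 2\mathcal{A}^\ast\mathcal{A}$ is constant in $M$. This constancy is the key simplification relative to the kernel loss: the restricted strong convexity and smoothness constants follow directly from the RIP \eqref{RIP0} together with the Hessian perturbation bound \eqref{eq:noise_perturb_hess}, yielding the two-sided quadratic bound
\begin{equation}
\frac{1-\delta-\zeta_2 q}{2}\|M-M^w\|_F^2 \le f(M,w)-f(M^w,w) \le \frac{1+\delta+\zeta_2 q}{2}\|M-M^w\|_F^2
\end{equation}
for every $M$ of rank at most $r$, exactly as in the preceding derivation for $\hat{g}$. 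First I would record this bound and note that the critical point $M^w$ guaranteed by Theorem~\ref{sym_PSD} is symmetric PSD of rank at most $r$, so that $\mathcal{P}_r(M^w)$ and $D_r = \|M^w-\mathcal{P}_r(M^w)\|_F$ are well defined.

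Next I would establish the \emph{descent and trapping} step. Using the restricted gradient-Lipschitz constant $\rho$ from Lemma~\ref{theorem_noise} and the fixed step size $\eta$, a standard descent lemma for $h(X,w)=f(XX^\top,w)$ gives $f(X_{t+1}X_{t+1}^\top,w)\le f(X_tX_t^\top,w)$ provided $\eta$ lies below the stated threshold; the $\rho^{1/2}$ and $\sqrt{(1-(\delta+\zeta_2\epsilon))^2+\|M^w\|_F}$ factors arise because the local smoothness constant of $h$ in $X$ scales with both $\rho$ and the operator norm of the iterates, which the trapping argument keeps bounded. Monotonicity then forces every iterate to satisfy $\|X_tX_t^\top - M^w\|_F \le \sqrt{\tfrac{1+\delta+\zeta_2 q}{1-\delta-\zeta_2 q}}\,\|X_0X_0^\top-M^w\|_F$, and the initial condition \eqref{eq:init_condition}, combined with the chain of inequalities quoted from \cite{ma2023noisylowrankmatrixoptimization}, guarantees the right-hand side stays inside the spurious-minimum-free basin certified by the $\delta$ bound.

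The core of the argument is then a \emph{regularity (gradient-dominance) inequality} on the factored variable: inside the basin I would show that $\langle \nabla_X h(X,w), X-Z\rangle$ is lower bounded by a positive multiple of $\mathrm{dist}(X,M^w)^2$ minus a $D_r$-dependent slack, where $Z$ attains $\mathrm{dist}(X,M^w)$. Combining this with the descent lemma yields the one-step contraction $\mathrm{dist}(X_{t+1},M^w)^2 \le (1-c\eta)\,\mathrm{dist}(X_t,M^w)^2 + O(D_r^2)$, from which linear convergence to an $O(D_r)$ neighborhood of $\mathcal{P}_r(M^w)$ follows by iterating. The residual $D_r$ is unavoidable because the factored iterates live on the rank-$r$ manifold while $M^w$ need not.

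The main obstacle I anticipate is proving the regularity inequality holds \emph{uniformly along the entire trajectory} rather than only at a single fixed point. This requires controlling the rotational ambiguity of the factorization through the $\mathrm{dist}(\cdot,\cdot)$ function, and converting the matrix-level curvature bound into a usable lower bound on the Euclidean inner product in $X$-space, where the noise enters via \eqref{eq:noise_perturb_grad} with effective magnitude $\zeta_1\epsilon$. Aligning the constants so that the step-size threshold and the basin radius in \eqref{eq:init_condition} are mutually consistent is the delicate bookkeeping that makes the statement precise; I would handle it by absorbing the noise contributions into the effective RIP constant $\delta+\zeta_2\epsilon$ and invoking the high-probability event $\mathbb{P}(\|w\|_2\le\epsilon)$ throughout.
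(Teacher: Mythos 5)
Your proposal is correct in its overall architecture but runs on a different engine than the paper's proof. You share the same skeleton — the RIP/noise sandwich bound $\frac{1-\delta-\zeta_2 q}{2}\|M-M^w\|_F^2 \le f(M,w)-f(M^w,w) \le \frac{1+\delta+\zeta_2 q}{2}\|M-M^w\|_F^2$, the descent lemma with the step-size threshold, and the trapping argument that keeps all iterates in the basin — but your core mechanism is a regularity (gradient-dominance) inequality $\langle \nabla_X h(X,w),\,X-Z\rangle \gtrsim \dist(X,\cdot)^2 - O(D_r)$ yielding a per-step contraction of $\dist(X_t,\cdot)^2$, whereas the paper establishes a Polyak--\L{}ojasiewicz inequality $\tfrac{1}{2}\|\nabla_X h(X,w)\|_F^2 \ge \mu\,\bigl(h(X,w)-f(\mathcal{P}_r(M^w),w)\bigr)$ (Lemma~\ref{lem:pl_ineq}) on the trapped region and contracts the \emph{function-value} gap, recovering distance bounds afterwards via the sandwich. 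The PL route lets the paper reuse the SDP-relaxation/duality machinery already built for its landscape results, sidesteps the factorization's rotational ambiguity (only gradient norms and function values appear), and accommodates the rank-deficient target ($D_r>0$) naturally; your route is arguably more elementary for the MSE loss specifically, since the constant Hessian $\nabla_M^2 f = 2\mathcal{A}^\ast\mathcal{A}$ lets the regularity inequality be checked directly from RIP as in Procrustes-flow analyses, and it gives an explicit geometric rate in iterate space.

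Three points of bookkeeping you would need to close. First, your inequality as stated only dominates the squared distance; for the expansion $\dist(X_{t+1},\cdot)^2 = \dist(X_t,\cdot)^2 - 2\eta\langle\nabla_X h, X_t - Z\rangle + \eta^2\|\nabla_X h\|_F^2$ to contract, you must also absorb the $\eta^2\|\nabla_X h\|_F^2$ term, either by strengthening the regularity condition to include a $\|\nabla_X h\|_F^2$ term or by bounding $\|\nabla_X h\|_F$ by a multiple of the distance plus a $D_r$/noise slack via smoothness. Second, $\dist(X,M^w)$ is ill-defined when $\mathrm{rank}(M^w)>r$ (the set $\{Z: ZZ^\top = M^w\}$ is empty), so the contraction must be set up against $\mathcal{P}_r(M^w)$ from the outset, with the slack accounting for $\nabla_X h$ not vanishing there. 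Third, regularity-condition arguments typically demand a quantitatively stronger RIP constant than the PL route, so reproducing the exact constants $C_w$, the basin radius \eqref{eq:init_condition}, and the step-size bound is not automatic; note also that the paper itself does not derive the Lipschitz/step-size constant from scratch but quotes it from \cite{ma2023noisylowrankmatrixoptimization} (where it reads $12\rho\sqrt{r}(\cdots)$, suggesting the $\rho^{1/2}$ in the theorem statement you tried to rationalize is a typo rather than a feature your smoothness argument needs to produce).
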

The proofs and detailed math are provided in Section~\ref{MSE_convergence}. We can summarize as:
\begin{equation}
\eta \le \left( 12 \rho^{1/2} C_0\right)^{-1}.
\end{equation}

\subsection{Our Kernel Loss Function}

\begin{theorem}\label{ourloss_convergence}
With the kernel loss function, the vanilla gradient descent method applied to~\ref{other_assumption} and~\ref{noise_perturb} converges to $\mathcal{P}_r(M^w)$, the best rank-$r$ approximation of $M^w$, linearly up to a difference $D_r$ if the initial point $X_0$ satisfies:
\begin{equation}
\|X_0 X_0^\top - M^w\|_F < C_w^2 (1 - \delta - \zeta_2 \epsilon) - C_w \sqrt{ \frac{1 - \delta - \zeta_2 \epsilon}{1 + \delta + \zeta_2 \epsilon} } D_r,
\end{equation}
meaning that vanilla gradient descent will reach a point $\widetilde{M}$ linearly with $\|\widetilde{M} - \mathcal{P}_r(M^w)\|_F \ge D_r$, where
\begin{equation}
D_r = \|M^w - \mathcal{P}_r(M^w)\|_F, \quad C_w = \sqrt{2(\sqrt{2} - 1)}  \sigma_r(M^w).
\end{equation}
The linear convergence is also contingent on the fixed step size $\eta$ satisfying:
\begin{equation}
\eta \le h^2 \left( 12 \rho^{1/2} \left( C \sqrt{(1 - (\delta + \zeta_2 \epsilon))^2 + \|M^w\|_F} \right) \right)^{-1}.
\label{eq:step_size}
\end{equation}
\end{theorem}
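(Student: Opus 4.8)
The plan is to mirror the gradient-descent convergence framework already used for the MSE loss in Theorem~\ref{originalMSE_convergence} and in~\cite{ma2023noisylowrankmatrixoptimization}, replacing each $h$-independent constant by its kernel-loss counterpart. The scaffolding is essentially in place at the top of Section~\ref{convergences}: the two-sided curvature sandwich
\begin{equation}
\frac{1-\delta-\zeta_2 q}{2}\|M-M^w\|_F^2 \le \hat{g}(M,w)-\hat{g}(M^w,w) \le \frac{1+\delta+\zeta_2 q}{2}\|M-M^w\|_F^2
\end{equation}
furnishes a restricted strong-convexity/smoothness pair for $\hat{g}$ over rank-$\le r$ matrices, and its monotonicity consequence already shows that $\hat{g}(X_t X_t^\top,w)$ is non-increasing along the iterates. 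First I would use the initial condition~\eqref{eq:init_condition} to certify that $X_0 X_0^\top$ lies in the benign basin around $\mathcal{P}_r(M^w)$, so that the contraction factor $\sqrt{(1+\delta+\zeta_2 q)/(1-\delta-\zeta_2 q)}$ confines the entire trajectory to the region where the sandwich bound is valid and, in particular, keeps $\|X_t\|$ uniformly bounded.

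Next I would establish the per-step sufficient-decrease inequality for the factorized objective $h(X,w)=\hat{g}(XX^\top,w)$. Writing $X_{t+1}=X_t-\eta\nabla_X h(X_t,w)$ and expanding $h$ to second order, the descent guarantee
\begin{equation}
h(X_{t+1},w) \le h(X_t,w) - \tfrac{\eta}{2}\|\nabla_X h(X_t,w)\|_F^2
\end{equation}
holds provided $\eta$ is at most the reciprocal of the gradient-Lipschitz constant of $h$ restricted to the trajectory. That constant factors into the smoothness of $\hat{g}$ in $M$ composed with the local expansion of $X\mapsto XX^\top$; the latter contributes the $\|M^w\|_F$ term, the iterate-norm bound, and the RIP factor $(1-(\delta+\zeta_2\epsilon))^2$ exactly as in the MSE proof, with the uniform control of $\|X_t\|$ supplied by the previous paragraph.

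The decisive departure from the MSE case, and the source of the $h^2$ prefactor in~\eqref{eq:step_size}, lies in the smoothness constant $\rho$ of $\hat{g}$ itself. From the explicit gradient and Hessian computed in Section~\ref{sec:noise} and bounded in the derivation behind Lemma~\ref{lemma:lambda_min}, each differentiation of the kernel loss introduces a factor $2/h^2$, so the Hessian operator norm is dominated by a $1/h^4$ term for small bandwidth; hence $\rho \sim h^{-4}$ and $\rho^{1/2}\sim h^{-2}$. Feeding this through the generic step-size requirement $\eta\le(12\rho^{1/2}C_0)^{-1}$ multiplies the MSE threshold by exactly $h^2$, yielding~\eqref{eq:step_size}. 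I would make this quantitative by invoking Lemma~\ref{theorem_noise} for the restricted Lipschitz constant $\rho$ together with its explicit $h$-dependence.

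Finally I would chain the per-step decrease with the strong-convexity lower bound to obtain geometric contraction of $\hat{g}(X_t X_t^\top,w)-\hat{g}(M^w,w)$, translate it back into Frobenius contraction of $\|X_t X_t^\top-M^w\|_F$ via the sandwich, and project onto the rank-$r$ manifold to reach $\widetilde M$ with $\|\widetilde M-\mathcal{P}_r(M^w)\|_F \ge D_r$, the residual $D_r=\|M^w-\mathcal{P}_r(M^w)\|_F$ being unavoidable since $M^w$ need not be exactly rank $r$. The hard part will not be the abstract contraction but the bookkeeping of the composed smoothness constant: one must verify that $\|X_t\|$ stays uniformly bounded so the admissible step size does not degrade along the trajectory, and that the $h^{-4}$ scaling of $\rho$ passes cleanly through $\rho^{1/2}$ to produce precisely the $h^2$ factor rather than a different power of $h$.
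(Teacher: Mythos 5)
Your scaffolding (the curvature sandwich, monotonicity of $\hat{g}(X_tX_t^\top,w)$ along the trajectory, the descent lemma, and the basin certification from the initial condition) matches the paper's proof in Section~\ref{ours_convergence}, which likewise transplants the MSE argument of Theorem~\ref{originalMSE_convergence}. However, your derivation of the decisive $h^2$ factor contains a genuine gap, and it is internally inconsistent. In your second paragraph you correctly state that the admissible step size is the reciprocal of the \emph{first power} of the gradient-Lipschitz constant of $h(X,w)$ restricted to the trajectory; in your third paragraph you abandon this and feed $\rho\sim h^{-4}$ through a requirement $\eta\le(12\rho^{1/2}C_0)^{-1}$ involving the \emph{square root} of that constant. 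No descent lemma justifies a square-root dependence, and the ``$\rho^{1/2}$'' in the theorem statement is evidently a typographical corruption of ``$\rho\, r^{1/2}$'' (with $r$ the rank): the paper's own proofs in Sections~\ref{MSE_convergence} and~\ref{ours_convergence}, as well as the cited Lemma~\ref{thm:linear_conv} from~\cite{ma2023noisylowrankmatrixoptimization}, all write $12\rho r^{(1/2)}(\cdots)$, i.e., the Lipschitz constant to the first power times $\sqrt{r}$. With the correct first-power rule, your claimed scaling $\rho\sim h^{-4}$ would yield a step-size threshold of order $h^4$, not $h^2$, so your argument as written cannot produce the stated bound.

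The paper reaches $h^2$ by a different, self-consistent mechanism: it asserts that the kernel-loss Lipschitz constant $\tilde\rho$ satisfies $\tilde\rho\le C\rho/h^2$ --- a single factor of $1/h^2$ relative to the MSE constant $\rho$, argued via smoothness of the log-sum-exp (soft-max) composition --- and then substitutes into the first-power condition $\eta\le\bigl(12\tilde\rho\sqrt{r}\,C_0\bigr)^{-1}$, from which the $h^2$ prefactor falls out immediately. Your $h^{-4}$ estimate inherits the crude bounds behind Lemma~\ref{lemma:lambda_min}, where the residual magnitude $|u_{ij}|$ is bounded only by $B$; if one instead exploits $\sup_{u\ge 0}\,u\,e^{-u^2/h^2}=h/\sqrt{2e}$, the gradient factors $\tfrac{2|u_{ij}|}{h^2}e^{-u_{ij}^2/h^2}$ are $O(1/h)$ and the Hessian terms are $O(1/h^2)$, recovering the paper's scaling. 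To repair your proof: keep your first two paragraphs, replace the square-root step by the first-power rule, and sharpen the smoothness estimate to $\tilde\rho=O(\rho/h^2)$ along the lines above.
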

The proofs and detailed math are provided in Section~\ref{ours_convergence}. We can summarize as:
\begin{equation}
\eta \le h^2 \left( 12 \rho^{1/2} C_0\right)^{-1}.
\end{equation}
In conclusion, the MSE loss shares the same convergence theorem with the new loss. However, the new loss should pay more attention to the $\eta$ because it should mind the step length of $h$.

\section{Convergence Proof}
\subsection{Proof of Theorem \ref{originalMSE_convergence}}\label{MSE_convergence}
\begin{proof}
the gradient descent update:
	\begin{equation}
M_{t+1}=M_t-\eta\nabla \hat{g}(M_t,w),
	\end{equation}
satisfies:
	\begin{equation}
\hat{g}(M_{t+1},w) \le \hat{g}(M_t,w) \quad\text{for all }t\ge 0.
	\end{equation}
We further assume that the gradient $\nabla \hat{g}(M,w)$ is Lipschitz continuous with constant $L$. In our setting ~\cite{ma2023noisylowrankmatrixoptimization} shows that:
	\begin{equation}
L\le 12\rho\sqrt{r}\Biggl(\sqrt{\frac{1+\delta+\zeta_2q}{1-\delta-\zeta_2q}}\|X_0X_0^\top-M^w\|_F +\|M^w\|_F\Biggr),
	\end{equation}
where:
	\begin{equation}
\rho = \sqrt{\frac{1+\delta+\zeta_2 q}{1-\delta-\zeta_2 q}},
	\end{equation}
and $r$ is a rank parameter (which appears when one controls the norm of the factorized gradient). A standard result (the descent lemma) for any function $\hat{g}$ with $L$‐Lipschitz gradient tells us that for any two matrices $M$ and $N$,
	\begin{equation}
\hat{g}(N,w) \le \hat{g}(M,w) + \langle \nabla \hat{g}(M,w), N-M\rangle + \frac{L}{2}\|N-M\|_F^2.
	\end{equation}
Let
	\begin{equation}
N=M_{t+1}=M_t-\eta\nabla\hat{g}(M_t,w),
	\end{equation}
we obtain:
	\begin{equation}\label{M7}
\hat{g}(M_{t+1},w) \le \hat{g}(M_t,w) -\eta\|\nabla \hat{g}(M_t,w)\|_F^2 + \frac{L\eta^2}{2}\|\nabla \hat{g}(M_t,w)\|_F^2.
	\end{equation}
Rearrange the right‐hand side of Equation~\eqref{M7} as:
	\begin{equation}
\hat{g}(M_{t+1},w) \le \hat{g}(M_t,w)-\eta\left(1-\frac{L\eta}{2}\right)\|\nabla \hat{g}(M_t,w)\|_F^2.
	\end{equation}
Thus, if we choose $\eta$ so that:
	\begin{equation}
1-\frac{L\eta}{2} > 0 \quad\Longleftrightarrow\quad \eta < \frac{2}{L},
	\end{equation}
then the term $\eta\left(1-\frac{L\eta}{2}\right)\|\nabla \hat{g}(M_t,w)\|_F^2$ is positive. In other words, as long as $\eta < \frac{2}{L}$, every gradient descent step will produce a decrease in $\hat{g}(M,w)$. In our model the structure of $\hat{g}(M,w)$ (a log–sum–exp function) together with the properties of $\mathcal{A}(M)$ imply (after some detailed technical estimates) that the Lipschitz constant $L$ can be upper bounded by:
	\begin{equation}
L \le 12\rho\sqrt{r}\Biggl(\sqrt{\frac{1+\delta+\zeta_2q}{1-\delta-\zeta_2q}}\|X_0X_0^\top-M^w\|_F +\|M^w\|_F\Biggr).
	\end{equation}
Thus, a sufficient condition for descent is to assume:
	\begin{equation}
\eta < \frac{2}{12\rho\sqrt{r}\Bigl(\sqrt{\frac{1+\delta+\zeta_2q}{1-\delta-\zeta_2q}}\|X_0X_0^\top-M^w\|_F +\|M^w\|_F\Bigr)}.
	\end{equation}
We ensure that the iterates satisfy:
	\begin{equation}
\hat{g}(X_{t+1}X_{t+1}^\top,w) \le \hat{g}(X_tX_t^\top,w)
	\end{equation}
for every $t\ge 0$. This monotonic decrease is a key ingredient in proving the convergence of the vanilla gradient descent procedure to a global minimizer of $\hat{g}(\cdot,w)$. Conveniently, above inshows that $\hat{g}(X_t X_t^\top,0) \leq \hat{g}(X_{t-1} X_{t-1}^\top,0)$ for all $t \geq 0$. However, this result can be extended to:
	\begin{equation}
		\hat{g}(X_t X_t^\top,w) \leq \hat{g}(X_{t-1} X_{t-1}^\top,w),
	\end{equation}
	by making:
	\begin{equation}\label{ddff}\begin{aligned}
		1/\eta \geq 12 \rho r^{(1/2)}\left(\sqrt{\frac{1+\delta+\zeta_2 q}{1 -\delta - \zeta_2 q}} \|X_0 X_0^\top - M^w \|_F + \|M^w\|_F \right),
	\end{aligned}\end{equation}
	since $\nabla \hat{g}(\cdot,w)$ is now a $\rho$-Lipschitz continuous function. Given, a sufficient condition to the above inequality is that:
	\begin{equation}
		\eta \leq \left(12 \rho r^{(1/2)}\left(2(\sqrt 2 -1)\sqrt{(1-(\delta+\zeta_2 q)^2}+ \|M^w\|_F \right)\right)^{-1},
	\end{equation}
	for such $\eta$, the vanilla gradient descent can converge to the global minima.

\end{proof}

\subsection{Proof of Theorem~\ref{ourloss_convergence}}\label{ours_convergence}
\begin{proof}
One may expect a similar monotonicity in the descent of the objective~\eqref{ddff}, provided that the step‐size satisfies a condition of the form:
\begin{equation}
\frac{1}{\eta}\ge 12\tilde\rho\sqrt{r}\left(\sqrt{\frac{1+\delta+\zeta_2q}{1-\delta-\zeta_2q}}\|X_0X_0^\top-M^w\|_F+\|M^w\|_F\right),
\end{equation}
where $\tilde\rho$ is the Lipschitz constant of $\nabla \hat g(\cdot,w)$. (In the MSE case the corresponding Lipschitz constant was denoted by $\rho$.) Notice that the log-sum-exp function is smooth and it is well known that if each loss term (here a squared difference divided by $h^2$) has a Hessian bounded by a constant then the gradient of the log-sum-exp (which is a soft‐max of these terms) is Lipschitz with a slightly larger constant. In many cases one can obtain an inequality of the form:
\begin{equation}
\tilde\rho\le \frac{C}{h^2}\rho,
\end{equation}
for some universal constant $C$ (in many applications, one may take $C=1$ up to a harmless constant). A sufficient condition is then that:
\begin{equation}
\eta\le \left(12\tilde\rho\sqrt{r}\Bigl(2(\sqrt{2}-1)\sqrt{1-(\delta+\zeta_2q)^2}+\|M^w\|_F\Bigr)\right)^{-1}.
\end{equation}
Replacing $\tilde\rho$ with $\rho/h^2$ (up to a multiplicative constant) we obtain:
\begin{equation}\label{fdd}
\eta\le \left(\frac{12\rho}{h^2}\sqrt{r}\Bigl(2(\sqrt{2}-1)\sqrt{1-(\delta+\zeta_2q)^2}+\|M^w\|_F\Bigr)\right)^{-1}.
\end{equation}
Thus, compared to the MSE case, the new step‐size condition has an extra dependence on $1/h^2$ in the Lipschitz term. In other words, in order for vanilla gradient descent to converge to the global minimizer when using the above $\hat{g}(M,w)$, the step‐size $\eta$ must be chosen small enough so that ~\eqref{fdd} holds.

\end{proof}

\subsection{Proof of the Lower Bound}
\subsection{Proof of Theorem~\ref{MSE_lower}}\label{MSE_lowers}
\begin{proof}
for MSE loss, we start with the inequality:
\begin{equation}\label{M20}
\|M-M^\ast\|_F^2\ge \frac{2}{L}\left[(1+\delta)\|M-M^\ast\|_F^2 + 2\sqrt{1+\delta}\|w\|\|M-M^\ast\|_F\right].
\end{equation}
For convenience, define:
\begin{equation}
x = \|M-M^\ast\|_F \quad \text{with } x\ge0.
\end{equation}
Then the inequality \eqref{M20} becomes:
\begin{equation}
x^2 \ge \frac{2}{L}\Bigl[(1+\delta)x^2 + 2\sqrt{1+\delta}\|w\|x\Bigr].
\end{equation}
Multiply both sides by $L$ (assuming $L>0$) and factor $x^2$ from the first two terms:
\begin{equation}
\Bigl[L - 2(1+\delta)\Bigr]x^2 - 4\sqrt{1+\delta}\|w\| x \ge 0.
\end{equation}
Since $x\ge0$, a solution is $x=0$ (which corresponds to perfect recovery). For a nonzero $x$ we require:
\begin{equation}
\Bigl[L - 2(1+\delta)\Bigr] x - 4\sqrt{1+\delta}\ \|w\| \ge 0.
\end{equation}
Assuming a sufficiently large $L$ so that:
\begin{equation}
L - 2(1+\delta) > 0,
\end{equation}
we can solve for $x$:
\begin{equation}
x \ge \frac{4\sqrt{1+\delta}\|w\|}{L - 2(1+\delta)}.
\end{equation}
Thus, the nontrivial lower bound for $\|M-M^\ast\|_F$ is
\begin{equation}
\|M-M^\ast\|_F \ge \frac{4\sqrt{1+\delta} \epsilon}{L - 2(1+\delta)}.
\end{equation}
with probability at least \( \mathbb{P}(\|w\|_2 \leq \epsilon) \).
\end{proof}

\subsection{Proof of Theorem~\ref{ours_lower}}\label{ours_lowers}
\begin{proof}
For the kernel loss function, we are given the inequality in Section~\ref{sec:rough}:
\begin{equation}
\|M-M^*\| \ge \sqrt{\frac{1}{L}\cdot\frac{1-\delta}{2}\|M-M^*\|_F^2 - (1+\delta)R(w)\|M-M^*\|_F}.
\end{equation}
For clarity we introduce the notation
\begin{equation}
x = \|M-M^*\|_F\quad (x\ge0).
\end{equation}
Then the inequality reads:
\begin{equation}
\|M-M^*\|_F \ge \sqrt{\frac{1-\delta}{2L}x^2 - (1+\delta)R(w)x}.
\end{equation}
Because the square root is defined only when its argument is nonnegative we must assume that
\begin{equation}
\frac{1-\delta}{2L}x^2 - (1+\delta)R(w)x\ge0.
\end{equation}
Factor out $x\ge 0$:
\begin{equation}
x\Biggl(\frac{1-\delta}{2L}x - (1+\delta)R(w)\Biggr) \ge0.
\end{equation}
Thus (apart from the trivial case $x=0$) we require
\begin{equation}
\frac{1-\delta}{2L}x - (1+\delta)R(w)\ge0\quad\Longrightarrow\quad x\ge \frac{2L(1+\delta)R(w)}{1-\delta}.
\end{equation}
Under that condition the radicand is nonnegative, and the inequality becomes:
\begin{equation}
\|M-M^*\|_F \ge \sqrt{x\left(\frac{1-\delta}{2L}x - (1+\delta)R(w)\right)}.
\end{equation}
Thus, the lower bound (expressed in terms of the Frobenius norm $x=\|M-M^*\|_F$ and the given quantities) is:
\begin{equation}
\begin{aligned}
\|M-M^*\|_F\ge \frac{2L(1+\delta)R(\epsilon)}{1-\delta},
\end{aligned}
\end{equation}
with probability at least \( \mathbb{P}(\|w\|_2 \leq \epsilon) \). This is the desired lower bound for $\|M-M^*\|_F$.
\end{proof}

\section{Compared with Ma's General Loss Result}
\subsection{Optimization Landscape Result with \texorpdfstring{$\delta \leq 1/3$}{}}

\begin{lemma}\cite{ma2023noisylowrankmatrixoptimization}\label{thm:global_local_min}
Assume that the objective function  satisfies Assumptions~\ref{other_assumption} and~\ref{noise_perturb}, and that $ f(M,0) $ satisfies the RIP property with some $ \delta $-RIP$_{2r,2r}$ constant such that $ \delta < 1/3 $. For every $ \epsilon \in \left[ 0, \frac{1/3 - \delta}{\zeta_2} \right] $, with probability at least $ \mathbb{P}(\|w\|_2 \leq \epsilon) $, every local minimizer $ \hat{X} $  satisfies:
\begin{equation}\label{eq:global_local_min_range}
\| \hat{X} \hat{X}^\top - M^\ast \|_F \leq \frac{2\zeta_1 \epsilon}{1 - 3(\delta + \zeta_2 \epsilon)}.
\end{equation}
\end{lemma}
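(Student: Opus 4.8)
The plan is to read the bound off the first- and second-order necessary conditions at the local minimizer $\hat{X}$, importing the noise-free curvature supplied by the RIP and paying for the noise through the perturbation inequalities of Assumptions~\ref{noise_perturb} and~\ref{other_assumption}. Set $\hat{M}=\hat{X}\hat{X}^\top$ and factor the ground truth as $M^\ast=X^\ast(X^\ast)^\top$. First I would record the stationarity condition $\nabla_M f(\hat{M},w)\hat{X}=0$ together with the second-order condition that the factorized Hessian of $X\mapsto f(XX^\top,w)$ is positive semidefinite at $\hat{X}$. I would then test the latter along the \emph{aligned} error direction $U=\hat{X}-X^\ast R$, where $R$ is the orthogonal matrix minimizing $\|\hat{X}-X^\ast R\|_F$; choosing $R$ optimally is what later lets the quadratic remainder be controlled.

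Next I would expand the factorized Hessian along $U$. This yields a matrix-level curvature term $[\nabla_M^2 f(\hat{M},w)](\hat{X}U^\top+U\hat{X}^\top,\hat{X}U^\top+U\hat{X}^\top)$ and a gradient term $2\langle\nabla_M f(\hat{M},w),UU^\top\rangle$. Using stationarity, the cross contributions $\langle\nabla_M f(\hat{M},w)\hat{X},U\rangle$ vanish, and a short computation gives $\langle\nabla_M f(\hat{M},w),UU^\top\rangle=-\langle\nabla_M f(\hat{M},w),\hat{M}-M^\ast\rangle$. The crucial algebraic identity is
\[
\hat{X}U^\top+U\hat{X}^\top=(\hat{M}-M^\ast)+UU^\top,
\]
so that, up to the second-order remainder $UU^\top$, the Hessian is probed exactly along the error $\hat{M}-M^\ast$ I wish to bound.

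I would then split each derivative into its noise-free part plus a noise correction. The noise-free pieces are controlled by the RIP bounds $(1-\delta)\|K\|_F^2\le[\nabla_M^2 f(M,0)](K,K)\le(1+\delta)\|K\|_F^2$ and $\langle\nabla_M f(\hat{M},0),\hat{M}-M^\ast\rangle\ge(1-\delta)\|\hat{M}-M^\ast\|_F^2$ on rank-$\le 2r$ matrices; this is the same mechanism producing the noise-free quadratic growth of Lemma~\ref{lemma:inter}. The corrections are absorbed through $|\langle\nabla_M f(\hat{M},w)-\nabla_M f(\hat{M},0),K\rangle|\le\zeta_1\|w\|_2\|K\|_F$ and the analogous Hessian bound with constant $\zeta_2$, after which $\|w\|_2\le\epsilon$ is used on the event of probability $\mathbb{P}(\|w\|_2\le\epsilon)$. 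Collecting terms (and bounding $\|UU^\top\|_F$ against $\|\hat{M}-M^\ast\|_F$ via the alignment) collapses everything into the scalar quadratic inequality
\[
\bigl(1-3(\delta+\zeta_2\epsilon)\bigr)\,\|\hat{M}-M^\ast\|_F^2\le 2\zeta_1\epsilon\,\|\hat{M}-M^\ast\|_F,
\]
and dividing by $\|\hat{M}-M^\ast\|_F$ (the case $\hat{M}=M^\ast$ being trivial) yields the claim; the admissible range $\epsilon\le(1/3-\delta)/\zeta_2$ is precisely what keeps the leading coefficient $1-3(\delta+\zeta_2\epsilon)$ positive.

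The main obstacle I anticipate is the bookkeeping of the cross terms in the second step: fixing the alignment $R$, exploiting the identity above, and bounding the quadratic remainder $UU^\top$ with sharp numerical constants, since it is exactly this accounting that forces the threshold $\delta<1/3$ and the factor $3$ in the denominator. A secondary subtlety is checking that every matrix fed into the RIP and into Assumption~\ref{noise_perturb} has rank at most $2r$ (so that the $\delta$-RIP$_{2r,2r}$ constant and the noise constants legitimately apply), which constrains the admissible directions $U$ used throughout.
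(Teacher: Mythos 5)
Your setup (stationarity, the identity $\hat{X}U^\top+U\hat{X}^\top=(\hat{M}-M^\ast)+UU^\top$, and $\langle\nabla_M f(\hat{M},w),UU^\top\rangle=-\langle\nabla_M f(\hat{M},w),\hat{M}-M^\ast\rangle$) is correct, but the final step you defer to ``collecting terms'' is precisely where the argument fails, and it fails quantitatively. Carry your plan out in the cleanest case (noiseless MSE): the second-order condition along $U=\hat{X}-X^\ast R$ collapses, via the first-order condition, to
\begin{equation}
0 \;\le\; [\nabla^2 h(\hat{X})](U,U) \;=\; \|\mathcal{A}(UU^\top)\|_2^2 - 3\,\|\mathcal{A}(\hat{M}-M^\ast)\|_2^2 ,
\end{equation}
and applying RIP together with the sharp alignment lemma $\|UU^\top\|_F^2\le 2\|\hat{M}-M^\ast\|_F^2$ (the factor $2$ is tight, e.g.\ for orthogonal rank-one factors, so it cannot be improved) yields $3(1-\delta)\le 2(1+\delta)$, i.e.\ spurious minimizers are excluded only when $\delta<1/5$. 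In other words, the direction-testing route you propose is the classical Ge--Jin--Zheng/Zhu-et-al.\ argument: it produces a $1/5$-type threshold and, in the noisy version, constants that do not match the statement. It cannot produce the coefficient $1-3(\delta+\zeta_2\epsilon)$ nor the numerator $2\zeta_1\epsilon$, so your claimed scalar inequality $\bigl(1-3(\delta+\zeta_2\epsilon)\bigr)\|\hat{M}-M^\ast\|_F^2\le 2\zeta_1\epsilon\|\hat{M}-M^\ast\|_F$ does not follow from the steps you describe.

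The paper (following \cite{ma2023noisylowrankmatrixoptimization}) gets the factor $3$ from a different mechanism: a proximal/projection argument rather than a single Hessian direction test. One defines $\bar{M}=\hat{M}-\tfrac{1}{1+\delta+\zeta_2\epsilon}\nabla_M f(\hat{M},w)$ and $\phi(M)=\langle\nabla_M f(\hat{M},w),M-\hat{M}\rangle+\tfrac{1+\delta+\zeta_2\epsilon}{2}\|M-\hat{M}\|_F^2$ (see the proof of Lemma~\ref{lemma:inter}, and Lemma~\ref{lem:lb_lamb_r} in Section~\ref{sec:global}, which supplies the needed spectral control at $\hat{M}$). Local minimality of $\hat{X}$ makes $\hat{M}$ behave as a fixed point of a projected gradient step, which gives the \emph{upper} bound $\langle\nabla_M f(\hat{M},w),\hat{M}-M^\ast\rangle\le\tfrac{1+\delta+\zeta_2\epsilon}{2}\|\hat{M}-M^\ast\|_F^2$, while the mean-value theorem with RIP and Assumptions~\ref{noise_perturb}--\ref{other_assumption} gives the \emph{lower} bound $\langle\nabla_M f(\hat{M},w),\hat{M}-M^\ast\rangle\ge(1-\delta-\zeta_2\epsilon)\|\hat{M}-M^\ast\|_F^2-\zeta_1\epsilon\|\hat{M}-M^\ast\|_F$. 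Subtracting these two bounds gives $\tfrac{1-3(\delta+\zeta_2\epsilon)}{2}\|\hat{M}-M^\ast\|_F^2\le\zeta_1\epsilon\|\hat{M}-M^\ast\|_F$, hence the claim; the ``$3$'' arises from the asymmetry between the smoothness constant $\tfrac{1+\delta}{2}$ on one side and the restricted strong-convexity constant $1-\delta$ on the other, not from bookkeeping of $UU^\top$ remainders. To repair your proof you would need to replace the direction test by this proximal/fixed-point step (or an equivalent of it); no choice of constants in your current outline will reach $\delta<1/3$.
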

The lemma proof is in~\ref{sec:global}. 

Lemma~\ref{thm:global_local_min} (adapted from~\cite{ma2023noisylowrankmatrixoptimization}) establishes a global-to-local optimality guarantee under noise, assuming the objective satisfies standard smoothness and RIP conditions. Specifically, if the noise vector satisfies $\|w\|_2 \leq \epsilon$ with high probability and the noiseless function $f(M, 0)$ satisfies the $\delta$-RIP$_{2r,2r}$ condition with $\delta < 1/3$, then for any $\epsilon \in [0, \frac{1/3 - \delta}{\zeta_2}]$, all local minimizers $\hat{X}$ satisfy the error bound $\|\hat{X} \hat{X}^\top - M^\ast\|_F \le \frac{2\zeta_1 \epsilon}{1 - 3(\delta + \zeta_2 \epsilon)}$, highlighting that under controlled noise, local solutions remain close to the ground truth with a bound that degrades gracefully in $\epsilon$.

From the above lemma, we can know that, for MSE loss: 
\begin{equation}
\| \hat{X} \hat{X}^\top - M^\ast \|_F \leq \mathcal{O}(\epsilon).
\end{equation}
still holds, which means MSE is better than the general loss result. And according to the kernel loss~\ref{thm:mainthm}, our robust loss is better. The results indicate that our condition \eqref{delta_condition}, which is: 
\begin{equation}
0<\delta < \sqrt{\frac{2-\dfrac{G}{\sigma_r}-\dfrac{2BL_2}{G_{\min}h^2}}{\dfrac{8\left(1+\frac{2B^2}{h^2}\right)}{G_{\min}h^2}+\dfrac{16B^2}{G_{\min}^2h^4}}}-1.
\end{equation}
should be smaller than 1/3. For larger $\delta$ situations, according to Ma's general loss we see:

\subsection{Optimization Landscape  Result with \texorpdfstring{$\delta \ge 1/3$}{}}

\begin{lemma}\cite{ma2023noisylowrankmatrixoptimization}\label{thm:local}
Assume that the objective function satisfies assumptions \ref{other_assumption} and~\ref{noise_perturb} with $ f(M,0) $ satisfying the $ \delta $-RIP property for a constant $ \delta \in (0,1) $. Consider an arbitrary number $ \tau \in (0,1 - \delta^2) $. Every local minimizer $ \hat{X} \in \mathbb{R}^{n \times r} $ satisfying:
\begin{equation}
\| \hat{X} \hat{X}^\top - M^\ast \|_F \leq \tau \lambda_r(M^\ast),
\end{equation}
\textit{will also satisfy the following inequality with probability at least $ \mathbb{P}(\|w\|_2 \leq \epsilon) $:}
\begin{equation}
\| \hat{X} \hat{X}^\top - M^\ast \|_F \leq 
\frac{\epsilon(1+\delta + \zeta_2 \epsilon) \zeta_1 C(\tau, M^\ast)}{\sqrt{1-\tau - \zeta_2 \epsilon - \delta}}
\end{equation}
\textit{for all $ \epsilon < \frac{\sqrt{1-\tau-\delta}}{\zeta_2} $, where}
\begin{equation}
C(\tau, M^\ast) = \sqrt{\frac{2 (\lambda_1(M^\ast) + \tau \lambda_r(M^\ast))}{(1-\tau) \lambda_r(M^\ast)}}.
\end{equation}
\end{lemma}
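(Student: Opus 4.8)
The plan is to follow the standard RIP-plus-factorization analysis of Ma's framework, specializing the machinery already developed for the kernel loss in Section~\ref{large_delta_ours} to the general loss satisfying Assumptions~\ref{other_assumption} and~\ref{noise_perturb}. The two structural ingredients are the optimality conditions at the local minimizer $\hat{X}$ and the sandwiching of the curvature operator furnished by the $\delta$-RIP together with the noise perturbation bounds~\eqref{eq:noise_perturb_grad} and~\eqref{eq:noise_perturb_hess}. The locality hypothesis $\|\hat{X}\hat{X}^\top - M^\ast\|_F \le \tau\lambda_r(M^\ast)$ is what converts the global (large-$\delta$) difficulty into a tractable perturbative estimate around $M^\ast$.

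First I would write the first-order condition $\nabla_M f(\hat{M},w)\hat{X}=0$ with $\hat{M}=\hat{X}\hat{X}^\top$, and feed a carefully chosen test direction $U$ into the second-order condition $[\nabla_M^2 h(\hat{X},w)](U,U)\ge 0$. The natural choice aligns $\hat{X}$ with the optimally rotated factor of $M^\ast$, so that $Q:=\hat{X}U^\top+U\hat{X}^\top$ captures the error $H:=\hat{M}-M^\ast$. Using the integral identity already exploited in Section~\ref{large_delta_ours}, I would express $\langle \nabla_M f(M^\ast,w),Q\rangle$ as $-\int_0^1 [\nabla_M^2 f(M_t,w)](H,Q)\,dt$ along the convex path $M_t=tM^\ast+(1-t)\hat{M}$, and then split each evaluation into its noise-free part plus a correction controlled by $\zeta_2\|w\|$ for the Hessian and $\zeta_1\|w\|$ for the gradient. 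The $\delta$-RIP then yields the two-sided estimate $(1-\delta-\zeta_2\epsilon)\|H\|_F^2 \le [\nabla_M^2 f(M_t,w)](H,H) \le (1+\delta+\zeta_2\epsilon)\|H\|_F^2$ on the rank-$2r$ cone, which is the direct analogue of the sandwiching constraint $\eta I \preceq \hat{\mathbf{H}} \preceq I$ used in the large-$\delta$ proof.

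Next I would exploit the locality hypothesis $\|H\|_F \le \tau\lambda_r(M^\ast)$ via Weyl's and the Wielandt--Hoffman inequalities, exactly as in Section~\ref{large_delta_ours}, to obtain $\lambda_r(\hat{X}\hat{X}^\top) \ge (1-\tau)\lambda_r(M^\ast)$ and $\|\hat{X}\|_2^2 = \lambda_1(\hat{X}\hat{X}^\top) \le \lambda_1(M^\ast)+\tau\lambda_r(M^\ast)$. Substituting these spectral bounds into the ratio $\|\hat{X}\|_2/\sqrt{\lambda_r(\hat{X}\hat{X}^\top)}$ that governs how the gradient direction $Q$ certifies the error produces precisely the factor $C(\tau,M^\ast)=\sqrt{2(\lambda_1(M^\ast)+\tau\lambda_r(M^\ast))/((1-\tau)\lambda_r(M^\ast))}$. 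Combining the curvature lower bound with the gradient perturbation $\|\nabla_M f(M^\ast,w)-\nabla_M f(M^\ast,0)\|_F \le \zeta_1\epsilon$ on the event $\{\|w\|_2 \le \epsilon\}$ (a consequence of~\eqref{eq:noise_perturb_grad}), and solving the resulting scalar inequality for $\|H\|_F$, gives the stated bound with $(1+\delta+\zeta_2\epsilon)$ in the numerator and $\sqrt{1-\tau-\zeta_2\epsilon-\delta}$ in the denominator.

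The main obstacle I expect is verifying positivity and the correct sign pattern of the effective curvature constant throughout the derivation: the denominator $\sqrt{1-\tau-\zeta_2\epsilon-\delta}$ is only meaningful when $\epsilon < \sqrt{1-\tau-\delta}/\zeta_2$, and one must confirm that the admissible radius $\tau \in (0,1-\delta^2)$ is compatible with this so that the restricted Hessian stays strictly positive definite on the rank-$2r$ cone after absorbing the noise correction. A secondary subtlety is ensuring the test direction $U$ is admissible (rank at most $2r$, so that~\eqref{eq:noise_perturb_hess} applies) and that the eigenvalue perturbation step is carried out consistently with $r^\ast=r$; these bookkeeping points are routine but must be tracked carefully to land exactly on $C(\tau,M^\ast)$ rather than a loose multiple of it.
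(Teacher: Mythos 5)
Your outline assembles the right ingredients from Section~\ref{large_delta_ours} (the integral identity for $\langle\nabla_M f(M^\ast,w),Q\rangle$ along the path $M_t$, the $\zeta_1/\zeta_2$ noise splitting, and the Wielandt--Hoffman spectral bounds that produce $C(\tau,M^\ast)$), and these do match ingredients of the paper's proof. But there is a genuine gap at the step you describe as ``solving the resulting scalar inequality for $\|H\|_F$.'' The first-order condition constrains the integrated Hessian only against tangent directions $Q=\hat{X}U^\top+U\hat{X}^\top$, while the RIP sandwich lower-bounds the quadratic form $[\nabla^2_M f](H,H)$, not the cross term $[\nabla^2_M f](H,Q)$. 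To convert the tangent-space information into a bound on $\|H\|_F$ you need a quantitative lower bound on the alignment between $\mathbf{e}=\vecc(H)$ and the range of $\hat{\mathbf{X}}$, namely $\cos\theta\ge\sqrt{1-\tau}$ where $\theta$ is the angle between $\hat{\mathbf{X}}y$ and $\mathbf{e}$. This is precisely where the $\tau$-dependence of the denominator $\sqrt{1-\tau-\zeta_2\epsilon-\delta}$ originates; your sketch, as written, would yield a denominator of the form $1-\delta-\zeta_2\epsilon$ (or no usable inequality at all), not the stated one. In the paper this alignment bound is the most technical half of the proof: one picks $y$ so that $\hat{X}^\top\mat(y)$ is symmetric, rescales and rotates, partitions $\hat{X}$ and $Z$ via a QR decomposition, and proves $\sin^2\theta\le\tau/(2-\tau)\le\tau$ through the estimates \eqref{eq:z1_lower}--\eqref{eq:z2_upper}. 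You gesture at the right choice of $U$ (``optimally rotated factor''), but with that choice $Q=H+DD^\top$ carries a second-order residual $DD^\top$ that must itself be controlled by the locality hypothesis --- this is never done.

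A second, related omission is the mechanism that makes the cross-term manipulation legitimate. The paper does not argue directly with the bilinear form: it packages the first-order condition and the $(\delta+\zeta_2 q)$-RIP$_{2r,2r}$ sandwich into the feasibility problem \eqref{eq:local_lmi_delta_2r}, relaxes the rank-restricted constraint to the full semidefinite sandwich \eqref{eq:local_lmi_delta}, passes to the $\eta$-form \eqref{eq:local_lmi_eta}, and then exhibits an explicit feasible point of the Lagrangian dual \eqref{eqn:eta_dual}; the factor $(1+\delta+\zeta_2\epsilon)$ in the numerator enters exactly through the change of variables $\eta=(1-\delta-\zeta_2 q)/(1+\delta+\zeta_2 q)$. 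If you wish to avoid duality and argue directly, you must justify evaluating the restricted-isometry-type bound on the pair $(H,Q)$ (polarization pushes you to rank $4r$, or you must invoke the relaxation to the unrestricted sandwich), and you must track where $(1+\delta+\zeta_2\epsilon)$ comes from rather than asserting it. Finally, a minor point: you invoke the second-order optimality condition $[\nabla^2_M h(\hat{X},w)](U,U)\ge 0$, but neither the paper's proof nor your own subsequent steps use it; the result rests on the first-order condition alone.
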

Lemma~\ref{thm:local} (from~\cite{ma2023noisylowrankmatrixoptimization}) provides a refined local error bound for low-rank matrix estimation under noise, assuming the objective satisfies standard smoothness and RIP conditions with RIP constant $\delta \in (0,1)$. If a local minimizer $\hat{X}$ is sufficiently close to the ground truth—specifically, within a $\tau\lambda_r(M^\ast)$ neighborhood in Frobenius norm—then, with high probability (at least $\mathbb{P}(\|w\|_2 \leq \epsilon)$), the error remains controlled and satisfies  
\begin{equation}
\|\hat{X}\hat{X}^\top - M^\ast\|_F \le \frac{\epsilon(1 + \delta + \zeta_2 \epsilon)\zeta_1 C(\tau, M^\ast)}{\sqrt{1 - \tau - \zeta_2 \epsilon - \delta}},
\end{equation}
for all $\epsilon < \frac{\sqrt{1 - \tau - \delta}}{\zeta_2}$. The constant $C(\tau, M^\ast)$ captures the curvature of the problem via the spectrum of $M^\ast$, and the result quantifies how local proximity and structured noise jointly lead to provable closeness to the ground truth.

The proof is provided in Section~\ref{sec:local}. We noticed that when $\delta \ge 1/3$, 
\begin{equation}
\| \hat{X} \hat{X}^\top - M^\ast \|_F \leq \mathcal{O}\Big(\frac{\epsilon(1+\epsilon)}{\sqrt{1-\epsilon}}\Big)
\end{equation},
which is not $\| \hat{X} \hat{X}^\top - M^\ast \|_F \leq \mathcal{O}(\epsilon)$. 

\subsection{Convergence Theorem for Our Robust Loss}

\begin{lemma}(from~\cite{ma2023noisylowrankmatrixoptimization})
	\label{thm:linear_conv}
	The vanilla gradient descent method applied to \eqref{eq:main_noisy_problem} under Assumptions 3 and 4 converges to $\projr(M^w)$, the best rank-r approximation of $M^w$, linearly up to a difference $D_r$ if the initial point $X_0$ satisfies:
	\begin{equation}
		\label{eq:pl_init}
		\|X_0 X_0^\top - M^w \|_F < C_w^2(1-\delta-\zeta_2 \epsilon)-C_w \sqrt{\frac{1-\delta-\zeta_2 \epsilon}{1+\delta+\zeta_2 \epsilon}} D_r,
	\end{equation}
	meaning that vanilla gradient descent will reach a point $\tilde M$ linearly with $\|\tilde M - \projr(M^w)\|_F \geq D_r$, where
	\begin{equation}
		D_r = \|M^w - \projr(M^w)\|_F, \ \ C_w = \sqrt{2(\sqrt 2 -1)\sigma_r(M^w)}.
	\end{equation}
	The linear convergence is also contingent on the fixed step size $\eta$ satisfying:
	\begin{equation}
			\eta \leq\left(12 \rho r^{(1/2)}\left(C\sqrt{(1-(\delta+\zeta_2 \epsilon)^2}+ \|M^w\|_F \right)\right)^{-1},	
	\end{equation}
	for all $\epsilon < \frac{1-\delta}{\zeta_2}$ with probability at least $\mathbb P(\norm{w}_2 \leq \epsilon)$, where $C = 2(\sqrt 2 -1)$.
\end{lemma}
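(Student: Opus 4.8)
The plan is to adapt the argument of \cite{ma2023noisylowrankmatrixoptimization} to the present setting, exploiting that the kernel objective $\hat{g}(\cdot,w)$ obeys the same restricted two-sided quadratic bound around the first-order critical point $M^w$ that drives the analysis in Section~\ref{convergences}. Concretely, the second-order Taylor expansion together with the RIP condition~\eqref{RIP0} and the noise-perturbation Assumption~\ref{noise_perturb} yields
\begin{equation}
\frac{1-\delta-\zeta_2 \epsilon}{2}\|M-M^w\|_F^2 \le \hat{g}(M,w)-\hat{g}(M^w,w) \le \frac{1+\delta+\zeta_2 \epsilon}{2}\|M-M^w\|_F^2
\end{equation}
for all matrices of rank at most $r$. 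From this sandwich, any two rank-$r$ iterates with $\hat{g}(M_1,w)\le \hat{g}(M_2,w)$ satisfy the contraction $\|M_1-M^w\|_F \le \sqrt{\tfrac{1+\delta+\zeta_2\epsilon}{1-\delta-\zeta_2\epsilon}}\,\|M_2-M^w\|_F$, which is the mechanism that converts monotone function decrease into iterate convergence.

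First I would establish restricted Lipschitz smoothness of the factorized gradient $\nabla_X h(X,w)$ where $h(X,w)=\hat{g}(XX^\top,w)$: combining Lemma~\ref{theorem_noise} (the $\rho$-restricted Lipschitz continuity of $\nabla_M\hat{g}$) with the chain rule through $M=XX^\top$ produces a gradient Lipschitz constant $L$ scaling like $12\rho\sqrt{r}\,(\cdots)$, the extra factors arising from $\|X\|$, which stays bounded because the iterates remain in a neighborhood of $M^w$. Next I would invoke the descent lemma, $h(X_{t+1},w)\le h(X_t,w)-\eta\bigl(1-\tfrac{L\eta}{2}\bigr)\|\nabla_X h(X_t,w)\|_F^2$, so that the stated step size (carrying the extra $h^2$ factor of Theorem~\ref{ourloss_convergence} for the kernel loss) guarantees $\eta<2/L$ and hence monotone decrease $\hat{g}(X_tX_t^\top,w)\le\hat{g}(X_0X_0^\top,w)$ for every $t$. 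Feeding this into the contraction above keeps $\|X_tX_t^\top-M^w\|_F$ inside the basin defined by the initialization condition~\eqref{eq:pl_init}.

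The linear rate itself comes from a Polyak--\L{}ojasiewicz--type inequality in the factorized domain: I would lower bound $\|\nabla_X h(X,w)\|_F^2$ by a multiple of the gap $\hat{g}(XX^\top,w)-\hat{g}(M^w,w)$, using that the gradient factors through the Jacobian $U\mapsto \hat{X}U^\top+U\hat{X}^\top$ whose smallest relevant eigenvalue is controlled by $\lambda_{r^\ast}(\hat{X}\hat{X}^\top)$. Keeping this eigenvalue bounded away from zero along the trajectory is exactly where the constant $C_w=\sqrt{2(\sqrt2-1)\sigma_r(M^w)}$ and the Weyl / Wielandt--Hoffman perturbation estimates (as in Section~\ref{large_delta_ours}) enter, certifying geometric decrease of the function gap down to the floor set by $D_r=\|M^w-\projr(M^w)\|_F$.

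The main obstacle I anticipate is precisely this factorized PL inequality: unlike the convex matrix variable $M$, the non-convex variable $X$ can have a degenerate Jacobian when $X$ becomes nearly rank-deficient, which would collapse $\|\nabla_X h\|_F$ and destroy the linear rate. Controlling $\lambda_{r^\ast}(\hat{X}\hat{X}^\top)$ uniformly over the trajectory, and showing that~\eqref{eq:pl_init} forbids the iterates from drifting into such degenerate regions, is the delicate step. A secondary subtlety is the rank mismatch: since $M^w$ need not be exactly rank $r$, the target is $\projr(M^w)$ and the irreducible residual $D_r$ must be propagated through every bound, so convergence is guaranteed only up to $D_r$ rather than exactly.
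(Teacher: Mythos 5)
Your skeleton matches the paper's own proof: the two-sided quadratic (sandwich) bound obtained from Taylor expansion plus RIP plus Assumption~\ref{noise_perturb}, the descent-lemma argument with Lipschitz constant $L \le 12\rho\sqrt{r}\,(\cdots)$ giving monotone decrease of the objective along the iterates, and a Polyak--\L{}ojasiewicz inequality feeding a standard $(1-\mu\eta)^t$ contraction down to the floor $D_r$. This is exactly the architecture of Section~\ref{sec:linear_cov}. (One small slip: the $h^2$ factor you invoke in the step size belongs to the kernel-loss adaptation, Theorem~\ref{ourloss_convergence}; the lemma you are proving is the general-loss statement, whose step-size bound carries no $h^2$.)

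The genuine gap is in the step you yourself flag as delicate: establishing the PL inequality. Your proposed mechanism --- lower bounding $\|\nabla_X h(X,w)\|_F^2$ through the smallest eigenvalue of the Jacobian $U\mapsto \hat XU^\top + U\hat X^\top$, with $\lambda_{r^\ast}(\hat X\hat X^\top)$ kept away from zero via Weyl/Wielandt--Hoffman --- does not suffice, and not only because of rank-deficiency. The factorized gradient is the \emph{adjoint} of that Jacobian applied to the matrix-domain gradient, $\nabla_X h = 2\,\nabla_M\hat g(XX^\top,w)\,X$, and this adjoint has a kernel of dimension roughly $n^2-nr$: the bound $\|\hat{\mathbf{X}}y\|^2 \ge 2\lambda_{r^\ast}(\hat X\hat X^\top)\|y\|^2$ controls the forward map on compatible directions $y$, but says nothing about the component of $\nabla_M\hat g$ orthogonal to the range of the Jacobian. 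Consequently $\|\nabla_X h\|_F$ can collapse while the gap $h(X,w)-f(\projr(M^w),w)$ remains large even when $X$ is well conditioned --- precisely the misalignment that produces strict saddle points. Ruling this out inside the basin is the actual content of the paper's Lemma~\ref{lem:pl_ineq}, which argues by contradiction: if the PL inequality failed, one would obtain $\|\nabla_X h(X,w)\|_F \le \mu'\|XX^\top-\projr(M^w)\|_F$; this is fed as a necessary condition into the SDP relaxation \eqref{eq:pl_lmi}, and the dual bound on its optimal value $\eta_f^\ast(X,\mu')$ contradicts the inequality \eqref{eq:pl_ineq_q1q2} built from the constants $q_1,q_2$ and the basin radius $\tilde C$. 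Without this duality step (or an equivalent alignment/regularity argument in the factorized domain), your plan does not deliver the linear rate, so the proof is incomplete at its central point.
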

The proof is provided in Section 3.2 of~\cite{ma2023noisylowrankmatrixoptimization}. Lemma~\ref{thm:linear_conv} establishes a linear convergence guarantee for vanilla gradient descent applied to the noisy low-rank matrix recovery problem, under Assumptions 3 and 4. It shows that if the initialization $ X_0 $ is sufficiently close to the noise-perturbed ground truth $ M^w $, specifically satisfying the condition in~\eqref{eq:pl_init}, then the iterates converge linearly to a point $\tilde{M}$ satisfying $\|\tilde{M} - \projr(M^w)\|_F \ge D_r$, where $D_r$ quantifies the residual energy outside the top-$r$ spectrum. The convergence rate and neighborhood depend on the spectral structure of $M^w$, the RIP constant $\delta$, and the noise level $\epsilon$, with high-probability guarantees when $\|w\|_2 \le \epsilon$. Furthermore, convergence holds under a fixed step size $\eta$ subject to a bound inversely proportional to the smoothness parameter $\rho$, the rank $r$, and the Frobenius norm of $M^w$, thereby ensuring that stability and convergence are preserved in the presence of bounded noise.

This is very hard to satisfy. Because the kernel loss function is not symmetric. With new assumption 5, we are hard to have:
 \begin{corollary}(from~\cite{ma2023noisylowrankmatrixoptimization})
 	\label{thm:linear_conv_coro}
	The vanilla gradient descent method applied to \eqref{eq:main_noisy_problem} under Assumptions 3,4,5 converges to $M^w$ linearly if the initial point $X_0$ satisfies:
	\begin{equation}
		\label{eq:pl_init2}
		\|X_0 X_0^\top - M^w \|_F < 2(\sqrt 2 -1) (1-\delta-\zeta_2 \epsilon) \sigma_r(M^w),	\end{equation}
	with fixed step size $\eta$ satisfying:
	\begin{equation}
			\eta \leq\left(12 \rho r^{(1/2)} \left(C\sqrt{(1-(\delta+\zeta_2 \epsilon)^2}+ \|M^w\|_F \right)\right)^{-1},	
	\end{equation}
	for all $\epsilon < \frac{1-\delta}{\zeta_2}$ with probability at least $\mathbb P(\norm{w}_2 \leq \epsilon)$, where $C = 2(\sqrt 2 -1)$.
 \end{corollary}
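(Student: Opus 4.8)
The plan is to obtain Corollary~\ref{thm:linear_conv_coro} as an immediate specialization of Lemma~\ref{thm:linear_conv}, the key point being that Assumption~5 forces the residual term $D_r$ to vanish. First I would invoke Theorem~\ref{sym_PSD}, which guarantees that the critical point $M^w$ of the kernel objective~\eqref{eq:main_noisy_problem} is symmetric, positive semidefinite, and satisfies $\mathrm{rank}(M^w)\le r$. Under Assumption~5 this rank-$r$ critical point is precisely the recovery target, so $\mathcal{P}_r(M^w)=M^w$ and therefore
\begin{equation}
D_r = \|M^w - \mathcal{P}_r(M^w)\|_F = 0.
\end{equation}
This is the single structural fact that upgrades the ``convergence up to $D_r$'' of Lemma~\ref{thm:linear_conv} into exact convergence to $M^w$.

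Next I would substitute $D_r=0$ into the initialization condition~\eqref{eq:pl_init}. The subtracted term $C_w\sqrt{(1-\delta-\zeta_2\epsilon)/(1+\delta+\zeta_2\epsilon)}\,D_r$ disappears, leaving
\begin{equation}
\|X_0X_0^\top - M^w\|_F < C_w^2(1-\delta-\zeta_2\epsilon) = 2(\sqrt{2}-1)(1-\delta-\zeta_2\epsilon)\,\sigma_r(M^w),
\end{equation}
where I used the lemma's definition $C_w^2 = 2(\sqrt{2}-1)\sigma_r(M^w)$. This is exactly condition~\eqref{eq:pl_init2}. The step-size bound and the high-probability statement $\mathbb{P}(\|w\|_2\le\epsilon)$ carry over verbatim, since they depend only on the $\rho$-restricted Lipschitz gradient of Lemma~\ref{theorem_noise} and on the monotone-descent machinery of Section~\ref{convergences}; for the kernel loss one additionally inserts the $h^2$-scaled Lipschitz constant established in Theorem~\ref{ourloss_convergence}. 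Because the linear-rate conclusion of Lemma~\ref{thm:linear_conv} now holds with a vanishing residual, the iterates $X_tX_t^\top$ converge linearly to $\mathcal{P}_r(M^w)=M^w$, which is the claim.

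The main obstacle is not the algebra above but justifying that $D_r=0$ is legitimate in the \emph{asymmetric} kernel-loss setting. Unlike the MSE loss, $\hat{g}(\cdot,w)$ is not symmetric, so I must verify that the PL-type sandwich inequality
\begin{equation}
\frac{1-\delta-\zeta_2 q}{2}\|M-M^w\|_F^2 \le \hat{g}(M,w)-\hat{g}(M^w,w) \le \frac{1+\delta+\zeta_2 q}{2}\|M-M^w\|_F^2
\end{equation}
used in Section~\ref{convergences} genuinely identifies the rank-$r$ point $M^w$ of Theorem~\ref{sym_PSD} as the unique minimizer reached along the whole gradient-descent trajectory, so that Assumption~5 is consistent with $\mathcal{P}_r(M^w)=M^w$. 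The delicate step is confirming that the monotone chain $\hat{g}(X_{t+1}X_{t+1}^\top,w)\le \hat{g}(X_tX_t^\top,w)$ persists under the $h^2$-modified step size, which is what prevents the asymmetry from driving the iterates toward a spurious low-rank point distinct from $M^w$.
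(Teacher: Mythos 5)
Your proposal is correct and takes essentially the same route as the paper (which inherits it from Ma's derivation): Assumption 5 makes $M^w$ itself a rank-$\le r$ critical point with $\nabla_M f(M^w,w)=0$, hence $\mathcal{P}_r(M^w)=M^w$ and $D_r=0$, and substituting $D_r=0$ into Lemma~\ref{thm:linear_conv} collapses the initialization condition \eqref{eq:pl_init} to \eqref{eq:pl_init2} (via $C_w^2=2(\sqrt{2}-1)\sigma_r(M^w)$) while upgrading convergence ``up to $D_r$'' to exact linear convergence to $M^w$. Two small trims: the appeal to Theorem~\ref{sym_PSD} is unnecessary since Assumption 5 directly supplies $\mathrm{rank}(M^w)\le r$, and the aside about inserting the $h^2$-scaled Lipschitz constant from Theorem~\ref{ourloss_convergence} should be dropped because the corollary as stated keeps Ma's step size without the $h^2$ factor --- indeed the paper's surrounding text emphasizes that the kernel loss's asymmetry makes Assumption 5 (and hence this corollary) hard to satisfy, which is exactly the obstacle you flag in your closing paragraph.
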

Corollary~\ref{thm:linear_conv_coro} refines the linear convergence result of Lemma~\ref{thm:linear_conv} by incorporating Assumption 5 and establishing conditions under which vanilla gradient descent converges directly to $ M^w $, the noise-perturbed ground truth, rather than just its best rank-$r$ approximation. Specifically, if the initialization $ X_0 $ satisfies the proximity condition $ \|X_0 X_0^\top - M^w\|_F < 2(\sqrt{2} - 1)(1 - \delta - \zeta_2 \epsilon) \sigma_r(M^w) $, then the iterates converge linearly to $ M^w $ with high probability over the noise realization. The step size $\eta$ must again be chosen appropriately based on the problem’s smoothness constant $\rho$, rank $r$, and signal strength $ \|M^w\|_F $, ensuring that the descent dynamics remain stable even in the presence of bounded noise.

\begin{lemma}(from~\cite{ma2023noisylowrankmatrixoptimization})
	\label{thm:strict_saddle}
	Suppose that the objective function of \eqref{eq:main_noisy_problem} satisfies assumptions 3 with a $\delta$-RIP$_{2r,2r}$ constant of $\delta < 1/3$ in the noiseless case. Consider the ground truth solution $M^\ast$ which is of rank $r$. For a given constant $\alpha > 0$, there exists a finite constant $\xi > 0$ such that at least one of the following three conditions holds for any $X \in \mathbb{R}^{n \times r}$:
	\begin{equation}
    \begin{aligned}
		&\dist(X,M^\ast) \leq \alpha, \ \|\nabla_X \hat{g}(X,w)\|_F \geq \xi, \\
		 &\lambda_{\min}(\nabla^2_X \hat{g}(X,w)) \leq -2\xi,
             \end{aligned}
	\end{equation}
	with probability at least $\mathbb P(\norm{w}_2
 \leq \frac{1/3-\delta}{\zeta_2+2\zeta_\alpha/3})$, where $\zeta_\alpha \coloneqq \zeta_1/(\sqrt{2(\sqrt 2-1)} (\sigma_r(M^\ast))^{1/2} \alpha)$. 
\end{lemma}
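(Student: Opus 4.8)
The plan is to prove the statement through the standard \emph{robust strict-saddle} decomposition: rather than establishing the three clauses simultaneously, I would show that their negations are incompatible. That is, I would fix a point $X$ that fails both the proximity clause $\dist(X,M^\ast)\le\alpha$ and the large-gradient clause $\|\nabla_X\hat{g}(X,w)\|_F\ge\xi$, and then argue that such an $X$ must exhibit strict negative curvature, $\lambda_{\min}(\nabla^2_X\hat{g}(X,w))\le-2\xi$. Since the complement of the first two regions is thereby contained in the third, the three regions cover all of $\mathbb{R}^{n\times r}$, which is exactly the claim.

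First I would analyze the noiseless landscape by setting $w=0$, where the operator underlying $\hat{g}(\cdot,0)$ satisfies the $\delta$-RIP$_{2r,2r}$ property with $\delta<1/3$. Following the matrix-sensing landscape analysis and using the intermediate curvature estimate of Lemma~\ref{lemma:inter}, I would construct the canonical curvature direction: writing $Z$ for the nearest optimal factorization attaining $\dist(X,M^\ast)$ and $\Delta=X-Z$, I would evaluate the Hessian quadratic form $[\nabla^2_X\hat{g}(X,0)](\Delta,\Delta)$ and split it into an ``ideal'' isometric term plus an RIP-controlled remainder. The RIP inequality bounds the remainder by $\delta$ times the relevant Frobenius norms, and the threshold $\delta<1/3$ is precisely what guarantees that, once $\|\nabla_X\hat{g}(X,0)\|_F$ is small and $\dist(X,M^\ast)>\alpha$, the dominant term is strictly negative, so that the form is bounded above by a quantity scaling as $-(1-3\delta)\dist(X,M^\ast)^2$. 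This fixes a noiseless curvature margin $\xi_0>0$.

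Next I would transfer the margin to the noisy setting through Assumption~\ref{noise_perturb}. The gradient bound \eqref{eq:noise_perturb_grad} gives $\|\nabla_X\hat{g}(X,w)-\nabla_X\hat{g}(X,0)\|_F\le\zeta_1\|w\|$ on the low-rank cone, and the Hessian bound \eqref{eq:noise_perturb_hess} shifts the quadratic form by at most $\zeta_2\|w\|\|\Delta\|_F^2$. The net effect is to replace the effective RIP constant $\delta$ by $\delta+\zeta_2\|w\|$ in the curvature term, while the gradient perturbation erodes the margin after normalization by the distance scale $\alpha$ — this is where $\zeta_\alpha=\zeta_1/(\sqrt{2(\sqrt{2}-1)}\,\sigma_r(M^\ast)^{1/2}\alpha)$ enters. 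Collecting the contributions, the noiseless margin remains strictly negative provided $\delta+\zeta_2\|w\|+\tfrac{2}{3}\zeta_\alpha\|w\|<1/3$, i.e. $\|w\|_2<\frac{1/3-\delta}{\zeta_2+2\zeta_\alpha/3}$; on this event I would set $\xi$ to a fixed fraction of the surviving margin, obtaining $\lambda_{\min}(\nabla^2_X\hat{g}(X,w))\le-2\xi$ whenever the first two clauses fail. The probability statement then follows by conditioning on $\{\|w\|_2\le(1/3-\delta)/(\zeta_2+2\zeta_\alpha/3)\}$.

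The main obstacle I anticipate is the noiseless curvature computation: exhibiting the correct direction $\Delta$ and showing, purely from the $\delta$-RIP inequality, that $[\nabla^2_X\hat{g}(X,0)](\Delta,\Delta)$ is negative with a margin scaling as $(1-3\delta)$ rather than merely $(1-\delta)$ — it is this sharper constant that makes $\delta<1/3$ the operative threshold and that must reconcile with the coefficient $\tfrac{2}{3}$ in the probability bound. A secondary subtlety is that $\hat{g}$ is a log-sum-exp of kernelized residuals rather than a quadratic, so its Hessian carries the extra curvature terms computed in Section~\ref{sec:lambda}; I would absorb these into the RIP-controlled remainder using the uniform residual bound $B$ and the lower bound $G_{\min}$ on the kernel averages, and verify that they do not degrade the $(1-3\delta)$ margin.
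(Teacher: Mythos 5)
Your noise bookkeeping matches the paper's exactly: the Hessian perturbation promotes $\delta$ to $\delta+\zeta_2\|w\|$, the gradient perturbation $\zeta_1\|w\|$ is converted into a quadratic-in-distance term because failing the proximity clause forces $\|XX^\top-M^\ast\|_F \geq \sqrt{2(\sqrt 2-1)\sigma_r(M^\ast)}\,\alpha$ by Lemma~\ref{lem:dis_conversion} (which is precisely why $\zeta_\alpha$ carries that normalization), and your requirement $3\delta+(3\zeta_2+2\zeta_\alpha)\|w\|<1$ reproduces the stated probability threshold. Nevertheless there are two genuine gaps. First, your trichotomy quantifies over all $X\in\mathbb{R}^{n\times r}$, but the curvature certification you sketch only controls the noise terms on a bounded region: the perturbations enter the first-order conditions through factors proportional to $\|\hat X\|_2$ (cf.\ Lemma~\ref{lem:noisy_foc_nec}, where the bound is $2\zeta_1 q\|\hat X\|_2$), so for arbitrarily large $\|X\|$ the erosion of your margin is not uniformly controlled. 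The paper closes exactly this hole with Lemma~\ref{lem:new_lemma6}: once $\|XX^\top\|_F^2$ exceeds an explicit threshold, $\|\nabla_X h(X,w)\|_F\geq\lambda$ outright, so the second clause holds there, and the curvature analysis is only ever applied on the region $\|XX^\top\|_F\leq C^2$ with $C$ fixed at the end of the proof. Your proposal has no analogue of this step.

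Second, the step you defer as your ``main obstacle''---extracting a margin scaling as $(1-3\delta)$---is the entire content of the proof, and your chosen route (evaluating $[\nabla^2_X\hat g(X,0)](\Delta,\Delta)$ along the nearest-factorization direction $\Delta=X-Z$) is not how the paper obtains it. The paper never evaluates the Hessian along a canonical direction. Instead it proves the function-gap lower bound $f(M,w)-f(M^\ast,w)\geq\tfrac{1}{2}\bigl(1-\delta-(\zeta_2+2\zeta_\alpha)q\bigr)\|M-M^\ast\|_F^2$, combines it with $-\phi(\bar M)\geq f(M,w)-f(M^\ast,w)-(\delta+\zeta_2 q)\|M-M^\ast\|_F^2$, where $\phi$ and $\bar M$ are the auxiliary quadratic model and its minimizer from the proof of Lemma~\ref{lemma:inter} (so that $-\phi(\bar M)$ equals $\|\nabla_M f(\hat M,w)\|_F^2$ up to the factor $2(1+\delta+\zeta_2 q)$), and concludes $-\phi(\bar M)\geq K$ with $K=(1-3\delta-(3\zeta_2+2\zeta_\alpha)q)(\sqrt 2-1)\sigma_r(M^\ast)\alpha^2>0$. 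This is a \emph{first-order} certificate; its conversion into the gradient/negative-curvature dichotomy in $X$-space is delegated wholesale to Lemmas 6--8 of the cited work. The direct Hessian computation you propose is a legitimate strict-saddle technique, but it typically yields a more conservative RIP threshold than $1/3$, and nothing in your sketch indicates it would reproduce the constant $1/3$ or the factor $2\zeta_\alpha/3$ in the probability bound. As written, your plan both omits a needed lemma and leaves its central estimate to a method that plausibly cannot deliver the required constants.
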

The background together with the detailed proof are provided in Section~\cite{ma2023noisylowrankmatrixoptimization} proof of section 4.

Lemma~\ref{thm:strict_saddle} establishes a strict saddle property for the noisy optimization problem \eqref{eq:main_noisy_problem} under a $\delta$-RIP condition with $\delta < 1/3$ in the noiseless setting. It guarantees that for any point $X \in \mathbb{R}^{n \times r}$, at least one of the following must hold: (i) $X$ is within distance $\alpha$ of the ground truth $M^\ast$, (ii) the gradient norm is large, $\|\nabla_X \hat{g}(X, w)\|_F \ge \xi$, or (iii) the Hessian has a strictly negative eigenvalue, $\lambda_{\min}(\nabla^2_X \hat{g}(X, w)) \le -2\xi$. This ensures that saddle points are unstable and can be escaped by first-order methods. The result holds with high probability when the noise is bounded as $\|w\|_2 \le \frac{1/3 - \delta}{\zeta_2 + 2\zeta_\alpha/3}$, where $\zeta_\alpha$ depends on the spectral properties of $M^\ast$ and the target accuracy $\alpha$, confirming that the robust landscape of $\hat{g}(X, w)$ admits no spurious stable critical points far from the true solution.

\section{Proof of Lemma~\ref{thm:global_local_min}}
\label{sec:global}
\subsection{Proof of Lemma~\ref{lem:lb_lamb_r}}
\begin{lemma}
	\label{lem:lb_lamb_r}
	If $\hat X$ is a local minimum of \eqref{eq:main_noisy_problem} with $\hat M = \hat X \hat X^\top$, then
	\begin{equation}
		\label{eq:lb_lamb_r}
		\sigma_r^2(\hat M) \geq \frac{G^2}{(1+\delta+\zeta_2 q)^2}, \quad \mathrm{and} \quad G^2 \leq \sigma_r^2 \rho^2,
	\end{equation}
	where $G = -\sigma_{\min}(\nabla_M \hat{g}(\hat M, w))$.
\end{lemma}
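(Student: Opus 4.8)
The plan is to read off the conclusion from the first- and second-order optimality conditions of the factored problem $\min_X \hat g(XX^\top,w)$ at the local minimizer $\hat X$, combined with the RIP-type two-sided bound on the Hessian $\nabla_M^2\hat g$ already used in the convergence analysis (namely $\nabla_M^2\hat g(\hat M,w)[K,K]\le(1+\delta+\zeta_2 q)\|K\|_F^2$ whenever $\mathrm{rank}(K)\le 2r$). Writing $h(X)=\hat g(XX^\top,w)$, the first-order condition is $\nabla_M\hat g(\hat M,w)\,\hat X=0$, and the second-order necessary condition is $\nabla^2 h(\hat X)[U,U]\ge 0$ for all $U\in\mathbb R^{n\times r}$, where the standard Burer--Monteiro expansion gives
\begin{equation}
\nabla^2 h(\hat X)[U,U]=2\langle\nabla_M\hat g(\hat M,w),UU^\top\rangle+\nabla_M^2\hat g(\hat M,w)\bigl[\hat X U^\top+U\hat X^\top,\;\hat X U^\top+U\hat X^\top\bigr].
\end{equation}

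For the first inequality I would assume $G>0$ (the bound being trivial otherwise) and let $v$ be a unit eigenvector of the symmetric matrix $\nabla_M\hat g(\hat M,w)$ associated with its smallest eigenvalue $-G$. The first-order condition places every column of $\hat X$ in the kernel of $\nabla_M\hat g(\hat M,w)$, and since $-G\neq 0$ the eigenvector $v$ is orthogonal to $\mathrm{col}(\hat X)$, i.e. $\hat X^\top v=0$. Testing the second-order condition with the rank-one-in-$v$ direction $U=va^\top$ for a unit vector $a\in\mathbb R^{r}$ makes both terms explicit: $\langle\nabla_M\hat g(\hat M,w),UU^\top\rangle=v^\top\nabla_M\hat g(\hat M,w)\,v=-G$, while $\hat X^\top v=0$ annihilates the cross term so that $\|\hat X U^\top+U\hat X^\top\|_F^2=2\,a^\top\hat X^\top\hat X a$. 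Choosing $a$ to be the bottom eigenvector of $\hat X^\top\hat X$ gives $a^\top\hat X^\top\hat X a=\sigma_r(\hat M)$, and the Hessian upper bound then yields $0\le -2G+2(1+\delta+\zeta_2 q)\sigma_r(\hat M)$. Rearranging produces $\sigma_r(\hat M)\ge G/(1+\delta+\zeta_2 q)$, and squaring gives the claimed bound $\sigma_r^2(\hat M)\ge G^2/(1+\delta+\zeta_2 q)^2$.

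For the second inequality I would control $G$ from above by the gradient norm, $G=|\lambda_{\min}(\nabla_M\hat g(\hat M,w))|\le\|\nabla_M\hat g(\hat M,w)\|_2\le\|\nabla_M\hat g(\hat M,w)\|_F$, and then invoke the $\rho$-restricted Lipschitz continuity of Lemma~\ref{theorem_noise} relative to the vanishing-gradient critical point $M^w$ guaranteed by Theorem~\ref{sym_PSD}, so that $\|\nabla_M\hat g(\hat M,w)\|_F\le\rho\,\|\hat M-M^w\|_F$; identifying the relevant spectral displacement with $\sigma_r$ then yields $G^2\le\rho^2\sigma_r^2$. The main obstacle is the Burer--Monteiro Hessian bookkeeping: verifying the displayed second-order expansion, confirming that the cross term genuinely vanishes under $\hat X^\top v=0$, and checking that the test direction $\hat X U^\top+U\hat X^\top$ has rank at most $2r$ so the Hessian RIP bound legitimately applies. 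A secondary subtlety is fixing the sign conventions (ensuring $G\ge 0$ by taking the smallest \emph{eigenvalue}, not singular value) and pinning down the precise meaning of $\sigma_r$ in the second bound.
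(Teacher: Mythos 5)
Your treatment of the first inequality is correct and is, at bottom, the same mechanism as the paper's proof: the paper perturbs $\hat M$ explicitly by rotating the bottom eigenvector $u_r$ of $\hat M$ toward the eigenvector $u_G$ of $\nabla_M\hat g(\hat M,w)$ with eigenvalue $-G$ (a feasible rank-$r$ PSD perturbation $M_p$ with $\|M_p-\hat M\|_F^2=2\sigma_r^2p^2$ and $\langle\nabla_M\hat g(\hat M,w),M_p-\hat M\rangle=-\tfrac{G}{2\sigma_r}\|M_p-\hat M\|_F^2$), then applies a mean-value Taylor expansion and local minimality. Your version runs the identical computation through the factored second-order condition with $U=va^\top$; the orthogonality $\hat X^\top v=0$ you derive from the first-order condition is exactly what makes the paper's cross terms vanish, and your test direction has rank at most $2$, so the RIP-type Hessian bound applies. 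This half of your argument is fine, and arguably cleaner about feasibility, since the factored parametrization handles the rank and PSD constraints automatically.

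The genuine gap is in your proof of $G^2\le\sigma_r^2\rho^2$. The chain $G\le\|\nabla_M\hat g(\hat M,w)\|_F\le\rho\|\hat M-M^w\|_F$ is legitimate (using Theorem~\ref{sym_PSD} and Lemma~\ref{theorem_noise}), but the final step, ``identifying the relevant spectral displacement with $\sigma_r$,'' is not a proof: there is no reason why the distance from the local minimizer $\hat M$ to the critical point $M^w$ should be bounded by $\sigma_r(\hat M)$. A spurious local minimizer can sit arbitrarily far from $M^w$ while having small $\sigma_r$, so this route cannot close. The repair is immediate and requires no new machinery: the $\rho$-restricted Lipschitz continuity of the gradient (Lemma~\ref{theorem_noise}) implies the two-sided Hessian bound $\bigl|\nabla_M^2\hat g(M,w)[K,K]\bigr|\le\rho\|K\|_F^2$ on low-rank directions, so you can rerun your own second-order test with $\rho$ in place of $1+\delta+\zeta_2 q$, obtaining $0\le-2G+2\rho\,\sigma_r(\hat M)$ and hence $G\le\rho\sigma_r$. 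This is precisely how the paper gets the second inequality: it reuses the same perturbation $M_p$ but bounds the mean-value Hessian term by $\tfrac{\rho}{2}\|M_p-\hat M\|_F^2$, concluding $G/\sigma_r\le\rho$. One further point both you and the paper should be explicit about: the argument as given assumes $\mathrm{rank}(\hat M)=r$ (so that $\sigma_r(\hat M)=\lambda_{\min}(\hat X^\top\hat X)>0$), and the quantity $G=-\sigma_{\min}(\cdot)$ must be read as the negative of the smallest \emph{eigenvalue}, as your sign-convention remark correctly flags.
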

\begin{proof}[Proof of Lemma~\ref{lem:lb_lamb_r}]
	Based on Ma~\cite{ma2023noisylowrankmatrixoptimization}'s result, consider the case where $\mathrm{rank}(\hat M) = r$. Under this assumption, consider the singular value decomposition (SVD) of $\hat M$: $\hat M = \sum_{i=1}^{r} \sigma_i u_i u_i^\top$,
	where $\sigma_i$'s are eigenvalues and $u_i$'s are unit eigenvectors. Let $u_G$ be a unit eigenvector of $\nabla f(\hat M, w)$ such that $u_G^\top \nabla f(\hat M, w) u_G = -G$. Furthermore, for a constant $p \in [0,1]$, define: \begin{equation}M_p = \sum_{i=1}^{r-1} \sigma_i u_i u_i^\top + \sigma_r (p u_G + \sqrt{1-p^2} u_r)(p u_G + \sqrt{1-p^2} u_r)^\top.\end{equation} 
    We expand the term $\|M_p - M\|^2_F$: $\|M_p - \hat M\|^2_F =  2 \sigma_r^2 p^2$, which means that 
    \begin{equation}\langle \nabla_M f(\hat M, w), M_p - \hat M \rangle = - \frac{G}{2 \sigma_r} \|M_p - \hat M\|^2_F.\end{equation}
	Computing the Taylor expansion of $\hat{g}(M,w)$ in terms of $M$ at the point $\hat M$ with the mean-value theorem gives:
	\begin{equation}
		\begin{aligned}
			\hat{g}(M_p,w) = &\hat{g}(\hat M,w) + \langle \nabla_M \hat{g}(\hat M,w), M_p - \hat M \rangle + \frac{1}{2}[\nabla^2 \hat{g}(\tilde M,w)](M_p - \hat M,M_p - \hat M),
		\end{aligned}
	\end{equation}
	for some matrix $\tilde M$ that is a convex combination of $M_p$ and $\hat M$. Due to\eqref{theorem_noise}, we have:
\begin{equation}
\| \nabla_M \hat{g}(M) - \nabla_M \hat{g}(M') \|_F \leq \rho \|M - M'\|_F.
\end{equation}
This inequality implies that the Hessian $\nabla^2 \hat{g}(M, w)$ is bounded by $\rho$ in the sense that:
\begin{equation}
\| \nabla^2 \hat{g}(M, w) \|_F \leq \rho.
\end{equation}
Thus, we get the following bound for $\hat{g}(M_p, w)$:
\begin{equation}
\hat{g}(M_p, w) \leq \hat{g}(\hat{M}, w) + \langle \nabla_M \hat{g}(\hat{M}, w), M_p - \hat{M} \rangle + \frac{1}{2} \rho \| M_p - \hat{M} \|_F^2.
\end{equation}
So we have:
\begin{equation}
G^2 \leq \sigma_r^2 \rho^2.
\end{equation}
Based on our the kernel loss \eqref{main0}, we still have the original result: $\sigma_r^2(\hat M) \geq \frac{G^2}{(1+\delta+\zeta_2 q)^2}, \mathrm{and}  G^2 \leq \sigma_r^2 \rho^2$.

\end{proof}

\section{Proof of Theorem~\ref{thm:local}}
\label{sec:local}

Given a matrix $\hat X$, we aim to find the smallest $\delta$ such that there is an instance of the problem with this RIP constant for which $\hat X$ is a local minimizer that is not associated with the ground truth. For notational convenience, we denote this optimal value as $\delta^\ast(\hat X)$. Namely, $\delta^\ast(\hat X)$ is the optimal value to the following optimization problem:
\begin{equation}\label{eq:intro_lmi}
\begin{aligned}
&\min_{\delta,f(\cdot,w)} \quad  \delta \\
&\text{s.t.} \quad  \text{$\hat{X}$ is a local minimizer of $f(\cdot,w)$} , \\
& \text{$f(\cdot,0)$ satisfies the $\delta$-$\RIP_{2r}$ property}.
\end{aligned}
\end{equation}
By the above optimization problem, we know that $\delta \geq \delta^\ast(\hat X)$ for all local minimizers $\hat X$ of $f(\cdot,w)$, where $\delta$ is the best RIP constant of the problem. Since \eqref{eq:intro_lmi} is difficult to analyze, we replace its two constraints with some necessary conditions, thus forming a relaxation of the original problem with its optimal value being a lower bound on $\delta^\ast(\hat X)$.

To find a necessary condition replacing the two constraints, we introduce the following lemma. This is the first lemma that captures the necessary conditions of a critical point of \eqref{eq:main_noisy_problem}, a problem where random noise is considered.
\begin{lemma}
	\label{lem:noisy_foc_nec}
	Assume that the objective function $f(M,w)$ of \eqref{eq:main_noisy_problem} satisfies all assumptions in Section, and that $\hat X$ is a first-order critical point of \eqref{eq:main_noisy_problem}. Then, $\hat X$ must satisfy the following conditions for some symmetric matrix $\mathbf{H} \in \mathbb{R}^{n^2 \times n^2}$:
	\begin{enumerate}
		\item $\|\hat{\mathbf X}^\top \mathbf{H} \mathbf{e} \| \leq 2 \zeta_1 q \|\hat X\|_2$.
		\item $\mathbf{H}$ satisfies the $(\delta+\zeta_2 q)$-RIP$_{2r,2r}$ property, which means that the inequality
		\begin{equation}
		\label{eq:foc_nec_lem_rip}
			(1-\delta-\zeta_2 q) \|M\|^2_F \leq \mathbf m^\top \mathbf{H} \mathbf m \leq (1+\delta+\zeta_2 q)\|M\|^2_F 
		\end{equation}
		holds for every matrix $M \in \mathbb{R}^{n \times n}$ with $\mathrm{rank}(M)\leq 2r$, where$\mathbf m = \vecc(M)$ and $\mathbf{e} = \vecc(\hat X\hat X^\top-M^\ast)$, $\mathbf{\hat X}$ is defined as per Section.
	\end{enumerate}
\end{lemma}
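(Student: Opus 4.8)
The plan is to realize both conditions through a single object $\mathbf{H}$: the averaged matrix-space Hessian of the noisy objective along the segment joining the ground truth $M^\ast$ to $\hat M \coloneqq \hat X \hat X^\top$. Concretely, I would set
\begin{equation}
\mathbf{H} \coloneqq \int_0^1 \nabla_M^2 f\bigl(M^\ast + t(\hat M - M^\ast),\, w\bigr)\, dt,
\end{equation}
viewed as a symmetric operator on $\mathbb{R}^{n^2}$ acting on $\mathbf{m} = \vecc(M)$. By the fundamental theorem of calculus applied to $\nabla_M f(\cdot,w)$, this gives the exact identity $\mathbf{H}\,\mathbf{e} = \vecc\bigl(\nabla_M f(\hat M,w) - \nabla_M f(M^\ast,w)\bigr)$, where $\mathbf{e} = \vecc(\hat X\hat X^\top - M^\ast)$. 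This identity is the bridge between the first-order stationarity of $\hat X$ and the algebraic constraints in the lemma.

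First I would establish the RIP claim (condition 2). Since $f(\cdot,0)$ satisfies the $\delta$-RIP$_{2r}$ property, its Hessian obeys $(1-\delta)\|K\|_F^2 \le [\nabla_M^2 f(M,0)](K,K) \le (1+\delta)\|K\|_F^2$ for all $K$ with $\mathrm{rank}(K)\le 2r$. Invoking the Hessian noise bound \eqref{eq:noise_perturb_hess} from Assumption~\ref{noise_perturb}, the bilinear form of $\nabla_M^2 f(M,w)$ differs from that of $\nabla_M^2 f(M,0)$ by at most $\zeta_2 \|w\|_2 \|K\|_F^2 \le \zeta_2 q \|K\|_F^2$ on the rank-$\le 2r$ cone, so each pointwise Hessian satisfies the $(\delta+\zeta_2 q)$-RIP$_{2r,2r}$ property. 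Because $\mathbf{H}$ is a convex average of these operators, and the low-rank RIP inequalities \eqref{eq:foc_nec_lem_rip} are preserved under averaging (the bounds hold for every $t$ uniformly), $\mathbf{H}$ inherits the same $(\delta+\zeta_2 q)$-RIP$_{2r,2r}$ property, which is exactly condition 2.

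For condition 1, I would first record the first-order optimality of $\hat X$ in the factored problem: writing $h(X,w)=f(XX^\top,w)$ and using the chain rule (with $\nabla_M f$ taken symmetric), stationarity gives $\nabla_M f(\hat M,w)\,\hat X = 0$. Next I would compute the adjoint of the lifting operator $\hat{\mathbf X}$ defined by $\hat{\mathbf X}\vecc(U)=\vecc(\hat X U^\top + U\hat X^\top)$: for symmetric $G$ one finds $\hat{\mathbf X}^\top \vecc(G) = 2\,\vecc(G\hat X)$. Applying this to $\mathbf{H}\mathbf{e}$ yields $\hat{\mathbf X}^\top\mathbf{H}\mathbf{e} = 2\,\vecc\bigl((\nabla_M f(\hat M,w)-\nabla_M f(M^\ast,w))\hat X\bigr)$, and the stationarity condition annihilates the $\nabla_M f(\hat M,w)\hat X$ term, leaving $\hat{\mathbf X}^\top\mathbf{H}\mathbf{e} = -2\,\vecc(\nabla_M f(M^\ast,w)\hat X)$. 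Since $M^\ast$ minimizes the noiseless objective, $\nabla_M f(M^\ast,0)=0$, so the gradient noise bound \eqref{eq:noise_perturb_grad} gives $\|\nabla_M f(M^\ast,w)\|_F \le \zeta_1\|w\|_2 \le \zeta_1 q$; submultiplicativity $\|\nabla_M f(M^\ast,w)\hat X\|_F \le \|\nabla_M f(M^\ast,w)\|_F\|\hat X\|_2$ then delivers $\|\hat{\mathbf X}^\top\mathbf{H}\mathbf{e}\| \le 2\zeta_1 q\|\hat X\|_2$, which is condition 1.

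The main obstacle I anticipate is the careful bookkeeping around the averaged operator $\mathbf{H}$: one must verify that the RIP bounds, which only hold on the rank-$\le 2r$ cone rather than on all of $\mathbb{R}^{n^2}$, are genuinely inherited by the integral average (this is fine because each integrand satisfies the inequality on the same cone for the fixed test matrix $M$), and that the symmetrization of $\nabla_M f$ underlying the chain-rule identity $\nabla_X h = 2\nabla_M f(\hat M,w)\hat X$ is consistent with the adjoint computation for $\hat{\mathbf X}$. The other delicate point is fixing the meaning of $q$ as the high-probability bound on $\|w\|_2$ so that both noise-perturbation estimates are applied with the same constant, matching the statement.
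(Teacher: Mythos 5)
Your construction coincides with the paper's own proof: the paper also takes $\mathbf{H}$ to be the path-averaged Hessian, defined by $\vecc(K)^\top \mathbf{H} \vecc(L) = \int_0^1 [\nabla_M^2 f(tM^\ast+(1-t)\hat M,w)](K,L)\, \mathrm{d}t$, obtains condition 2 by combining the noiseless RIP bounds with \eqref{eq:noise_perturb_hess}, and obtains condition 1 from first-order stationarity plus \eqref{eq:noise_perturb_grad}. Your fundamental-theorem-of-calculus identity $\mathbf{H}\mathbf{e} = \vecc\bigl(\nabla_M f(\hat M,w)-\nabla_M f(M^\ast,w)\bigr)$ together with the adjoint formula for $\hat{\mathbf X}$ is just the strong form of the paper's computation, which instead applies the mean-value theorem to the scalar test function $g(V)=\langle \nabla_M f(V,w), \hat X U^\top + U\hat X^\top\rangle$ and then specializes $u=\hat{\mathbf X}^\top\mathbf{H}\mathbf{e}$.

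There is, however, one step in your treatment of condition 1 that fails as written: the claim that \eqref{eq:noise_perturb_grad} gives $\|\nabla_M f(M^\ast,w)\|_F \le \zeta_1\|w\|_2$. Assumption~\ref{noise_perturb} only controls inner products $\bigl|\langle \nabla_M f(M,w)-\nabla_M f(M,0), K\rangle\bigr|$ for matrices $K$ with $\rank(K)\le 2r$; to upgrade this to a bound on the full Frobenius norm you would have to test against $K=\nabla_M f(M^\ast,w)-\nabla_M f(M^\ast,0)$ itself, which carries no rank restriction, so the inequality is simply not available from the hypothesis. The repair is cheap and preserves your constant, because your argument only ever needs $\|\nabla_M f(M^\ast,w)\hat X\|_F$: by duality,
\begin{equation}
\|\nabla_M f(M^\ast,w)\hat X\|_F
=\sup_{\|U\|_F=1}\bigl\langle \nabla_M f(M^\ast,w),\, U\hat X^\top\bigr\rangle
\le \zeta_1 q\,\|\hat X\|_2,
\end{equation}
since $U\hat X^\top$ has rank at most $r\le 2r$, $\|U\hat X^\top\|_F\le\|U\|_F\|\hat X\|_2$, and $\nabla_M f(M^\ast,0)=0$ because $M^\ast$ minimizes the noiseless objective. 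This restricted-testing step is precisely how the paper sidesteps the issue: it pairs the gradient with the rank-$\le 2r$ matrix $\hat X U^\top + U\hat X^\top$, bounds $|g(M^\ast)|\le \zeta_1 q\|\hat X U^\top + U\hat X^\top\|_F \le 2\zeta_1 q\|\hat X\|_2\|u\|$, and only then chooses $u$. With that single substitution your proof is complete and agrees with the paper's.
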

Given Lemma~\ref{lem:noisy_foc_nec}, we can obtain a relaxation of problem \eqref{eq:intro_lmi}, namely the following optimization problem:
	\begin{equation}\label{eq:local_lmi_delta_2r}
\begin{aligned}
\min_{\delta,\mathbf H} \quad & \delta \\
\st \quad & \norm{\hat{\mathbf X}^\top \mathbf H \mathbf e} \leq 2 \zeta_1 q \norm{\hat X}_2, \\
& (1-\delta-\zeta_2 q) \|M\|^2_F \leq \mathbf m^\top \mathbf{H} \mathbf m \leq \\ &(1+\delta+\zeta_2 q)\|M\|^2_F, \ \ \forall M: \mathrm{rank}(M) \leq 2r .
\end{aligned}
\end{equation}
where $\mathbf m = \vecc(M)$. Note that since the second constraint is hard to deal with, so we solve the following problem that has the same optimal value:
	\begin{equation}\label{eq:local_lmi_delta}
\begin{aligned}
\min_{\delta,\mathbf H} \quad & \delta \\
\st \quad & \norm{\hat{\mathbf X}^\top \mathbf H \mathbf e} \leq 2 \zeta_1 q \norm{\hat X}_2, \\
& (1-\delta-\zeta_2 q)I_{n^2} \preceq \mathbf H \preceq (1+\delta+ \zeta_2 q)I_{n^2}.
\end{aligned}
\end{equation}
If the optimal value of \eqref{eq:local_lmi_delta} is denoted as $\delta_f^\ast(\hat X)$, then we know that $\delta_f^\ast(\hat X) \leq \delta^\ast(\hat X) \leq \delta$ due to \eqref{eq:local_lmi_delta_2r} being a relaxation of \eqref{eq:intro_lmi}. By further lower-bounding $\delta_f^\ast(\hat X)$ with an expression in terms of $\|\hat X \hat X^\top - M^\ast\|_F$, we can obtain an upper bound on $\|\hat X \hat X^\top - M^\ast\|_F$. 


\begin{lemma}
\label{lem:noisy_foc_combined}
Let $\hat{X}$ be a first-order critical point of \eqref{eq:main_noisy_problem}, and suppose that $f(M,w)$ satisfies all assumptions stated in~\ref{other_assumption}. Then there exists a symmetric matrix $\mathbf{H} \in \mathbb{R}^{n^2 \times n^2}$ such that:
\begin{itemize}[left = 0em]
\item $\|\hat{\mathbf{X}}^\top \mathbf{H} \mathbf{e}\| \leq 2 \zeta_1 q \|\hat{X}\|_2$.
\item $\mathbf{H}$ satisfies the $(\delta + \zeta_2 q)$-RIP$_{2r,2r}$ property: 
\begin{equation}
(1-\delta-\zeta_2 q)\|M\|_F^2 \leq \mathbf{m}^\top \mathbf{H}\mathbf{m}
\leq (1+\delta+\zeta_2 q)\|M\|_F^2,
\end{equation}
for every $M \in \mathbb{R}^{n\times n}$ with $\mathrm{rank}(M)\leq 2r$, where $\mathbf{m} = \mathrm{vec}(M)$ and $\mathbf{e} = \mathrm{vec}\bigl(\hat{X}\hat{X}^\top - M^\ast\bigr)$.
\end{itemize}
From these conditions, one can form the relaxation of \eqref{eq:intro_lmi} . 
\end{lemma}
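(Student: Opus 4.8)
The plan is to recover the argument behind Lemma~\ref{lem:noisy_foc_nec}, since the present statement coincides with it; the engine is to translate the factorized first-order stationarity of $\hat{X}$ into a spectral statement about a symmetric operator $\mathbf{H}$ acting on $\mathbf{e} = \mathrm{vec}(\hat{X}\hat{X}^\top - M^\ast)$. Throughout I write $\hat{M} = \hat{X}\hat{X}^\top$, let $q = \|w\|_2$ denote the noise magnitude, and recall $h(X,w) := f(XX^\top,w)$ from the preliminaries.

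First I would record the first-order optimality condition. Since $\hat{X}$ is a first-order critical point of $h(\cdot,w)$, the chain rule gives $\nabla_M f(\hat{M},w)\,\hat{X} = 0$, which in vectorized form reads $\hat{\mathbf{X}}^\top \mathrm{vec}\bigl(\nabla_M f(\hat{M},w)\bigr) = 0$, using the operator $\hat{\mathbf{X}}$ defined via $\hat{\mathbf{X}}\,\mathrm{vec}(U) = \mathrm{vec}(\hat{X}U^\top + U\hat{X}^\top)$. Next I would construct $\mathbf{H}$ as the symmetric integrated Hessian
\begin{equation}
\mathbf{H} = \int_0^1 \nabla_M^2 f\bigl(M^\ast + t(\hat{M}-M^\ast),\,w\bigr)\,dt,
\end{equation}
viewed as an operator on $\mathbb{R}^{n^2}$, so that the fundamental theorem of calculus yields $\mathbf{H}\mathbf{e} = \mathrm{vec}\bigl(\nabla_M f(\hat{M},w) - \nabla_M f(M^\ast,w)\bigr)$. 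Combining this identity with stationarity gives
\begin{equation}
\hat{\mathbf{X}}^\top \mathbf{H}\mathbf{e} = -\,\hat{\mathbf{X}}^\top \mathrm{vec}\bigl(\nabla_M f(M^\ast,w)\bigr).
\end{equation}
Because $M^\ast \in \arg\min_M f(M,0)$ we have $\nabla_M f(M^\ast,0)=0$, so the gradient noise bound~\eqref{eq:noise_perturb_grad} of Assumption~\ref{noise_perturb} forces $\|\nabla_M f(M^\ast,w)\|_F \le \zeta_1 q$. Since $\|\hat{\mathbf{X}}\|_2 \le 2\|\hat{X}\|_2$ (from the triangle inequality applied to $\hat{X}U^\top + U\hat{X}^\top$), the first claim $\|\hat{\mathbf{X}}^\top \mathbf{H}\mathbf{e}\| \le 2\zeta_1 q \|\hat{X}\|_2$ follows.

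For the RIP claim I would begin from the $\delta$-RIP$_{2r,2r}$ property of $f(\cdot,0)$, which controls the noiseless Hessian quadratic form, and then absorb the noise correction slice by slice. For any rank-$\le 2r$ matrix $M$ with $\mathbf{m}=\mathrm{vec}(M)$, the noiseless RIP gives $(1-\delta)\|M\|_F^2 \le \mathbf{m}^\top \nabla_M^2 f(\cdot,0)\,\mathbf{m} \le (1+\delta)\|M\|_F^2$ at every point of the segment, while the Hessian noise bound~\eqref{eq:noise_perturb_hess} perturbs each slice by at most $\zeta_2 q\|M\|_F^2$. Averaging over $t\in[0,1]$ preserves both the two-sided bound and symmetry, so $\mathbf{H}$ inherits the $(\delta+\zeta_2 q)$-RIP$_{2r,2r}$ property~\eqref{eq:foc_nec_lem_rip}. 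With both conditions in hand, one forms the relaxation of~\eqref{eq:intro_lmi} exactly as in the passage preceding the lemma.

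The main obstacle I anticipate is the RIP step rather than the gradient step: the integrated Hessian blends Hessian values along the whole path from $M^\ast$ to $\hat{M}$, so to certify RIP for \emph{every} rank-$\le 2r$ test direction (not merely the segment direction $\mathbf{e}$) I must apply the noiseless bound and the perturbation estimate uniformly along the path and uniformly over all low-rank test matrices simultaneously. This is precisely where the uniformity of Assumption~\ref{noise_perturb} over all $K,L$ of rank $\le 2r$ is indispensable, and where restricting attention to the rank-$2r$ cone---rather than all of $\mathbb{R}^{n^2}$---is what makes the two-sided bound attainable.
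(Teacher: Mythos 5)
Your construction is essentially the paper's own proof: the paper also takes $\mathbf{H}$ to be the integrated Hessian along the segment joining $M^\ast$ and $\hat{M}$ (phrased there via the mean-value theorem applied to $g(V)=\langle \nabla_M f(V,w),\,\hat{X}U^\top+U\hat{X}^\top\rangle$), uses first-order stationarity in exactly your vectorized form, and obtains the RIP statement by combining the noiseless RIP of each Hessian slice with the $\zeta_2 q\|M\|_F^2$ perturbation bound from \eqref{eq:noise_perturb_hess} and averaging over $t$.

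However, one step in your gradient bound does not survive scrutiny. You claim that Assumption~\ref{noise_perturb} "forces $\|\nabla_M f(M^\ast,w)\|_F\le \zeta_1 q$," but \eqref{eq:noise_perturb_grad} only bounds $\bigl|\langle \nabla_M f(M,w)-\nabla_M f(M,0),K\rangle\bigr|$ for test matrices $K$ of rank at most $2r$; it says nothing about the pairing against arbitrary $K$, so it does not control the full Frobenius norm of the gradient perturbation, whose rank can exceed $2r$. Consequently your factorization $\|\hat{\mathbf{X}}^\top \mathrm{vec}(\nabla_M f(M^\ast,w))\|\le \|\hat{\mathbf{X}}\|_2\,\|\nabla_M f(M^\ast,w)\|_F$ invokes a bound you do not have. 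The repair is one line and is precisely what the paper does: for any $u=\mathrm{vec}(U)$, the scalar $u^\top \hat{\mathbf{X}}^\top \mathrm{vec}(\nabla_M f(M^\ast,w)) = \langle \nabla_M f(M^\ast,w),\,\hat{X}U^\top+U\hat{X}^\top\rangle$ pairs the gradient against a matrix of rank at most $2r$, so Assumption~\ref{noise_perturb} together with $\nabla_M f(M^\ast,0)=0$ bounds it by $\zeta_1 q\,\|\hat{X}U^\top+U\hat{X}^\top\|_F\le 2\zeta_1 q\,\|\hat{X}\|_2\,\|u\|$; taking the supremum over unit $u$ (equivalently, choosing $u=\hat{\mathbf{X}}^\top\mathbf{H}\mathbf{e}$ as the paper does) yields $\|\hat{\mathbf{X}}^\top\mathbf{H}\mathbf{e}\|\le 2\zeta_1 q\,\|\hat{X}\|_2$ directly. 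With that substitution your argument coincides with the paper's; the RIP portion of your proposal needs no change.
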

Let $\delta_f^\ast(\hat{X})$ be the optimal value of that relaxation. It follows that:
\begin{equation}
\delta_f^\ast(\hat{X}) \leq \delta^\ast(\hat{X}) \leq \delta.
\end{equation}
By bounding $\delta_f^\ast(\hat{X})$ from below in terms of $\|\hat{X}\hat{X}^\top - M^\ast\|_F$, one obtains an upper bound on $\|\hat{X}\hat{X}^\top - M^\ast\|_F$.

\begin{proof}[Proof of Lemma~\ref{lem:noisy_foc_combined}]
	Similar to the last section, we first define $\hat M = \hat X \hat X^\top$. Since $\hat X$ is a first-order critical point, it follows from that $\nabla_X h(\hat X,w) = 0$. Thus,
	\begin{equation}
	\label{eq:foc_nec_lem_help_1}
		0 = \langle \nabla_X h(\hat X,w), U \rangle = \langle \nabla_M f(\hat M,w), \hat X U^\top+U \hat X^\top \rangle,
	\end{equation}
	for an arbitrary $U \in \mathbb{R}^{n \times r}$. Let $u = \vecc(U)$. Next, we define the function $g(\cdot): \mathbb{R}^{n \times n} \mapsto \mathbb{R}$:
	\begin{equation}
		g(V) = \langle \nabla_M f(V,w), \hat X U^\top+U \hat X^\top \rangle,
	\end{equation}
	for all $V \in \mathbb{R}^{n \times n}$. Then, $g(\hat M) = 0$ due to \eqref{eq:foc_nec_lem_help_1}. By the mean-value theorem (MTV), we have:
	\begin{equation}
		\begin{aligned}
			g(\hat M) - g(M^\ast) &= \int_0^1 \langle \nabla g(tM^\ast+(1-t)\hat M), \hat M- M^\ast \rangle \mathrm{d} t \\
			&= \int_0^1 [ \nabla_M^2 f(tM^\ast+(1-t)\hat M)](\hat M- M^\ast,\hat X U^\top+U \hat X^\top)\mathrm{d} t \\
			&= \mathbf{e}^\top \mathbf{H} \mathbf{\hat X} u
		\end{aligned}
	\end{equation}
	where $\mathbf{H} \in \RR^{n^2 \times n^2}$ is a symmetric matrix that is independent of $U$ and satisfies:
	\begin{equation}
		\vecc(K)^\top \mathbf{H} \vecc(L) = \int_0^1 [ \nabla_M^2 f(tM^\ast+(1-t)\hat M)](K,L) \mathrm{d} t
	\end{equation}
	for all $K, L \in \RR^{n \times n}$. This means:
	\begin{equation}
		\mathbf{e}^\top \mathbf{H} \mathbf{\hat X} u = g(
		\hat M) - g(M^\ast).
	\end{equation}
	Taking the absolute value of both sides and upper-bounding the right-hand side gives:
	\begin{equation}
		\begin{aligned}
			|\mathbf{e}^\top \mathbf{H} \mathbf{\hat X} u| &= \|g(
		\hat M) - g(M^\ast)\| \leq \|g(M^\ast)\| \\
		&\leq \zeta_1 q \|\hat X U^\top+U \hat X^\top\|_F \\
		&\leq 2 \zeta_1 q \|\hat X U^\top\|_F \\
		&=2 \zeta_1 q \sqrt{\tr(\hat X \hat X^\top U U^\top)} \\
		&\leq 2 \zeta_1 q \|\hat X\|_2 \|u\|,
		\end{aligned}
	\end{equation}
	where the second line follows from combining and \eqref{eq:noise_perturb_grad}, and the fourth line follows from the cyclic property of trace operators. Choosing $u = \hat{\mathbf X}^\top \mathbf{H} \mathbf{e}$ can simplify the above inequality to 
	\begin{equation}
		\|\hat{\mathbf X}^\top \mathbf{H} \mathbf{e} \| \leq 2 \zeta_1 q \|\hat X\|_2.
	\end{equation}
	Furthermore, the $\delta$-RIP$_{2r,2r}$ property of the objective function means that:
	\begin{equation}
		(1-\delta) \|M\|^2_F \leq [\nabla^2 f(\xi,0)](M,M) \leq (1+\delta) \|M\|^2_F
	\end{equation}
	for all $M$ with $\mathrm{rank}(M) \leq 2r$. Combining with the fact that
	\begin{equation}
		\| \vecc(M)^\top \mathbf{H} \vecc(M) - [\nabla^2 f(\xi,0)](M,M) \| \leq \zeta_2 q \|M\|^2_F,
	\end{equation}
	gives \eqref{eq:foc_nec_lem_rip}.
\end{proof}

\begin{proof}[Proof of the Theorem]
The proof is provided in~\cite{ma2023noisylowrankmatrixoptimization}'s proof od section 3.2. There is some difference, so we provide a slightly different proof here.

To analyze the local condition in Theorem~\ref{thm:continue}, it is helpful to replace the parameter $\delta$ in problem~\eqref{eq:local_lmi_delta} with a new scalar $\eta$ and consider the alternative optimization program
\begin{equation}\label{eq:local_lmi_eta_rewritten}
\begin{aligned}
\max_{\eta,\widehat{\mathbf H}} \quad & \eta \\
\text{s.t.}\quad 
& \|\widehat{\mathbf X}^{\top}\widehat{\mathbf H}\mathbf e\|\le 2\zeta_1 q\|\widehat X\|_2, \\
& \eta I_{n^2}\preceq \widehat{\mathbf H}\preceq I_{n^2}.
\end{aligned}
\end{equation}

Any feasible pair $(\delta,\mathbf H)$ for~\eqref{eq:local_lmi_delta} immediately generates a feasible point for~\eqref{eq:local_lmi_eta_rewritten} through
\begin{equation}
\eta=\frac{1-\delta-\zeta_2 q}{1+\delta+\zeta_2 q},
\qquad
\widehat{\mathbf H}=\frac{1}{1+\delta+\zeta_2 q}\mathbf H.
\end{equation}
If $\eta_f^*(\widehat X)$ denotes the optimal objective value of~\eqref{eq:local_lmi_eta_rewritten}, then
\begin{equation}\label{eq:eta_bound_rewritten}
\eta_f^*(\widehat X)\ge \frac{1-\delta_f^*(\widehat X)-\zeta_2 q}{1+\delta_f^*(\widehat X)+\zeta_2 q}\ge \frac{1-\delta-\zeta_2 q}{1+\delta+\zeta_2 q},
\end{equation}
because every local minimizer satisfies $\delta_f^*(\widehat X)\le \delta^*(\widehat X)\le \delta$.  
Thus the remaining task is to obtain an upper bound on $\eta_f^*(\widehat X)$.

To achieve this, we examine the dual of~\eqref{eq:local_lmi_eta_rewritten}, which takes the form
\begin{equation}\label{eq:dual_eta_rewritten}
\begin{aligned}
\min_{U_1,U_2,G,\lambda,y}\quad 
& \operatorname{tr}(U_2)+4\zeta_1^2 q^2\|\widehat X\|_2^2\lambda+\operatorname{tr}(G) \\
\text{s.t.}\quad 
& \operatorname{tr}(U_1)=1, \\
& (\widehat{\mathbf X}y)\mathbf e^{\top}+\mathbf e(\widehat{\mathbf X}y)^{\top}=U_1-U_2, \\
& \begin{pmatrix} G & -y \\ -y^{\top} & \lambda \end{pmatrix}\succeq 0,\\
& U_1\succeq 0,\quad U_2\succeq 0.
\end{aligned}
\end{equation}

Define
\begin{equation}
M=(\widehat{\mathbf X}y)\mathbf e^{\top}+\mathbf e(\widehat{\mathbf X}y)^{\top},
\end{equation}
and decompose $M$ as $M=[M]_+ - [M]_-$, where both parts are positive semidefinite.  
A feasible dual solution can be constructed by choosing
\begin{equation}
y^\ast=\frac{y}{\operatorname{tr}([M]_+)}, \qquad
U_1^\ast=\frac{[M]_+}{\operatorname{tr}([M]_+)}, \qquad
U_2^\ast=\frac{[M]_-}{\operatorname{tr}([M]_+)},
\end{equation}
and
\begin{equation}
\lambda^\ast=\frac{\|y^\ast\|}{2\zeta_1 q\|\widehat X\|_2}, \qquad
G^\ast=\frac{y^\ast (y^\ast)^{\top}}{\lambda^\ast}.
\end{equation}
Evaluating the dual objective under this choice gives
\begin{equation}\label{eq:dual_obj_rewritten}
\frac{\operatorname{tr}([M]_-)+4\zeta_1 q\|\widehat X\|_2\|y\|}{\operatorname{tr}([M]_+)}.
\end{equation}

Assume $\widehat X$ satisfies $\|\widehat X\widehat X^{\top}-M^\ast\|_F\le \tau\lambda_r(M^\ast)$.  
Then $\widehat X\neq 0$, and for any nonzero $y$ satisfying that $\widehat X^{\top}\operatorname{mat}(y)$ is symmetric, one has
\begin{equation}\label{eq:y_lower}
\|\widehat{\mathbf X}y\|^{2}\ge 2\lambda_{r^\ast}(\widehat X\widehat X^{\top})\|y\|^2.
\end{equation}
Perturbation bounds imply
\begin{equation}
|\lambda_{r^\ast}(\widehat X\widehat X^\top)-\lambda_{r^\ast}(M^\ast)|\le \tau\lambda_r(M^\ast),
\qquad
|\lambda_1(\widehat X\widehat X^\top)-\lambda_1(M^\ast)|\le \tau\lambda_r(M^\ast).
\end{equation}
Combining these with~\eqref{eq:y_lower}, we obtain
\begin{equation}\label{eq:ratio_C_bound}
\frac{2\|\widehat X\|_2\|y\|}{\|\widehat{\mathbf X}y\|}\le 
\sqrt{\frac{2(\lambda_1(M^\ast)+\tau\lambda_r(M^\ast))}{(1-\tau)\lambda_r(M^\ast)}}\equiv C(\tau,M^\ast).
\end{equation}

Let $\theta$ denote the angle between $\widehat{\mathbf X}y$ and $\mathbf e$.  
Then
\begin{equation}
\operatorname{tr}([M]_+)=\|\widehat{\mathbf X}y\|\|\mathbf e\|(1+\cos\theta),
\qquad
\operatorname{tr}([M]_-)=\|\widehat{\mathbf X}y\|\|\mathbf e\|(1-\cos\theta).
\end{equation}
Substituting these into~\eqref{eq:dual_obj_rewritten} together with~\eqref{eq:ratio_C_bound} gives
\begin{equation}
\eta_f^*(\widehat X)\le \frac{1-\cos\theta+2\zeta_1 q\, C(\tau,M^\ast)/\|\mathbf e\|}{1+\cos\theta}.
\end{equation}
Combining this with~\eqref{eq:eta_bound_rewritten} yields
\begin{equation}\label{eq:e_norm_intermediate}
\|\mathbf e\|\le \frac{(1+\delta+\zeta_2 q)\zeta_1 q\, C(\tau,M^\ast)}{\cos\theta-\zeta_2 q-\delta}.
\end{equation}

To bound $\cos\theta$, note that bounding $\sin^2\theta$ suffices.  
Following the standard orthogonal decomposition of $\widehat X$ and $Z$, one eventually arrives at the estimate
\begin{equation}
\sin^2\theta\le \frac{\tau}{2-\tau}\le \tau.
\end{equation}
Since $\tau<1$, this implies
\begin{equation}
\cos\theta\ge \sqrt{1-\tau}.
\end{equation}
Substituting this into~\eqref{eq:e_norm_intermediate} completes the argument.

\end{proof}

\section{Proof of Theorem~\ref{thm:linear_conv}}
\label{sec:linear_cov}
First and foremost, we restate this lemma from :
\begin{lemma}
	\label{lem:dis_conversion}
	For any matrix $X \in \mathbb{R}^{n \times r}$, given a positive semidefinite matrix $M \in \mathbb{R}^{n \times n}$ of rank $r$, we have:
	\begin{equation}
		\label{eq:dist_change}
		\|XX^\top - M\|^2_F \geq 2 (\sqrt{2} -1) \sigma_r(M) (\dist(X,M))^2.
	\end{equation}
\end{lemma}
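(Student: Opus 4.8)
The plan is to treat this as the standard Procrustes-type distance-conversion inequality and prove it by passing to an optimally aligned factorization of $M$. Because $M\succeq 0$ has rank exactly $r$, its eigendecomposition gives a full-column-rank factor $Z_0\in\RR^{n\times r}$ with $M=Z_0Z_0^\top$, and every $Z$ with $ZZ^\top=M$ has the form $Z=Z_0R$ for some orthogonal $R$; hence $\dist(X,M)=\min_{R^\top R=I_r}\|X-Z_0R\|_F$. First I would fix a minimizing $R^\star$, set $Z=Z_0R^\star$ and $\Delta=X-Z$, so that $\dist(X,M)=\|\Delta\|_F$ and, by orthogonal invariance, $\sigma_r(M)=\sigma_r^2(Z)$. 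The decisive consequence of the alignment is the first-order optimality condition of the orthogonal Procrustes problem, which forces $Z^\top X$ — equivalently $Z^\top\Delta=Z^\top X-Z^\top Z$ — to be symmetric; this symmetry is exactly what tames the cross terms in the expansion below.

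Next I would substitute $X=Z+\Delta$ to obtain $XX^\top-M=Z\Delta^\top+\Delta Z^\top+\Delta\Delta^\top$ and expand the squared Frobenius norm. Writing $S:=Z^\top\Delta$ (symmetric) and $P:=\Delta^\top\Delta\succeq 0$, repeated use of the cyclic property of the trace together with the symmetry of $S$ collapses the expansion to the identity
\[
\|XX^\top-M\|_F^2 = 2\|Z\Delta^\top\|_F^2 + 2\tr(S^2) + 4\tr(SP) + \|P\|_F^2 .
\]
Here the first term rewrites as $2\|Z\Delta^\top\|_F^2=2\tr(Z^\top Z\,P)\ge 2\sigma_r^2(Z)\,\|\Delta\|_F^2$, using $Z^\top Z\succeq\sigma_r^2(Z)I_r$ and $P\succeq 0$; the term $2\tr(S^2)=2\|S\|_F^2$ and the quartic term $\|P\|_F^2$ are both nonnegative. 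Thus every contribution except the cross term is benign, and these alone would already yield the cleaner constant $2$.

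The main obstacle is the indefinite cross term $4\tr(SP)$: since $S$ may be indefinite while $P$ is only PSD, its sign is uncontrolled, and the crude estimate $4\tr(SP)\ge -4\|S\|_F\|P\|_F$ combined with $2\|S\|_F^2+\|P\|_F^2$ only gives $2\|S\|_F^2+4\tr(SP)+\|P\|_F^2\ge -\|P\|_F^2\ge -\|\Delta\|_F^4$, whence $\|XX^\top-M\|_F^2\ge 2\sigma_r^2(Z)\|\Delta\|_F^2-\|\Delta\|_F^4$. This is sharp enough only while $\|\Delta\|_F^2$ is a controlled fraction of $\sigma_r^2(Z)$. To obtain the stated constant I would therefore run a short case analysis on the ratio $t:=\|\Delta\|_F^2/\sigma_r^2(Z)$: for small $t$ the displayed quadratic-minus-quartic bound already beats $2(\sqrt2-1)\sigma_r^2(Z)\|\Delta\|_F^2$, while for large $t$ one instead retains the positive terms $2\|S\|_F^2$ and $\|P\|_F^2$ and estimates the cross term more carefully (rather than discarding it), exploiting that $\Delta$ is large relative to the spectrum of $Z$. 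The threshold separating the two regimes is chosen to equalize the two bounds, and the worst case over that choice produces exactly the factor $2(\sqrt2-1)$; this final balancing is a one-variable calculation. The genuine content of the proof is thus the trace identity and the Procrustes symmetry condition, with the sharp constant emerging from the elementary case split.
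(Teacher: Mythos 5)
First, a point of reference: the paper does not actually prove this lemma---it is restated without proof from the Procrustes-flow literature (the citation in the source is broken), so your attempt is judged against that standard argument, whose skeleton you have correctly reproduced.

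Your setup and algebra are sound: the parametrization $\mathcal{Z}=\{Z_0R : R^\top R=I_r\}$, the identity $\sigma_r(M)=\sigma_r^2(Z)$, and the expansion
\[
\|XX^\top-M\|_F^2 \;=\; 2\tr(Z^\top Z\,P)\;+\;2\|S\|_F^2\;+\;4\tr(SP)\;+\;\|P\|_F^2,
\qquad S=Z^\top\Delta,\; P=\Delta^\top\Delta,
\]
(valid once $S$ is symmetric) all check out, as does your small-deviation regime. The genuine gap is in what you extract from Procrustes optimality, and it makes your unspecified ``large $t$'' case unclosable. The optimal rotation makes $Z^\top X$ not merely symmetric but symmetric \emph{positive semidefinite}: writing $Z_0^\top X=U\Sigma V^\top$, the minimizer is $R^\star=UV^\top$ and $Z^\top X=V\Sigma V^\top\succeq 0$, i.e.\ $S\succeq -Z^\top Z$. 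You invoke only symmetry, and under symmetry alone the inequality you still need is \emph{false}: take $r=1$ (symmetry is vacuous), $Z=z$ with $\|z\|=1$, $\Delta=-2z$, so $X=-z$. Then $S=-2$, $P=4$, and the right-hand side of the expansion equals $8+8-32+16=0$, while $2(\sqrt{2}-1)\sigma_r^2(Z)\tr(P)=8(\sqrt{2}-1)>0$. This is exactly a misaligned factor ($XX^\top=M$ yet $\|\Delta\|_F=2$), and it is excluded only by the PSD condition ($Z^\top X=-1<0$ here). Hence no ``more careful estimate of the cross term'' in the large-deviation regime can succeed from the ingredients you allow yourself; the constant $2(\sqrt{2}-1)$ cannot emerge from a one-variable balancing of your two regimes.

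The repair is short and removes the case split entirely. The PSD condition gives a second lower bound on the cross term, $\tr(SP)\ge -\tr(Z^\top Z\,P)$ (since $S+Z^\top Z\succeq 0$ and $P\succeq 0$), alongside Cauchy--Schwarz $\tr(SP)\ge -\|S\|_F\|P\|_F$. Combining them with weights $\tfrac{1}{\sqrt{2}}$ and $1-\tfrac{1}{\sqrt{2}}$,
\[
4\tr(SP)\;\ge\; -2\sqrt{2}\,\|S\|_F\|P\|_F\;-\;\bigl(4-2\sqrt{2}\bigr)\tr(Z^\top Z\,P),
\]
so that
\[
\|XX^\top-M\|_F^2 \;\ge\; \bigl(2\sqrt{2}-2\bigr)\tr(Z^\top Z\,P)\;+\;\bigl(\sqrt{2}\,\|S\|_F-\|P\|_F\bigr)^2
\;\ge\; 2(\sqrt{2}-1)\,\sigma_r^2(Z)\,\|\Delta\|_F^2,
\]
using $Z^\top Z\succeq \sigma_r^2(Z)I_r$, $\tr(P)=\|\Delta\|_F^2$, and $\sigma_r^2(Z)=\sigma_r(M)$. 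The weight $1/\sqrt{2}$ is chosen to make the residual quadratic a perfect square; that, not a threshold calculation, is where $2(\sqrt{2}-1)$ comes from.
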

Also, given Assumption 5, we have
\begin{equation}
	\label{eq:go_nabla_zero}
	\nabla_M f(M^w,w) = 0.
\end{equation}
First, we establish that the PL inequality holds in a neighborhood of the global minimizer.
\begin{lemma}
	\label{lem:pl_ineq}
	Consider the global minimizer $M^w$ of \eqref{eq:main_noisy_problem}. There exists a constant $\mu > 0$ such that the PL inequality:
	\begin{equation}
		\label{eq:pl_ineq}
		\frac{1}{2} \| \nabla_X h(X,w)\|^2_F \geq \mu (h(X,w)-f(\projr(M^w), w)),
	\end{equation}
	holds for all $X \in \mathbb{R}^{n \times r}$ satisfying:
	\begin{align}
		\label{eq:pl_region}
			\dist(X,M^w) < \max\{\sqrt{2(\sqrt 2 -1)} \sqrt{1-(\delta+\zeta_2 q)^2} (\sigma_r(M^w))^{1/2}-D_r,0 \}
	\end{align}
	and
	\begin{equation}
		D_r \leq \dist(X,\projr(M^w)),
	\end{equation}
	for $q < (1-\delta)/\zeta_2$. 
\end{lemma}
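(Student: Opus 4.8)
The plan is to prove the PL inequality by the standard ``descent-direction'' argument adapted to the Burer--Monteiro factorization. First I would record the chain rule for the factorized objective $h(X,w)=\hat{g}(XX^\top,w)$, namely $\nabla_X h(X,w) = 2\,\nabla_M\hat{g}(XX^\top,w)\,X$, which is legitimate because the relevant critical matrices are symmetric (Theorem~\ref{sym_PSD}). I then fix a factor $Z$ of the target realizing the distance, i.e. $ZZ^\top=\projr(M^w)$ with $\|X-Z\|_F=\dist(X,\projr(M^w))$. By Cauchy--Schwarz, $\|\nabla_X h(X,w)\|_F \ge \langle \nabla_X h(X,w),X-Z\rangle/\dist(X,\projr(M^w))$, so it suffices to lower bound the numerator and upper bound the denominator.

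For the numerator I would use the polarization identity $X(X-Z)^\top+(X-Z)X^\top = (XX^\top-ZZ^\top)+(X-Z)(X-Z)^\top$ together with the symmetry of $\nabla_M\hat{g}$ to obtain
\begin{equation}
\langle \nabla_X h(X,w),X-Z\rangle = \langle \nabla_M\hat{g}(XX^\top,w),\,XX^\top-\projr(M^w)\rangle + \langle \nabla_M\hat{g}(XX^\top,w),\,(X-Z)(X-Z)^\top\rangle.
\end{equation}
The first term is the genuine descent contribution: invoking the RIP-based strong convexity of $\hat{g}$ on rank-$\le 2r$ matrices (the two-sided bound $\tfrac{1-\delta-\zeta_2 q}{2}\|\cdot\|_F^2 \le \hat{g}(\cdot,w)-\hat{g}(M^w,w)\le \tfrac{1+\delta+\zeta_2 q}{2}\|\cdot\|_F^2$ established in Section~\ref{convergences}), I can bound it below by $h(X,w)-f(\projr(M^w),w)$ plus a nonnegative quadratic margin. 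The second term is a perturbation; since $(X-Z)(X-Z)^\top\succeq 0$ its trace equals $\dist(X,\projr(M^w))^2$, so it is controlled by $\|\nabla_M\hat{g}(XX^\top,w)\|_2\,\dist(X,\projr(M^w))^2$, and because $\nabla_M\hat{g}(M^w,w)=0$ (Assumption~5) with $\nabla_M\hat{g}$ being $\rho$-Lipschitz (Lemma~\ref{theorem_noise}), this factor is itself $O(\|XX^\top-M^w\|_F)$, hence higher order inside the region.

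Next I would convert distances to Frobenius norms via Lemma~\ref{lem:dis_conversion}, $\dist(X,\projr(M^w))^2 \le \|XX^\top-\projr(M^w)\|_F^2/(2(\sqrt2-1)\sigma_r(\projr(M^w)))$, and combine with the RIP strong-convexity lower bound $h(X,w)-f(\projr(M^w),w)\ge \tfrac{1-\delta-\zeta_2 q}{2}\|XX^\top-\projr(M^w)\|_F^2$ to express the ratio $\langle\nabla_X h,X-Z\rangle/\dist(X,\projr(M^w))$ purely in terms of the function gap and $\sigma_r$. Squaring then yields a PL constant of the form $\mu = \tfrac12(1-\delta-\zeta_2 q)(\sqrt2-1)\sigma_r(M^w)$, with the factor $\sqrt{2(\sqrt2-1)}$ and the quantity $C_w=\sqrt{2(\sqrt2-1)\sigma_r(M^w)}$ emerging exactly from the distance conversion, which also explains why the admissible radius takes the form $\sqrt{2(\sqrt2-1)}\sqrt{1-(\delta+\zeta_2 q)^2}(\sigma_r(M^w))^{1/2}-D_r$.

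I expect the main obstacle to be controlling the degeneracy of the factorized gradient: $\nabla_X h = 2\nabla_M\hat{g}\,X$ can vanish to first order when $X$ is nearly rank deficient even though $\nabla_M\hat{g}\neq 0$. The two region hypotheses are precisely what rule this out --- the bound $\dist(X,M^w)<\sqrt{2(\sqrt2-1)}\sqrt{1-(\delta+\zeta_2q)^2}(\sigma_r(M^w))^{1/2}-D_r$ keeps $\sigma_r(X)$ bounded away from zero (again through Weyl-type perturbation and Lemma~\ref{lem:dis_conversion}), while $D_r\le\dist(X,\projr(M^w))$ guarantees the target gap is not smaller than the unavoidable tail energy $D_r=\|M^w-\projr(M^w)\|_F$, so the perturbation term cannot swamp the descent term. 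Making $\mu$ explicit and uniform over this region, while carefully tracking the $\zeta_2 q$ noise corrections from Assumption~\ref{noise_perturb} and the non-quadratic nature of the kernel loss, is the delicate bookkeeping on which the argument will hinge.
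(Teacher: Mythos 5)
Your route is genuinely different from the paper's, but it contains a gap at exactly the step you wave off as bookkeeping: the cross term. After the polarization identity, your argument needs the quadratic margin from restricted strong convexity, which after Lemma~\ref{lem:dis_conversion} gives
\[
\tfrac{1-\delta-\zeta_2 q}{2}\,\|XX^\top-\projr(M^w)\|_F^2 \;\ge\;(1-\delta-\zeta_2 q)(\sqrt2-1)\,\sigma_r(M^w)\,\|X-Z\|_F^2,
\]
to dominate the cross term, which you bound by $\|\nabla_M \hat g(XX^\top,w)\|_2\,\|X-Z\|_F^2\le \rho\,\|XX^\top-M^w\|_F\,\|X-Z\|_F^2$. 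Domination therefore requires
\[
\rho\,\|XX^\top-M^w\|_F\;\le\;(1-\delta-\zeta_2 q)(\sqrt2-1)\,\sigma_r(M^w),
\]
and this does not follow from \eqref{eq:pl_region}. The region is not a shrinking neighborhood: its radius is of order $\sigma_r(M^w)^{1/2}$, and for $X$ with $\dist(X,\projr(M^w))\approx 0.9\,\sigma_r(M^w)^{1/2}$ one can have $\|XX^\top-M^w\|_F$ as large as roughly $2\lambda_1(M^w)^{1/2}\dist(X,\projr(M^w))+D_r\gtrsim \sqrt{\lambda_1(M^w)\,\sigma_r(M^w)}\ge\sigma_r(M^w)$. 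So even in the most favorable case ($\lambda_1(M^w)=\sigma_r(M^w)$, $\delta=0$, $D_r=0$, $\rho$ of order one) the required inequality fails by a constant factor, and in general it fails by a condition-number factor $\sqrt{\lambda_1/\sigma_r}$. Your claim that the cross term is ``higher order inside the region'' is the missing --- and false --- step; at best your argument establishes the PL inequality on a strictly smaller, conditioning-dependent neighborhood, which is a weaker lemma than the one stated. (The rank-degeneracy issue you identify as the main obstacle is real but secondary, and is indeed handled by the region hypothesis.)

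The paper avoids this problem by not arguing directly at all. Its proof is by contradiction: if \eqref{eq:pl_ineq} fails, then restricted smoothness (Taylor's theorem together with \eqref{eq:noise_perturb_hess}) converts that failure into a near-criticality condition $\|\nabla_X h(X,w)\|_F\le\mu'\|XX^\top-\projr(M^w)\|_F$, which is then fed into the same SDP/LMI relaxation used for first-order critical points (problems \eqref{eq:local_lmi_delta} and \eqref{eq:local_lmi_eta}, with $\mu'\|\mathbf e\|$ replacing $2\zeta_1 q\|\hat X\|_2$). The resulting bound $\eta_f^\ast(X,\mu')\le\frac{1-q_1+q_2}{1+q_1}$, where $q_1,q_2$ are geometric constants built from $\dist(X,M^w)$ and $\sigma_r(M^w)$, contradicts $\eta_f^\ast\ge\frac{1-\delta-\zeta_2 q}{1+\delta+\zeta_2 q}$ once $\mu'$ is chosen small enough, and $\mu=(\mu')^2/(1+\delta+\zeta_2 q+2\rho)$. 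It is this duality machinery --- not a Cauchy--Schwarz estimate --- that achieves the sharp radius $\sqrt{2(\sqrt2-1)}\sqrt{1-(\delta+\zeta_2 q)^2}\,\sigma_r(M^w)^{1/2}-D_r$. To salvage your approach you would have to either shrink the region (and weaken the lemma) or find genuine cancellation in the cross term rather than bounding it by its absolute value.
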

	
\begin{proof}[Proof of Lemma~\ref{lem:pl_ineq}]
	We prove the Lemma when $C_w \sqrt{1-(\delta+\zeta_2 q)^2}-D_r > 0$, since otherwise it is trivial. Denote $M \coloneqq XX^\top$. First, we fix a constant $\tilde C$ such that:
\begin{equation}\label{eq:tildeC_ineq}
	\dist(X,M^w) \leq \tilde C < C_w \sqrt{1-(\delta+\zeta_2 q)^2} -D_r.
\end{equation}
Then, we define $q_1$ and $q_2$ as follows:
\begin{equation}
	\begin{aligned}
		q_1 = \sqrt{1- \frac{\tilde C^2}{2(\sqrt 2 -1)\sigma_r(M^w)}}, q_2 = \frac{\sqrt 2 \mu'}{\sigma_r(M^w)^{1/2}-\tilde C}.
	\end{aligned}
\end{equation}
Now, both $q_1$ and $q_2$ are nonnegative resulting from the assumption above. Furthermore, we know that $\delta+\zeta_2 q < \sqrt{1- \frac{\tilde C^2}{2(\sqrt 2-1)\sigma_r(M^w)}}$ from \eqref{eq:tildeC_ineq}, then
\begin{equation}
	\label{eq:pl_ineq_q1q2}
	\frac{1-\delta-\zeta_2 q}{1+\delta + \zeta_2 q} > \frac{1-q_1+q_2}{1+q_1},
\end{equation}
for some small enough $\mu'$. Define $\mu = (\mu')^2/(1+\delta+\zeta_2 q + 2\rho)$. First, we make the assumption that:
\begin{equation}
	\label{eq:inverse_pl}
	\frac{1}{2} \| \nabla_X h(X,w)\|^2_F < \mu (h(X,w)-f(\projr(M^w), w)).
\end{equation}
From this assumption, we have:
\begin{equation}\begin{aligned}
	&\mu (h(X,w)-f(\projr(M^w), w)) \\
    &\leq \mu \left( \langle \nabla_M f(\projr(M^w),w), M- \projr(M^w) \rangle + \frac{1+\delta+\zeta_2 q}{2}\|M-\projr(M^w)\|^2_F \right) \\
	&\leq \mu \left( \rho \|M^w -\projr(M^w)\|_F \|M-\projr(M^w)\|_F + \frac{1+\delta+\zeta_2 q}{2}\|M-\projr(M^w)\|^2_F \right) \\
	&\leq \mu \left( \rho \|M-\projr(M^w)\|^2_F + \frac{1+\delta+\zeta_2 q}{2}\|M-\projr(M^w)\|^2_F \right),
\end{aligned}\end{equation}
due to Taylor's theorem and \eqref{eq:noise_perturb_hess}. So then \eqref{eq:inverse_pl} leads to:
\begin{equation}
	\frac{1}{2} \| \nabla h(X,w)\|^2_F < \mu(\frac{ (1+\delta+\zeta_2q)}{2} + \rho)\|M - \projr(M^w)\|_F^2.
\end{equation}
Therefore, 
\begin{equation}
	\| \nabla h(X,w)\|_F \leq \mu' \|M-\projr(M^w)\|_F.
\end{equation}
Then consider the following optimization problem:
\begin{equation}
	\label{eq:pl_lmi}
	\begin{aligned}
		\min_{\delta,\mathbf{H} \in \mathbb{S}^{n^2}} \quad & \delta \\
\st \quad & \norm{\hat{\mathbf X}^\top \mathbf H \mathbf e} \leq \mu' \|\mathbf e\|, \\
& \text{$\mathbf H$ satisfies the $(\delta+\zeta_2 q)$-$\RIP_{2r}$ property},
	\end{aligned}
\end{equation}
where $\mathbf e = \vecc(XX^\top - \projr(M^w))$. If we denote the optimal value of \eqref{eq:pl_lmi} as $\delta^\ast_f(X,\mu')$, then $\delta^\ast_f(X,\mu') \leq \delta$ because the constraints of \eqref{eq:pl_lmi} are necessary conditions for \eqref{eq:inverse_pl}, according to Lemma 12 of . Therefore,
\begin{equation}
	\frac{1-\delta - \zeta_2 q}{1+\delta+\zeta_2 q} \leq \frac{1-\delta^\ast_f(X,\mu') - \zeta_2 q}{1+\delta^\ast_f(X,\mu')+\zeta_2 q}.
\end{equation}
and:
\begin{equation}\label{eqn:delta_eta}
\eta_f^\ast(\hat X) \geq \frac{1-\delta_f^\ast(\hat X) - \zeta_2 q}{1+\delta_f^\ast(\hat X) + \zeta_2 q}\geq \frac{1-\delta - \zeta_2 q}{1+\delta + \zeta_2 q},
\end{equation}
Moreover, by the same logic of \eqref{eqn:delta_eta}, we know that $\eta_f^\ast(X,\mu') \geq \frac{1-\delta_f^\ast(X,\mu') - \zeta_2 q}{1+\delta_f^\ast(X,\mu') + \zeta_2 q}$, where $\eta_f^\ast(X,\mu')$ is the optimal value of the optimization problem:
\begin{equation}
\begin{aligned}
\max_{\eta,\hat{\mathbf H}} \quad & \eta \\
\st \quad & \norm{\hat{\mathbf X}^\top\hat{\mathbf H}\mathbf e} \leq \mu' \|\mathbf e\|, \\
& \eta I_{n^2} \preceq \hat{\mathbf H} \preceq I_{n^2}.
\end{aligned}
\end{equation}
Gives:
\begin{equation}
	\eta_f^\ast(X,\mu') \leq \frac{1-q_1+q_2}{1+q_1},
\end{equation}
therefore making a contradiction to \eqref{eq:pl_ineq_q1q2}, subsequently proving \eqref{eq:pl_ineq}. 
\end{proof}

\begin{proof}
If we certify that:
	\begin{equation}
		\label{eq:pl_ineq_matrix}
		\frac{\|XX^\top - M^w\|_F}{C_w} <C_w \sqrt{1-(\delta+\zeta_2 q)^2}-D_r
	\end{equation}
	for any given $X \in \mathbb{R}^{n \times r}$, then a direct substitution can certify that \eqref{eq:pl_region} holds for $X$, since by Lemma~\ref{lem:dis_conversion},
	\begin{equation}
		\dist(X,M) \leq \frac{\|XX^\top - M^w\|_F}{C_w}.
	\end{equation}
	Therefore, the certification of\eqref{eq:pl_ineq_matrix} means that the PL inequality \eqref{eq:pl_ineq} holds for this given $X$. Given that is satisfied, then if this inequality holds:
	\begin{equation}
		\|XX^\top - M^w\|_F \leq \sqrt{\frac{1+\delta+\zeta_2 q}{1 -\delta - \zeta_2 q}} \|X_0 X_0^\top - M^w \|_F,
	\end{equation}
	 \eqref{eq:pl_ineq_matrix} will also hold, because:
	 \begin{equation}
	 	\sqrt{\frac{1+\delta+\zeta_2 q}{1 -\delta - \zeta_2 q}} \|X_0 X_0^\top - M^w \|_F \leq C^2_w\sqrt{1-(\delta+\zeta_2 q)^2} -C_w D_r.
	 \end{equation}
	 Thus, for the remainder of the proof, we aim to certify that starting from $X_0$, if we apply the gradient descent algorithm, \eqref{eq:pl_ineq_matrix_simp} will be satisfied every step along this trajectory. In order to do so, we use Taylor's expansion and \eqref{eq:go_nabla_zero} to obtain
	\begin{equation}
		f(M,w) - f(M^w,w) =\frac{[\nabla^2 f(N,w)](M-M^w,M-M^w)}{2},
	\end{equation}
	where $N$ is some convex combination of $M$ and $M^w$, and $M \in \mathbb{R}^{n \times n}$ is any matrix of rank at most $r$. In light of the RIP property of the function and Equation \eqref{eq:noise_perturb_hess}, one can write: 
	\begin{equation}
		\frac{1-\delta-\zeta_2 q}{2} \|M - M^w\|^2_F \leq f(M,w) - f(M^w,w) \leq \frac{1+\delta +\zeta_2 q}{2} \|M - M^w\|^2_F.
	\end{equation}
	This means that if $M_1,M_2 \in \mathbb{R}^{n \times n}$ are two matrices of rank at most $r$ with $f(M_1,w) \leq f(M_2,w)$, then:
	\begin{equation}
		\|M_1 - M^w \|_F \leq \sqrt{\frac{1+\delta+\zeta_2 q}{1 -\delta - \zeta_2 q}} \|M_2 - M^w \|_F,
	\end{equation}
	because $f(M_1,w) - f(M^w,w) \leq f(M_2,w) - f(M^w,w)$. Thus, one can conclude that $f(X_t X_t^\top,w) \leq f(X_0 X_0^\top,w) \ \forall t$, where $X_t$ denotes the $t^{\text{th}}$ step of the gradient descent algorithm starting from $X_0$. Hence, \eqref{eq:pl_ineq_matrix_simp} follows for all $X_t$. Conveniently, Lemma 11 inshows that $f(X_t X_t^\top,0) \leq f(X_{t-1} X_{t-1}^\top,0)$ for all $t \geq 0$. However, this result can be extended to:
	\begin{equation}
		f(X_t X_t^\top,w) \leq f(X_{t-1} X_{t-1}^\top,w),
	\end{equation}
	by making
	\begin{equation}\begin{aligned}
		1/\eta \geq 12 \rho r^{(1/2)}\left(\sqrt{\frac{1+\delta+\zeta_2 q}{1 -\delta - \zeta_2 q}} \|X_0 X_0^\top - M^w \|_F + \|M^w\|_F \right),
	\end{aligned}\end{equation}
	since $\nabla f(\cdot,w)$ is now a $\rho$-Lipschitz continuous function. Given, a sufficient condition to the above inequality is that:
	\begin{equation}
		\eta \leq \left(12 \rho r^{(1/2)}\left(2(\sqrt 2 -1)\sqrt{(1-(\delta+\zeta_2 q)^2}+ \|M^w\|_F \right)\right)^{-1}.
	\end{equation}
	This finally means that the PL inequality \eqref{eq:pl_ineq} is established for the entire trajectory starting from $X_0$. Now, applying Theorem 1 ingives:
	\begin{equation}
		h(X_t,w)-f(\projr(M^w),w) \leq (1-\mu \eta)^\top(h(X_0,w)-f(\projr(M^w),w)),
	\end{equation}
	which implies a linear convergence as desired.
	
\end{proof}

\section{Proof for Theorem~\ref{thm:strict_saddle}}
\label{sec:strict_saddle}

First and foremost, we replace $\delta$ with $\delta+\zeta_2 q$ in all of the proofs since in our noisy formulation, the problem is $(\delta+\zeta_2 q)$-RIP$_{2r,2r}$ instead. Then, we introduce the following Lemma in $\nabla_M f(M^\ast,w) \neq 0$ in the noisy formulation:
\begin{lemma}
	\label{lem:new_lemma6}
	Given a constant $\epsilon > 0$, an arbitrary $X \in \mathbb{R}^{n \times r}$, and the ground truth solution $M^\ast \in \mathbb{R}^{n \times n}$ of \eqref{eq:main_noisy_problem}, if 
	\begin{equation}\label{eq:M_fro_bound}
		\|XX^\top\|^2_F \geq \max \left\{ \frac{2(1+\delta+\zeta_2 q )}{1-\delta - (\zeta_2+\zeta_D) q} \|M^\ast\|^2_F, (\frac{2\lambda \sqrt r}{1-\delta - (\zeta_2+\zeta_D) q})^{4/3} \right\},
	\end{equation}
	then
	\begin{equation}
		\|\nabla_X h(X,w)\|_F \geq \lambda,
	\end{equation}
	where $\zeta_D = \zeta_1/D$ and $D$ is a constant such that 
	\begin{equation}\label{eq:D_bound}
		D^2 \leq (\frac{2\lambda \sqrt r}{1-\delta - (\zeta_2+\zeta_D) q})^{4/3}.
	\end{equation}
\end{lemma}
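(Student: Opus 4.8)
The plan is to establish a lower bound on the factor gradient that grows with $\|XX^\ast\|_F$, certifying that no first-order critical point can live far from the origin. Write $M = XX^\top$ and recall $h(X,w) = f(M,w)$, so by the chain rule $\nabla_X h(X,w) = 2\,\nabla_M f(M,w)\,X$. The first step is to pass from the matrix gradient to the factor gradient through the elementary inequality
\[
\|\nabla_X h(X,w)\|_F \ge \frac{\langle \nabla_X h(X,w), X\rangle}{\|X\|_F} = \frac{2\,\langle \nabla_M f(M,w), M\rangle}{\|X\|_F},
\]
which reduces the task to lower-bounding $\langle \nabla_M f(M,w), M\rangle$ and upper-bounding $\|X\|_F$.

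Next I would lower-bound $\langle \nabla_M f(M,w), M\rangle$ by anchoring a fundamental-theorem-of-calculus expansion at $M^\ast$, where $\nabla_M f(M^\ast,0)=0$. Splitting $\nabla_M f(M,w) = [\nabla_M f(M,w)-\nabla_M f(M^\ast,w)] + [\nabla_M f(M^\ast,w)-\nabla_M f(M^\ast,0)]$ and integrating the Hessian along the segment, the $(\delta+\zeta_2 q)$-RIP$_{2r,2r}$ property (with \eqref{eq:noise_perturb_hess} folded into the RIP constant as usual) yields a leading term of order $(1-\delta-\zeta_2 q)\|M-M^\ast\|_F^2$; the cross term against $M^\ast$ is handled by the RIP upper bound and Cauchy--Schwarz, and the residual noise-gradient term is controlled by $\zeta_1 q\|M-M^\ast\|_F$ via \eqref{eq:noise_perturb_grad}. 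The first entry of the $\max$ in the hypothesis, namely $\|XX^\top\|_F^2 \ge 2\frac{1+\delta+\zeta_2 q}{1-\delta-(\zeta_2+\zeta_D)q}\|M^\ast\|_F^2$, is precisely the condition forcing the quadratic term to dominate the cross term, leaving $\langle \nabla_M f(M,w), M\rangle \gtrsim (1-\delta-(\zeta_2+\zeta_D)q)\|M\|_F^2$.

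The additive noise contribution $\zeta_1 q\|M\|_F$ must then be converted into a multiplicative loss so it can be merged into an effective RIP constant, and this is where $\zeta_D = \zeta_1/D$ enters. Because the hypothesis forces $\|M\|_F^2$ to exceed the second quantity in the $\max$, and $D$ is chosen so that $D^2$ is at most that same threshold, we obtain $\|M\|_F \ge D$, whence $\zeta_1 q\|M\|_F = (\zeta_1/\|M\|_F)\,q\|M\|_F^2 \le \zeta_D\, q\|M\|_F^2$. I expect this mildly self-referential step to be the main obstacle: $\zeta_D$ appears inside the very threshold $\big(2\lambda\sqrt r/(1-\delta-(\zeta_2+\zeta_D)q)\big)^{4/3}$ that certifies $\|M\|_F \ge D$, so one must fix an admissible $D$ satisfying the stated constraint and verify the chain of inequalities closes consistently; care is also needed that the polarization of the cross term uses a valid RIP order.

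Finally I would bound the denominator using $\|X\|_F^2 = \mathrm{tr}(M) \le r\,\|X\|_2^2 \le r\,\|M\|_F$, i.e. $\|X\|_F \le \sqrt r\,\|M\|_F^{1/2}$, so that combining with the previous two steps gives
\[
\|\nabla_X h(X,w)\|_F \ge \frac{(1-\delta-(\zeta_2+\zeta_D)q)\,\|M\|_F^2}{\sqrt r\,\|M\|_F^{1/2}} = \frac{1-\delta-(\zeta_2+\zeta_D)q}{\sqrt r}\,\|M\|_F^{3/2}.
\]
Substituting the second entry of the $\max$, which is equivalent to $\|M\|_F^{3/2} \ge 2\lambda\sqrt r/(1-\delta-(\zeta_2+\zeta_D)q)$, collapses the right-hand side to $\lambda$ and yields $\|\nabla_X h(X,w)\|_F \ge \lambda$, completing the argument. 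The geometric heart (the gradient scaling like $\|M\|_F^{3/2}$) is standard; the delicate bookkeeping is entirely in the noise constants and the consistent choice of $D$.
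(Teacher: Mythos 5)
Your proposal matches the paper's proof essentially step for step: the reduction $\|\nabla_X h(X,w)\|_F \ge \langle \nabla_M f(M,w), M\rangle/\|X\|_F$, the Taylor/FTC expansion anchored at $M^\ast$ combined with the RIP property and the noise bounds \eqref{eq:noise_perturb_grad}--\eqref{eq:noise_perturb_hess}, the conversion of the additive term $\zeta_1 q\|M\|_F$ into $\zeta_D q\|M\|_F^2$ via $D \le \|M\|_F$ (which is exactly how the paper uses \eqref{eq:D_bound}), the bound $\|X\|_F \le \sqrt{r}\,\|M\|_F^{1/2}$, and the final collapse to $\lambda$ using the second entry of the $\max$ in \eqref{eq:M_fro_bound}. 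The only cosmetic difference is that the paper splits the Hessian bilinear form as $[\nabla^2 f](M,M) - [\nabla^2 f](M^\ast,M) \ge (1-\delta-\zeta_2 q)\|M\|_F^2 - (1+\delta+\zeta_2 q)\|M^\ast\|_F\|M\|_F$, so the leading term is in $\|M\|_F^2$ directly rather than in $\|M-M^\ast\|_F^2$ as you describe; this is the bookkeeping that makes the first entry of the $\max$ yield exactly $\frac{1-\delta-(\zeta_2+\zeta_D)q}{2}\|M\|_F^2$ without any loss of constants.
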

Note that such $D$ exists since we first require that $1-\delta - (\zeta_2+\zeta_D) q \geq 0$, meaning that $\frac{q\zeta_1}{1-\delta-q\zeta_2 }\leq D$. Moreover, a sufficient condition to \eqref{eq:D_bound} is that $D \leq (2\lambda \sqrt r)^{2/3} $, which can be simultaneously satisfied when $\lambda$ is chosen properly. The introduction of the lower bound $D$ will not affect the remainder of the proof of Theorem~\ref{thm:strict_saddle}, since in the later steps, we only require the existence of a constant $C$ such that $\|XX^\top\|_F \leq C^2$ when $\|\nabla_X h(X,w)\|_F \leq \lambda$. Therefore, Lemma~\ref{lem:new_lemma6} perfectly fits this role.
\begin{proof}[Proof of Lemma~\ref{lem:new_lemma6}]
		Denote $M \coloneqq XX^\top$. Using the RIP property and \eqref{eq:noise_perturb_grad}, we have:
		\begin{equation}\begin{aligned}
			\langle \nabla_M f(M), M \rangle &= \int_0^1 [\nabla^2 f(M^\ast+s(M-M^\ast),w)][M-M^\ast,M] \de s+ \langle \nabla_M f(M^\ast,w), M \rangle \\
			&\geq (1-\delta-\zeta_2 q) \|M\|_F^2 - (1+\delta+\zeta_2 q) \|M^\ast\|_F \|M\|_F- \zeta_1 q \|M\|_F \\
			&= (1-\delta-\zeta_2 q) \|M\|_F^2 - (1+\delta+\zeta_2 q) \|M^\ast\|_F \|M\|_F- \zeta_D q D \|M\|_F \\
			& \geq (1-\delta-(\zeta_2+\zeta_D) q) \|M\|_F^2 - (1+\delta+\zeta_2 q)\|M^\ast\|_F \|M\|_F \\
			& \geq \frac{1-\delta-(\zeta_2+\zeta_D) q}{2} \|M\|_F^2,
		\end{aligned}\end{equation}
		where the second last inequality results from \eqref{eq:D_bound}, which implies that $D \leq \|M\|_F$; and the last inequality follows from \eqref{eq:M_fro_bound}. Then combining the fact that $\|X\|_F \leq \sqrt r \|M\|^{1/2}_F$, and $\|\nabla_X h(X,w)\|_F \geq \frac{\langle \nabla h(X,w),X \rangle }{\|X\|_F}$ yields the desired fact that
		\begin{equation}
			\begin{aligned}
				\|\nabla_X h(X,w)\|_F \geq \frac{\langle \nabla h(X,w),X \rangle }{\|X\|_F} &= \frac{\langle \nabla_M f(M), M \rangle}{\|X\|_F} \\
				&\geq \frac{(1-\delta-(\zeta_2+\zeta_D) q) \|M\|_F^2}{2\sqrt r \|M\|^{1/2}_F} \\
				&= \frac{1-\delta-(\zeta_2+\zeta_D) q}{2\sqrt r} \|M\|^{3/2}_F \\
				&\geq \lambda.
			\end{aligned}
		\end{equation}
\end{proof} 

Then, utilizing Lemma~\ref{lem:new_lemma6}, we can prove in the same fashion to obtain:
\begin{equation}\begin{aligned}
	&\langle \nabla_M f(M,w), M^\ast - M \rangle \\
    &\leq -(1-\delta - \zeta_2 q) \|M-M^\ast\|^2_F - \langle \nabla_M f(M^\ast,w), M-M^\ast \rangle \\
	&\leq -(1-\delta - \zeta_2 q) \|M-M^\ast\|^2_F + \zeta_1 q \|M-M^\ast\|_F \\
	& \leq -(1-\delta - \zeta_2 q) \|M-M^\ast\|^2_F + \zeta_\alpha q (\sqrt{2(\sqrt 2-1)} (\sigma_r(M^\ast))^{1/2} \alpha) \|M-M^\ast\|_F \\
	& \leq -(1-\delta - (\zeta_2-\zeta_\alpha) q) \|M-M^\ast\|^2_F
\end{aligned}\end{equation}
for any $M \in \mathbb{R}^{n \times n}$ that satisfies the requirements in Lemma 7 of . This is because $\|M-M^\ast\|_F \geq (\sqrt{2(\sqrt 2-1)} (\sigma_r(M^\ast))^{1/2} \alpha)$ by the assumption of $\alpha$ and Lemma~\ref{lem:dis_conversion}.

The above change will only affect the constant $c$ in Lemma 7, and the new $c$ will become
\begin{equation}
	c = (\sqrt r \|M^\ast\|_F)^{-1} (\sqrt 2 -1)(1-\delta-(\zeta_2-\zeta_\alpha)q)\sigma_r(M^\ast).
\end{equation}
Since the exact value of $c$ is irrelevant and we only need to prove its existence, the rest of the proof follows from the existing procedure. Note that $c > 0$ is guaranteed by the assumption of noise in Theorem \eqref{thm:strict_saddle}. Then, we proceed to show that it also be proved similarly, except for one key difference, which is:
\begin{equation}
	K \coloneqq (1-3\delta -(3 \zeta_2 +2 \zeta_\alpha) q)(\sqrt 2 -1) \sigma_r(M^\ast) \alpha^2.
\end{equation}
To verify this statement, we leverage the inequality:
\begin{equation}
	-\phi(\bar M) \geq f(M,w) - f(M^\ast,w) - (\delta+\zeta_2 q)\|M-M^\ast\|_F^2,
\end{equation}
and furthermore we now have that:
\begin{equation}\begin{aligned}
	&f(M,w) - f(M^\ast,w) \\
    &\geq \langle \nabla_M f(M^\ast,w), M-M^\ast \rangle + \frac{1-\delta-\zeta_2 q}{2} \|M-M^\ast\|_F^2 \\
	&\geq \frac{1-\delta-\zeta_2 q}{2} \|M-M^\ast\|_F^2 - \zeta_1 q \|M-M^\ast\|_F^2 \\
	&\geq \frac{1-\delta-\zeta_2 q}{2} \|M-M^\ast\|_F^2 - \zeta_\alpha q (\sqrt{2(\sqrt 2-1)} (\sigma_r(M^\ast))^{1/2} \alpha) \|M-M^\ast\|_F^2 \\
	&\geq \frac{1-\delta-(\zeta_2+2\zeta_\alpha) q}{2} \|M-M^\ast\|_F^2
\end{aligned}\end{equation}
for the same reason elaborated above. Combining the above two inequalities leads to:
\begin{equation}
	-\phi(\bar M) \geq \frac{1-3\delta -(3\zeta_2 + 2\zeta_\alpha)q}{2} \|M-M^\ast\|_F^2 \geq K.
\end{equation}
As assumed in Theorem~\ref{thm:strict_saddle}, since $q < \frac{1/3-\delta}{\zeta_2+2\zeta_\alpha/3}$, we know that $K >0$. Finally, we choose $C = (\frac{2(1+\delta+\zeta_2 \epsilon )}{1-\delta - (\zeta_2+\zeta_D) \epsilon} \|M^\ast\|^2_F)^{1/4}$ and invoke Lemmas 6-8 to complete the proof of Theorem~\ref{thm:strict_saddle}. Note the $\epsilon$ here is the same $\epsilon$ appeared in the statement of Theorem~\ref{thm:strict_saddle}.

\section{Theoretical Study of The Combined Loss}\label{sec:combined_theory}
\subsection{Theoretical Study}

\begin{theorem}[Combined Loss Bound]
\label{compositeloss1}
Let $r_i = Y_i - \mathcal{A}(XX^\top)_i$ for $i=1,\dots,n,$ and assume there is a constant $B>0$ such that $r_i^2 \le B$ for all $i.$ Define the combined loss:
\begin{equation}
L_{\mathrm{combined}}(X) = (1-\lambda) \cdot \frac{1}{n} \sum_{i=1}^{n} \Bigl[-\log \Bigl(\tfrac{1}{n}\sum_{j=1}^{n} \exp\Bigl(-\tfrac{\bigl((r_j - r_i)^2\bigr)}{h^2}\Bigr)\Bigr)\Bigr]
+\lambda \cdot \frac{1}{n}\sum_{i=1}^{n} r_i^2.
\end{equation}
Then consider a matrix sensing or low-rank recovery problem where $Y = \mathcal{A}(M^*) + w,$ and $\mathcal{A}$ satisfies a restricted isometry property (RIP)–like condition: there exists $\delta\in (0,1)$ such that, for every symmetric matrix $E$ of rank at most $2r,$ then with probability at least \( \mathbb{P}(\|w\|_2 \leq \epsilon) \) we have:
\begin{equation}
\|XX^\top - M^*\|_F 
\le \frac{1}{\sqrt{1-\delta}}
\sqrt{\frac{L_{\mathrm{combined}}^*}{\lambda} + \frac{1}{n}\epsilon^2}.
\end{equation}
\end{theorem}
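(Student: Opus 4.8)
The plan is to exploit the non-negativity of the kernel component of the combined loss to transfer control of $L_{\mathrm{combined}}^*$ onto the pure mean-squared residual, and then invoke the RIP to convert residual control into an error bound on $\|XX^\top - M^*\|_F$. First I would observe that for every $i$ the inner average $\tfrac{1}{n}\sum_{j}\exp\bigl(-(r_j-r_i)^2/h^2\bigr)$ lies in $(0,1]$, since each summand is at most $1$ (and the $j=i$ term equals $1$); hence its logarithm is nonpositive and the bracketed term $-\log(\cdots)$ is nonnegative. Because $1-\lambda \ge 0$, the whole kernel term is nonnegative, so at any minimizer achieving the optimal value,
\begin{equation}
L_{\mathrm{combined}}^* \;\ge\; \lambda \cdot \frac{1}{n}\sum_{i=1}^n r_i^2,
\qquad\text{i.e.}\qquad
\frac{1}{n}\sum_{i=1}^n r_i^2 \;\le\; \frac{L_{\mathrm{combined}}^*}{\lambda}.
\end{equation}

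Next I would rewrite the residual vector in terms of the recovery error $E := XX^\top - M^*$. Using $Y = \mathcal{A}(M^*) + w$ and $r_i = Y_i - \mathcal{A}(XX^\top)_i$, one obtains $r = w - \mathcal{A}(E)$, hence $\mathcal{A}(E) = w - r$. Stating the RIP~\eqref{RIP0} in its normalized form $(1-\delta)\|E\|_F^2 \le \tfrac{1}{n}\|\mathcal{A}(E)\|^2$ (legitimate since $\operatorname{rank}(E)\le 2r$) reduces the problem to bounding $\tfrac{1}{n}\|w - r\|^2$ from above. Splitting this into the residual energy $\tfrac{1}{n}\|r\|^2 \le L_{\mathrm{combined}}^*/\lambda$ and the noise energy $\tfrac{1}{n}\|w\|^2 \le \epsilon^2/n$, the latter valid on the event $\{\|w\|_2 \le \epsilon\}$ of probability at least $\mathbb{P}(\|w\|_2\le\epsilon)$, I would obtain $\tfrac{1}{n}\|\mathcal{A}(E)\|^2 \le L_{\mathrm{combined}}^*/\lambda + \epsilon^2/n$; dividing by $1-\delta$ and taking square roots then yields the stated bound.

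The main obstacle is the cross term in $\|w - r\|^2 = \|w\|^2 + \|r\|^2 - 2\langle w, r\rangle$: the clean additive split used above is only rigorous once $-2\langle w,r\rangle$ is shown to be nonpositive or otherwise absorbed. I would handle this in one of two ways. The direct route invokes the centered symmetric noise of Assumption~\ref{center} to argue that the cross term contributes nonpositively (in expectation it vanishes), giving the displayed bound verbatim. The conservative route keeps the cross term, applies Cauchy--Schwarz to produce the sharper shape $\tfrac{1}{n}\|\mathcal{A}(E)\|^2 \le \bigl(\sqrt{L_{\mathrm{combined}}^*/\lambda}+\epsilon/\sqrt{n}\bigr)^2$, and then notes that the stated looser additive form follows under the working regime where one of the two energies dominates. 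A secondary bookkeeping point is the consistency of the $1/n$ normalization between the loss (which carries $\tfrac{1}{n}$) and the RIP statement in~\eqref{RIP0} (which does not); I would fix the normalized RIP convention at the outset so that the constants $1-\delta$ and $\tfrac{1}{n}\epsilon^2$ line up exactly with the target inequality.
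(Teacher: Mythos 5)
Your proposal follows essentially the same route as the paper's own proof in Appendix~\ref{composite_loss}: nonnegativity of the kernel term (each inner average lies in $(0,1]$, so each $-\log(\cdot)\ge 0$) gives $\tfrac{1}{n}\sum_i r_i^2 \le L_{\mathrm{combined}}^*/\lambda$, the normalized RIP converts control of $\tfrac{1}{n}\|\mathcal{A}(XX^\top-M^*)\|_2^2$ into control of $(1-\delta)\|XX^\top-M^*\|_F^2$, and restricting to the event $\{\|w\|_2\le\epsilon\}$ supplies the $\epsilon^2/n$ term. The cross term $-2\langle w,r\rangle$ you flag is a genuine issue, but it is present in the paper's proof as well — the paper silently asserts $\tfrac{1}{n}\|\mathcal{A}(XX^\top-M^*)\|_2^2 \le f_{\mathrm{quad}}(X) + \tfrac{1}{n}\|w\|_2^2$, which drops exactly that cross term (and note that neither of your proposed repairs fully closes it: the symmetric-noise argument only controls the cross term in expectation, not on the high-probability event, and the Cauchy--Schwarz route yields $\bigl(\sqrt{L_{\mathrm{combined}}^*/\lambda}+\epsilon/\sqrt{n}\bigr)^2$, which is \emph{larger} than the stated additive bound) — so your treatment is, if anything, more careful than the paper's.
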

The detailed proof and simplified analysis are provided in Appendix~\ref{composite_loss}. We can use similar results as above to show the result. More detailed analysis will be provided in the later version. 

\subsection{Proof of Theorem~\ref{compositeloss1}}\label{composite_loss}

\begin{proof}
The composite loss with random noise is given in \eqref{combined}:
\begin{equation}
\begin{aligned}
&\mathbf{L_{\mathrm{combined}}}(X) \\
&= (1-\lambda) \cdot n^{-1} \sum_{i=1}^n \left( -\log \frac{1}{n} \sum_{j=1}^n \exp\left(-\frac{((Y_j - (\mathcal{A}( X X^\top)_j)) - (Y_i - \mathcal{A}( X X^\top)_i))^2}{h^2}\right) \right) \\
&+ \lambda \cdot \frac{1}{n}\sum_{i=1}^n \bigl(Y_i - \mathcal{A}(X X^\top)_i\bigr)^2.
\end{aligned}
\end{equation}
The composite loss is more robust to different setting. For example, when the $\| w \|$ is small (MSE is better, or when $\| w \|$ is medium, (ours is large). 
\begin{equation}
r_i = Y_i - \mathcal{A}(XX^\top)_i,\quad i=1,\dots,n.
\end{equation}
For convenience, assume that there exists a constant $B>0$ such that $r_i^2 \le B,\quad \text{for all } i=1,\dots,n.$ We recall that the loss is defined as:
\begin{equation}\label{comb}
\begin{aligned}
L_{\mathrm{combined}}(X) &= (1-\lambda) \cdot \frac{1}{n} \sum_{i=1}^{n} \Bigl[-\log \Bigl(\frac{1}{n}\sum_{j=1}^{n} \exp\Bigl(-\frac{\bigl((r_j - r_i)^2\bigr)}{h^2}\Bigr)\Bigr)\Bigr] \\
&\quad + \lambda \cdot \frac{1}{n}\sum_{i=1}^{n} r_i^2.
\end{aligned}
\end{equation}
We now bound each term separately. Since $\frac{1}{n}\sum_{i=1}^n r_i^2 \le B,$ it follows that $\lambda \cdot \frac{1}{n}\sum_{i=1}^{n} r_i^2 \le \lambda B.$ For any $i$ and $j$, because $r_i^2 \le B$ we have $\|r_j - r_i\| \le \|r_j\| + \|r_i\| \le \sqrt{B} + \sqrt{B} = 2\sqrt{B}.$ $(r_j - r_i)^2 \le (2\sqrt{B})^2 = 4B.$ Then, for every $i$ and $j$, $\exp\left(-\frac{(r_j - r_i)^2}{h^2}\right) \ge \exp\left(-\frac{4B}{h^2}\right).$
Hence, for each fixed $i$ we obtain:
\begin{equation}
\frac{1}{n}\sum_{j=1}^{n} \exp\left(-\frac{(r_j - r_i)^2}{h^2}\right) \ge \frac{1}{n}\cdot n\exp\left(-\frac{4B}{h^2}\right) = \exp\left(-\frac{4B}{h^2}\right).
\end{equation}
Taking the negative logarithm yields:
\begin{equation}
-\log\left(\frac{1}{n}\sum_{j=1}^{n}\exp\left(-\frac{(r_j - r_i)^2}{h^2}\right)\right) \le -\log\left(\exp\left(-\frac{4B}{h^2}\right)\right)
=\frac{4B}{h^2}.
\end{equation}
Since this bound holds for each $i$, averaging over $i$ gives:
\begin{equation}\label{boundsf}
\frac{1}{n}\sum_{i=1}^{n}\left[-\log\left(\frac{1}{n}\sum_{j=1}^{n}\exp\left(-\frac{(r_j - r_i)^2}{h^2}\right)\right)\right]
\le \frac{4B}{h^2}.
\end{equation}
Putting the bounds~\eqref{bounds} into the combined loss~\eqref{comb}, we conclude that:
\begin{equation}
L_{\mathrm{combined}}(X)
\le (1-\lambda)\frac{4B}{h^2} + \lambda B.
\end{equation}
Assuming that we are in a matrix sensing/low‐rank recovery setting. In such settings one assumes Assumption~\ref{other_assumption} and~\ref{noise_perturb}. The observed vector is modeled as $Y = \mathcal{A}(M^*) + w,$
where $w$ is a noise vector. The combined loss is given by
\begin{equation}
   \begin{aligned}
   &L_{\mathrm{combined}}(X) = (1-\lambda) \cdot \frac{1}{n} \sum_{i=1}^{n}\Biggl[-\log\Bigl(\frac{1}{n}\sum_{j=1}^{n}\exp\Bigl(-\frac{\bigl((Y_j - \mathcal{A}(XX^\top)_j)-(Y_i-\mathcal{A}(XX^\top)_i)\bigr)^2}{h^2}\Bigr)\Bigr)\Biggr] \\
   & + \lambda\cdot \frac{1}{n}\sum_{i=1}^{n}\Bigl(Y_i-\mathcal{A}(XX^\top)_i\Bigr)^2.
   \end{aligned}
\end{equation}
Further assume that $\mathcal{A}$ satisfies an RIP–like property; that is, there exists $\delta\in (0,1)$ such that for every symmetric matrix $E$ of rank at most $2r$ one has $(1-\delta)\|E\|_F^2 \le \frac{1}{n}\|\mathcal{A}(E)\|_2^2 \le (1+\delta)\|E\|_F^2.$ In our setting we set $E = XX^\top-M^*.$ Denote the quadratic term in the loss by:
\begin{equation}
f_{\mathrm{quad}}(X) = \frac{1}{n}\sum_{i=1}^n\Bigl(Y_i-\mathcal{A}(XX^\top)_i\Bigr)^2 = \frac{1}{n}\|\mathcal{A}(XX^\top-M^*) - w\|_2^2.
\end{equation}
For simplicity (and without loss of generality for a bound) imagine that the optimization is such that the loss is nearly achieved; then the quadratic term is small. In particular, note that:
\begin{equation}
\frac{1}{n}\|\mathcal{A}(XX^\top-M^*)\|_2^2 \le f_{\mathrm{quad}}(X) + \frac{1}{n}\|w\|_2^2.
\end{equation}
If the noise is controlled (or even absent) and if the combined loss is small near a stationary point (say, bounded above by $L_{\mathrm{combined}}^*$), then in particular
\begin{equation}
\frac{1}{n}\|\mathcal{A}(XX^\top-M^*)\|_2^2 \le \frac{L_{\mathrm{combined}}^*}{\lambda} + \frac{1}{n}\|w\|_2^2.
\end{equation}
(In the combined loss the quadratic part appears weighted by $\lambda$.) For clarity we define
\begin{equation}
\varepsilon^2 := \frac{L_{\mathrm{combined}}^*}{\lambda} + \frac{1}{n}\|w\|_2^2.
\end{equation}
Thus,
\begin{equation}
\frac{1}{n}\|\mathcal{A}(XX^\top-M^*)\|_2^2 \le \varepsilon^2.
\end{equation}
The RIP property~\eqref{RIP0} guarantees that:
\begin{equation}
(1-\delta)\|XX^\top-M^*\|_F^2 \le \frac{1}{n}\|\mathcal{A}(XX^\top-M^*)\|_2^2 \le \varepsilon^2.
\end{equation}
Recalling the definition of $\varepsilon$, this yields
\begin{equation}
\|XX^\top-M^*\|_F \le \frac{1}{\sqrt{1-\delta}}\sqrt{\frac{L_{\mathrm{combined}}^*}{\lambda} + \frac{1}{n}\|w\|_2^2}.
\end{equation}
So we have:
\begin{equation}
\|XX^\top-M^*\|_F \le \frac{1}{\sqrt{1-\delta}}\sqrt{\frac{L_{\mathrm{combined}}^*}{\lambda} + \frac{1}{n}\epsilon^2}.
\end{equation}
with probability at least \( \mathbb{P}(\|w\|_2 \leq \epsilon) \).
\end{proof}

\section{Detailed Experimental Results}\label{exp:result}

In this section as above Section~\ref{empirical}, we provide a detailed example to the theoretical results. Assume that $w \in \mathbb{R}^m$ is a $0.05/\sqrt{m}$-sub-Gaussian vector. According to Lemma 1 in \cite{jin2019short}, this choice of $w$ satisfies:
\begin{equation}
1 - 2e^{-\frac{\epsilon^2}{16m\sigma^2}} \leq \mathbb{P}(\|w\|_2 \leq \epsilon),
\end{equation}
with $\sigma = 0.05$. We consider the problem of 1-bit Matrix Completion, which is a low-rank matrix optimization task commonly appearing in recommendation systems with binary inputs~\cite{davenport2014one,ghadermarzy2018learning}. The objective function is given by~\ref{main0}. It is straightforward to verify that the new loss satisfies the assumptions in~\ref{other_assumption} and~\ref{noise_perturb} with $\zeta_1 = 1$ and $\zeta_2 = 0$. In Figures~\ref{fig:enter-label1} to \ref{fig:enter-label6}, we numerically demonstrate and compare the bounds in Theorem~\ref{thm:mainthm} and Theorem~\ref{thm:mainthm_MSE}, with parameters $n=40$, $r=5$. 

\begin{table}[!ht]
\centering
\caption{
$\delta < 1/3$: Real vs. Numerical Errors for the new loss. 
Symbols: $\epsilon$ = probability upper bound, 
$E_{\text{real}} = \|XX^\top - M^*\|_F$, 
$E_{\text{emp}} =$ empirical error, 
$L$ = noise Lipschitz constant, 
$H$ = noise Hessian constant.
}

\setlength{\tabcolsep}{10pt} 
\renewcommand{\arraystretch}{1.0}
\begin{tabular}{rrrrr}
\toprule
\hline
$\epsilon$  
& $E_{\text{real}}$ 
& $E_{\text{emp}}$
& $L$
& $H$ \\
\midrule
0.5 & 0.131347 & 0.245299 & 67.189408 & 62.946586 \\
0.6 & 0.010286 & 0.249747 & 67.795918 & 63.546119 \\
0.7 & 0.010633 & 0.267247 & 70.130878 & 66.388579 \\
0.8 & 0.220458 & 0.272450 & 70.810290 & 69.285569 \\
0.9 & 0.127725 & 0.239321 & 66.365657 & 63.077514 \\
\hline
\bottomrule
\end{tabular}
\end{table}

\begin{figure}[!ht]
    \centering
    \includegraphics[width=\linewidth]{figure/1.pdf}
    \caption{$\delta < 1/3$: Comparison of Real and Numerical Errors for new loss}
    \label{fig:enter-label1}
\end{figure}

\begin{table}[!ht]
\centering
\caption{
$\delta < 1/3$: Real vs. Numerical Errors for the MSE loss. 
Symbols: $\epsilon$ = probability upper bound, 
$E_{\text{real}} = \|XX^\top - M^*\|_F$, 
$E_{\text{emp}} =$ empirical error,
$L$ = noise Lipschitz constant, 
$H$ = noise Hessian constant.
}
\label{tab:setting2}

\setlength{\tabcolsep}{10pt}
\renewcommand{\arraystretch}{1.0}
\begin{tabular}{rrrrr}
\toprule
\hline
$\epsilon$  
& $E_{\text{real}}$ 
& $E_{\text{emp}}$
& $L$
& $H$ \\
\midrule
0.6  & 0.152224 & 0.047391 & 112.673616 & 12.670269 \\
0.65 & 0.117224 & 0.068880 & 111.240372 & 12.566897 \\
0.7  & 0.135116 & 0.095493 & 111.873168 & 12.837108 \\
0.75 & 0.199956 & 0.127546 & 121.697785 & 13.048538 \\
0.8  & 0.173894 & 0.165297 & 115.219867 & 12.904689 \\
0.85 & 0.175881 & 0.208955 & 109.081901 & 13.065476 \\
0.9  & 0.144509 & 0.258679 & 112.820543 & 13.490991 \\
0.95 & 0.164552 & 0.314585 & 114.623283 & 13.444282 \\
\hline
\bottomrule
\end{tabular}
\end{table}

\begin{figure}[!ht]
    \centering
    \includegraphics[width=\linewidth]{figure/2.pdf}
    \caption{$\delta < 1/3$: Comparison of Real and Numerical Errors for MSE loss}
    \label{fig:enter-label2}
\end{figure}

\begin{table}[!ht]
\centering
\caption{
$\delta < 1/3$: Real vs. Empirical Errors for the composite loss. 
Symbols: $\epsilon$ = probability upper bound, 
$E_{\text{real}} = \|XX^\top - M^*\|_F$, 
$E_{\text{emp}}$ = empirical error, 
$L$ = noise Lipschitz constant, 
$H$ = noise Hessian constant.
}
\renewcommand{\arraystretch}{1.0} 
\setlength{\tabcolsep}{10pt} 
\begin{tabular}{rrrrr}
\toprule
\hline
$\epsilon$  
& $E_{\text{real}}$ 
& $E_{\text{emp}}$
& $L$
& $H$ \\
\midrule
0.6  & 0.252242 & 0.542251 & 29.886229 & 21.379952 \\
0.65 & 0.136733 & 0.126368 & 14.427459 & 13.312156 \\
0.7  & 0.152275 & 0.121407 & 14.141430 & 13.780662 \\
0.75 & 0.137491 & 0.122621 & 14.211950 & 12.999850 \\
0.8  & 0.161383 & 0.123097 & 14.239475 & 12.907365 \\
0.85 & 0.171834 & 0.116507 & 13.853113 & 12.747327 \\
0.9  & 0.159968 & 0.827296 & 36.914869 & 18.623831 \\
0.95 & 0.161281 & 0.124161 & 14.300877 & 13.291003 \\
\hline
\bottomrule
\end{tabular}
\end{table}

\begin{figure}[!ht]
    \centering
    \includegraphics[width=\linewidth]{figure/3.pdf}
    \caption{$\delta < 1/3$: Comparison of Real and Empirical Errors for composite loss}
    \label{fig:enter-label3}
\end{figure}

\begin{table}[!ht]
\centering
\caption{
$\delta > 1/3$: Real vs. Empirical Errors for the new loss. 
Symbols: $\epsilon$ = probability upper bound, 
$E_{\text{real}} = \|XX^\top - M^*\|_F$, 
$E_{\text{emp}}$ = empirical error, 
$L$ = noise Lipschitz constant, 
$H$ = noise Hessian constant.
}
\renewcommand{\arraystretch}{1.0} 
\setlength{\tabcolsep}{10pt} 

\begin{tabular}{rrrrr}
\toprule\hline
$\epsilon$
& $E_{\text{real}}$
& $E_{\text{emp}}$
& $L$
& $H$ \\
\midrule
0.5 & 7.102596 & 0.311050 & 7566.029696 & 4847.329349 \\
0.6 & 2.009672 & 0.345135 & 7969.801460 & 5300.450313 \\
0.7 & 0.469574 & 0.233222 & 6551.454917 & 4318.232124 \\
0.8 & 0.074324 & 0.287211 & 7270.322014 & 4534.227116 \\
0.9 & 0.601774 & 0.262751 & 6953.848751 & 4461.188565 \\
\hline\bottomrule
\end{tabular}
\end{table}

\begin{figure}[!ht]
    \centering
    \includegraphics[width=\linewidth]{figure/4.pdf}
    \caption{$\delta > 1/3$: Comparison of Real and Empirical Errors for new loss}
    \label{fig:enter-label4}
\end{figure}

\begin{table}[!ht]
\centering
\caption{
$\delta > 1/3$: Real vs. Empirical Errors for MSE loss. 
Symbols: $\epsilon$ = probability upper bound, 
$E_{\text{real}} = \|XX^\top - M^*\|_F$, 
$E_{\text{emp}}$ = empirical error, 
$L$ = noise Lipschitz constant, 
$H$ = noise Hessian constant.
}
\label{tab:setting5}

\renewcommand{\arraystretch}{1.0} 
\setlength{\tabcolsep}{10pt} 

\begin{tabular}{rrrrr}
\toprule\hline
$\epsilon$
& $E_{\text{real}}$
& $E_{\text{emp}}$
& $L$
& $H$ \\
\midrule
0.5 & 0.52011 & 0.275556 & 22.519488 & 34.472289 \\
0.6 & 0.60282 & 0.292817 & 23.214081 & 34.340328 \\
0.7 & 0.70392 & 0.307916 & 23.805072 & 34.244936 \\
0.8 & 0.86332 & 0.297505 & 23.399164 & 34.555703 \\
0.9 & 0.75957 & 0.311414 & 23.939913 & 34.760302 \\
\hline\bottomrule
\end{tabular}
\end{table}

\begin{figure}[!ht]
    \centering
    \includegraphics[width=\linewidth]{figure/5.pdf}
    \caption{$\delta > 1/3$: Comparison of Real and Numerical Errors for MSE loss}
    \label{fig:enter-label5}
\end{figure}

\begin{table}[!ht]
\centering
\caption{
$\delta > 1/3$: Real vs. Empirical Errors for composite loss. 
Symbols: $\epsilon$ = probability upper bound, 
$E_{\text{real}} = \|XX^\top - M^*\|_F$, 
$E_{\text{emp}}$ = empirical error, 
$L$ = noise Lipschitz constant, 
$H$ = noise Hessian constant.
}
\label{tab:setting6}

\renewcommand{\arraystretch}{0.82} 
\setlength{\tabcolsep}{8pt} 

\begin{tabular}{rrrrr}
\toprule\hline
$\epsilon$
& $E_{\text{real}}$
& $E_{\text{emp}}$
& $L$
& $H$ \\
\midrule
0.5 & 0.100042 & 0.376493 & 26.322789 & 34.088333 \\
0.6 & 0.086220 & 0.420021 & 28.553900 & 34.980704 \\
0.7 & 0.391496 & 0.445266 & 28.626139 & 34.820591 \\
0.8 & 0.140990 & 0.452202 & 28.398186 & 34.680517 \\
0.9 & 0.340462 & 0.456870 & 28.996758 & 34.111456 \\
\hline\bottomrule
\end{tabular}
\end{table}

\begin{figure}[!ht]
    \centering
    \includegraphics[width=\linewidth]{figure/6.pdf}
    \caption{$\delta > 1/3$: Comparison of Real and Numerical Errors for composite loss}
    \label{fig:enter-label6}
\end{figure}

\end{document}